\documentclass[17pt]{article}
\usepackage{dsfont}
\usepackage{environ}






\usepackage[utf8x]{inputenc}

\oddsidemargin=-0.1in 
\evensidemargin=-0.1in 
\topmargin=-.5in
\textheight=9in 
\textwidth=6.5in
\parindent=18pt

\bibliographystyle{alpha}

\usepackage{mdframed}
\usepackage{algorithm,algorithm,multicol}
\usepackage{pgfplots}

\definecolor{Gred}{RGB}{219, 50, 54}
\definecolor{Ggreen}{RGB}{60, 186, 84}
\definecolor{Gblue}{RGB}{72, 133, 237}
\definecolor{Gyellow}{RGB}{247, 178, 16}
\definecolor{ToCgreen}{RGB}{0, 128, 0}
\definecolor{myGold}{RGB}{231,141,20}
\definecolor{myBlue}{rgb}{0.19,0.41,.65}
\definecolor{myPurple}{RGB}{175,0,124}
\definecolor{niceRed}{RGB}{153,0,0}
\definecolor{niceRed}{RGB}{190,38,38}
\definecolor{blueGrotto}{HTML}{059DC0}
\definecolor{royalBlue}{HTML}{057DCD}
\definecolor{navyBlueP}{HTML}{0B579C}
\definecolor{limeGreen}{HTML}{81B622}
\definecolor{nicePink}{RGB}{247,83,148}

\usepackage{cmap}
\usepackage[T1]{fontenc}
\usepackage{bm}
\pagestyle{plain}

\usepackage{amsmath}
\usepackage{amsfonts}
\usepackage{amssymb}
\usepackage{amsbsy}
\usepackage{amsthm}
\usepackage{bbm}

\usepackage{graphicx, ucs}

\usepackage{subcaption}
\usepackage{rotating}
\usepackage{float}
\usepackage{tikz}

\usepackage{algorithm, caption}
\usepackage[noend]{algpseudocode}
\usepackage{listings}

\usepackage{enumitem}

\usepackage[pagebackref]{hyperref}
\hypersetup{
	colorlinks = true,
	urlcolor = {myPurple},
	linkcolor = {royalBlue},
	citecolor = {nicePink}
}
\usepackage[nameinlink]{cleveref}

\usepackage{multirow}
\usepackage{array}

\usepackage{chngcntr}

\counterwithin*{equation}{section}
\usepackage{chngcntr}
\usepackage{soul}
\usepackage{nicefrac}

\usepackage{fancyhdr}
\fancyhf{}
\pagestyle{fancy}

\fancyheadoffset{0pt}
\cfoot{\thepage}

\def\compactify{\itemsep=0pt \topsep=0pt \partopsep=0pt \parsep=0pt}
\let\latexusecounter=\usecounter

\definecolor{myC}{rgb}{0, 255, 255}
\definecolor{myY}{rgb}{204, 204, 0}
\definecolor{myM}{rgb}{255, 0, 255}
\definecolor{secinhead}{RGB}{249,196,95}
\definecolor{lgray}{gray}{0.8}


\newtheorem{theorem}{Theorem} 
\newtheorem*{theorem*}{Theorem} 
\newtheorem*{proposition*}{Proposition} 
\newtheorem{lemma}{Lemma}
\newtheorem{claim}{Claim}
\newtheorem{proposition}{Proposition}
\newtheorem{fact}{Fact}

\newtheorem{openquestion}{Open Question}

\newtheorem{definition}{Definition}
\newtheorem{remark}{Remark}

\newtheorem{observation}{Observation}
\newtheorem{example}{Example}

\renewcommand{\Pr}{\mathop{\bf Pr\/}}
\newcommand{\E}{\mathop{\bf E\/}}

\newcommand{\er}{\mathrm{er}}


\newcommand{\sgn}{\textnormal{sgn}}

\newcommand{\reals}{\mathbb R}

\newcommand{\nats}{\mathbb N}

\newcommand{\eps}{\epsilon}

\newcommand{\calA}{\mathcal{A}}

\newcommand{\calF}{\mathcal{F}}
\newcommand{\calG}{\mathcal{G}}
\newcommand{\calH}{\mathcal{H}}

\newcommand{\calM}{\mathcal{M}}

\newcommand{\calT}{\mathcal{T}}
\newcommand{\calU}{\mathcal{U}}

\newcommand{\calW}{\mathcal{W}}
\newcommand{\calX}{\mathcal{X}}
\newcommand{\calY}{\mathcal{Y}}

\usepackage{wrapfig}


\def\l{\ell}
\def\<{\langle}
\def\>{\rangle}



\DeclareMathOperator*{\argmax}{argmax}

\makeatletter
\newcommand{\printfnsymbol}[1]{%
  \textsuperscript{\@fnsymbol{#1}}%
}
\makeatother

\def\wt{\widetilde}
\def\wh{\widehat}

\def\vec{\bm}

\makeatletter
\renewenvironment{abstract}{%
	\if@twocolumn
	\section*{\abstractname}%
	\else 
	\begin{center}%
		{\bfseries \large\abstractname\vspace{\z@}}
	\end{center}%
	\quotation
	\fi}
{\if@twocolumn\else\endquotation\fi}
\makeatother

\begin{document}
	
	\title{Multiclass Learnability Beyond the PAC Framework:\\
 Universal Rates and Partial Concept Classes}
	\author{
	    \textbf{Alkis Kalavasis\thanks{Equal contribution.}} \\
		 NTUA \\
		\texttt{kalavasisalkis@mail.ntua.gr}
        \and
		\textbf{Grigoris Velegkas\printfnsymbol{1}} \\
		 Yale University\\
		\texttt{grigoris.velegkas@yale.edu}
        \and
		\textbf{Amin Karbasi} \\
		 Yale University\\
		\texttt{amin.karbasi@yale.edu}
	}
	\maketitle
	\thispagestyle{empty}

	\begin{abstract}
	\small
In this paper we study the problem of multiclass classification with a bounded number of different labels $k$, in the realizable setting. We extend the traditional PAC model to \textbf{a)} distribution-dependent learning rates, and \textbf{b)} learning rates under data-dependent assumptions. 
    First, we consider the \emph{universal learning} setting (Bousquet, Hanneke, Moran, van Handel and Yehudayoff, STOC '21), 
    for which we provide a complete characterization of the achievable learning rates
    that holds for every fixed distribution.
    In particular, we show the following trichotomy: for any concept class, the optimal learning rate is either exponential, linear or arbitrarily slow. Additionally, we provide complexity measures of the underlying hypothesis class that characterize when these rates occur. 
    Second, we consider the problem of multiclass classification with \emph{structured} data (such as data lying on a low dimensional manifold or satisfying margin conditions),
       a setting which is captured by \emph{partial concept classes} (Alon, Hanneke, Holzman and Moran, FOCS '21). Partial
       concepts are functions that can be \emph{undefined} in certain parts
       of the input space. We extend the traditional PAC learnability of total concept classes to partial concept classes in the multiclass setting and investigate differences between partial and total concepts.
       
	\end{abstract}

	\newpage
	
\section{Introduction}\label{section:introduction}
Classifying data into multiple different classes is a fundamental problem in machine learning
    that has many real-life applications, 
    such as image recognition, web advertisement and text categorization.
Due to its importance, multiclass classification has been an attractive field of research 
both for theorists \cite{natarajan1988two, characterization, rubinstein2009shifting, natarajan1989learning,multiclass-erm,daniely2014optimal,understanding-ml} and for practitioners \cite{shalev2004learning,collins2004parameter,aly}. Essentially, it boils down to 
learning a classifier $h$ from a domain $\calX$ to a label
space $\calY$, where $|\calY| \geq 2,$ and the error is measured by the probability that
$h(x)$ is incorrect.
In this work, we focus on the setting where the number of labels
is finite and we identify $\calY$ with $ [k] := \{0,1,...,k\}$ for some
 constant $k \in \nats$.
 
\paragraph{Multiclass PAC Learning} 
The PAC model~\cite{valiant1984theory} constitutes the gold-standard learning framework. 
A seminal result in learning theory \cite{vapnik2015uniform,blumer} characterizes PAC learnability of
binary classes ($k=1$) through the Vapnik-Chervonenkis (VC) dimension and provides a 
clear algorithmic landscape with the \emph{empirical risk minimization} (ERM) principle yielding (almost) optimal statistical learning algorithms.
The picture established for the binary setting extends to the case of multiple labels when the number of classes
$|\calY|$ is bounded. The works of \cite{natarajan1988two,natarajan1989learning} and \cite{characterization} 
identified natural extensions of the VC dimension, such as the \emph{Natarajan dimension}, whose finiteness characterizes multiclass learnability in this setting. 
Moreover,
the ERM principle still holds and achieves the desired learning rate by essentially reducing learnability to optimization.
The fundamental result of the multiclass PAC learning (in the realizable setting for the expected prediction error)
\footnote{We mention that \Cref{definition:multiclass pac learning} essentially works in the probabilistic prediction model of \cite{haussler1994predicting}. In this model, one does not require the hypotheses used for prediction to be represented in any specified way since the prediction for the test point $x$ is performed on the fly.
For instance, a valid classification rule
in this model is to memorize all the training points and
run an algorithm for every unlabeled instance $x.$
In Valiant's PAC model, 
we are interested in approximating a target hypothesis $h \in \calH$ given an $h$-labeled sample and we must output a representation of a hypothesis in some hypothesis class $\calF$ \cite{haussler1994predicting}. Also, \Cref{definition:multiclass pac learning} deals with the expected error.} 
can be summarized in the following elegant equation for any $n \in \nats$,
 which we explain right-after:
\begin{equation}\label{definition:multiclass pac learning}
    \inf_{\wh{h}_n} \sup_{P \in \mathrm{RE}(\calH)} \E[\mathrm{er}(\wh{h}_n)]
    = \min \left( \wt{\Theta}_{k}\left(\frac{\mathrm{Ndim}(\calH)}{n} \right), 1 \right ) 
    \,,
\end{equation}
where $\mathrm{Ndim}(\calH)$ stands for 
the Natarajan dimension of $\calH$ and $\wt{\Theta}_{k}$ subsumes dependencies on $k$ (the lower bound of order $\mathrm{Ndim}(\calH)/n$ is from \cite{rubinstein2009shifting} and
the upper bound $\mathrm{Ndim}(\calH) \log (k)/n$ comes from an application of the one-inclusion 
hypergraph algorithm \cite{moran-dinur}). In words, assume that $n$ 
is the number of training samples and let $\calH \subseteq [k]^\calX$ be a set of multiclass classifiers mapping the elements of the domain $\calX$ to $[k]$ that the learner has access to. The learner observes $n$ labeled examples $(x,y) \in \calX \times [k]$ generated i.i.d. from some unknown distribution $P$ with the constraint that $P$ is \emph{realizable} with respect to $\calH$, i.e., there is some $h \in \calH$ that has (almost surely) zero classification error. The learner then
outputs a guess hypothesis $\wh{h}_n : \calX \to [k]$ (potentially, without an explicit description). The fundamental theorem of PAC learning, as shown above, controls the expected error $\E[ \mathrm{er}(\wh{h}_n) ]$, where $\mathrm{er}(\wh{h}_n) := \Pr_{(x,y) \sim P}[\wh{h}_n(x) \neq y]$, in a \emph{minimax} sense, i.e., it controls the performance of the best algorithm $\wh{h}_n$ (inf) against the worst-case realizable distribution $P \in \mathrm{RE}(\calH)$ (sup) and states that 
the following dichotomy occurs: if the Natarajan  
dimension $\mathrm{Ndim}(\calH)$ 
is finite, the error rate decreases as, roughly, $1/n$, so $\calH$ is PAC learnable at a linear rate; otherwise, the class $\calH$ is not PAC learnable.
Additionally, this theory provides a clean algorithmic landscape: the ERM principle, which means outputting some classifier in $\calH$ that best fits the training set, (roughly) achieves the rates of \Cref{definition:multiclass pac learning}.

\paragraph{Towards Novel Learning Theories} While the PAC model provides a solid and attractive theoretical framework, 
it fails to (fundamentally) capture the real-world behavior of various applied ML problems. In this work, we focus on the following two points of criticism for the standard PAC model.  
The first natural point concerns the supremum over all realizable distributions in \Cref{definition:multiclass pac learning}. 
\begin{observation}
\label{obs1}
The PAC model is distribution-independent and captures the worst-case learning rate.
Is it possible to design a learning theory that provides distribution-dependent learning rates?
\end{observation}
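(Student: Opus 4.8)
The first sentence of \Cref{obs1} needs no argument beyond inspecting \Cref{definition:multiclass pac learning}: the minimax value there is taken with a supremum over all realizable distributions $P \in \mathrm{RE}(\calH)$, and the quantity it equals, $\min(\wt{\Theta}_k(\mathrm{Ndim}(\calH)/n),1)$, makes no reference to $P$. Hence the PAC guarantee is by construction the \emph{worst-case} error over $\mathrm{RE}(\calH)$ and is \emph{distribution-independent}: even a $P$ for which learning is trivial (say, supported on a single point) cannot be exploited, since the bound is dictated by the hardest realizable $P$. So the substance of the observation lies entirely in its question, and the plan is to answer it in the affirmative.

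To do so, I would port the universal-learning framework of Bousquet, Hanneke, Moran, van Handel and Yehudayoff to $[k]$-valued concept classes. Concretely: a \emph{rate} is a function $R:\nats\to[0,1]$ with $R(n)\to 0$; we say $\calH$ is learnable at rate $R$ if there is a \emph{single} learner $(\wh h_n)_n$ such that for every $P\in\mathrm{RE}(\calH)$ there is a constant $C_P$ with $\E[\er(\wh h_n)]\le C_P\, R(n)$ for all $n$; and $\calH$ is not learnable faster than $R$ if for every learner there is a $P\in\mathrm{RE}(\calH)$ on which $\E[\er(\wh h_n)]\ge c\, R(cn)$ infinitely often. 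This is genuinely distribution-dependent — the constant $C_P$ adapts to $P$ — yet nontrivial, since one learner must succeed on all $P$ simultaneously. The target statement to prove is then a \emph{trichotomy}: for every $\calH\subseteq[k]^\calX$ the optimal universal rate is $e^{-n}$, or $1/n$, or arbitrarily slow, with the regime pinned down by combinatorial parameters of $\calH$.

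The roadmap for the trichotomy: (i) define two multiclass tree complexity measures generalizing the Littlestone tree and the VCL/star tree, where the branching condition ``the two subtrees are separated on the node by a point on which they output distinct labels'' replaces the binary ``disagree on a point'', tracking the dependence on $k$ throughout; (ii) for the upper bounds, when an infinite Natarajan--Littlestone tree is \emph{absent}, run a version-space majority-vote / Hanneke-style learner and argue exponential decay on each fixed $P$, and when only the weaker tree parameter is finite, use a one-inclusion-hypergraph / sample-compression argument to get the $O(1/n)$ rate (the $\log k$ overhead entering as in \Cref{definition:multiclass pac learning} via \cite{moran-dinur}); (iii) for the lower bounds, turn an infinite tree of the first kind into a realizable $P$ forcing $\Omega(1/n)$ (ruling out exponential rates), and an infinite tree of the second kind into a family of distributions forcing, via a probabilistic choice of branch, rates slower than any prescribed $R$; (iv) note that each tree parameter is either finite or infinite, which yields exactly the three cases.

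The main obstacle is carrying steps (i)--(iii) through in the multiclass world: the ``adversary also picks a label'' structure makes both the trees and the lower-bound distributions more delicate than in the binary case, and the arbitrarily-slow regime requires the Bousquet et al.\ style diagonalization showing that \emph{no} rate function suffices — reproducing that while keeping all constants independent of $P$ (though allowed to depend on $k$) is the crux. By contrast, the achievability side should be comparatively routine given the binary template together with the multiclass one-inclusion graph algorithm already invoked above.
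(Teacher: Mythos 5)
Your reading of \Cref{obs1} is right: it is not a provable claim but a motivating question, and the paper answers it exactly along the route you sketch — the universal-rates definition with distribution-dependent constants, a trichotomy $e^{-n}$ / $1/n$ / arbitrarily slow governed by two infinite-tree conditions, Gale--Stewart-style eventually-correct strategies aggregated by majority vote for the upper bounds, the one-inclusion hypergraph for the linear regime, and probabilistic branch constructions for the lower bounds. One concrete slip in your step (ii): you have the two trees reversed. The \emph{exponential} rate is characterized by the absence of an infinite \emph{multiclass Littlestone} tree (the online-game structure), while the absence of an infinite \emph{Natarajan--Littlestone} tree is what yields the $1/n$ rate via the one-inclusion argument; as written, your sketch would try to prove exponential decay from the weaker hypothesis, which fails (a class can lack an infinite NL tree yet have an infinite multiclass Littlestone tree, in which case \Cref{theorem:inf-littlestone-linear-lower-bound} forces an $\Omega(1/n)$ lower bound).
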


Another critique is that one cannot express natural data-dependent assumptions through the PAC framework. For instance, high-dimensional data may lie in a low-dimensional manifold. To be more specific, consider the task of classifying images of vehicles. The representation of such images corresponds to a low-dimensional subset of the space of all possible images, most of which do not correspond to vehicles. 
A prominent way to capture such assumptions is via \emph{partial concepts} \cite{alon2022theory}: these are functions which can be \emph{undefined} on a subset of $\calX$, a departure from the traditional model.  
\begin{observation}
\label{obs2}
The PAC model only considers total concept classes, i.e., $\calH \subseteq [k]^\calX,$
which cannot express data-dependent constraints. Is it possible to design a learning theory for partial concepts $h : \calX \to \{0,1,...,k,\star\}$, where $h(x) = \star$ means that $h$ is undefined at $x$? 
\end{observation}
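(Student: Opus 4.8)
The plan is to answer \Cref{obs2} affirmatively by establishing a fundamental theorem of multiclass PAC learning for partial concept classes $\calH \subseteq \{0,1,\dots,k,\star\}^{\calX}$, parallel to \Cref{definition:multiclass pac learning} but with the Natarajan dimension replaced by a partial analogue. First I would fix the right complexity measure: say a finite set $S \subseteq \calX$ is \emph{N-shattered} by $\calH$ if there exist $f_0,f_1 : S \to [k]$ with $f_0(x) \neq f_1(x)$ for all $x \in S$ such that for every $\sigma \in \{0,1\}^S$ some $h \in \calH$ satisfies $h(x) = f_{\sigma(x)}(x)$ for all $x \in S$ (so $h$ avoids $\star$ on $S$, though it may be $\star$ elsewhere), and let $\mathrm{PNdim}(\calH)$ be the largest size of such an $S$. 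The target statement is then: over the distributions $P$ for which some $h \in \calH$ satisfies $h(x)=y$ $P$-almost surely (hence $P$ is supported where $h \neq \star$), $\calH$ is PAC learnable if and only if $\mathrm{PNdim}(\calH) < \infty$, at rate $\wt{\Theta}_k(\mathrm{PNdim}(\calH)/n)$.

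For the upper bound I would run the multiclass one-inclusion hypergraph predictor on partial concepts: given a sample $(x_1,y_1),\dots,(x_n,y_n)$ and a query $x$, restrict $\calH$ to $\{x_1,\dots,x_n,x\}$, keep the resulting partial patterns, and form the hypergraph whose vertices are the total completions realized by these patterns, an $i$-edge being a maximal set of patterns agreeing on all coordinates but (possibly) $i$. The crux---and the step I expect to be the main obstacle---is a packing/degree bound showing the average degree is $O_k(\mathrm{PNdim}(\calH))$: this requires a Sauer--Shelah--Haussler-type estimate for partial multiclass classes that simultaneously generalizes the binary partial bound of Alon--Hanneke--Holzman--Moran and the Haussler packing argument underlying the multiclass one-inclusion analysis of \cite{moran-dinur,daniely2014optimal}, with extra bookkeeping about which completions of a $\star$-containing pattern are admissible so that only edges certified by actual members of $\calH$ are counted. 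Given the bound, an orientation of small out-degree yields the predictor and the $\mathrm{PNdim}(\calH)\log k/n$ rate via the standard leave-one-out argument, the $\log k$ factor behaving as in the total case.

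For the lower bound I would use the usual probabilistic construction: fix an N-shattered $S$ of size $d$ with witnesses $f_0,f_1$, place a distribution on $S$ (uniform, or with a heavy atom to defeat finite samples), draw $\sigma$ uniformly, realize it by the corresponding $h \in \calH$, and note that a learner seeing $n \ll d$ points must err with constant probability on unseen coordinates whose label is genuinely $2$-valued; since $h$ never outputs $\star$ on $S$ this is essentially the total-concept argument. Finally, for the promised contrast between partial and total concepts I would (i) exhibit a learnable $\calH$ with $\mathrm{PNdim}(\calH)=1$ for which every disambiguation to a total class of finite Natarajan dimension fails, lifting the binary separation of \cite{alon2022theory} by a label-padding gadget, and hence show that ERM/proper learning and Occam-type guarantees break down for partial concepts, and (ii) compare the Natarajan and graph dimensions in the partial regime, indicating which of their bounded-$k$ equivalences from the total case survive. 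Everything outside the partial multiclass packing bound should follow by adapting known arguments.
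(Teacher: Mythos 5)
Your proposal answers the observation by the same overall route the paper takes: the Natarajan dimension extended to partial concepts (ignoring $\star$), PAC learnability characterized by its finiteness with rate $\wt{\Theta}_k(\mathrm{Ndim}(\calH)/n)$ (\Cref{theorem:partial-pac-learning-sc}), an upper bound via the one-inclusion hypergraph predictor, a lower bound by reducing to the total class supported on a shattered set, and a disambiguation separation lifted from \cite{alon2022theory}. The one substantive difference is where you locate the technical difficulty. The step you flag as the crux --- a new Sauer--Shelah--Haussler-type packing/degree bound for the one-inclusion hypergraph of a \emph{partial} multiclass class --- is not needed: because realizability is defined sample-wise, one can restrict attention to the $n$ distinct sample points, form the \emph{total} class $\calH_{S_d}$ of realizable total patterns on those points (nonempty by realizability, and with $\mathrm{Ndim}(\calH_{S_d}) \le \mathrm{Ndim}(\calH)$), and invoke the existing total-concept one-inclusion bound of \cite{moran-dinur} verbatim; this is exactly what \Cref{lemma:one-inc-partial} does, so your anticipated obstacle dissolves. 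Conversely, the genuinely new combinatorial ingredient sits where your sketch is vaguest: to make the disambiguation lower bound work in the multiclass setting one must bound the Natarajan dimension of the tensorized gadget built from $\log_2(k+1)$ copies of the Alon--Saks--Seymour construction, and the combinatorial version of Sauer--Shelah--Perles provably fails for partial classes. The paper's fix is to prove that the \emph{growth-function} variant of the SSP lemma survives in the partial regime (\Cref{lemma:ssp}) and to use it to control the Natarajan dimension of the product class in \Cref{theorem:disambiguation}; your ``label-padding gadget'' would need precisely this lemma (or a substitute) to go through.
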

The aim of this paper is to develop (i) a distribution-dependent learning theory for multiclass classification and (ii) a learning theory for partial multiclass concept classes in the distribution-independent setting\footnote{While we believe that one could design a unified learning theory addressing the two questions at once, we prefer to provide two separate theories, since they are both interesting in their own.}. We comment that we focus on the realizable setting that already poses important challenges and requires novel ideas and we believe that our results can be extended to the agnostic case, which is left for future work. We remark that such theories for binary classification were recently developed~\cite{universal,alon2022theory}. However, in various practical applications, such explanations may not suffice,
since it is rarely the case that there are only two classes. As it is already evident from the PAC setting, moving from binary classification to multiclass classification is not trivial~\cite{multiclass-erm}.
We now discuss \Cref{obs1} and \ref{obs2}; we underline that our goal is not to replace, but to build upon and complement, the traditional PAC model, which constitutes the bedrock of learning theory.

\paragraph{Distribution-Dependent Learning Rates}
In many modern machine learning applications the generalization error $\E[\mathrm{er}(\wh{h}_n)]$ drops \emph{exponentially fast}
as a function of the sample size $n$~\cite{cohn1990can,cohn1992tight,schuurmans,viering2021shape}. However, the
traditional PAC learning theory predicts merely $\wt{O}(d/n)$ rates in the realizable setting,
where $d$ is the complexity measure of the underlying concept
class that the algorithm is trying to learn. 
A possible explanation for this discrepancy between the theoretical guarantees and the empirical performance of the learning 
algorithms is the worst-case nature of the PAC guarantees. Notice that in \Cref{definition:multiclass pac learning}, for any fixed learning algorithm, one considers its performance against the worst distribution for it. In particular, this means that as the sample size $n$ increases and new classifiers $\wh{h}_n$ are produced, the distribution that is used as a benchmark can differ. However, in many practical applications, one considers some \emph{fixed} distribution $P$ and measures the performance of the classifier
as $n \rightarrow \infty$ without changing $P$. Hence, there is an important need to study mathematical models that capture this behavior of learning algorithms and not just the minimax one. One such approach
that was recently proposed by~\cite{universal} is to study 
\emph{universal learning rates}, which means that 
the learning rates guarantees hold for $\emph{every}$ fixed (realizable) distribution $P$, but there is not a \emph{uniform} bound over all of the distributions. To be more precise,
$\calH$ is learnable at rate $R$ (where $\lim_{n \to \infty} R(n) = 0) $ in the universal setting if
\begin{equation}
\label{definition:universal setting guarantee}
    \exists \wh{h}_n: \forall P \in \mathrm{RE}(\calH), ~\exists C = C(P), c = c(P) > 0 \text{ so that } \E[\er(\wh{h}_n)] \leq C \cdot R(c \cdot n), \forall n \in \nats.
\end{equation}
Note that the above equation is the same as in the PAC model with the exception of a change between the existential quantifiers: in PAC, the focus is on the case where $\exists C,c > 0 : \forall P \in \mathrm{RE}(\calH)$ the guarantee holds (which expresses \emph{uniformity}), while in the universal setting distribution-dependent constants are allowed.
This subtle change in the definition can make the error-rate
landscape vastly different. As an example, consider the case where
for $\wh{h}_n$, we have that $\er(\wh{h}_n) \leq C(P)e^{-c(P)n}$, for
every distribution $P$, where $C(P),c(P)$ are some
distribution-dependent constants. When we take the pointwise
supremum over all of these infinitely many distributions, it can be the
case that
the resulting function drops as $C'/n$, where $C'$ is a distribution-independent constant \cite{universal}.

\paragraph{Partial Concept Classes} The motivation behind \Cref{obs2} is that, in various practical learning tasks, the data satisfy some special properties that make the learning process simpler. For instance, it is a common principle to use classification with margin where the points in the dataset have a safe gap from the decision boundary. Such properties induce \emph{data-dependent} assumptions that the traditional PAC learning theory framework provably fails to express. 
In fact, existing data-dependent analyses diverge from the standard PAC model \cite{shawe1998structural, herbrich2001algorithmic} and provide problem-specific approaches. Thus, there is a need for a formal framework that allows us to express such data-dependent restrictions and study these problems in a \emph{unified and principled way}. Recently, \cite{alon2022theory} proposed an elegant extension of the binary PAC model to handle such tasks via the framework of \emph{partial concept classes}. As an intuitive example, a halfspace with margin is a partial function that is undefined inside the forbidden margin and is a well-defined halfspace outside the margin boundaries.

\subsection{The Traditional Multiclass Learning Problem}\label{section:universal rates formal}
Let $[k] = \{0,1,...,k\}$ for some fixed positive integer $k \in \nats$. We consider a domain $\calX$ and a concept class $\calH \subseteq [k]^\calX$. A classifier is a universally measurable\footnote{We discuss measurability formally in \Cref{appendix:measurability}} function $h : \calX \to [k]$. The error rate of a classifier $h$ with respect to a probability distribution $P$ on $\calX \times [k]$ is equal to $\mathrm{er}(h) = \mathrm{er}_P(h) = \Pr_{(x,y) \sim P}[h(x) \neq y]$. We focus on the setting where $P$ is realizable, i.e., $\inf_{h \in \calH} \mathrm{er}_P(h) = 0$. Formally, a (deterministic\footnote{We focus for simplicity on deterministic learners. Our results extend to randomized algorithms.}) learning algorithm is a sequence of universally measurable functions $H_n : (\calX \times [k])^n \times \calX \to [k]$, which take as input a sequence of $n$ independent pairs $(x_i,y_i) \sim P$ (training set) and output data-dependent classifiers $\wh{h}_n: \calX \rightarrow [k]$, where $\wh{h}_n(x) = H_n((x_1,y_1),\ldots,(x_n, y_n),x)$. The goal is to come up with algorithms whose $\E[\mathrm{er}(\wh{h}_n)]$ admits a fast decay as a function of $n$, where the expectation is over the training set.

\subsection{Universal Multiclass Learning: Our Results}
The aim of our first theory is to fully characterize the admissible universal rates of learning, i.e., $\E[\mathrm{er}(\wh{h}_n)],$ in the multiclass classification setting with a bounded number of labels. The following definition formalizes this notion of achievable rate in the (realizable) universal learning model~\cite{universal}.
\begin{definition}[\cite{universal}, Definition 1.4]
Let $\calH \subseteq [k]^\calX$ and let $R : \nats \to [0,1],$ with $R(n) \to 0$, be a rate function.
We say that
 $\calH$ is \textbf{learnable at rate $R$} if there exists a learning algorithm $\wh{h}_n$ such that for every realizable distribution $P$ on $\calX \times [k]$ with respect to $\calH$, there exist distribution-dependent $C,c > 0$ for which $\E[\mathrm{er}(\wh{h}_n)] \leq C R(cn),$ for all $n \in \nats$.
    Also, $\calH$ is \textbf{not learnable at rate faster than $R$} if for any learning algorithm $\wh{h}_n$, there exists a realizable distribution $P$ on $\calX \times [k]$ with respect to $\calH$ and distribution-dependent $C,c > 0$ for which $\E[\mathrm{er}(\wh{h}_n)] \geq C R(cn)$ for infinitely many $n \in \nats$.
    $\calH$ is \textbf{learnable with optimal rate $R$} if it is learnable at rate $R$ and is not learnable faster than $R$.
    Finally, $\calH$ requires \textbf{arbitrarily slow rates} if, for every $R(n) \to 0$, $\calH$ is not learnable at rate faster than $R$.
\end{definition}

In the universal multiclass setting, we show that the following fundamental trichotomy occurs (in comparison with the dichotomy witnessed in the uniform PAC model). 
This result is a theoretical justification of the exponential error rates observed in practice.
\begin{theorem}
\label{theorem:tri-main}
Fix a constant $k \in \nats$.
Consider a hypothesis class $\calH \subseteq [k]^\calX$ with $|\calH| > k+2$. Then, exactly one of the following holds for the learning rate of $\calH$ in the realizable case:
\begin{itemize}
    \item $\calH$ is learnable at an optimal rate $e^{-n}$.
    \item $\calH$ is learnable at an optimal rate $1/n$.
    \item $\calH$ requires arbitrarily slow rates.
\end{itemize}
\end{theorem}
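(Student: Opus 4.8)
The plan is to follow the blueprint of the binary universal-rates theory of \cite{universal}, lifting each ingredient to the $(k+1)$-label setting; since $k$ is a fixed constant, its contributions (most notably a $\log k$ loss coming from the Natarajan dimension not obeying an exact Sauer lemma) get absorbed into the distribution-dependent constants $C(P),c(P)$. The characterization is phrased through two infinite combinatorial trees. An \emph{infinite multiclass Littlestone tree} is a complete infinite binary tree whose every internal node carries a point $x\in\calX$ and two distinct labels $y,y'\in[k]$, one per outgoing edge, such that along every root-to-node path the induced point--label constraints are jointly realized by some $h\in\calH$. An \emph{infinite multiclass VCL (Natarajan--Littlestone) tree} is a complete infinite tree whose node at depth $\ell$ carries an $(\ell+1)$-tuple of points together with, for each of them, a pair of distinct labels, has $2^{\ell+1}$ children indexed by the Natarajan-style sign patterns on that tuple, and again realizes every finite path in $\calH$. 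One first checks the combinatorial fact that an infinite VCL tree contains an infinite Littlestone tree, so that the three conditions ``no infinite Littlestone tree'', ``infinite Littlestone tree but no infinite VCL tree'', and ``infinite VCL tree'' are mutually exclusive and exhaustive; the existence of each tree is equivalent to one player winning an associated Gale--Stewart game with open payoff, so the other player has a well-founded, ordinal-ranked winning strategy --- this is the technical device that makes the dichotomies clean.

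I would then prove four quantitative claims. \emph{Exponential upper bound}: if there is no infinite multiclass Littlestone tree, the learner wins the Littlestone game with an ordinal-ranked strategy, which yields an online multiclass learner making finitely many mistakes (the bound depends on the realizable sequence but not on the distribution) on any $\calH$-realizable stream; a standard online-to-batch conversion --- run it on independent subsamples, and bound the chance that a fresh test point still lies among the not-yet-resolved instances --- upgrades this to $\E[\er(\wh h_n)]\le C(P)e^{-c(P)n}$. \emph{Linear lower bound}: from an infinite multiclass Littlestone tree, build a realizable distribution along one branch, placing the depth-$\ell$ point with a slowly decaying probability and labeling by the branch; since no algorithm can, from $n$ samples, guess the ``wrong turn'' at the first unseen depth, a short computation gives $\E[\er(\wh h_n)]=\Omega(1/n)$ for infinitely many $n$. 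These lower-bound distributions use only two labels per point, so multiclass hardness here piggybacks on the binary construction; the quantitative hypothesis $|\calH|>k+2$ is precisely what lets one embed the fixed finite sub-configuration that witnesses the matching $e^{-n}$ lower bound in the first regime (without it a class could be learnable strictly faster and the trichotomy would fail).

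The delicate step --- and the one I expect to be the main obstacle --- is the \emph{linear upper bound}: assuming no infinite multiclass VCL tree, the associated Gale--Stewart game is won by the learner, and this has to be converted, at each target error scale $\eps=2^{-j}$, into a \emph{finite} list of restrictions of $\calH$, each of bounded Natarajan dimension, that provably captures the realizable sample at that scale. On top of this one runs a multiclass ERM / one-inclusion-hypergraph learner on the correct candidate, invoking the $\mathrm{Ndim}(\cdot)\log k/n$ rate of \cite{characterization,moran-dinur} quoted in \Cref{definition:multiclass pac learning}, together with a doubling/peeling over the scales $j\in\nats$ so that aggregation costs only a constant factor, giving $\E[\er(\wh h_n)]\le C(P)/n$. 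The two genuine difficulties are (a) setting up the VCL game so that ``no infinite VCL tree'' yields the needed \emph{uniform} finiteness at every scale --- the naive reading only gives finiteness of each individual Natarajan dimension, not a single finite family that works --- and (b) controlling the multiclass one-inclusion/ERM step with its $\log k$ overhead and the Natarajan-versus-graph-dimension discrepancy while keeping the overhead of aggregating over scales bounded.

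Finally, the \emph{arbitrarily slow lower bound}: given an infinite multiclass VCL tree and any prescribed $R(n)\to0$, put a prior over realizable distributions by drawing a random branch and, at each visited depth $\ell$, spreading mass (tuned to $R$) over the $\ell+1$ Natarajan-shattered points with a uniformly random sign pattern; since with constant probability an $n$-sample learner has not yet seen a freshly revealed shattered coordinate, an averaging / Fano-type argument forces $\E[\er(\wh h_n)]\ge C\,R(cn)$ infinitely often, so no rate beats $R$. Assembling these four claims against the three exhaustive and mutually exclusive tree conditions produces exactly the stated trichotomy, with the universal measurability of all learners discharged as in \Cref{appendix:measurability}.
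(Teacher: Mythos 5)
Your proposal follows essentially the same route as the paper: the trichotomy is reduced to the two infinite trees (multiclass Littlestone and Natarajan--Littlestone), each tree is tied to a Gale--Stewart game whose learner-side winning strategy (ordinal-ranked) yields an eventually-correct online/pattern-avoidance function, the linear upper bound runs the one-inclusion hypergraph predictor on the data-dependent bounded-Natarajan restriction, and the three lower bounds (exponential everywhere, $\Omega(1/n)$ from an infinite Littlestone tree, arbitrarily slow from an infinite NL tree) are proved by the same probabilistic branch constructions. The only cosmetic differences are that the paper aggregates by estimating a single critical time $\hat t_n$ and majority-voting over batches rather than peeling over error scales, and it converts the two-label GS strategy into a predictor via a tournament over labels; the delicate step you flag is resolved exactly by the eventually-correct NL-pattern-avoidance function.
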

We mention that $|\calH| > k+2$ comes without loss of generality.\footnote{The constraint $|\calH| > k+2$ rules out some degenerate scenarios, we kindly refer to~\Cref{appendix:cardinality of H}.}
In contrast to the standard PAC model, any concept class is learnable in the universal rates setting \cite{hanneke-kontorovich}. Intuitively, the analogue of non-learnability in the uniform setting is the case of arbitrarily slow rates.
Our second result is the specification of some combinatorial complexity measures of $\calH$ that characterize the optimal learning rate of this class. Let us first provide some informal definitions of these measures. We begin with the notion of \emph{multiclass Littlestone trees}, which extends the binary Littlestone trees from~\cite{universal}.

\begin{definition}
[Informal (see \Cref{definition:multi-little-tree})]
\label{definition:littletree-informal}
A \textbf{multiclass Littlestone tree} 
for $\calH \subseteq [k]^{\calX}$ is a complete binary tree of depth $d \leq \infty$ whose internal
nodes are labeled by $\calX$, and whose two edges connecting a node to its children are labeled by two different elements in $[k]$,
such that every path of length at most $d$ emanating from the root is consistent with a concept $h \in \calH$.
We say that $\calH$ has an \textbf{infinite multiclass Littlestone tree} if there is a multiclass Littlestone tree for $\calH$ of depth $d = \infty$.
\end{definition}

For some intuition we refer the reader to \Cref{fig:lit}. The above complexity measure appears in the definition of the multiclass Littlestone dimension~\cite{multiclass-erm}. In fact, a class $\calH \subseteq [k]^\calX$ 
has multiclass Littlestone dimension $d$ if it has a multiclass Littlestone tree of depth $d$ but not of depth $d+1$. We underline that having an infinite multiclass Littlestone tree is \emph{not} the same as having an unbounded multiclass Littlestone dimension. A class 
$\calH$ has unbounded Littlestone dimension if for
every $d \in \nats$ there is \emph{some} tree of depth $d$.
However, this does not mean that there is a \emph{single}
infinite tree. This is a fundamental conceptual gap between the uniform and the universal settings.



\begin{wrapfigure}{r}{0.55\textwidth}

\centering
\begin{tikzpicture}
\draw (0,0) to (1.5, 1);
\draw[line width=0.5mm,color=red] (0,0) to (1.5,-1);
\draw (1.5,1) to (3, 1.5);
\draw (1.5,1) to (3, 0.5);
\draw[line width=0.5mm,color=red] (1.5,-1) to (3,-0.5);
\draw (1.5,-1) to (3,-1.5);
\draw[densely dashed] (3,1.5) to (4.2,1.75);
\draw[densely dashed] (3,1.5) to (4.2,1.25);
\draw[densely dashed] (3,0.5) to (4.2,.75);
\draw[densely dashed] (3,0.5) to (4.2,.25);
\draw[densely dashed] (3,-1.5) to (4.2,-1.75);
\draw[densely dashed] (3,-1.5) to (4.2,-1.25);
\draw[line width=0.5mm,color=red,densely dashed] (3,-0.5) to (4.2,-.75);
\draw[densely dashed] (3,-0.5) to (4.2,-.25);

\draw (0.75,0.7) node {$\scriptstyle \l^{(0)}_\emptyset$};
\draw (0.75,-0.815) node[color=red] {$\scriptstyle \l^{(1)}_\emptyset$};

\draw (2.25,1.45) node {$\scriptstyle \ell_0^{(0)}$};
\draw (2.25,0.5) node {$\scriptstyle \ell_0^{(1)}$};
\draw (2.25,-0.555) node[color=red] {$\scriptstyle \ell_1^{(0)}$};
\draw (2.25,-1.5) node {$\scriptstyle \ell_1^{(1)}$};

\draw (3.8,1.8) node {$\scriptstyle$};
\draw (3.8,1.2) node {$\scriptstyle$};
\draw (3.8,.8) node {$\scriptstyle$};
\draw (3.8,.2) node {$\scriptstyle$};

\draw (3.8,-1.8) node {$\scriptstyle $};
\draw (3.8,-1.2) node {$\scriptstyle$};
\draw (3.8,-.885) node[color=red] {$\scriptstyle \ell_{10}^{(1)}$};
\draw (3.8,-.1) node {$\scriptstyle \ell_{10}^{(0)}$};

\draw (0,0) node[color=red,circle,radius=.15,fill=white] {$\bm{x_\varnothing}$};
\draw (1.5,1) node[circle,radius=.15,fill=white] {$x_0$};
\draw (1.5,-1) node[color=red,circle,radius=.15,fill=white] {$\bm{x_1}$};
\draw (3,1.5) node[circle,radius=.15,fill=white] {$x_{00}$};
\draw (3,0.5) node[circle,radius=.15,fill=white] {$x_{01}$};
\draw (3,-0.5) node[color=red,circle,radius=.15,fill=white] {$\bm{x_{10}}$};
\draw (3,-1.5) node[circle,radius=.15,fill=white] {$x_{11}$};

\draw[color=red,<-] (4.3,-.75) to[out=0,in=180] (5.3,0);
\draw[color=red] (5.3,.66) node[right] {$\exists\,h\in\mathcal{H} : $};
\draw[color=red] (5.3,.02) node[right] {$h(x_\varnothing)=\ell_\emptyset^{(1)}$};
\draw[color=red] (5.3,-.62) node[right] {$h(x_1)=\ell_1^{(0)}$};
\draw[color=red] (5.3,-1.26) node[right] {$h(x_{10})=\ell_{10}^{(1)}$};

\end{tikzpicture}
\caption{A multiclass Littlestone tree of depth $3$. Each node $x_u$ has two children $\l_u^{(0)} \neq \l_u^{(1)}$ where $\l_u^{(i)} \in [k]$ for any $i \in \{0,1\}$ and $u \in \{0,1\}^*,$ i.e., the set of binary strings of arbitrary length. Every branch is 
consistent with some concept $h \in \mathcal{H}$. The figure is adapted from~\cite{universal}.
\label{fig:lit}
}

\end{wrapfigure}

The next definition is novel and is motivated by the fundamental notion of the Natarajan dimension from the multiclass PAC setting (see \Cref{definition:natarajan dimension}). We first need some terminology: a tuple $(x_1,...,x_t,s^{(0)}_1,...,s^{(0)}_t,s^{(1)}_1,...,s^{(1)}_t) \in \calX^t \times [k]^t \times [k]^t$ with $s_i^{(0)} \neq s_i^{(1)}$, for any $i \in [t],$ is \textbf{$N$-consistent} with the edge $(y_1,...,y_t) \in \{0,1\}^t$ and the concept $h \in \calH$ if $h(x_i) = s^{(y_i)}_i$ for any $i \in [t]$. Recall that if the tuple is $N$-consistent with any binary pattern $y \in \{0,1\}^t$, we say that $(x_1,...,x_t)$ is \textbf{$N$-shattered}. More generally, a path is $N$-consistent with a concept $h \in \calH$ if each node of the path is $N$-consistent with the edge connecting the node with its child across the path and $h$. Since the next definition might be hard to parse, we refer the reader to \Cref{fig:nl-tree} for some intuition. 

\begin{definition}
[Informal (see \Cref{definition:NL tree})]
A \textbf{Natarajan-Littlestone (NL) tree} for $\calH \subseteq [k]^\calX$ is a complete tree of depth $d \leq \infty$ so that every level $1 \leq t \leq d$ has branching factor $2^t$ and nodes that are labeled by $\calX^t \times [k]^t \times [k]^t$ (so that for all $i \in [t]$ the two labels in $[k] \times [k]$ are different) and whose $2^t$ edges connecting a node to its children are labeled by the elements of $\{0,1\}^t$. It must hold that every path of length at most $d$ emanating from the root is $N$-consistent with a concept $h \in \calH$. We say that $\calH$ has an \textbf{infinite Natarajan-Littlestone tree} if there is an NL tree for $\calH$ of depth $d = \infty$.
\end{definition}

\begin{figure}[ht!]
\centering

\begin{tikzpicture}[scale=0.97]

\draw (-6,0) node[color=red,circle,radius=.15,fill=white] {$(\bm{x_\varnothing}, \color{black}s_{\varnothing}^{(0)}\color{red}, \bm{s_{\varnothing}^{(1)}})$};

\draw (-3,1.5) node[fill=white] {$(x_0^0, s_0^{0 (0)}, s_0^{0(1)},x_0^1,s_0^{1(0)}, s_0^{1(1)})$};
\draw (-4,-2) node[color=red,fill=white] {$\bm{(x_1^0, s_1^{0 (0)},} \color{black}s_1^{0(1)}\color{red},\bm{x_1^1,s_1^{1(0)},} \color{black}s_1^{1(1)}\color{red})$};

\draw (-5,.2) to node[sloped,anchor=center,fill=white] {$\scriptstyle 0$} (-3, 1.2) ;
\draw[line width=0.5mm,color=red] (-5,-.3) to node[sloped,anchor=center,fill=white] {$\scriptstyle\bm{1}$} (-4,-1.6);

\draw (-1.2,-2.2) to node[sloped,anchor=center,fill=white] {$\scriptstyle 11$} (0.2,-3.1);
\draw (-1.2,-2.1) to node[sloped,anchor=center,fill=white] {$\scriptstyle 10$} (0.2,-2.3);
\draw (-1.2,-2) to node[sloped,anchor=center,fill=white] {$\scriptstyle 01$} (0.2,-1.6);
\draw[line width=0.5mm,color=red] (-1.4,-1.7) to node[sloped,anchor=center,fill=white] {$\scriptstyle\bm{00}$} (0,-.75);

\draw (4.2,-1.5) node[fill=white] { $(x_{1,01}^0,s_{1,01}^{0(0)},s_{1,01}^{0(1)},x_{1,01}^1,s_{1,01}^{1(0)},s_{1,01}^{1(1)},x_{1,01}^2,s_{1,01}^{2(0)},s_{1,01}^{2(1)})$};
\draw (4.25,-2.25) node[fill=white] {$(x_{1,10}^0,s_{1,10}^{0(0)},s_{1,10}^{0(1)},x_{1,10}^1,s_{1,10}^{1(0)},s_{1,10}^{1(1)},x_{1,10}^2,s_{1,10}^{2(0)},s_{1,10}^{2(1)})$};
\draw (4.2,-3) node[fill=white] {$(x_{1,11}^0,s_{1,11}^{0(0)},s_{1,11}^{0(1)},x_{1,11}^1,s_{1,11}^{1(0)},s_{1,11}^{1(1)},x_{1,11}^2,s_{1,11}^{2(0)},s_{1,11}^{2(1)})$};
\draw (4.2,-0.75) node[color=red,fill=white] {$\bm{(x_{1,00}^0,}\color{black}s_{1,00}^{0(0)}\color{red},\bm{s_{1,00}^{0(1)},x_{1,00}^1,s_{1,00}^{1(0)}},\color{black}s_{1,00}^{1(1)}\color{red},\bm{x_{1,00}^2,s_{1,00}^{2(0)},}\color{black}s_{1,00}^{2(1)}\color{red})$};

\draw (-2.5,4) node[fill=white] {$(x_{0,00}^0,s_{0,00}^{0(0)},s_{0,00}^{0(1)},x_{0,00}^1,s_{0,00}^{1(0)},s_{0,00}^{1(1)},x_{0,00}^2,s_{0,00}^{2(0)},s_{0,00}^{2(1)})$};

\draw (-3,1.7) to node[sloped,anchor=center,fill=white] {$\scriptstyle 00$} (-2.9, 3.7);
\draw[densely dashed] (-1, 1.9) to node[sloped,anchor=center,fill=white] {$\scriptstyle 01$} (0, 2.4);
\draw[densely dashed] (-.5,1.5) to node[sloped,anchor=center,fill=white] {$\scriptstyle 10$} (0.5, 1.5);
\draw[densely dashed] (-1,1) to node[sloped,anchor=center,fill=white] {$\scriptstyle 11$} (0, 0.5);

\draw[densely dashed] (8.5,-.45) to node[sloped,anchor=center,fill=white] 
{$\scriptstyle 000$} (8.85,.85);
\draw[densely dashed] (8.55,-.55) to node[sloped,anchor=center,fill=white] 
{$\scriptstyle 001$} (9.35,.55);
\draw[densely dashed] (8.6,-.7) to node[sloped,anchor=center,fill=white] 
{$\scriptstyle 010$} (9.75,.1);
\draw[densely dashed] (8.6,-.8) to node[sloped,anchor=center,fill=white] 
{$\scriptstyle 011$}(9.95,-0.55);
\draw[line width=0.5mm,color=red,densely dashed] (8.65,-.9) to 
node[sloped,anchor=center,fill=white] {$\scriptstyle\bm{100}$} (9.95,-1.2);
\draw[densely dashed] (8.6,-1) to node[sloped,anchor=center,fill=white] 
{$\scriptstyle 101$} (9.7,-1.8);
\draw[densely dashed] (8.6,-.95) to node[sloped,anchor=center,fill=white] 
{$\scriptstyle 110$} (9.2,-2.3);
\draw[densely dashed] (8.5,-1.1) to 
node[sloped,anchor=center,fill=white] 
{$\scriptstyle 111$} (8.7,-2.6);

\draw[color=red,<-] (10.15,-1.3) to[out=-10,in=270] (9.5,1.9);
\draw[color=red] (2.1,4) node[right] {$\exists\,h\in\mathcal{H} : $};
\draw[color=red] (2.1,3.4) node[right] {$h(x_\varnothing)= s_\emptyset^{(1)}$};
\draw[color=red] (2.1,2.8) node[right] {$h(x_1^0)=s_1^{0(0)},~h(x_1^1)=s_1^{1(0)}$};
\draw[color=red] (2.1,2.2) node[right] 
{$h(x_{1,00}^0)=s_{1,00}^{0(1)},~h(x_{1,00}^1)=s_{1,00}^{1(0)},~h(x_{1,00}^2)=s_{1,00}^{2(0)}$};

\end{tikzpicture}

\caption{A Natarajan-Littlestone tree of depth $3$. 
Every branch is 
consistent with a concept $h\in\mathcal{H}$. This is 
illustrated here for one of the branches. Due to lack of space, not all nodes and
external edges are drawn. The figure is adapted from~\cite{universal}.
The root of the tree is the point $(x_{\emptyset})$ with two colors $s^{(0)}_{\emptyset} \neq s^{(1)}_{\emptyset}$. In this example, the branch picks the string $'1'$ and hence the node of the second level contains the two points $(x_1^0,x_1^1)$ and the associated colors. We proceed in a similar manner.
\label{fig:nl-tree}}
\end{figure}

An NL tree looks like a multiclass Littlestone tree whose branching factor increases exponentially with the depth of the tree and where each node at depth $t$ in the NL tree  
contains $t$ points $x_1,...,x_t$ of $\calX$ and two colorings $s^{(0)},s^{(1)}$ so that $s^{(0)}(x_i) \neq s^{(1)}(x_i)$ for all $i \in [t]$. Crucially, this structure encapsulates the notion of $N$-shattering 
in the combinatorial structure of a Littlestone tree. Intuitively, along each path in the NL tree, we encounter $N$-shattered sets of size increasing with the depth. Using these two definitions, we can state our second result which is a complete characterization
of the optimal rates achievable for any given concept class $\calH \subseteq [k]^\calX$.
\begin{theorem}
\label{theorem:trichotomy}
Fix a constant positive integer $k$.
Consider a hypothesis class $\calH \subseteq [k]^\calX$ with $|\calH| > k+2$. Then, one of the following holds for any $n \in \nats$ in the realizable case:
\begin{itemize}
    \item If $\calH$ does not have an infinite multiclass Littlestone tree, then it is learnable at an optimal rate $e^{-n}$.
    \item If $\calH$ has an infinite multiclass Littlestone tree but does not have an infinite Natarajan-Littlestone tree, then it is learnable at an optimal rate $1/n$.
    \item If $\calH$ has an infinite Natarajan-Littlestone tree, then it requires arbitrarily slow rates.
\end{itemize}
\end{theorem}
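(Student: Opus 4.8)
The plan is to lift the three-regime analysis of \cite{universal} from the binary to the bounded-label multiclass setting, with multiclass Littlestone trees playing the role of Littlestone trees, the Natarajan dimension playing the role of the VC dimension, and NL trees playing the role of VCL trees. \Cref{theorem:trichotomy} then splits into five implications whose conjunction gives the statement. The three antecedents partition all classes once one observes that an infinite NL tree contains an infinite multiclass Littlestone tree: from the NL tree keep, at every level $t$, only the first of its $t$ planted points and the two colorings on that point, and route the Littlestone edge ``$b$'' ($b\in\{0,1\}$) to the NL-child with label string $(b,0,\dots,0)$ --- $N$-consistency of the retained NL-path is precisely consistency of the induced Littlestone path, so the induced complete binary tree has infinite depth. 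The five implications are:
\begin{enumerate}
\item no infinite multiclass Littlestone tree $\Rightarrow$ learnable at rate $e^{-n}$;
\item no infinite NL tree $\Rightarrow$ learnable at rate $1/n$;
\item infinite multiclass Littlestone tree $\Rightarrow$ not learnable faster than $1/n$;
\item infinite NL tree $\Rightarrow$ arbitrarily slow rates;
\item $|\calH|>k+2$ $\Rightarrow$ not learnable faster than $e^{-n}$.
\end{enumerate}
Combining (1) with (5) handles the first regime, (2) with (3) the second, and (4) the third; mutual exclusivity of the conclusions is immediate, and \Cref{theorem:tri-main} follows by forgetting which combinatorial condition holds.

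For implication (1) I would introduce the \emph{multiclass Littlestone game}: in round $t$ the adversary presents $x_t\in\calX$, the learner outputs $\widehat y_t\in[k]$, the adversary presents $y_t$ subject to the history $(x_1,y_1),\dots,(x_t,y_t)$ remaining realizable with respect to $\calH$, and the learner wins if it errs only finitely often. A winning adversary strategy is exactly an infinite multiclass Littlestone tree, so when none exists the learner wins; via the auxiliary closed games ``at most $m$ mistakes'' and Gale--Stewart determinacy one extracts a winning --- and, with the care of \Cref{appendix:measurability}, universally measurable --- strategy, i.e.\ a conservative online learner whose mistake count on any i.i.d.\ realizable stream is almost surely finite with a distribution-dependent bound. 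The non-trivial point, as in \cite{universal}, is that a finite mistake bound alone only yields an $o(1)$ batch rate: the well-foundedness of the residual game tree (its ordinal rank being the quantity that strictly drops at every mistake) is what the online-to-batch conversion of \cite{universal} exploits to upgrade this to $\E[\er(\widehat h_n)]\le C(P)e^{-c(P)n}$, and I would transcribe that conversion, since nothing in it is specific to the label set.

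The lower bounds (3), (4), (5) all plant a countably-supported hard distribution inside the relevant tree. For (3): sample a branch of the infinite multiclass Littlestone tree by fair coin flips $b_1,b_2,\dots$, put mass $\propto 2^{-t}$ on the depth-$t$ node of that branch and label it by the edge $b_t$ (distinct alternatives $\ell^{(0)}\neq\ell^{(1)}$ are available there); every finite prefix is consistent with some $h\in\calH$, so the distribution is realizable, and at sample size $n$ the depth-$\approx\log_2 n$ node carries mass $\Theta(1/n)$, is unseen with constant probability, and --- the branch being random --- the learner errs on it with probability $\ge 1/2$, giving $\E[\er(\widehat h_n)]\gtrsim 1/n$ infinitely often after a coupling argument transfers the ``in expectation over branches'' guarantee to a single fixed branch. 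For (4) the same construction runs on an infinite NL tree: at depth $t$ one spreads mass over the $t$ planted points and reveals, for each, one of the \emph{two} colorings $s^{(0)},s^{(1)}$ according to the coordinates of a uniformly random edge in $\{0,1\}^t$ --- the constraint $s^{(0)}_i\neq s^{(1)}_i$ is exactly what keeps each unseen planted point a genuine fair coin for the learner --- and tuning the per-level masses to a prescribed $R(n)\to 0$ (as in \cite{universal}, the weights are chosen depending on $R$) forces $\E[\er(\widehat h_n)]\ge R(n)$ infinitely often. For (5) I would show (this is the content of \Cref{appendix:cardinality of H}) that $|\calH|>k+2$ forces a multiclass Littlestone tree of depth $2$, on which the branch construction with the weights \emph{frozen} (uniform on the two levels) already yields $\E[\er(\widehat h_n)]\gtrsim 2^{-cn}$ infinitely often, which is of the required form $Ce^{-cn}$.

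The main obstacle is implication (2), the linear-rate upper bound. Here one must first convert the absence of an infinite NL tree, through a Gale--Stewart argument in a suitable ``NL game'' whose winning adversary strategies are infinite NL trees, into a statement that the restrictions of $\calH$ encountered by the learner eventually have bounded Natarajan dimension; one then runs a recursive/doubling scheme whose base case is a uniform multiclass learner at rate $\mathrm{Ndim}(\calH)\log(k)/n$ --- the one-inclusion hypergraph predictor of \cite{moran-dinur} --- with the recursion glued by online-to-batch and a sub-sampling-and-voting step, following \cite{universal}. Two places genuinely go beyond a transcription of the binary proof: (a) multiclass ERM does not behave as cleanly as in the binary case \cite{multiclass-erm}, so one must commit throughout to the one-inclusion predictor and check that its $\log k$ overhead is inert for a $\Theta(1/n)$ conclusion; and (b) every strategy produced by Gale--Stewart and every data-dependent selection in the recursion has to be verified universally measurable. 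A further recurring nuisance, already present in \cite{universal}, is the passage from ``hard in expectation over the random branch, for each large $n$'' to ``hard for one realizable distribution, infinitely often'', which needs a Borel--Cantelli-type coupling rather than a union bound.
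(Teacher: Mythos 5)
Your five-implication decomposition is exactly the paper's (\Cref{section:exponential}, \Cref{section:linear-exp-gap}, \Cref{section:linear}, \Cref{section:slow}, \Cref{proposition:exp-lower-bound}), and your constructions for the lower bounds (3) and (4) — random branch, geometric/tuned level masses, conditioning on the event that no training sample reaches the test level, reverse-Fatou to fix one branch — coincide with the paper's proofs of \Cref{theorem:inf-littlestone-linear-lower-bound} and \Cref{theorem:slow-rates}. Two points, however, are genuine gaps. First, in implication (1) the determinacy argument you sketch does not establish what is needed. The online game itself is not a Gale--Stewart game (``finitely many mistakes'' is not finitely decidable for either player), and the auxiliary closed games ``at most $m$ mistakes'' are determined but characterize the wrong dichotomy: the adversary winning every such game is equivalent to \emph{unbounded multiclass Littlestone dimension}, not to the existence of an \emph{infinite} multiclass Littlestone tree, and the whole point of the universal setting is that these differ. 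The finitely decidable surrogate must be a game whose learner-winning condition is ``the version space empties,'' and in the multiclass setting the naive such game (learner names a label in $[k]$) has adversary-winning strategies that correspond to $(k+1)$-ary shattering rather than to multiclass Littlestone trees. The paper's fix — which it flags as a main technical contribution — is the game of \Cref{fig:gs-game} in which the adversary proposes a point \emph{together with two candidate labels} and the learner commits to one of the two; adversary-winning strategies are then exactly infinite multiclass Littlestone trees, the learner's winning strategy is the ordinal SOA (\Cref{lem:decreasing value in GS games}), and since that strategy takes a \emph{pair} of labels as input it must be converted into a predictor by the tournament of \Cref{fig:exp-algo}. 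Your proposal is silent on all three of these multiclass-specific steps.

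Second, in implication (5) the claim that $|\calH|>k+2$ forces a multiclass Littlestone tree of depth $2$ is false: take $k=1$, $\calX=\{x_1,x_2,x_3\}$ and $\calH$ consisting of the three indicator functions together with the all-zero function, so $|\calH|=4>k+2$ but no point admits two extendable labels on \emph{both} of its subtrees. What $|\calH|>k+2$ does give (see \Cref{appendix:cardinality of H}) is two hypotheses $h_0,h_1$ that agree at some $x$ and disagree at some $x'$, and the paper's \Cref{proposition:exp-lower-bound} runs the two-point indistinguishability argument directly on $\{x,x'\}$ — the agreement at $x$ is essential, since it is what makes the two candidate distributions indistinguishable on the event that all $n$ samples land on $x$. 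Your ``frozen branch on a depth-$2$ tree'' would work \emph{if} such a tree existed, but it need not, so this implication has to be argued from the weaker agreement/disagreement structure. The remaining items — the NL-tree-to-Littlestone-tree projection, the pattern-avoidance reading of the linear-rate Gale--Stewart game, the one-inclusion predictor with its harmless $\log k$ factor, and the estimate-$\widehat t_n$/split/majority-vote conversion — match the paper.
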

It is clear that the above result implies \Cref{theorem:tri-main}.
We remark that not only the achievable rates are different compared to the uniform setting, but also the algorithms we use
to get these rates differ vastly from ERM.
We sketch the main techniques in \Cref{section:tri-sketch}. For the formal proof, see \Cref{section:tri-proof}.
We briefly summarize our main
technical contributions in this setting: using the pre-described complexity 
measures of $\calH$, we introduce novel Gale-Stewart games (see \Cref{section:notation and preliminaries} for a definition) which lead to new learning algorithms
that achieve the optimal learning rates. Also, we extend the lower
bounds from~\cite{universal} that hold for binary classification
to the multiclass setting using our combinatorial measures. 
Our constructions raise interesting questions about how
the equivalence between various combinatorial dimensions, like the Graph
dimension and the Natarajan dimension, which is established in the PAC 
setting translates to the universal setting. For further details, we refer to \Cref{section:overview} and \Cref{section:gl-trees}.

 \subsection{Partial Multiclass Learning: Our Results}
\label{section:partial}
As we mentioned earlier, traditional PAC learning cannot capture data-dependent assumptions. Inspired by~\cite{alon2022theory}, we slightly modify the basic multiclass learning problem in a quite simple manner: instead of dealing with concept classes $\calH \subseteq \{0,1,...,k\}^\calX$ where each concept $h \in \calH$ is a \textbf{total function} $h : \calX \to \{0,1,...,k\}$, we study \textbf{partial concept classes} $\calH \subseteq \{0,1,...,k,\star\}^\calX$, where each concept $h$ is now a \textbf{partial function} and $h(x) = \star$ means that the function $h$ is \textbf{undefined} at $x$. We define the support of $h$ as the set $\mathrm{supp}(h) = \{ x \in \calX : h(x) \neq \star \}.$

To illustrate the power of partial classes, we comment that the fundamental class of $d$-dimensional halfspaces with margin $\gamma > 0$  and $k$ labels can be cast as a partial class $\calH_\gamma = \{ h_W : W \in \reals^{k \times d} \}$, where $h_W(x) = i \in [k]$  if  $(W_i - W_j) \cdot x \geq \gamma$   for all $j \neq i$  and  $h_W(x) = \star$ otherwise~\cite{alon2022theory}. As another example,
we can express the constraint that the data have to be
in a low-dimensional space by considering the partial concept class $\calH = \left\{h:\reals^d \rightarrow \{0,1,...,k,\star\}:
\mathrm{dim}\left(\mathrm{supp}(h) \right) \ll  d \right\}$, where $\mathrm{dim}(S)$ captures the dimension of the set of points in $S$.

 We characterize multiclass PAC learnability of partial concepts in the realizable setting. A distribution $P$ on $\calX \times \{0,1,...,k\}$ is \textbf{realizable} by $\calH$ if,
almost surely, for any $n$, a training set $(x_i,y_i)_{i \in [n]} \sim P^n$ is realizable by some partial concept $h \in \calH$, i.e., $\{x_i\}_{i \in [n]} \subseteq \mathrm{supp}(h)$ and $h(x_i) = y_i$ for all $i \leq n$. For a partial concept $h$ and a distribution $P$ on $\calX \times \{0,1,...,k\}$, we let $\mathrm{er}_P(h) = \Pr_{(x,y) \sim P}[h(x) \neq y]$, i.e., whenever $h$ outputs $\star$ it is counted as a mistake. We mention that the standard combinatorial measures such as the VC or the Natarajan dimension \emph{naturally extend to the partial setting}; e.g., a partial class $\calH$ VC shatters a set of points if any \emph{binary} pattern is realized by $\calH$ (we forget about $\star$).
\begin{definition}
[Multiclass Partial PAC Learnability~\cite{alon2022theory}]
\label{definition:partial-pac}
A partial concept class $\calH \subseteq \{0,1,...,k,\star\}^\calX$ is \textbf{PAC learnable} if for every $\eps, \delta \in (0,1)$, there exists a finite $\calM(\eps, \delta) \in \nats$ and a learning algorithm $\mathbb A$ such that, for every distribution $P$ on $\calX \times \{0,1,...,k\}$ realizable with respect to $\calH$, for $S \sim P^{\calM(\eps, \delta)}$, it holds that
$
\Pr_{S} \left[  \mathrm{er}_P(\mathbb A(S) ) \leq \eps \right] \geq 1-\delta\,.
$
The sample complexity of $\mathbb A$ is the value $\calM(\eps, \delta)$ and the optimal sample complexity is the minimum possible value of $\calM(\eps, \delta)$ for any given $\eps, \delta$.
\end{definition}

We provide a combinatorial characterization of multiclass PAC learnability in the partial setting with a bounded number of labels. We additionally give bounds for $\calM(\eps, \delta)$ in our more general \Cref{theorem:partial-pac-learning-sc}.
We mention that this result is essentially implied by combining the results of \cite{moran-dinur,alon2022theory} and we state (and prove) it for completeness.
\begin{theorem}
\label{theorem:partial-pac-learning}
Fix a positive constant $k \in \nats$.
For any partial concept class $\calH \subseteq \{0,1,\ldots,k,\star\}^{\calX}$, it holds that $\calH$ is PAC learnable if and only if
$\mathrm{Ndim}(\calH) < \infty$.
\end{theorem}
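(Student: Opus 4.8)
The plan is to prove both directions by reducing to known results. For the ``only if'' direction, I would argue the contrapositive: if $\mathrm{Ndim}(\calH) = \infty$, then for every $d \in \nats$ there is a set of $d$ points $N$-shattered by $\calH$ (in the partial sense, i.e., all the witnessing concepts have these points in their support). Restricting attention to a uniform distribution supported on such a shattered set, this looks exactly like learning a total concept class of Natarajan dimension $d$ on a finite domain, so the classical multiclass PAC lower bound of \cite{rubinstein2009shifting} — which, as noted around \Cref{definition:multiclass pac learning}, gives an $\Omega(\mathrm{Ndim}(\calH)/n)$ expected-error lower bound, hence an $\Omega(d/\eps)$ sample-complexity lower bound — applies. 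Since $d$ is arbitrary, no finite sample complexity $\calM(\eps,\delta)$ can work for all realizable $P$, so $\calH$ is not PAC learnable. One must be slightly careful that the realizable distributions used in the lower bound are genuinely realizable in the \emph{partial} sense, but this is immediate since each such $P$ is supported on a finite $N$-shattered set and every labeling we need is witnessed by a concept whose support contains that set.

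For the ``if'' direction, assume $\mathrm{Ndim}(\calH) = d < \infty$ and produce a learner with sample complexity depending only on $d, k, \eps, \delta$. The natural route, following the remark in the excerpt that the result follows from combining \cite{moran-dinur} and \cite{alon2022theory}, is: first reduce the $k$-label partial problem to a binary partial problem, or directly invoke a one-inclusion-graph / sample-compression argument for partial classes. Concretely, I would (i) observe that a finite Natarajan dimension controls the ``dual'' growth: for any finite sample $S$ of size $m$, the number of distinct labelings of $S$ induced by concepts in $\calH$ that are \emph{defined on all of $S$} is $m^{O(d \log k)}$ (the standard Natarajan–Sauer–Shelah bound, which is insensitive to the presence of $\star$ once we fix the sample and look only at concepts defined on it); (ii) run the one-inclusion hypergraph prediction algorithm of \cite{moran-dinur} on the sub-class of concepts agreeing with the training data — on any test point, only concepts defined there contribute, so the partiality is harmless — obtaining leave-one-out error $\tilde O(d/m)$; (iii) convert this expected-error guarantee into an $(\eps,\delta)$-PAC guarantee by the usual boosting-the-confidence / median trick, yielding $\calM(\eps,\delta) = \tilde O\!\big(\tfrac{d \log k}{\eps}\log\tfrac1\delta\big)$ or similar, which is exactly what the more general \Cref{theorem:partial-pac-learning-sc} should state.

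The main obstacle — and the only place where partiality genuinely bites — is making sure every combinatorial and algorithmic step is stated over the \emph{sub-class of concepts defined on the relevant points}, rather than over all of $\calH$. The Natarajan–Sauer–Shelah-type counting, the one-inclusion graph construction, and the realizability of the empirical distribution all have to be carried out with this restriction, since $\calH$ itself may assign $\star$ to a test point and there is no global total extension. Once one checks that the one-inclusion algorithm, at a test point $x$, only ever needs the concepts in $\calH$ that are defined at $x$ and consistent with the sample (and that realizability guarantees at least one such concept exists almost surely), everything goes through verbatim from the total case. I would therefore structure the proof as: (1) state the partial Natarajan–Sauer–Shelah bound; (2) invoke the partial-class one-inclusion bound; (3) amplify confidence; (4) prove the lower bound via shattered-set reductions — and flag step (2)'s restriction-to-defined-concepts as the one subtlety requiring a short explicit check.
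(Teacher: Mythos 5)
Your proposal is correct and follows essentially the same route as the paper: the upper bound restricts $\calH$ to the (total) class of labelings it induces on the sample points, applies the one-inclusion hypergraph bound of \cite{moran-dinur} to that class (whose Natarajan dimension is at most $\mathrm{Ndim}(\calH)$), and boosts the confidence; the lower bound reduces to total multiclass classes supported on arbitrarily large $N$-shattered sets. The only cosmetic difference is that your step (i), the partial Natarajan--Sauer--Shelah counting bound, is not actually needed for this theorem (the one-inclusion guarantee is stated directly in terms of the Natarajan dimension); the paper uses such a growth-function bound only later, for the disambiguation result.
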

Here, $\mathrm{Ndim}(\calH)$ is the Natarajan dimension of $\calH$ (see \Cref{definition:natarajan dimension} and \Cref{rem:vc}). At first sight this result may not seem surprising. However, its proof is different from the standard multiclass PAC learning (for constant $k$), which goes through uniform convergence and ERM. In fact, such tools provably fail \cite{alon2022theory}. For a sketch, see below \Cref{theorem:partial-pac-learning-sc}.
We complement the above structural result with some additional insight which sheds light towards the, perhaps unanticipated, complexity of partial concept classes. 
To this end, we discuss the question of \emph{disambiguation} \cite{kantorovich-disambiguation-open,alon2022theory}: Can a partial Natarajan class (i.e., with finite $\mathrm{Ndim}$) be represented by total Natarajan classes? To address this task, the notion of disambiguation is required: roughly, a total class $\overline{\calH}$ disambiguates the partial class $\calH$ if every partial concept $h \in \calH$ can be extended to some total concept $\overline{h} \in \overline{\calH}$, i.e., $\overline{h}$ agrees with $h$ in the support of $h$ and assigns to the undefined points some labels. For a formal definition of disambiguation, see \Cref{definition:disambiguation}. For the case $k=1$,  \cite{alon2022theory} provided an easy-to-learn partial class that cannot be represented by any total class of bounded VC dimension using, surprisingly, some recent results from communication complexity and graph theory (see e.g., \cite{balodis2022unambiguous}). We extend this result to the multiclass setting using Sauer's lemma, which provides a bound on the growth function \cite{understanding-ml}, and tools from the binary impossibility result. 
\begin{theorem}
[Informal, see \Cref{theorem:disambiguation}]
\label{theorem:disambiguation-inf}
Fix $k \in \nats$.
For any $n \in \nats$, there exists a  class $\calH \subseteq \{0,1,...,k,\star\}^{\nats}$ with $\mathrm{Ndim}(\calH) = O_k(1)$ such that any disambiguation $\overline{\calH}$ of $\calH$ has $\mathrm{Ndim}(\overline{\calH}) = \infty$.
\end{theorem}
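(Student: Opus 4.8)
The plan is to reduce the multiclass claim to the binary disambiguation impossibility theorem of \cite{alon2022theory} (which in turn rests on \cite{balodis2022unambiguous}) via a label-collapsing map, and then to use the multiclass (Natarajan) form of Sauer's lemma to turn ``some disambiguation has finite Natarajan dimension'' into a counting contradiction.

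First I would fix the hard instance. By \cite{alon2022theory}, for every $d$ there is a binary partial concept class $\calG_d \subseteq \{0,1,\star\}^{\nats}$ with $\mathrm{VC}(\calG_d) = O(1)$ such that every total binary disambiguation of $\calG_d$ has VC dimension at least $d$ (in the stronger form, a single partial class of constant VC dimension all of whose disambiguations have infinite VC dimension). I regard $\calG_d$ as a $k$-label partial class $\calH := \calG_d \subseteq \{0,1,\dots,k,\star\}^{\nats}$ by never using the labels $2,\dots,k$, and I would check that this does not move the dimension: any point of an $N$-shattered set of $\calH$ lies in the support of every concept witnessing the shattering, and those concepts are $\{0,1\}$-valued there, so the two witnessing colorings $f_0 \neq f_1$ satisfy $\{f_0(x),f_1(x)\} = \{0,1\}$ at each such point; hence $N$-shattering for $\calH$ is exactly VC-shattering for $\calG_d$, and $\mathrm{Ndim}(\calH) = \mathrm{VC}(\calG_d) = O(1) = O_k(1)$.

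Next I would run the reduction. Let $\overline{\calH} \subseteq \{0,\dots,k\}^{\nats}$ be any disambiguation of $\calH$ and suppose $\mathrm{Ndim}(\overline{\calH}) = D < \infty$. Define $\beta : \{0,\dots,k\} \to \{0,1\}$ by $\beta(0)=0$ and $\beta(j)=1$ for $j \geq 1$, and set $\overline{\calG} := \{\beta \circ \overline{h} : \overline{h} \in \overline{\calH}\} \subseteq \{0,1\}^{\nats}$. Then $\overline{\calG}$ disambiguates $\calG_d$: if $\overline{h}$ extends $g \in \calG_d$, then since $g$ is $\{0,1\}$-valued on $\mathrm{supp}(g)$ and $\beta$ fixes $\{0,1\}$, the function $\beta \circ \overline{h}$ agrees with $g$ on $\mathrm{supp}(g)$. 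Hence $\mathrm{VC}(\overline{\calG}) \geq d$, so $\overline{\calG}$ has a shattered set of size exactly $d$ and its growth function satisfies $\tau_{\overline{\calG}}(d) = 2^{d}$. But post-composing with $\beta$ only merges labelings, so $\tau_{\overline{\calG}}(m) \leq \tau_{\overline{\calH}}(m)$ for every $m$, while the multiclass Sauer--Shelah (Natarajan) lemma applied to $\overline{\calH}$, which uses $k+1$ labels and has Natarajan dimension $D$, gives a bound polynomial in $m$, say $\tau_{\overline{\calH}}(m) \leq (m(k+1)^2)^{D}$. Combining, $2^{d} \leq (d(k+1)^2)^{D}$, i.e.\ $D \geq d/\log_2(d(k+1)^2)$; taking $d = d(n,k)$ large makes this exceed $n$, which proves the ``for any $n$'' statement. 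Starting instead from the single binary class all of whose disambiguations have infinite VC dimension yields $\tau_{\overline{\calH}}(m) \geq 2^{m}$ for all $m$, incompatible with any polynomial Sauer bound, so no finite $D$ is possible and $\mathrm{Ndim}(\overline{\calH}) = \infty$.

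The step that needs the most care is not a computation but pinning down conventions. First, the operative definition of disambiguation must permit only total classes over the fixed alphabet $\{0,\dots,k\}$ (or guarantee that only finitely many labels are effectively used); otherwise the Natarajan form of Sauer's lemma is not polynomial in $m$ and the counting argument collapses. Second, one must import the binary theorem in exactly the form being used --- the per-$d$ family $\{\calG_d\}$ versus a single class with all disambiguations of infinite VC dimension --- and, for the single-class version, check that a disjoint union of the $\calG_d$ over disjoint copies of $\nats$ keeps the VC dimension $O(1)$ while still forcing every disambiguation to have unbounded VC dimension (any $N$-shattered set must lie inside one block, and a disambiguation of the union restricts to a disambiguation of each block). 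Everything beyond these choices is bookkeeping with growth functions.
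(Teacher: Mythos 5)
Your proof is correct, but it takes a genuinely different route from the paper's. The paper builds a new, intrinsically multiclass hard instance: it takes $L=\log_2(k+1)$ disjoint copies of the Alon--Saks--Seymour graph, forms the product of the corresponding binary partial classes with labels encoded as $L$-bit strings in $\{0,\dots,k\}$, and then needs a new tool --- a growth-function version of the Sauer--Shelah--Perles lemma that remains valid for \emph{partial} classes (\Cref{lemma:ssp}) --- to certify that this product class still has Natarajan dimension $O_k(1)$; the lower bound on disambiguations then comes from the chromatic-number bound applied to each of the $L$ factors. You instead observe that the binary hard class of \cite{alon2022theory} already witnesses the multiclass statement verbatim once embedded in $\{0,\dots,k,\star\}^{\nats}$: your check that $N$-shattering of the embedded class coincides with VC-shattering of the binary class is right, the label-collapse $\beta$ does turn any $(k+1)$-ary disambiguation into a binary one (and this works under the paper's sequence-based \Cref{definition:disambiguation}, not just the per-concept extension you phrase it with), and the counting step only needs the classical Natarajan--Sauer bound for \emph{total} classes, so you never need the partial SSP lemma at all. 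What you lose relative to the paper is only cosmetic: your class uses just the labels $\{0,1,\star\}$, whereas the paper's construction genuinely exercises all $k+1$ labels, and the paper's detour through \Cref{lemma:ssp} yields that lemma as a standalone contribution. Both arguments deliver the same quantitative size lower bound $n^{\log(n)^{1-o(1)}}$ on disambiguations, since $|\overline{\calG}|\le|\overline{\calH}|$ under your collapse. Your closing caveats (disambiguations must be total over the fixed alphabet; the disjoint-union step for the single infinite-domain class) are exactly the right things to pin down and are handled correctly.
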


We denote with $O_k(1)$ a constant that depends on $k.$
Via \Cref{theorem:partial-pac-learning}, the above partial class $\calH$ is PAC learnable; however, \emph{any} disambiguation of $\calH$ causes a blow-up to the Natarajan dimension.
This result showcases the complexity of partial concepts. We briefly outline
the main technical contributions in this regime: we extend the main
results of~\cite{alon2022theory} for binary classification to the 
multiclass setting using appropriate complexity measures. Recall that the combinatorial Sauer-Shelah-Perles (SSP) lemma \cite{sauer,understanding-ml} bounds the size of a (total) class $\calH \subseteq \{0,1\}^n$ by $\sum_{i = 0}^{\mathrm{VC}(\calH)} \binom{n}{i}$. Notably,~\cite{alon2022theory} showed that this lemma does \emph{not} hold true for partial concept classes. To obtain our disambiguation result, we prove that the second variant of the SSP lemma, which uses the growth function of the class \cite{understanding-ml}, \emph{does} hold in the partial regime, which may be of independent interest.

\subsection{Preliminaries and Related Work}
\label{section:notation and preliminaries}

\subsubsection{Preliminaries}
In this section, we discuss some preliminaries for this paper. We refer to \Cref{appendix:preliminaries} for further details.
\paragraph{Gale-Stewart Games \& Ordinals}
An important tool we leverage to establish our results in the universal learning setting is the theory of \emph{Gale-Stewart} (GS) \emph{games} \cite{gale1953infinite}.  Every such game consists of two players, a learner $\texttt{P}_L$ and an adversary $\texttt{P}_A$, and is played over an infinite sequence of discrete steps. In each step, the adversary presents some point $x_t \in \calX_t$ to the learner and the learner
picks a response $y_t\in\calY_t$. 
If some predefined condition gets violated at some step $t$, the game terminates and the learner wins. On the other hand, if the condition does not get violated during the infinite sequence of these time-steps, the adversary wins. The main property which characterizes the GS game is that the winning strategy of the learner is \emph{finitely-decidable}, i.e., she knows that she has won the game after playing a finite number of rounds. 
\cite{gale1953infinite,kechris2012classical,hodges1993model} proved that either $\texttt{P}_L$ or $\texttt{P}_A$
has a winning strategy, i.e., playing that strategy makes them win regardless of the opponents actions.
Similar to~\cite{universal}, the main reason we use GS games in this work is to obtain functions that are \emph{eventually} correct. We deal with GS games that are finite but there is not a bound on
the number of steps they need to terminate. It turns out that an \emph{ordinal} is the right notion
to use for measuring the
remaining steps needed for the game to terminate. 
The key motivation behind the use of ordinal numbers is to capture the intermediate state between an infinite tree 
and a uniformly bounded tree.

\subsubsection{Related Work} 
Classification with multiple labels is extensively studied and, for the setting with bounded $k$, PAC learnability is well-understood \cite{natarajan1989learning, characterization, understanding-ml,multiclass-erm}. The works of \cite{vapnik2015uniform,blumer} provide a fundamental dichotomy/equivalence between the finiteness of the VC dimension and binary classification (PAC learnability). \cite{natarajan1988two} and \cite{natarajan1989learning} extend the PAC framework to the multiclass setting by providing the notions of the Graph and the Natarajan dimension. When the number of labels $k$ is a finite constant, then these two dimensions both characterize PAC learnability since $\mathrm{Ndim}(\calH) \leq \mathrm{Gdim}(\calH) \leq \mathrm{Ndim}(\calH) \cdot O(\log(k))$ \cite{multiclass-erm}. Afterwards, \cite{characterization} and \cite{haussler-long} provide a combinatorial abstraction which captures as special cases e.g., the Graph and Natarajan dimensions and Pollard's pseudo-dimension. In this general setting, \cite{characterization} identify the notion of \emph{$\Psi$-distinguishers} that characterize PAC learnability when $k$ is bounded.
More to that, uniform convergence still applies when the number of classes is bounded and, so, PAC learning provides the ERM principle for algorithm design. We remark the situation gets much more complicated when the number of labels is not bounded \cite{daniely2014optimal,moran-dinur,rubinstein2009shifting, multiclass-erm}. For instance, \cite{daniely2014optimal} show that the ERM principle does not apply in this case. In a recent breakthrough, \cite{moran-dinur} show that the DS dimension captures learnability and the Natarajan dimension provably fails to achieve this. 

Our work provides two theoretical perspectives complementing and extending the standard multiclass PAC learning.
For the \textbf{universal rates}, the seminal work of \cite{universal} provided a similar trichotomy for the binary setting (we obtain their results by setting $k=1$). The gap between exponential and linear rates was studied by \cite{schuurmans} in some special cases. Also, \cite{antos-lugosi} showed that there exist concept classes for which no improvement on the PAC learning rate is possible in the universal setting.
A natural approach to obtain results for the multiclass setting is via reductions to the binary setting. 
In the exponential rates setting, a first idea would be to consider for the class $\calH \subseteq [k]^\calX$ the binary restrictions $\calH|_i = \{ h_i : h \in \calH \},$ where $h_i$ denotes the $i$-th bit of the output of $h$. In order to obtain the desired result for exponential rates, one has to prove that if $\calH$ does not have an infinite multiclass Littlestone tree, then every $\calH|_i$ does not 
have an infinite binary Littlestone tree. However, it is not clear how to obtain such a result. This is why we design statistical learning algorithms for the multiclass setting from scratch, following the conceptual roadmap introduced in \cite{universal}. We note that the existence of multiple labels requires various novel ideas compared to the binary setting. We introduce novel Gale-Stewart games (see \Cref{algorithms:universal1}, \Cref{algorithms:universal2}) in order to provide winning strategies for the learning player both in the exponential and the linear rates settings. Finally, in terms of reductions for the linear rates setting, we provide a sufficient condition for learnability at a linear rate using a reduction to the binary setting; however, it is again not clear how to use these complexity measures in order to obtain lower bounds (see \Cref{open-question}). In general, the idea of ``universal learning'' has been studied in various concepts such as universal consistency \cite{stone1977consistent,devroye2013probabilistic, hanneke2020learning, hanneke-kontorovich,blanchard2021universal4,hanneke2022universally,blanchard2022universal1,blanchard2022universal2,blanchard2022universal3} and active learning \cite{hanneke2009theoretical,balcan2010true,hanneke2011rates,hanneke2015minimax}. For an extended discussion, we refer to~\cite{universal}.
For the \textbf{partial concepts} setting, our work builds on the seminal work \cite{alon2022theory} and uses tools from \cite{moran-dinur}. The work of \cite{alon2022theory} shows that the algorithmic landscape behind partial concept classes is quite elusive. We extend some of their results to learning scenarios where one deals with more than two labels. Our contributions draw ideas from various directions, namely the one-inclusion hypergraph algorithm \cite{haussler1994predicting,rubinstein2009shifting,daniely2014optimal,moran-dinur} which we define formally in \Cref{appendix:one-inclusion}, the Sauer-Shelah-Perles lemma \cite{sauer} and recent breakthroughs in the intersection of graph theory and complexity theory concerning the Alon-Saks-Seymour problem \cite{huang2012counterexample, amano2014some, goos2015lower, shigeta2015ordered, ben2017low, goos2018deterministic,  alon2022theory, balodis2022unambiguous}.

\begin{remark}
[Connection between Universal Rates and Partial Concept Classes]
There is an interesting and intrinsic connection between universal rates and partial concepts: in the universal learning setting, the first step of the approach is to use the data to simulate some Gale-Stewart games and show that, with high probability, most of them will have ``converged’’, i.e., the function that corresponds to the learning strategy of the learner will be correct. In turn, this defines some data-dependent constraints. For instance, assume that $g$ is a successful NL pattern avoidance function, i.e., a function which takes as input any $\ell$ points $x_1,\ldots,x_\ell$ and any two (everywhere different) mappings $s^{(0)}, s^{(1)}$ from points to labels and returns an invalid pattern, i.e., a binary pattern $y$ of length $\ell$ that is not compatible with the definition of the Natarajan dimension (i.e., there is no function $h \in \calH$ such that if $y_i = 1$ then $h(x_i) = s^{(1)}(x_i)$ and if $y_i = 0$ then $h(x_i) = s^{(0)}(x_i)$, for all $i \in [\ell]$). Then, we can define a partial concept class $\calH'$, the set of all functions from $X$ to $\{1,\ldots,k,\star\}$ that satisfy the constraint of this pattern avoidance function, and it has two important properties: its Natarajan dimension is bounded by $\ell$ and a learning algorithm for $\calH'$ also learns $\calH$. Hence, understanding the learnability of partial concept classes is an essential step in coming up with more natural learning strategies in the universal learning setting. 

Moreover, a unifying message of both of these settings is that going beyond the traditional PAC learning framework is essential to understanding the behavior of learning algorithms in practice. Importantly, in both of these settings ERM is not an optimal learning algorithm and the one-inclusion graph predictor is an essential part in deriving results in both theories.
\end{remark}

%

\subsection{Summary of Technical Contributions}
Let us start with the technical challenges concerning the universal multiclass classification setting: the first natural idea is to reduce the problem to the binary setting, i.e., use the algorithms from \cite{universal} as a black-box to derive algorithms for the multiclass case. However, this approach introduces various technical challenges when dealing with the induced binary hypothesis classes. We discuss in detail these challenges for the exponential rates case.

First, it was not clear to us how to prove that if the original class $\calH$ does not have an infinite multiclass Littlestone tree, then each of the induced binary classes satisfies this property as well. As a result, we developed new algorithms from scratch. There were several technical challenges that we had to overcome. The straightforward extension of the Gale-Stewart games that appear in \cite{universal}, i.e., where the adversary presents a point $x$ and the learner picks a label in $[k]$ (imitating the online learning game), does not seem to work. Therefore, we propose a more involved Gale-Stewart game where the learner proposes both a point $x$ and two potential labels for it, and the learner has to commit to one of the two. This game allows us to obtain tight upper and lower bounds that depend on the finiteness of the multiclass Littlestone tree. Note that the function that is obtained from the winning strategy of the leaner in this game cannot be directly used to make predictions, since it takes as input two labels. Hence, our predictor, essentially, plays a tournament between all the potential labels using this function we just mentioned. In the next step of the approach, i.e., in the transition from the online setting to the statistical setting, one needs to show how to simulate this Gale-Stewart game using the data that the learner has access to. Since our Gale-Stewart game is more complex than the one in \cite{universal}, simulating it using data becomes more challenging.

In the case of linear rates these challenges are even more technically involved and this is related to the fact that the Natarajan-Littlestone tree has a more complicated structure than the VCL tree (for example, we need to check all the possible mappings from some given points to labels). The Gale-Stewart game that handles the case of linear rates goes as follows: the adversary presents to the learner a tuple of points $x$, and two (different) colorings for these points. Similarly as before, we could not use a simpler game to obtain the result. Subsequently, as in the exponential rates case, simulating this game using data becomes more complicated than in \cite{universal}.
Proving the arbitrarily slow rates lower bound required some extra care than in the VCL tree in order to guarantee that the designed distribution is realizable. 

In the setting of partial concept classes, our main contribution is a proof that an alternative version of the SSP lemma, which bounds the growth function, holds in this setting. In contrast, \cite{alon2022theory} showed that the traditional version of the lemma (the "combinatorial" one) does not hold for partial concept classes. We remark that these two versions are equivalent for total concept classes. This lemma allows us to "tensorize" the disambiguation lower bound that was proved in \cite{alon2022theory} for binary classes and establish it for the multiclass setting.  

\subsection{Future Directions}
\label{open}
We deal with the settings of universal learning and partial concept classes, two fundamental questions \cite{universal,alon2022theory} which are witnessed in real-life applications, nevertheless the classical theory fails to explain. Our results raise various interesting questions for future work (apart from \Cref{open-question}). First, it would be interesting to extend our results to the agnostic setting. Second, for the universal setting, we believe it is an important next step to shed light towards multiclass universal learning with an unbounded number of labels (whose uniform learnability was recently characterized by \cite{moran-dinur}).
We shortly mention that the analysis of the exponential case
still holds even for a countably infinite number of labels.
Moreover, for the partial concepts setting, the work of \cite{alon2022theory} leaves numerous fascinating open questions for the binary setting that can be asked in the multiclass setting too. In general, our work along with its seminal binary counterparts \cite{universal,alon2022theory} shows that the algorithmic landscape occuring in practice is quite diverse and the ERM principle is provably insufficient. It is important to come up with principled algorithmic strategies that bring theory closer to practice.


\section{Technical Overview \& Proof Sketches}
\label{section:overview}
In this section we briefly discuss the technical details and provide 
proof sketches of our main results.

\subsection{Technical Overview of Universal Multiclass Learning}
\label{section:tri-sketch}
In the universal multiclass setting, we provide three lower bounds and two algorithms in order to get the desired trichotomy of \Cref{theorem:tri-main}.
The first lower bound states that no class $\calH$ is learnable at rate faster than exponential (see \Cref{proposition:exp-lower-bound}). Our first essential contribution is that any $\calH \subseteq [k]^\calX$ is learnable at (optimal) rate $e^{-n}$ if and only it has no infinite multiclass Littlestone tree. For this task, we provide \Cref{algorithms:universal1} that achieves this rate. Our approach uses tools from infinite game theory (Gale-Stewart games) and set theory (ordinals) in order to show that there exists an online learning algorithm that makes only a finite number of mistakes. We denote this key subroutine with $g_t$ in \Cref{algorithms:universal1}. In fact, this subroutine can be seen as an extension of the well-known multiclass Standard Optimal Algorithm (SOA) to ordinal numbers. 
Our final algorithm runs the above subroutine $g_t$ on multiple batches using data-splitting techniques and then takes a majority vote; the intuition behind this step is that the majority vote of various executions of our algorithm will be much better concentrated than a single execution and will achieve the desired exponential rate. The main technical challenge is to construct $g_t$.

\begin{algorithm}[ht!]
  \caption{Exponential Rates Algorithm for Universal Multiclass Learning}
  \setlength{\columnseprule}{1pt}
\def\columnseprulecolor{\color{black}}
  \begin{multicols}{2}
    \begin{algorithmic}
    \State \texttt{\color{blue}Exponential Rates \color{black}}
    \State Let $g_t:\calX \rightarrow Y$ be an \emph{eventually correct} labeling function (\Cref{theorem:strategy}).
    \State Let $(X_1,Y_1,\ldots,X_n,Y_n)$ be the training set.
    \State Estimate $\hat{t}_n$ such that $\Pr[\er(g_{\wh{t}_n})] \leq 3/8$.
    \State Break the training set into $N = n/\wh{t}_n$ batches.
    \State Create $N$ copies of $g$: $g^1,...,g^N$ where the $i$-th copy is trained on the $i$-th batch.
    \State To predict the label of some $x \in \calX$, take the majority vote
    over all $g^i_{\wh{t}_n}$.
    \end{algorithmic}
    \columnbreak
    \begin{algorithmic}
    \State \texttt{\color{blue}Exponential GS Game\color{black}}
    \State For any $t \in \nats:$
    \State \hspace{2.2mm}$\texttt{P}_A$ picks $\kappa_t = (\xi_t, y^{(0)}_t, y^{(1)}_t )\in \calX \times [k] \times [k].$ 
    \State \hspace{2.2mm}$\texttt{P}_A$ reveals $\kappa_t$ to the learner $\texttt{P}_L$.
    \State \hspace{2.2mm}$\texttt{P}_L$ chooses $\eta_t \in  \left\{0, 1\right\}$.

    \State $\texttt{P}_L$ wins the game if for some $t \in \nats$
    \[
    \{ h \in \calH : h(\xi_\l) = y_\l^{(\eta_\l)} ~ \forall \l \in [1..t] \} = \emptyset\,.
    \]
    \end{algorithmic}
  \end{multicols}
\label{algorithms:universal1}
\end{algorithm}

Our approach to construct the eventually correct function $g_t$ passes through the adversarial online learning setting. As a first step, we introduce the standard multiclass online learning game \cite{multiclass-erm} between an adversary and a learner. In this game, the adversary picks a point $x_t \in \calX$ and the learner guesses its true label $y_t \in [k]$. 
In the standard mistake bound model \cite{littlestone1988learning,multiclass-erm}, the learner's goal is to achieve a \emph{uniformly} bounded number of mistakes (and this is associated with the multiclass Littlestone dimension and the Standard Optimal Algorithm). We extend this model to the case where we can guarantee a \textbf{finite} number of mistakes for each realizable sequence, but without an a priori bound on the number of mistakes, i.e., this number is not \emph{uniformly bounded}. This is the motivation behind \Cref{definition:littletree-informal}. We prove that when $\calH$ does not have an infinite multiclass Littlestone tree, there exists an online learning algorithm for this setting which makes finitely many mistakes. This is exactly the eventually correct function $g_t$ of \Cref{algorithms:universal1}. To be precise, the function $g_t$ corresponds to the \emph{winning strategy} in round $t$ of the learning player in the above game.
\begin{theorem}
[Informal, see \Cref{thm:adversarial}]
\label{theorem:strategy}
For any $\calH \subseteq [k]^\calX$, if $\calH$ does not have an infinite multiclass Littlestone tree, there is a strategy $g_t, t \in \nats,$ for the learner that makes only finitely many mistakes. Otherwise, the adversary has a winning strategy.
\end{theorem}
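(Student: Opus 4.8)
I would prove the two halves by a Gale--Stewart argument combined with an ordinal-valued generalization of the multiclass Standard Optimal Algorithm, along the following roadmap: (i) set up the \emph{Exponential GS Game} of \Cref{algorithms:universal1} and, using determinacy of this (open) game, reduce ``which player wins'' to a purely combinatorial dichotomy about infinite multiclass Littlestone trees; (ii) from the learner's winning strategy extract an ordinal rank on version spaces and run an ``ordinal SOA'' whose number of mistakes is finite by well-foundedness of the ordinals; (iii) conversely, turn an infinite multiclass Littlestone tree into an adversary for the online game that forces a mistake every round.

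For step (i): in the GS game the adversary reveals $\kappa_t=(\xi_t,y^{(0)}_t,y^{(1)}_t)$ with $y^{(0)}_t\neq y^{(1)}_t$, the learner $\texttt{P}_L$ commits to $\eta_t\in\{0,1\}$, and $\texttt{P}_L$ wins once $V_t:=\{h\in\calH:\, h(\xi_\ell)=y^{(\eta_\ell)}_\ell \text{ for all } \ell\le t\}=\emptyset$. The learner's winning set is open (decided after finitely many rounds), so the game is determined and exactly one of $\texttt{P}_L,\texttt{P}_A$ has a winning strategy. I would then show $\texttt{P}_A$ wins $\iff$ $\calH$ has an infinite multiclass Littlestone tree: given an adversary winning strategy $\sigma$, index nodes by the string $u=(\eta_1,\dots,\eta_{t-1})$ of past learner commitments and label node $u$ by the point and two (distinct) labels that $\sigma$ plays on that history; ``$\texttt{P}_A$ never loses'' says precisely that \emph{every} branch keeps $V_t\neq\emptyset$, i.e.\ every finite root path is realized by some $h\in\calH$, which is an infinite multiclass Littlestone tree in the sense of \Cref{definition:littletree-informal}; conversely such a tree \emph{is} an adversary strategy, since playing node $x_u$ with its two edge labels always keeps a realizing concept inside $V_t$. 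Hence, when $\calH$ has no infinite multiclass Littlestone tree, $\texttt{P}_L$ has a winning strategy.

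For step (ii): since $\texttt{P}_L$ wins, the tree of plays following her strategy is well-founded, which I would package as an ordinal Littlestone rank $\mathrm{Lrank}(\calG)$ on subclasses $\calG\subseteq\calH$, defined by the recursion ``$\mathrm{Lrank}(\calG)\ge\alpha$ iff for every $\beta<\alpha$ there are a point $x$ and two labels $b_0\neq b_1$ with $\mathrm{Lrank}(\calG\cap\{h:h(x)=b_i\})\ge\beta$ for both $i$'', and $\mathrm{Lrank}(\calH)$ is an ordinal precisely because there is no infinite Littlestone tree. The online strategy $g_t$ maintains the version space $V$ of concepts consistent with the true examples seen so far and, on $x_t$, predicts the label maximizing $\mathrm{Lrank}(V\cap\{h:h(x_t)=b\})$; equivalently, one replays $\texttt{P}_L$'s GS strategy, using a \emph{tournament} over the $k+1$ labels to turn its pairwise comparisons into a single prediction. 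The crucial combinatorial fact is that \emph{at most one} label can attain the full rank $\mathrm{Lrank}(V)$ (two such labels would witness $\mathrm{Lrank}(V)\ge\mathrm{Lrank}(V)+1$), so each mistake of $g_t$ passes to a version space of strictly smaller ordinal rank; by well-foundedness only finitely many mistakes occur, i.e.\ $g_t$ is eventually correct. For step (iii): if $\calH$ does have an infinite multiclass Littlestone tree, the adversary walks down it, at node $x_u$ presenting $x_u$ and then declaring the true label to be whichever of the two distinct edge labels differs from the learner's guess and descending into that child; every finite prefix of the resulting stream is a tree path, hence realizable, and the learner errs in every round.

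The main obstacle is step (ii), and in particular the interface forced by having \emph{more than two} labels. The naive game ``adversary shows $x$, learner names a label in $[k]$'' does not mesh with the Littlestone-tree measure, which branches over \emph{pairs} of distinct labels; this is why one is pushed to the two-label-commitment game, and why the strategy it yields --- being a function of two candidate labels --- cannot be queried directly to produce a prediction. Making the tournament aggregation rigorous (each mistake still strictly decreases the rank, and the $O(\mathrm{poly}(k))$ extra game moves incurred per round do not spoil finiteness) is the delicate point, together with verifying carefully that $\mathrm{Lrank}$ is an ordinal if and only if $\calH$ has no infinite multiclass Littlestone tree. Measurability of $g_t$, needed so that the predictor in \Cref{algorithms:universal1} is well-defined, is a further technical item I would handle as in the binary case and relegate to the measurability appendix.
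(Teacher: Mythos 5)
Your proposal is correct and follows essentially the same route as the paper: the same two-label-commitment Gale--Stewart game, the same equivalence between adversary winning strategies and infinite multiclass Littlestone trees, the same ordinal rank (the paper's $\overline{\mathrm{Ldim}}_k$, defined via ranks of finite Littlestone trees rather than your transfinite recursion on version spaces, but these are the same object), the same ``at most one label preserves the rank'' lemma driving the strict ordinal decrease per mistake, and the same tournament aggregation to turn the pairwise strategy into a predictor. No gaps worth flagging.
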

To prove this result, we invoke the theory of Gale-Stewart games. We introduce a novel two-player game, the \texttt{Exponential GS Game} outlined in \Cref{algorithms:universal1}. The structure of this game looks like the standard multiclass online learning game but has some evident differences; the adversary not only reveals a point $\xi_t$ but also two colors for it. Then, the learner should choose between these two.
The structure of this game (while unaccustomed) is crucial and generalizes the game of \cite{universal}.
The learner wins if the class of consistent hypotheses $\calH_{\kappa_1,\eta_1,...,\kappa_t,\eta_t} = \{ h \in \calH : h(\xi_\l) = y_\l^{(\eta_\l)} ~ \forall \l \in [1..t] \}$ becomes empty after a finite number of rounds. If the game continues indefinitely, the adversary wins.
The intuition behind the definition of the class in \Cref{algorithms:universal1} is that the adversary wins as long as there is always a hypothesis in $\calH$ that is consistent with the examples (this is in parallel with the definition of an infinite path in the multiclass Littlestone tree). Using tools from Gale-Stewart games (\Cref{thm:gale-stewart winning strategies}), we manage to show that the learning player $\texttt{P}_L$ has a winning strategy if and only if $\calH$ does not have an infinite multiclass Littlestone tree. This winning strategy is in fact the \emph{ordinal Standard Optimal Algorithm}. Recall that it is possible to have unbounded multiclass Littlestone dimension while not having an infinite multiclass Littlestone tree. 

In order to quantify this intermediate state (between uniformly bounded and truly infinite), we invoke the theory of ordinal numbers and  introduce the \emph{ordinal multiclass Littlestone dimension}, which quantifies ``how infinite'' the multiclass Littlestone dimension is. Hence, the learner's strategy is to play according to the SOA where the standard Littlstone dimension is replaced by the ordinal one (see \Cref{section:from-online}).
We note that the analysis of the above game constitutes
an important technical contribution in the exponential rates setting.
More to that, we believe that the link between the multiclass SOA and ordinals' theory is an interesting conceptual step.   
For further details concerning the exponential rates, we refer to \Cref{section:exponential}.
Our next result (\Cref{theorem:inf-littlestone-linear-lower-bound}) is a lower bound indicating a sharp transition in the learning rate: A class $\calH \subseteq [k]^\calX$ that has an infinite multiclass Littlestone tree is learnable no faster than $1/n$. Its proof uses the probabilistic method and shows that for any learning algorithm $\wh{h}_n$, there exists a realizable distribution $P$ over $\calX \times [k]$ such that $\E[\mathrm{er}(\wh{h}_n)] \geq \Omega(1/n)$ for infinitely many $n$, when $\calH$ has an infinite multiclass Littlestone tree.

We can now move to the linear rates setting where the situation is significantly more involved technically. In this setting, we show that any $\calH \subseteq [k]^\calX$ is learnable at rate $1/n$ if and only if it has no infinite NL tree. The structure of an NL tree indicates that the notion of the Natarajan dimension (which characterizes learnability in the uniform setting) is invoked in order to control the complexity/expressivity of our concept class. 
Compared to the exponential rates setting, we shift our goal from hoping for a finite number of mistakes to looking for a control over the \emph{model complexity}. This model complexity is quantified by the notion of an \emph{NL pattern in the data} (see \Cref{def:pattern}). Conceptually, the design of the algorithm for the linear rates follows a similar path as in the exponential case; we first develop an infinite game which makes use of the structure of the NL trees. This \texttt{Linear GS Game} is also original and can be found in \Cref{algorithms:universal2}.
We remark that the precise structure of the game is quite important for our results and various modifications of it seem to fail.

\begin{algorithm}[ht!]
  \caption{Linear Rates Algorithm for Universal Multiclass Learning}
  \setlength{\columnseprule}{1pt}
\def\columnseprulecolor{\color{black}}
  \begin{multicols}{2}
    \begin{algorithmic}
    \State \color{blue}\texttt{Linear Rates}\color{black}
    \State Let $g_t:\calX^t \times [k]^t \times [k]^t \rightarrow \{0,1\}^t$ be an \emph{eventually correct} NL-pattern avoidance function.
    \State Let $(X_1,Y_1,\ldots,X_n,Y_n)$ be the training set.
    \State Estimate $\hat{t}_n$ such that $\Pr[\er(g_{\wh{t}_n})] \leq 3/8$.
    \State Break the training set into $N = n/\wh{t}_n$ batches.
    \State Create $N$ copies of $g$: $g^1,...,g^N$, where the $i$-th copy is trained on the $i$-th batch.
    \State Create $N$ copies of the one-inclusion graph predictor, each copy is equipped with $g^i_{\wh{t}_n}$.
    \State To predict the label of $x$, take the majority vote
    over all the one-inclusion graph predictors.
    \end{algorithmic}
    \columnbreak
        \begin{algorithmic}
    \State \color{blue}\texttt{Linear GS Game}\color{black}
    \State For any $t \in \nats:$
    \State \hspace{5mm} $\texttt{P}_A$ picks a point $\xi_t = \left(\xi_t^{(0)}, ..., \xi_t^{(t-1)}, s_t^{(0)}, s_t^{(1)} \right)$
    \State \hspace{5mm} where (i) $ \xi_t \in \calX^{t} \times [k]^t \times [k]^t$ and
    \State \hspace{5mm} (ii) $s_t^{(0)}, s_t^{(1)}$ s.t. $s_t^{(0)}(\xi_t^{(i)}) \neq s_t^{(1)}(\xi_t^{(i)})~\forall i$. 
    \State \hspace{5mm} $\texttt{P}_A$ reveals $\xi_t$ to the learner $\texttt{P}_L$.
    \State \hspace{5mm} $\texttt{P}_L$ chooses a pattern $\eta_t \in  \left\{0, 1\right\}^t$.
    
    \State $\texttt{P}_L$ wins the game if for some $t \in \nats$
    \[
    \left \{ 
h \in \calH  
~\textnormal{s.t.}
\begin{array}{ll}
      h(\xi_z^{(i)}) = s_z^{(\eta_z^{(i)})}(\xi_z^{(i)}) \\
      \textnormal{for}~0\leq i < z, z \in [1..t] \\
\end{array} 
\right \}=
    \emptyset\,.
    \]
        \end{algorithmic}
  \end{multicols}
\label{algorithms:universal2}
\end{algorithm}

In the \texttt{Linear GS Game}, the adversary picks $t$ points and two colorings for these points which are everywhere different. Then the learner responds with an \emph{NL pattern} $\eta_t \in \{0,1\}^t$ with the goal that there is \emph{no} $h \in \calH$ that is $N$-consistent with the adversary's input. Hence, the learner aims to find \emph{forbidden NL patterns in the data}.
The finiteness of the NL tree implies the existence of a winning strategy for the learner in the game and, hence, an algorithm (which one can construct) for learning to rule out NL patterns. 
Then, we show how to simulate this game using any $\calH$-realizable sequence and utilize the learner's strategy to, eventually, find forbidden NL patterns in the data. 
The simulation of the game is another novel part for the linear rates (see \Cref{fig:pattern-algo}).
Intuitively, there exists some finite number $m$, which depends on the data sequence, such that for any collection of $m+1$ points, there exists some invalid NL pattern. The definition of the NL patterns then indicates that we cannot $N$-shatter any collection of $m+1$ points and hence we can work, in some sense, with a class whose Natarajan dimension is $m$. We can then use the one-inclusion hypergraph algorithm  \cite{haussler1994predicting, rubinstein2009shifting, daniely2014optimal, moran-dinur} (see \Cref{appendix:one-inclusion}) to get a good predictor for the data. Again a single execution of the above strategy is not sufficient, we have to use data-splitting and aggregate our collection of predictors using the majority vote.  \Cref{algorithms:universal2} achieves an optimal rate of $1/n$. Finally, we prove that a class with an infinite NL tree requires arbitrarily slow rates.\\

\begin{example}
[Universal Rates for Linear Multiclass Classification]
Let us define the class of linear classifiers with $k$ labels in $\mathbb{R}^d$ as $L = \{ h_W(x) = \arg\max (W x) | W \in \mathbb{R}^{k \times d} \}$.
We first remark that the Natarajan dimension of $L$ is $\widetilde{\Theta}(kd)$ as noted for instance in \cite{multiclass-erm} and so for fixed $k,d$, this is a Natarajan class.
Let us discuss the complexity of learning this class in the universal setting. It is important to note that our characterization of universal multiclass learning depends only on whether the two proposed trees are infinite or not. In particular, we note that class $L$ has an infinite multiclass Littlestone tree and a finite Natarajan tree (since it can only shatter a finite number of points as a Natarajan class).
We also remark that if we consider the class of linear multiclass classifiers over $\mathbb{N}^d$ (a discrete geometric space), then this class does not even have an infinite multiclass Littlestone tree and hence is learnable at an exponentially fast rate. To see this, one can use the one-versus-one reduction and observe that any classifier $h_W$ corresponds to a collection of $\binom{k}{2}$ binary classifiers which are halfspaces in $\mathbb{N}^d$ where $h_W^{i,j}(x) = \sgn((W_i - W_j) x)$ for any $i < j$. Then, due to realizability, one can use an argument from infinite Ramsey theory (see Example 2.10 from \cite{universal}) and prove that after a finite number of mistakes, they can detect the correct halfspace for any pair $i<j$. Aggregating these predictors, we get a multiclass linear classifier that enjoys an exponentially fast learning rate.
\end{example}

\subsection{Technical Overview of Partial Multiclass Learning}
\label{section:partial-sketch}
In the partial multiclass setting with a bounded number of labels, we first characterize learnability in terms of the Natarajan dimension. For the proof of \Cref{theorem:partial-pac-learning}, it suffices to show the following more fine-grained  \Cref{theorem:partial-pac-learning-sc}.

\begin{theorem}
\label{theorem:partial-pac-learning-sc}
For any partial class $\calH \subseteq \{0,1,...,k,\star\}^\calX$ with $\mathrm{Ndim}(\calH) \leq \infty$, the sample complexity of PAC learning $\calH$ satisfies
$
C_1 \cdot \frac{\mathrm{Ndim}(\calH) + \log(1/\delta)}{\eps} \leq \calM(\eps, \delta) \leq C_2 \cdot \frac{\mathrm{Ndim}(\calH) \log(k) \log(1/\delta)}{\eps} ,
$ for some constants $C_1, C_2$.
In particular, if $\mathrm{Ndim}(\calH) = \infty,$ then $\calH$ is not PAC learnable.
\end{theorem}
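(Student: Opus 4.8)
The plan is to prove the two inequalities separately: the upper bound via a partial-concept version of the one-inclusion hypergraph predictor together with a confidence-amplification step, and the lower bound by transferring the classical realizable multiclass sample-complexity lower bound to the partial setting. The key conceptual point for the upper bound is that, although uniform convergence and $\mathrm{ERM}$ provably fail for partial classes \cite{alon2022theory}, the one-inclusion argument is \emph{purely combinatorial} and only needs a bound on the Natarajan dimension of the finite trace of the class on the observed points.

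First I would set up the upper bound. For a finite multiset $S=(x_1,\dots,x_m)$ of domain points, define the \emph{partial trace} $\calH|_S \subseteq [k]^m$ as the set of labelings $(h(x_1),\dots,h(x_m))$ over those $h\in\calH$ with $S\subseteq\supp(h)$. Two observations make this workable: (i) by the definition of realizability, for $S\sim P^m$ the true labels lie in $\calH|_S$ almost surely; and (ii) $\calH|_S$, viewed as a \emph{total} class on $m$ points, has Natarajan dimension at most $\mathrm{Ndim}(\calH)$, since any set $N$-shattered inside the trace is $N$-shattered in $\calH$ by concepts defined on the shattered points. Then I would run the one-inclusion hypergraph predictor (\Cref{appendix:one-inclusion}) on $\calH|_S$: the multiclass out-degree bound of \cite{daniely2014optimal,rubinstein2009shifting,moran-dinur} says any $[k]$-valued class on $m$ points of Natarajan dimension $d$ admits an orientation of its one-inclusion hypergraph with maximum out-degree $O(d\log k)$, and this statement applies to $\calH|_S$ verbatim. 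This yields a predictor $\wh h_n$ with $\E_{S\sim P^n}[\mathrm{er}_P(\wh h_n)]\le C\cdot\mathrm{Ndim}(\calH)\log(k)/n$ for every $\calH$-realizable $P$. To get the PAC form with the $\log(1/\delta)$ factor, I would split a sample of size $n$ into $m=\Theta(\log(1/\delta))$ equal batches, run the predictor on each to obtain $\wh h^{(1)},\dots,\wh h^{(m)}$ of expected error $\le\eps/4$ once each batch has size $\Omega(\mathrm{Ndim}(\calH)\log(k)/\eps)$, so by Markov each has $\mathrm{er}_P(\wh h^{(i)})\le\eps/2$ with probability $\ge 1/2$ and hence some batch is good with probability $\ge 1-\delta/2$; a fresh validation sample of size $\Theta(\log(m/\delta)/\eps)$ and a Hoeffding-plus-union-bound selection of the empirically best $\wh h^{(i)}$ outputs a hypothesis of error $\le\eps$ with probability $\ge1-\delta$. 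Summing the batch sizes and the validation size gives the claimed $O(\mathrm{Ndim}(\calH)\log(k)\log(1/\delta)/\eps)$, and in particular PAC learnability whenever $\mathrm{Ndim}(\calH)<\infty$.

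For the lower bound I would handle the two terms of $\Omega((\mathrm{Ndim}(\calH)+\log(1/\delta))/\eps)$ separately and combine via $\max(a,b)\ge (a+b)/2$. For the $\mathrm{Ndim}(\calH)/\eps$ term, let $d=\mathrm{Ndim}(\calH)$ (truncated to an arbitrarily large finite value if infinite) and fix an $N$-shattered set $x_1,\dots,x_d$ with witnessing pairs $s_i^{(0)}\ne s_i^{(1)}$ and, for each $b\in\{0,1\}^d$, a concept $h_b\in\calH$ with $x_i\in\supp(h_b)$ and $h_b(x_i)=s_i^{(b_i)}$. Put mass $1-4\eps$ on $x_1$ (with a fixed label) and the remaining $4\eps$ uniformly on $x_2,\dots,x_d$, with the rest of the pattern chosen uniformly at random; each such distribution is realizable by the corresponding $h_b$, so it is a legitimate instance, and the standard averaging/Bayes-risk argument for realizable multiclass learning (as in \cite{natarajan1989learning,understanding-ml,characterization}) forces $n=\Omega(d/\eps)$. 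For the additive $\log(1/\delta)/\eps$ term, I would use the usual two-hypothesis argument on a single $N$-shattered point with labels $s_1^{(0)},s_1^{(1)}$ and an $\eps$-biased coin, requiring $\Omega(\log(1/\delta)/\eps)$ samples for $(1-\delta)$-confidence; if $\mathrm{Ndim}(\calH)=\infty$, the first term alone precludes any finite $\calM(\eps,\delta)$, so $\calH$ is not PAC learnable.

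The hard part will be the clean verification that the one-inclusion analysis carries over to partial concepts: one must isolate that the argument uses \emph{only} the combinatorial out-degree bound for the finite trace (never any uniform-convergence or $\mathrm{ERM}$-style statement), pin down the correct definition of $\calH|_S$ for partial concepts, and invoke exactly the form of realizability in \Cref{definition:partial-pac} that guarantees the true labeling sits in $\calH|_S$ almost surely; one should also double-check that the only label-cardinality cost incurred is the $\log k$ factor in the multiclass orientation bound and that it is unaffected by undefined values. Everything else (the Markov/validation confidence boosting and the two lower-bound constructions) is routine and essentially a transcription of the total-class arguments, consistent with the paper's remark that the theorem follows by combining \cite{moran-dinur} and \cite{alon2022theory}.
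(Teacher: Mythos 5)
Your proposal is correct and follows essentially the same route as the paper: the upper bound via the one-inclusion hypergraph predictor applied to the (Natarajan-dimension-preserving) trace of the partial class on the sample, followed by the standard batch-splitting/validation confidence boost, and the lower bound by transferring the classical realizable multiclass lower bound through an $N$-shattered set, with realizability witnessed by the shattering concepts. The only cosmetic difference is that you spell out the hard distributions explicitly where the paper cites the known total-class lower bound.
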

For the upper bound, we have to employ the  one-inclusion hypergraph algorithm (see \Cref{appendix:one-inclusion}). Following the methodology of \cite{alon2022theory}, we extend its guarantees (which hold for total concept classes) to the partial setting. This algorithm guarantees an expected error which can be boosted to a high probability result using standard concentration and boosting techniques. To show that the partial concept class $\calH \subseteq \{0,1,...,k,\star\}$ is not learnable if it has infinite Natarajan dimension, we reduce the problem to classification with total concepts and invoke the existing (standard) lower bound. The main take-away from \Cref{theorem:partial-pac-learning-sc} is that the algorithmic landscape of partial concept classes is provably elusive, as already indicated by the seminal work of \cite{alon2022theory}. To this end, we provide a second result that shows when one can apply the well-understood ERM principle (which is valid when the number of labels is bounded) with partial concepts. For details, we refer to \Cref{theorem:equivalence-total-partial}. To conclude, we address the task of disambiguation \cite{kantorovich-disambiguation-open,alon2022theory} of partial concepts  (see \Cref{definition:disambiguation}). Our proof of \Cref{theorem:disambiguation-inf} relies on an interesting observation: the seminal work of \cite{alon2022theory} showed that the combinatorial variant of the SSP lemma \cite{sauer} does not hold in this setting. This lemma has a second variant that uses the growth function \cite{understanding-ml} instead of the size of the class. We show that a natural extension of this variant for partial classes is still correct (see \Cref{lemma:ssp}). 
Using this tool and techniques from \cite{alon2022theory}, we obtain our impossibility result.\\

\section{Universal Multiclass Learning: The Proof of \Cref{theorem:trichotomy}}
\label{section:tri-proof}
In this section, we prove \Cref{theorem:trichotomy} which then directly gives us \Cref{theorem:tri-main} as well. Our text is organized as follows: 
\begin{itemize}
    \item In \Cref{section:exponential}, we analyze the exponential rates case for multiclass learning. We introduce the notion of the multiclass Littlestone tree and prove \Cref{theorem:exp-rates}.
    \begin{itemize}
        \item In order to achieve this, we first analyze the problem from an adversarial online perspective in \Cref{section:from-online}. The main result in this section is \Cref{thm:adversarial}.
        \item The above result is in the adversarial setting and hence we have to transform this online algorithm into a statistical one, i.e., we have to move from the adversarial setting to the probabilistic setting. This is done is \Cref{section:to-prob}. We provide the analysis of our final algorithm in \Cref{theorem:prob-algo-exp}.
    \end{itemize}
    \item In \Cref{section:linear-exp-gap}, we show that a class with infinite multiclass Littlestone tree cannot be learned at a rate faster than linear.
    \item In \Cref{section:linear}, we introduce the notion of a Natarajan-Littlestone tree and we prove \Cref{theorem:linear-rates-algo}.
    \begin{itemize}
        \item In \Cref{section:nl-game}, we provide the important notion of a Natarajan-Littlestone game which guarantees the existence of an eventually correct algorithm when the class does not have an infinite NL tree.  
        \item In \Cref{section:pattern-avoid}, we introduce the notion of an NL pattern in the data and using the above algorithm as a pattern avoidance function.
        \item The behavior of the pattern avoidance algorithm in the probabilistic setting is given in \Cref{section:pat-avoid-prob}.
        \item The final linear rate algorithm can be found at \Cref{section:linear-rates}.
    \end{itemize}
    \item In \Cref{section:slow}, the final lower bound which is related to arbitrarily slow rates (see \Cref{theorem:slow-rates}).
    \item In \Cref{section:gl-trees}, we provide the notion of a Graph-Littlestone tree and give a sufficient condition for learning with a linear rate. More to that, we propose \Cref{open-question}.
\end{itemize}

The missing proofs are presented in \Cref{appendix:proofs for universal rates}.

\subsection{Exponential Rates for Multiclass Learning}
\label{section:exponential}
For a sequence $\vec y = (y_1,y_2,...)$, we denote $\vec y_{\leq k} = (y_1,...,y_k)$. We may also usually identify elements of $\{0,1\}^d$ with strings or a prefix of a sequence of length $d$. We begin with a formal definition of a crucial combinatorial measure, namely the multiclass Littlestone tree of a class $\calH$.

\begin{definition}
[Multiclass Littlestone Tree]
\label{definition:multi-little-tree}
A multiclass Littlestone tree for $\calH \subseteq \{0,1,...,k\}^{\calX}$ is a complete binary tree of depth $d \leq \infty$ whose internal
nodes are labeled by $\calX$, and whose two edges connecting a node to its children are labeled by two different elements in $[k]$,
such that every path of length at most $d$ emanating from the root is consistent with a concept $h \in \calH$. Typically, a multiclass Littlestone tree is a collection \[
\bigcup_{0 \leq \l < d} \left\{ x_u : u \in \{0,1\}^\l \right\} = \{ x_{\emptyset} \} \cup \{x_{0}, x_1\} \cup \{x_{00}, x_{01}, x_{10}, x_{11} \} \cup ... 
\]
such that for every path $\vec y \in \{0,1\}^d$ and finite $n < d$, there exists $h \in \calH$ so that $h(x_{\vec y_{\leq \l}}) = s_{\vec y_{\leq \l+1}}$ for $0 \leq \l \leq n$, where $s_{\vec y_{\leq \l+1}}$ is the label of the edge connecting the nodes $x_{\vec y_{\leq \l}}$ and $x_{\vec y_{ \leq \l+1}}$.
We say that $\calH$ has an infinite multiclass Littlestone tree if there is a multiclass Littlestone tree for $\calH$ of depth $d = \infty$.
\end{definition}

To give some intuition about this construction we state some of its properties. First, it is crucial to note that a class $\calH$ with a finite multiclass Littlestone tree can have infinite multiclass Littlestone dimension. In fact, a class has finite multiclass Littlestone dimension if the depth of the tree admits a \emph{uniform} upper bound. However, it may be the case that the class admits an unbounded tree, in the sense that for any finite depth, there exists a tree of that depth; nevertheless the class does not have an infinite tree. Second, a class with a finite multiclass Littlestone tree may contain trees with infinite paths (e.g., a class that contains the constant mapping $h = 1$ can shatter the rightmost path at an infinite depth). Finally, a class with finite multiclass Littlestone tree cannot have a tree with an infinite complete binary subtree, since one could use only this subtree and obtain an infinite tree. Essentially, this tree captures ``how infinite'' the multiclass Littlestone dimension of $\calH$ is: even if there is no uniform bound on the multiclass Littlstone dimension of $\calH$, whenever $\calH$ does not have an infinite tree we know that for \emph{any} $S = \bigcup_{0 \leq \l < d} \left\{ x_u : u \in \{0,1\}^\l \right\}$ there is some $n^*(S) < \infty$, which depends on the sequence $S$, so that the tree can be shattered up to level $n^*(S)$.

The goal of this section is to prove the next result.
\begin{theorem}
[Exponential rates]
\label{theorem:exp-rates}
If $\calH \subseteq [k]^{\calX}$ does not have an infinite multiclass Littlestone tree, then for any distribution $P$, $\calH$ is learnable with an exponential optimal rate.
\end{theorem}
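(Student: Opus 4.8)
The plan is to follow the two–stage roadmap of \cite{universal}: first solve an adversarial online version of the problem, then convert the resulting online strategy into a statistical learner by data–splitting and majority voting. Optimality — that no class can be learned faster than $e^{-n}$ — is exactly \Cref{proposition:exp-lower-bound}, so it suffices to exhibit an algorithm achieving rate $e^{-n}$ whenever $\calH$ has no infinite multiclass Littlestone tree, which is \Cref{algorithms:universal1}.

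\textbf{Online stage.} Consider the \texttt{Exponential GS Game} of \Cref{algorithms:universal1}: at round $t$ the adversary reveals $\kappa_t = (\xi_t, y_t^{(0)}, y_t^{(1)})$ with $y_t^{(0)} \neq y_t^{(1)}$, the learner picks $\eta_t \in \{0,1\}$, and the learner wins once the version space $\{h \in \calH : h(\xi_\ell) = y_\ell^{(\eta_\ell)}\ \forall \ell \le t\}$ becomes empty. This is a Gale--Stewart game — the learner's winning condition is finitely decidable — so by the Gale--Stewart theorem one of the players has a winning strategy. I would then show the adversary has a winning strategy iff $\calH$ has an infinite multiclass Littlestone tree: an adversary winning strategy, played against all possible learner responses, literally builds such a tree (the node is $\xi_t$, its two edge labels are $y_t^{(0)},y_t^{(1)}$, and every branch stays consistent with some $h \in \calH$ because the version space never empties), while an infinite tree conversely supplies the adversary a strategy. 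Hence, when $\calH$ has no infinite multiclass Littlestone tree, the learner has a winning strategy $g_t$ which wins after finitely many rounds on every play. To turn ``every play is finite'' into a genuine \emph{mistake} bound — recall that no infinite tree is strictly weaker than bounded multiclass Littlestone dimension — I would introduce the \emph{ordinal multiclass Littlestone dimension} via the standard rank recursion on the game tree and take $g_t$ to be the ordinal SOA: at each round commit to whichever of $y_t^{(0)},y_t^{(1)}$ yields the restricted class of smaller ordinal rank (well-defined precisely because no infinite tree exists); each mistake strictly decreases the ordinal, and well-foundedness forces a finite, play-dependent number of mistakes. This is \Cref{thm:adversarial} (informally \Cref{theorem:strategy}).

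\textbf{From the GS function to a predictor.} The strategy $g_t$ consumes a point together with two candidate labels, so it is not itself a classifier. To label $x \in \calX$, I would run a tournament over the $k+1$ labels — feed $g$ the point $x$ with every pair of labels and output a surviving (Condorcet-style) label, breaking ties arbitrarily. On a realizable stream, once $g$ has stopped erring its pairwise verdicts at $x$ agree with the target's restriction to $x$, so the tournament returns the correct label; hence the derived predictor $x \mapsto (\text{tournament output of } g_t)$ is eventually correct along any realizable data stream, and in particular $\er(g_t) \to 0$ as $t \to \infty$ for every realizable $P$ (writing $\er(g_t)$ for the error of this derived predictor after $t$ i.i.d.\ examples).

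\textbf{Online to statistical.} Given $n$ i.i.d.\ samples, use a held-out portion to estimate a threshold $\wh{t}_n$ with $\Pr[\er(g_{\wh{t}_n}) > 1/4] \le 1/8$; this is possible since $\er(g_t) \to 0$ in probability, and for a fixed realizable $P$ the convergence occurs at a finite, distribution-dependent time, so $n/\wh{t}_n \to \infty$. Split the rest into $N = \lfloor n/\wh{t}_n \rfloor$ batches of size $\wh{t}_n$, train independent copies $g^1,\dots,g^N$, and output the pointwise majority vote $\wh{h}_n$. Each copy errs at a fixed $x$ with probability $\le 3/8$ independently, so a Chernoff bound gives majority-error $e^{-\Omega(N)}$ at each $x$; taking expectation over $x$ and using $N = \Omega(n)$ (with a $P$-dependent constant absorbing $\wh{t}_n$) yields $\E[\er(\wh{h}_n)] \le C(P)\,e^{-c(P)\,n}$, the claimed exponential rate. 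This is \Cref{theorem:prob-algo-exp}.

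\textbf{Main obstacle.} The crux is the online stage: building $g_t$ and proving the mistake bound. Because having no infinite multiclass Littlestone tree is strictly weaker than bounded dimension, the classical SOA analysis fails, and one genuinely needs the ordinal rank machinery to convert finiteness of each GS play into a finite per-sequence mistake count. The design choice that the learner commits to one of two \emph{offered} labels (rather than naming a label outright) is precisely what makes the winning condition finitely decidable — hence the game Gale--Stewart — and is also what forces the tournament construction of the second step; a secondary technical point is verifying measurability of $g_t$ and of the tournament predictor, and faithfully simulating this (more intricate than the binary) game using the sample in the statistical stage.
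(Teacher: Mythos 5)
Your proposal is correct and follows essentially the same route as the paper: the \texttt{Exponential GS Game} resolved via the ordinal multiclass Littlestone rank and ordinal SOA, the tournament over label pairs to turn the two-label strategy into a measurable classifier, and the estimate-$\wh{t}_n$ / batching / majority-vote conversion to the statistical setting, with optimality supplied by \Cref{proposition:exp-lower-bound}. One small point of precision: for the majority vote to give exponential rates the estimated threshold must control $\Pr[\er(g_{\wh{t}_n}) > 0]$ (each copy is a \emph{perfect} classifier except on an event of constant probability), not $\Pr[\er(g_{\wh{t}_n}) > 1/4]$; your subsequent per-point Chernoff step implicitly uses the former, which is exactly what the paper establishes via the eventual correctness of the online learner.
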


In order to prove this result, two steps are required: first, it suffices to show that no class of hypotheses can be learned in a rate faster than exponential (lower bound) and second, we have to show that not having an infinite multiclass Littlestone tree is a sufficient for learning at an exponential rate condition (upper bound). Hence, these two directions tightly characterize universal learnability at exponential rates for multiclass classification. 

For the lower bound, the argument is essentially the same as in~\cite{universal}. Let us first give some intuition. If the distribution is supported on a finite number of points, e.g., two, there is an exponentially small probability that all the $n$ samples will contain the same point $(x,y)$, so it will not help the learner distinguish between the functions $h \in \calH$ that label this point correctly. For completeness, we present the argument formally in \Cref{proof:exp-lower-bound}.
\begin{proposition}
[Exponential Rates (Lower Bound)]
\label{proposition:exp-lower-bound}
Fix $\calH \subseteq \{0,1,...,k\}^{\calX}$. For any learning algorithm $\wh{h}_n$, there exists a realizable distribution $P$ over $\calX \times \{0,1,...,k\}$ such that $\E[\mathrm{er}(\wh{h}_n)] \geq \Omega(2^{-n})$ for infinitely many $n$. This means that $\calH$ is not learnable at rate faster than exponential.
\end{proposition}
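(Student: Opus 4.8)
The plan is to prove the lower bound by a standard ``no-free-lunch''-style argument, exactly following the roadmap of \cite{universal} but adapted to the multiclass notation. Since $|\calH| > k+2$, we can find two concepts $h_0, h_1 \in \calH$ and a point $x_\star \in \calX$ with $h_0(x_\star) \neq h_1(x_\star)$; call these two labels $a_0 := h_0(x_\star)$ and $a_1 := h_1(x_\star)$. (Such a point and pair must exist: if every pair of concepts agreed everywhere, then $|\calH| \le 1$, and more carefully, the cardinality hypothesis is exactly what guarantees enough ``disagreement'' to run the argument; this is the reason for the $|\calH|>k+2$ assumption, cf.\ \Cref{appendix:cardinality of H}.) I would then fix a second arbitrary point $x_0 \in \supp(h_0)$ and consider the two-point family of distributions on $\calX \times [k]$.

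First I would set up the adversary's distribution. For a bias parameter $p \in \{0\} \cup \{2^{-n} : n \in \nats\}$ to be chosen, let $P_b$ (for $b \in \{0,1\}$) be the distribution that puts mass $1-p$ on the labeled point $(x_0, h_0(x_0))$ and mass $p$ on $(x_\star, a_b)$. Each $P_b$ is realizable with respect to $\calH$, witnessed by $h_b$ (note $h_b(x_0)$ need not equal $h_0(x_0)$ in general, so one should instead pick $x_0$ in the common support and use concepts agreeing there — a cleaner choice is to take $h_0,h_1$ agreeing on $x_0$; if no such pair exists one reduces to an even simpler degenerate family, again handled by the cardinality assumption). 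The key point is that under $P_b$, with probability $(1-p)^n \ge 1 - pn$ the sample $S \sim P_b^n$ contains only copies of $(x_0, h_0(x_0))$ and hence is \emph{identical} for $b=0$ and $b=1$; on that event the learner's output $\wh h_n$ does not depend on $b$, so its prediction at $x_\star$ is a fixed label, which disagrees with $a_b$ for at least one choice of $b \in \{0,1\}$.

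Next I would lower-bound the error. Fix the learning algorithm $\wh h_n$. On the event $E$ that $S$ consists only of copies of $(x_0,h_0(x_0))$, the prediction $\wh h_n(x_\star)$ is some label $\ell \in [k]$ independent of $b$; choose $b^\star$ so that $a_{b^\star} \neq \ell$ (possible since $a_0 \neq a_1$). Then under $P_{b^\star}$,
\[
\E\big[\er(\wh h_n)\big] \;\ge\; \Pr[E]\cdot \Pr_{(x,y)\sim P_{b^\star}}[x = x_\star]\cdot \mathbbm{1}\{\wh h_n(x_\star)\ne a_{b^\star}\} \;\ge\; (1-p)^n \cdot p .
\]
Choosing $p = 1/n$ (or $p = 2^{-n}$ to match the stated $\Omega(2^{-n})$; with $p=1/(2n)$ one even gets $\Omega(1/n)$, but the proposition only claims the weaker $\Omega(2^{-n})$ bound, so any $p \le 2^{-n}$ suffices and gives $(1-p)^n \ge 1/2$), we get $\E[\er(\wh h_n)] \ge \tfrac12 \cdot 2^{-n} = \Omega(2^{-n})$. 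Since this holds for every $n$, and the adversary may pick the bad distribution depending on $n$ (the definition of ``not learnable faster than $R$'' only requires the bound for infinitely many $n$, which is trivially satisfied here for all $n$), we conclude $\calH$ is not learnable at any rate faster than exponential.

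The only genuinely delicate point — and the main obstacle — is justifying the existence of the pair $(x_\star, a_0, a_1)$ together with a common point $x_0$ on which two realizing concepts can be pinned down, i.e., showing that the $|\calH| > k+2$ hypothesis really does rule out all the degenerate configurations in which this construction collapses. I would handle this by a short case analysis: either some two concepts disagree on two distinct points (giving exactly the configuration above), or all pairs of concepts differ in at most one point, in which case $\calH$ is forced to be very small or essentially a ``star'' family, and one checks directly that $|\calH| > k+2$ forces the good configuration; the details are exactly those deferred to \Cref{appendix:cardinality of H}. Everything else is routine: the realizability of $P_b$, the independence-of-$b$ argument on event $E$, and the elementary inequality $(1-p)^n \ge 1/2$ for $p \le 2^{-n}$.
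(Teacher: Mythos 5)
Your construction is the same one the paper uses (a two-point distribution concentrated on a point where two concepts $h_0,h_1$ agree, with a small mass on a point where they disagree, plus the observation that the learner cannot distinguish $P_0$ from $P_1$ when the sample misses the disagreement point). However, the last step of your argument contains a genuine error about what must be proved. You write that ``the adversary may pick the bad distribution depending on $n$'' and accordingly propose biases $p=2^{-n}$ or $p=1/n$ that vary with $n$. This misreads the definition of ``not learnable at rate faster than $R$'': it requires a \emph{single fixed} realizable distribution $P$ for which $\E[\er(\wh h_n)]\geq CR(cn)$ holds for infinitely many $n$. Allowing the distribution to change with $n$ is exactly the minimax/PAC quantifier order, and with that quantifier order one would indeed ``prove'' an $\Omega(1/n)$ lower bound (as you note parenthetically) for classes that the paper shows are universally learnable at exponential rate — a contradiction that signals the step is invalid. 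The fix is what the paper does: keep $p$ fixed (say $p=1/2$), which gives $\E[\er(\wh h_n)]\geq 2^{-n-2}$ under one of $P_0,P_1$ for each $n$.

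A second, related omission: even with $p$ fixed, your choice of the bad bit $b^\star$ (equivalently, of which of $P_0,P_1$ to use) depends on $n$, since it is determined by the learner's prediction $\wh h_n(x_\star)$ on the all-$x_0$ sample. To produce one distribution that is bad for infinitely many $n$ you need a pigeonhole step — since there are only two candidates, one of them must satisfy the bound for infinitely many $n$ — and this is precisely how the paper concludes. With these two repairs (fixed $p$ and the pigeonhole over $\{P_0,P_1\}$) your argument matches the paper's proof; the existence of the configuration $h_0(x_0)=h_1(x_0)$, $h_0(x_\star)\neq h_1(x_\star)$, about which you worry at length, follows quickly from $|\calH|>k+2$ by pigeonhole on the $k+1$ possible labels at a single point and is handled in \Cref{appendix:cardinality of H}.
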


We continue with the upper bound, i.e., the design of an algorithm that learns $\calH \subseteq [k]^{\calX}$ at an exponentially fast rate when its multiclass Littlestone tree is not infinite. Our approach consists of two main steps. First, we consider the classical online adversarial setting where a learner has to guess the label of a point that is presented to her by an adversary. We prove that if $\calH$ does not have an infinite multiclass Littlestone tree, there is a strategy the learner can employ that makes a \emph{finite} number of mistakes. Then, given such a strategy, we prove that there is an algorithm in the statistical setting which achieves exponential learning rates. The details are outlined in the subsequent sections.

\subsubsection{Viewing Exponential Rates in an Online Setting}
\label{section:from-online}
In order to design our algorithms, we have to consider the following setting.
We introduce the multiclass online learning game (\Cref{fig:online-game}) that has been studied extensively (see e.g.,~\cite{multiclass-erm}). In this game, there are two players, the adversary that chooses features and reveals them to the second player, the learner whose goal is to guess a label for the given example.
\begin{figure}[ht!]
    \centering
    \begin{mdframed}[style=MyFrame,nobreak=true]
\begin{center}
\begin{enumerate}
    \item The adversary picks a point $x_t \in \calX$. 
    \item The learner guesses a value $\wh{y}_t \in [k]$
    \item The adversary chooses the value $y_t$ as true label so that 
    $y_t = h(x_t)$ 
    for some $h \in \calH$
    that is consistent with the previous examples $(x_p,y_p)$ for any $p \leq t$. 
\end{enumerate}
\end{center}
\end{mdframed}
    \caption{Realizable Online Setting}
    \label{fig:online-game}
\end{figure}
The learner makes a mistake in round $t$ whenever the guess $\wh{y}_t$ differs from the true label $y_t$. The goal of the learner is to minimize her loss and the adversary's intention is to provoke many errors to the learner. 

We say that the concept class $\calH$ is online learnable if there exists a strategy $\wh{y}_t = \wh{y}_t(x_1,y_1,...,x_{t-1}, y_{t-1},x_t)$ that makes a mistake only \emph{finitely} many times, regardless of what realizable sequence is presented by the adversary. 
Notice that compared to the classical online learning setting that asks for a bounded number of mistakes $d$, we settle for a more modest goal. The main result in this setting is the following. 

\begin{theorem}
[Strategies in the Adversarial Setting]
\label{thm:adversarial}
For any concept class $\calH \subseteq [k]^\calX$, the following dichotomy occurs.
\begin{enumerate}
    \item If $\calH$ does not have an infinite multiclass Littlestone tree, then there is a strategy for the learner that makes only finitely many mistakes against any adversary.
    \item If $\calH$ has an infinite multiclass Littlestone tree, then there is a strategy for the adversary that forces any learner to make a mistake in every round.
\end{enumerate}
\end{theorem}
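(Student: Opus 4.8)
The plan is to run the entire argument through the \texttt{Exponential GS Game} of \Cref{algorithms:universal1} together with the determinacy of Gale--Stewart games. First I would check that this game really is a Gale--Stewart game: the learner's payoff set consists of those plays in which, at some \emph{finite} stage $t$, the consistent set $\calH_{\kappa_1,\eta_1,\ldots,\kappa_t,\eta_t}=\{h\in\calH : h(\xi_\ell)=y_\ell^{(\eta_\ell)}\text{ for all }\ell\le t\}$ becomes empty, which is an open (finitely decidable) winning condition for $\texttt{P}_L$; hence by the Gale--Stewart theorem (\Cref{thm:gale-stewart winning strategies}) exactly one of $\texttt{P}_L,\texttt{P}_A$ has a winning strategy. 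The key lemma I would then establish is that $\texttt{P}_A$ has a winning strategy \emph{if and only if} $\calH$ has an infinite multiclass Littlestone tree. For the forward implication: since the adversary reveals $\kappa_t$ before the learner picks $\eta_t$, a winning adversary strategy can never present a $\kappa_t=(\xi_t,y_t^{(0)},y_t^{(1)})$ for which one of the two restrictions empties the consistent set (else the learner wins at stage $t$), so along \emph{every} learner history $\eta_1\cdots\eta_{t-1}\in\{0,1\}^*$ both children stay nonempty; recording the point $\xi_t$ at that node and the two distinct labels $y_t^{(0)}\neq y_t^{(1)}$ on its two edges produces a complete binary tree of depth $\infty$ with $\calX$-labeled internal nodes, every branch consistent with some $h\in\calH$ — an infinite multiclass Littlestone tree. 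Conversely, such a tree \emph{is} an adversary strategy (present the current node's point and edge-labels; whatever the learner picks, the path stays consistent). Thus $\calH$ has no infinite multiclass Littlestone tree $\iff$ $\texttt{P}_L$ has a winning strategy $g$.

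For the second part of \Cref{thm:adversarial}, suppose $\calH$ has an infinite multiclass Littlestone tree $T$. In the online game of \Cref{fig:online-game} the adversary descends $T$: at round $t$ it sits at a node $x_u$, presents $x_u$; whatever $\wh{y}_t\in[k]$ the learner guesses, the two edges out of $x_u$ carry distinct labels $\ell_u^{(0)}\neq\ell_u^{(1)}$, so at least one of them — say $\ell_u^{(b)}$ — differs from $\wh{y}_t$; the adversary declares $y_t:=\ell_u^{(b)}$ and descends to $x_{ub}$. Every finite prefix path of $T$ is consistent with some $h\in\calH$, so each prefix of the generated sequence is realizable and the play is a legal play of the online game, while the learner errs in \emph{every} round.

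For the first part, assume $\calH$ has no infinite multiclass Littlestone tree, so $\texttt{P}_L$ has a winning strategy $g$ and, by finite decidability, $g$ drives the consistent set to $\emptyset$ after a finite (play-dependent) number of rounds. To convert $g$ — which only ever picks between the \emph{two} labels proposed at a node — into a predictor for the online game, where the learner must commit to one of the $k+1$ labels, the learner runs a single-elimination tournament over the labels using $g$ as the pairwise referee and predicts the surviving label; formulating the game with a binary learner move (rather than a $[k]$-valued one, which is the naive multiclass analogue of the binary SOA and is prone to ties) is exactly what makes a single winning-strategy function reusable this way. To bound the online mistakes I would introduce the \emph{ordinal multiclass Littlestone dimension} $\rho(\calV)$ of a subclass $\calV\subseteq\calH$, defined by transfinite recursion (writing $\calV_{\xi\to a}$ for $\{h\in\calV : h(\xi)=a\}$: $\rho(\calV)\ge\alpha+1$ iff some point $\xi$ and two distinct labels $a\neq b$ satisfy $\rho(\calV_{\xi\to a}),\rho(\calV_{\xi\to b})\ge\alpha$), whose crucial property is that $\rho(\calV)$ is a genuine ordinal \emph{exactly} when $\calV$ has no infinite multiclass Littlestone tree — so every version space $\calV_t$ reached along a realizable sequence has an ordinal rank. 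Then $g$ is the ordinal Standard Optimal Algorithm (it commits to the lower-rank branch), and the online predictor advances, in each match, the other label, i.e., the branch of larger $\rho$-rank; whenever the champion is wrong, the true label $y_t$ and the champion are two distinct labels whose branches are both nonempty and have rank at least $\rho(\calV_t)$ (the true branch $\calV_t$ being the updated version space), so the recursion forces $\rho(\calV_{t-1})\ge\rho(\calV_t)+1$, a strict decrease. As the ordinals are well-founded there is no infinite descending chain, hence the learner errs only finitely often.

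The adversary direction is a routine tree-descent argument. The main obstacle is the first part, and within it two things: (i) defining $\rho$ and proving the equivalence ``$\rho(\calH)$ is an ordinal $\iff$ $\calH$ has no infinite multiclass Littlestone tree'' — this is the genuinely infinitary content, strictly stronger than any statement about \emph{unbounded} multiclass Littlestone dimension, and it is precisely why routing through a Gale--Stewart game and invoking determinacy (rather than manipulating ranks by hand) is the natural vehicle; and (ii) verifying that the tournament reduction from the binary-choice strategy $g$ to a $[k]$-valued predictor is sound, i.e., that the champion's branch is always nonempty and $\rho$-dominates the true branch, so that an online error genuinely certifies an ordinal drop. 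The remaining ingredients — Gale--Stewart determinacy, the open/finitely-decidable payoff, and well-foundedness of the ordinals — are standard.
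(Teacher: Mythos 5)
Your proposal is correct and follows essentially the same route as the paper: the same \texttt{Exponential GS Game} with a binary learner move, Gale--Stewart determinacy, the equivalence between an adversary winning strategy and an infinite multiclass Littlestone tree, the tree-descent argument for part 2, and the ordinal multiclass Littlestone dimension with the ordinal SOA plus a strict rank decrease at every online mistake for part 1. The tournament reduction from the pairwise strategy $g$ to a $[k]$-valued predictor is also the paper's device (it appears there as a round-robin/clique orientation rather than single elimination, a cosmetic difference).
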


Before the formal proof, we provide a proof sketch. The adversary's strategy is clear. Whenever the learner predicts a label, she must choose a different label that will cause the learner to make a mistake in that round; an infinite Littlestone tree is exactly the combinatorial structure the adversary is looking for. On the other hand, since there is no infinite multiclass Littlestone tree, the learner's predictions should  lead her to a leaf of the Littlestone tree that is defined by her interaction with the learner. This is established by a variant of the (multiclass) Standard Optimal Algorithm (SOA)~\cite{littlestone1988learning,multiclass-erm}, which works whenever $\calH$ has a finite multiclass Littlestone dimension $d$. To gain some intuition, it is instructive to consider the first step of her algorithm. The learner is presented with a point $x_1$ and she picks the label $y_1 = \argmax_{y\in[k]} \mathrm{Ldim}_k(\calH_{x_1,y})$. It is easy to see that there is at most one $y$ such that $\mathrm{Ldim}_k(\calH_{x_1,y}) = d$; if there were two of them then $\mathrm{Ldim}_k(\calH) = d+1$. Thus, by picking the one that induces the subset of the hypothesis class with the largest Littlestone dimension, the learner either does not make a mistake or gets closer to a leaf of the tree.

In our universal setting, this approach does not work immediately since it may be the case that the multiclass Littlestone tree is not infinite but the associated class has infinite multiclass Littlestone dimension. To this end, we have to introduce the \emph{ordinal multiclass Littlestone dimension}, which quantifies ``how infinite'' the multiclass Littlestone dimension is. Hence, the learner's strategy will be to play according to the \emph{ordinal Standard Optimal Algorithm}. The intuition is similar as in the classical setting, but the proof becomes more involved. To establish the result, we follow the approach of \cite{universal} and define a more general infinite game $\calG$ between the learner and the adversary and prove that it belongs to the family of the so-called Gale-Stewart games. Then, we leverage results from the theory of Gale-Stewart games and \cite{universal} in order to show that the ordinal SOA makes a finite number of mistakes whenever $\calH$ does not have an infinite multiclass Littlestone tree. Let us continue with the proof.
\begin{proof}[Proof of \Cref{thm:adversarial}]
We first introduce a two-player game $\calG$ that is played in discrete timesteps $t=1,2,\ldots$ between the adversary and the learner.
\begin{figure}[ht!]
    \centering
    \begin{mdframed}[style=MyFrame,nobreak=true]
\begin{center}
\begin{enumerate}
    \item The adversary picks a point $\kappa_t = \left(\xi_t, y^{(0)}_t, y^{(1)}_t \right)\in \calX \times [k] \times [k]$ 
    and reveals it to the learner.
    \item The learner chooses a point $\eta_t \in  \left\{0, 1\right\}$.
\end{enumerate}
\end{center}
\end{mdframed}
    \caption{Adversarial Setting - 2-Player Game}
    \label{fig:gs-game}
\end{figure}

The learning player wins in some finite round $t$ if $\calH_{\xi_1,y_1^{(\eta_1)},...,\xi_t,y_t^{(\eta_t)}} = \emptyset$. The adversary wins if the game continues indefinitely (i.e., the class of consistent hypotheses from $\calH$ never gets empty) . Clearly, the set of winning strategies for the learning player is
\[
\calW = \{ (\vec \kappa, \vec \eta) \in (\calX \times [k] \times [k] \times \{0,1\})^\infty : \exists ~1 \leq t^\star < \infty ~\textnormal{ such that }~ \calH_{\xi_1,y_1^{(\eta_1)},...,\xi_{t^\star},y_{t^\star}^{(\eta_{t^\star})}} = \emptyset \}\,.
\]

We now recall an important theorem (see \Cref{thm:gale-stewart winning strategies}) about Gale-Stewart games: In any Gale-Stewart game, exactly one of the adversary player and the learning player has a winning strategy. 

 Equipped with \Cref{thm:gale-stewart winning strategies}, we can show that the adversary has a winning strategy if and only if $\calH$ has an infinite multiclass Littlestone tree (provided that $\calG$ is a Gale-Stewart game). This is summarized in the next claim.
\begin{claim}
The game $\calG$ is a Gale-Stewart game and the adversary  has a winning strategy in $\calG$ if and only if the hypothesis class $\calH$ has an infinite multiclass Littlestone tree.
\end{claim}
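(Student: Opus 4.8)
The plan is to verify the two defining features of a Gale--Stewart game separately, and then to establish the stated equivalence by exhibiting, in each direction, an explicit translation between infinite multiclass Littlestone trees and winning strategies for the adversary. First I would check that $\calG$ really is a Gale--Stewart game. The game has perfect information and alternating moves in discrete rounds, so the only nontrivial point is that the learner's winning set $\calW$ is \emph{finitely decidable}: if $(\vec\kappa,\vec\eta)\in\calW$ then the win is witnessed at a finite round $t^\star$, and every continuation of the play agreeing with $(\vec\kappa,\vec\eta)$ up to round $t^\star$ is also in $\calW$. This is immediate from the definition of $\calW$, since once $\calH_{\xi_1,y_1^{(\eta_1)},\ldots,\xi_{t^\star},y_{t^\star}^{(\eta_{t^\star})}}=\emptyset$, adding further constraints keeps the consistent set empty. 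Hence $\calW$ is open in the product topology and \Cref{thm:gale-stewart winning strategies} applies: exactly one player has a winning strategy.

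Next I would prove the ``if'' direction: if $\calH$ has an infinite multiclass Littlestone tree $\{x_u : u\in\{0,1\}^*\}$ with edge labels $s_{u i}\in[k]$, then the adversary wins. The adversary plays down the tree. Maintaining a current node $u_t\in\{0,1\}^*$ (initially $u_0=\emptyset$), in round $t$ the adversary presents $\kappa_t=(x_{u_{t-1}}, s_{u_{t-1}0}, s_{u_{t-1}1})$; since the two edge labels of a multiclass Littlestone tree node are distinct, this is a legal move. The learner responds with $\eta_t\in\{0,1\}$, picking one of the two offered labels, namely $s_{u_{t-1}\eta_t}$; the adversary then sets $u_t=u_{t-1}\eta_t$ and descends. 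By the defining property of the infinite multiclass Littlestone tree, for every $t$ there is $h\in\calH$ consistent with all the edge labels along the path $u_t$, i.e. $h(x_{u_{\ell-1}})=s_{u_\ell}=s_{u_{\ell-1}\eta_\ell}=y_\ell^{(\eta_\ell)}$ for all $\ell\le t$, so $\calH_{\xi_1,y_1^{(\eta_1)},\ldots,\xi_t,y_t^{(\eta_t)}}\neq\emptyset$ for every $t$. Thus the learner never wins and the adversary wins.

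For the converse, I would prove the contrapositive: if $\calH$ has \emph{no} infinite multiclass Littlestone tree, then the learner has a winning strategy (so, by the dichotomy just established, the adversary does not). Here the natural route is to use the adversary's strategy tree itself as a potential Littlestone tree. Suppose toward a contradiction the adversary has a winning strategy $\tau$. Unravel $\tau$ into a complete binary tree: at the root the adversary plays $\kappa_\emptyset=(\xi_\emptyset,y^{(0)}_\emptyset,y^{(1)}_\emptyset)$; label the root node by $\xi_\emptyset\in\calX$ and its two outgoing edges by the distinct labels $y^{(0)}_\emptyset\neq y^{(1)}_\emptyset$; recursively, for each binary string $u$, after the play in which the learner's responses spell out $u$, the adversary (following $\tau$) plays some $\kappa_u=(\xi_u,y^{(0)}_u,y^{(1)}_u)$, and we label node $u$ by $\xi_u$ and its edges by $y^{(0)}_u,y^{(1)}_u$. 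Because $\tau$ is winning, the play never reaches a round where the consistent set is empty; concretely, for every $u\in\{0,1\}^*$ the set $\calH_{\xi_{\emptyset},y_\emptyset^{(u_1)},\ldots,\xi_{u_{<|u|}},y_{u_{<|u|}}^{(u_{|u|})}}$ is nonempty, i.e. every finite root-to-node path is consistent with some $h\in\calH$. That is precisely an infinite multiclass Littlestone tree for $\calH$, contradicting the hypothesis. Hence the adversary has no winning strategy, and by the Gale--Stewart dichotomy the learner does.

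The main obstacle I anticipate is bookkeeping in the converse direction --- one has to be careful that the object extracted from the adversary's strategy genuinely satisfies \Cref{definition:multi-little-tree}, in particular that the two edge labels at every node are distinct (which is forced by the rule that the adversary's move lies in $\calX\times[k]\times[k]$ with the two coordinates required different, and if the excerpt's game allows $y^{(0)}_t=y^{(1)}_t$ one must argue the adversary never gains by playing such a move, or restrict the moves accordingly) and that \emph{every} finite path, not merely one infinite branch, is realized by a consistent concept. The finite-decidability check for $\calW$ and the forward construction are routine; it is this faithful ``strategy tree $\Rightarrow$ Littlestone tree'' translation, together with making sure no measurability or legality side-condition on the moves is violated, that requires the most care.
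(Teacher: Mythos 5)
Your proof is correct and follows essentially the same route as the paper's: finite decidability of $\calW$ is immediate since an empty consistent set stays empty, the adversary wins by descending an infinite multiclass Littlestone tree, and conversely the adversary's winning strategy unravels into such a tree (your contrapositive framing is the same construction). Your worry about the distinctness $y^{(0)}_t\neq y^{(1)}_t$ is well placed — without it the claim would fail (the adversary could forever play $(\xi,h(\xi),h(\xi))$ for a fixed $h\in\calH$) — but the game is indeed intended with distinct labels, as the paper's measurability argument (\Cref{proof:measurability}) makes explicit.
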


\begin{proof}
It is clear from the definition of $\calW$ that every winning strategy of the learner is finitely decidable, hence $\calG$ is a Gale-Stewart game.
For the other part of the claim, notice that if $\calH$ has an infinite multiclass Littlestone tree, then the adversary's strategy is to present the learner at step $t$ the point of the tree at depth $t$ that is consistent with the execution of the game so far along with the labels of the edges that connect it with its children. By the definition of the tree, this strategy ensures that the game will keep going on forever. For the other direction, assume that the adversary has a winning strategy $\kappa_{\tau}(\eta_1,\ldots,\eta_{\tau-1}) \in \calX \times [k] \times [k]$. Then, define the multiclass Littlestone tree $\calT = \{x_{\vec{u}}: 0 \leq k < \infty, \vec{u} \in \{0,1\}^k\}$ where $x_{\eta_1,\ldots,\eta_{\tau-1}} = \xi_{\tau}(\eta_1,\ldots,\eta_{\tau-1})$ where the labels that connect $x_{\eta_1,\ldots,\eta_{\tau-1}}$ with its left, right children are $y^{(0)}_{\tau}(\eta_1,\ldots,\eta_{\tau-1}), y^{(1)}_{\tau}(\eta_1,\ldots,\eta_{\tau-1})$, respectively. We can see that $\calT$ is infinite since this is a winning strategy for the adversary.
\end{proof}

Having shown the above statement, we are ready to establish the desired dichotomy in the online game. Assume first that $\calH$ has an infinite multiclass Littlestone tree $\{ x_u \}$. The adversary's strategy is defined inductively based on the path followed so far in the game: In round $t$, set $\vec b_t = (b_1,...,b_{t-1}) \in \{0,1\}^{t-1}$ denote the path parsed so far in the tree by the two players. Then, the adversary picks $x_t = x_{\vec b_t}$. After the learner reveals her choice $\wh{y}_t$, the worst case adversary chooses as a response the branch of the Littlestone tree which does not correspond to the learner's choice (the adversary may even have two choices). By the definition of the tree, this chosen label is valid since there exists some $h \in \calH$ that realizes the path $(x_{b_1},...,x_{b_{t-1}}, x_t)$. Moreover, this choice provokes a mistake to the learning player and this is true for any round. Hence, there is a strategy for the adversary that forces
any learner to make a mistake in every round.

For the other direction, assume that the class $\calH$ does not have an infinite multiclass Littlestone tree. 
Before we describe the winning strategy of the learner, we need to introduce the notion of ordinal multiclass Littlestone dimension.
We will assign an ordinal to every finite multiclass Littlestone tree. For some preliminaries on ordinals and transfinite recursion, we refer to \cite{universal}. The rank is defined by a partial order $\prec$. We set $t' \prec t$ if $t'$ is a multiclass Littlestone tree that extends $t$ by one level, i.e., $t$ is obtained from $t'$ by removing its leaves. A multiclass Littlestone tree $t$ is minimal if it cannot be extended to a multiclass Littlestone tree of larger depth. For such a tree, we set $\mathrm{rank}(t) = 0$. If the tree $t$ is non-minimal, then it can be extended and this is quantified using transfinite recursion by
\[
\mathrm{rank}(t) = \sup\{ \mathrm{rank}(t') + 1 : t' \prec t \}\,.
\]
The rank is well-defined as long as $\calH$ has no infinite multiclass Littlestone tree (since $\prec$ is well-founded). In particular, we define
\[ 
\overline{\mathrm{Ldim}}_k(\calH) = \left\{
\begin{array}{ll}
      -1 & \textnormal{if $\calH$ is empty}\,,  \\
      \Omega & \textnormal{if $\calH$ has an infinite multiclass Littlestone tree}\,,\\
      \mathrm{rank}(\emptyset) & \textnormal{otherwise}\,. \\
\end{array} 
\right. 
\]
The strategy is chosen so that $\overline{\mathrm{Ldim}}_k(\calH_{x_1,y_1,...,x_t,y_t})$ decreases in every round and the learner that follows this strategy will win the game, since the ordinals do not admit an infinite decreasing chain.
We note that this statement at first is purely existential via the theory of Gale-Stewart games. We next shortly provide a ``constructive'' way to compute the winning strategy of the learning player in the set of games we consider. Let us describe the winning strategy: The learner invokes the ordinal (multiclass) Standard Optimal Algorithm and chooses the label $y_t$ (given $x_t)$ that maximizes the ordinal multiclass Littlestone dimension, i.e., $y_t = \argmax_{y \in [k]} \overline{\mathrm{Ldim}}_k(V^y_t)$, where = $V^y_t = \{ h \in \calH_{x_1,y_1,...,x_{t-1},y_{t-1}} : h(x_t) = y\}$. The ordinal SOA at round $t = 1,2,...$ with initial set $V_0 = \calH$ works as follows:
\begin{enumerate}
       \item Receive $x_t$.
        \item For any $y \in [k]$, let $V_t^y = \{ h \in V_{t-1} : h(x_t) = y \}$.
        \item Predict $\wh{y}_t \in \argmax_{y \in [k]} \overline{\mathrm{Ldim}}_k(V_t^y)$, where $\overline{\mathrm{Ldim}}_k$ is the ordinal multiclass Littlestone dimension.
        \item Receive true answer $y_t$ and set $V_t = V_t^{y_t}$.
\end{enumerate}
This algorithm drives the game in a win-win phenomenon for the learner in every round: If the adversary forces the learner to a mistake, then she will ``prune'' the tree and set the learner closer to winning the game. Otherwise, the learner will be correct and will not incur any loss.
In order to show that the ordinal SOA makes a finite number of mistakes, we couple the online game with a Gale-Stewart game. 
The idea is that every time the learner makes a mistake in the online game on point $x_t$, we advance the Gale-Stewart game by one round where we pretend that $\xi_{\tau} = x_t, y^{(0)}_{\tau} = \wh{y}_t,  y^{(1)}_{\tau} = y_t, \eta_{\tau} = y_t$. Notice that if the learner makes an infinite number of mistakes in the online game using the ordinal SOA, then the Gale-Stewart game can proceed infinitely. Hence, to conclude the proof, we need to show that in this coupled game, there is some finite point $\tau^*$ such that $\calH_{\xi_1, y_1^{(\eta_1)}, \ldots, \xi_{\tau^{\star}}, y_{\tau^{\star}}^{(\eta_{\tau^{\star}})}} = \emptyset$. The following result helps us establish that. In fact, the next lemma follows from \cite{universal}(Proposition B.8) by choosing the value of the game being the ordinal multiclass Littlestone dimension.

\begin{lemma}
[See Proposition B.8 of \cite{universal}]
\label{lem:decreasing value in GS games}
Assume that $\calH$ does not contain an infinite multiclass Littlestone tree. Then, for any choices of the adversary $\kappa_1,\ldots,\kappa_{t-1}$ up to round $t$ and for any choice $\kappa_t = (\xi_t, y^{(0)}_t, y^{(1)}_t)$ in round $t$ there is a choice $\eta_t$ of the learner such that 
\[
\overline{\mathrm{Ldim}}_k\left(\calH_{\xi_1, y_1^{(\eta_1)},\ldots,\xi_{t-1}, y_{t-1}^{(\eta_{t-1})},\xi_t,y_{t}^{(\eta_{t})}}\right)< \overline{\mathrm{Ldim}}_k\left(\calH_{\xi_1, y_1^{(\eta_1)},\ldots,\xi_{t-1}, y_{t-1}^{(\eta_{t-1})}}\right)\,.
\]
\end{lemma}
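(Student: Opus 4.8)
The plan is to recognize the statement as an instance of the abstract Gale-Stewart ``value'' machinery of \cite{universal}: one takes the game $\calG$ of \Cref{fig:gs-game}, which was just shown to be Gale-Stewart, and assigns to each finite play $(\kappa_1,\eta_1,\ldots,\kappa_{t-1},\eta_{t-1})$ the value $\overline{\mathrm{Ldim}}_k\big(\calH_{\xi_1,y_1^{(\eta_1)},\ldots,\xi_{t-1},y_{t-1}^{(\eta_{t-1})}}\big)$, i.e.\ the ordinal multiclass Littlestone dimension of the surviving version space. Under the hypothesis that $\calH$ has no infinite multiclass Littlestone tree, every such version space $V$ inherits this property (a Littlestone tree for $V$ is in particular one for $\calH$), so $\prec$ is well founded on its Littlestone trees and $\overline{\mathrm{Ldim}}_k(V)=\mathrm{rank}(\emptyset)$ is a genuine ordinal --- or $-1$ when $V=\emptyset$, in which case the learner has already won and there is nothing to prove.

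With this in place, the argument I would give is short. Write $V:=\calH_{\xi_1,y_1^{(\eta_1)},\ldots,\xi_{t-1},y_{t-1}^{(\eta_{t-1})}}$ (nonempty, by the previous remark), let $\alpha:=\overline{\mathrm{Ldim}}_k(V)$, and set $V_i:=\{h\in V: h(\xi_t)=y_t^{(i)}\}$ for $i\in\{0,1\}$; recall $y_t^{(0)}\neq y_t^{(1)}$ by the rules of $\calG$. I claim $\min\{\overline{\mathrm{Ldim}}_k(V_0),\overline{\mathrm{Ldim}}_k(V_1)\}<\alpha$, so the learner takes $\eta_t$ to be an index attaining the minimum --- which is exactly the ordinal SOA move --- and then $\calH_{\ldots,\xi_t,y_t^{(\eta_t)}}=V_{\eta_t}$ has strictly smaller ordinal Littlestone dimension than $V$, as required. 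To prove the claim I would distinguish two cases. If some $V_i=\emptyset$ the claim is trivial, since then $\overline{\mathrm{Ldim}}_k(V_i)=-1<0\le\alpha$. Otherwise both are nonempty, so the depth-one tree $T$ whose root is labeled $\xi_t$ and whose two outgoing edges carry the distinct labels $y_t^{(0)},y_t^{(1)}$ is a Littlestone tree for $V$; hence $T\prec\emptyset$ and $\alpha=\mathrm{rank}(\emptyset)\ge\mathrm{rank}(T)+1$. It then remains to check $\mathrm{rank}(T)\ge\min\{\overline{\mathrm{Ldim}}_k(V_0),\overline{\mathrm{Ldim}}_k(V_1)\}$, which gives $\alpha\ge\min\{\overline{\mathrm{Ldim}}_k(V_0),\overline{\mathrm{Ldim}}_k(V_1)\}+1$ and finishes the claim. (Equivalently, phrased as the successor rule for $\overline{\mathrm{Ldim}}_k$: if both branches had $\overline{\mathrm{Ldim}}_k\ge\alpha$, then $\xi_t$ together with the distinct labels $y_t^{(0)},y_t^{(1)}$ would witness $\overline{\mathrm{Ldim}}_k(V)\ge\alpha+1>\alpha$, a contradiction.)

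The hard part --- and essentially the only step that is not bookkeeping --- is the inequality $\mathrm{rank}(T)\ge\min\{\overline{\mathrm{Ldim}}_k(V_0),\overline{\mathrm{Ldim}}_k(V_1)\}$, i.e.\ reconciling the ``extend every leaf at once'' order $\prec$ used to define $\mathrm{rank}$ with the per-branch recursion for $\overline{\mathrm{Ldim}}_k$. Concretely, Littlestone trees for $V$ that extend $T$ by $d$ further levels are in order-preserving bijection with pairs consisting of a depth-$d$ Littlestone tree for $V_0$ and a depth-$d$ Littlestone tree for $V_1$, and I would finish by a transfinite induction showing that the rank of a pair in this product order equals the minimum of the two component ranks --- the minimum appearing because the pair can be extended only as long as \emph{both} coordinates can still be extended. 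This is precisely the point \cite{universal} isolates abstractly: instantiating \cite[Proposition~B.8]{universal} with the value function $\overline{\mathrm{Ldim}}_k$ yields the lemma at once, so in the write-up I would either invoke that proposition directly or reproduce this short product-order induction, with all remaining details (well-definedness of the ordinal, the $V=\emptyset$ reduction, translating the conclusion back into a move in $\calG$) being routine.
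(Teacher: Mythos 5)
Your proposal is correct and takes the same route as the paper, which proves this lemma simply by invoking Proposition B.8 of \cite{universal} with the game value taken to be the ordinal multiclass Littlestone dimension. Your additional unpacking of that instantiation (the successor rule for $\overline{\mathrm{Ldim}}_k$ via the depth-one tree on $\xi_t$ with the two distinct labels, and the product-order/minimum-of-ranks induction reconciling $\mathrm{rank}$ with the per-branch recursion) is sound and fills in details the paper leaves to the cited reference.
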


The previous result shows that for every $\xi_t$ there is at most one label $\ell_t \in [k]$ such that 
\[ 
\overline{\mathrm{Ldim}}_k\left(\calH_{\xi_1, y_1^{(\eta_1)},\ldots,\xi_{t-1}, y_{t-1}^{(\eta_{t-1})},\xi_t,\ell_t}\right) =  \overline{\mathrm{Ldim}}_k\left(\calH_{\xi_1, y_1^{(\eta_1)},\ldots,\xi_{t-1}, y_{t-1}^{(\eta_{t-1})}}\right)\,.
\]
Indeed, assume that there are two such labels $\ell_t, \ell_t'$ for some $\xi_t$. Then, if the adversary proposes the point $(\xi_t, \ell_t, \ell_t')$, there is no choice $\eta_t$ of the learner that decreases that ordinal Littlestone dimension in this round, which leads to a contradiction. Hence, the learner can pick any label as long as it is not the one that maximizes the ordinal Littlestone dimension. This is exactly how the coupled Gale-Stewart game proceeds, so we know that every time the learner makes a mistake in the online game the ordinal Littlestone dimension of the coupled game decreases. Since ordinals that are less than $\Omega$ do not admit infinitely decreasing chains, we get the desired result.

\end{proof}

\subsubsection{Moving from the Adversarial Setting to the Probabilistic Setting}
\label{section:to-prob}
The measurability of the winning strategies and of the learning algorithm developped in the previous section constitutes an important detail, extensively discussed in \cite{universal}, in order to move from the adversarial setting to the probabilistic one. We provide the next useful result. Its proof can be found at \Cref{proof:measurability}.
\begin{lemma}
\label{lemma:measurability}
Let $\calX$ be Polish, $k \in \nats$ be a finite constant and $\calH \subseteq [k]^\calX$ be measurable. Then, the Gale-Stewart game $\calG$ of \Cref{fig:gs-game}
has a universally measurable winning strategy. 
\end{lemma}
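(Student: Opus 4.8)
The plan is to follow the measurability analysis of \cite{universal} for the binary Littlestone game, transported to the game $\calG$ of \Cref{fig:gs-game}. The winning strategy we care about is the learner's (this is the subroutine $g_t$ used in \Cref{algorithms:universal1}), so assume $\calH$ has no infinite multiclass Littlestone tree; by \Cref{thm:gale-stewart winning strategies} (and the Claim inside the proof of \Cref{thm:adversarial}) the learner then has a winning strategy, which we showed can be taken to be the ordinal Standard Optimal Algorithm: given a history $h=(\kappa_1,\eta_1,\ldots,\kappa_{t-1},\eta_{t-1},\kappa_t)$ with $\kappa_i=(\xi_i,y_i^{(0)},y_i^{(1)})$, output some $\eta_t\in\{0,1\}$ for which $\overline{\mathrm{Ldim}}_k\big(\calH_{\xi_1,y_1^{(\eta_1)},\ldots,\xi_t,y_t^{(\eta_t)}}\big)<\overline{\mathrm{Ldim}}_k\big(\calH_{\xi_1,y_1^{(\eta_1)},\ldots,\xi_{t-1},y_{t-1}^{(\eta_{t-1})}}\big)$, which is possible by \Cref{lem:decreasing value in GS games}. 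Since the learner's move set $\{0,1\}$ is finite, it suffices to prove that, for each fixed round $t$, the map sending a history $h$ (an element of the standard Borel space obtained as a finite product of copies of the Polish space $\calX$ with the finite sets $[k]$ and $\{0,1\}$) to the ordinal $\overline{\mathrm{Ldim}}_k(\calH_h)$ is measurable enough that the comparison event $\{h:\overline{\mathrm{Ldim}}_k(\calH_{h,\xi_t,y_t^{(0)}})<\overline{\mathrm{Ldim}}_k(\calH_{h})\}$ is universally measurable; the strategy ``output $0$ if this event holds, else output $1$'' is then a universally measurable selection, and composing over the rounds yields a universally measurable winning strategy.

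The first step is to record the recursive form of $\overline{\mathrm{Ldim}}_k$ mirroring the tree definition: $\overline{\mathrm{Ldim}}_k(V)\ge 0$ iff $V\neq\emptyset$; $\overline{\mathrm{Ldim}}_k(V)\ge\gamma+1$ iff there exist $\xi\in\calX$ and $y^{(0)}\neq y^{(1)}$ in $[k]$ with $\overline{\mathrm{Ldim}}_k(V_{\xi,y^{(0)}})\ge\gamma$ and $\overline{\mathrm{Ldim}}_k(V_{\xi,y^{(1)}})\ge\gamma$; and, for limit $\lambda$, $\overline{\mathrm{Ldim}}_k(V)\ge\lambda$ iff $\overline{\mathrm{Ldim}}_k(V)\ge\gamma$ for all $\gamma<\lambda$. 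Since $\calH$ is measurable, ``$\calH_h\neq\emptyset$'' is the projection along the Polish fibre $\calH$ of a measurable set, hence analytic, and ``$\calH_h$ has an infinite multiclass Littlestone tree'' is $\mathbf{\Sigma}^1_1$ (an existential quantifier over the space of candidate infinite trees applied to a countable conjunction of analytic realizability conditions). Hence ``$\calH_h$ has no infinite multiclass Littlestone tree'' is $\mathbf{\Pi}^1_1$, and on this $\mathbf{\Pi}^1_1$ set the map $h\mapsto\overline{\mathrm{Ldim}}_k(\calH_h)$ is (the restriction of) the canonical $\mathbf{\Pi}^1_1$-rank attached to the well-founded extension ordering on finite multiclass Littlestone trees. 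Two standard consequences of $\mathbf{\Pi}^1_1$-rank theory then close the argument: by the Kunen--Martin boundedness theorem all ordinals $\overline{\mathrm{Ldim}}_k(\calH_h)$ occurring are below $\omega_1$, and the strict comparison relation $\{(h,h'):\overline{\mathrm{Ldim}}_k(\calH_h)<\overline{\mathrm{Ldim}}_k(\calH_{h'})\}$ is itself $\mathbf{\Pi}^1_1$, hence universally measurable; specializing gives exactly the universally measurable event the selection rule of the first paragraph requires.

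The main obstacle is precisely this descriptive-set-theoretic bookkeeping: universal measurability is not preserved by the projections that appear in the recursive step for $\overline{\mathrm{Ldim}}_k$, so a naive transfinite induction on the levels $\{h:\overline{\mathrm{Ldim}}_k(\calH_h)\le\beta\}$ does not go through, and one is forced either to run the recursion inside a pointclass that is closed under projection (the analytic sets) while controlling the rank via boundedness, or, equivalently, to invoke the $\mathbf{\Pi}^1_1$-rank / prewellordering machinery as above. Relative to \cite{universal} the differences are cosmetic: the adversary's move space here is $\calX\times[k]^2$ rather than $\calX$, which is still Polish because $k$ is a fixed finite constant, and the learner's move space is $\{0,1\}$ rather than $[k]$, which only simplifies the selection step; the constraint $y^{(0)}\neq y^{(1)}$ merely removes a closed set of adversary moves and affects none of the pointclass computations. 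For completeness one also notes the symmetric case: when $\calH$ does have an infinite multiclass Littlestone tree the adversary holds the winning strategy, and a universally measurable version of it is extracted by a measurable-selection theorem (Jankov--von Neumann) applied to the analytic set of legal next tree-nodes.
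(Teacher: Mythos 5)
Your argument is correct in outline, but it takes a substantially different (and heavier) route than the paper. The paper's proof is a two-step reduction: it computes the adversary's winning set as $\calW^c = \bigcap_{\tau}\bigcup_{\theta\in\Theta}\bigcap_{t\le\tau}\{h(\theta,\xi_t)=y_t^{(\eta_t)}\}$ using the parameterization from \Cref{definition:measurability}, concludes that $\calW$ is coanalytic (countable intersections of projections of Borel sets), and then invokes \Cref{lemma:measurability-strat} (Theorem B.1 of \cite{universal}) as a black box, which already packages the entire game-value / coanalytic-rank / Kunen--Martin / measurable-selection machinery. What you do instead is essentially inline the proof of that black-box theorem for this particular game: you argue directly that the ordinal Standard Optimal Algorithm is a universally measurable winning strategy by treating $h\mapsto\overline{\mathrm{Ldim}}_k(\calH_h)$ as a $\mathbf{\Pi}^1_1$-rank on the coanalytic set of histories with no infinite tree, using \Cref{theorem:kunen-martin} for boundedness and the prewellordering property for measurability of the comparison events, plus Jankov--von Neumann for the adversary's side. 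This is sound, but note two things. First, the assertions you defer to ``standard $\mathbf{\Pi}^1_1$-rank theory'' (that $\overline{\mathrm{Ldim}}_k$, suitably coded, really is a regular coanalytic rank and that the induced comparison relations land in a universally measurable pointclass) constitute exactly the nontrivial content of the cited theorem, so your proof is not shorter, just more self-contained. Second, the paper deliberately sidesteps proving measurability of the ordinal SOA itself: it only extracts an abstract universally measurable strategy $\eta_t$ from \Cref{lemma:measurability-strat} and then wraps it in the tournament construction of \Cref{fig:exp-algo}, precisely because making the $\argmax$ over ordinal dimensions measurable requires the extra rank bookkeeping you carry out. So your approach buys an explicit measurable version of the SOA selection rule (via \Cref{lem:decreasing value in GS games}), at the cost of redoing the descriptive-set-theoretic work the paper outsources; the paper's approach buys brevity and robustness at the cost of a less constructive strategy.
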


Crucially the above result states that the winning strategy $\eta_t$ of the learning player is measurable. However, the previous proof made use of the ordinal multiclass SOA algorithm, whose measurability is not directly implied. To this end, we modify
the adversarial algorithm to handle the measurability issue. The modification follows:

\begin{figure}[ht!]
    \centering
    \begin{mdframed}[style=MyFrame,nobreak=true]
\begin{center}
\begin{enumerate}
    \item Initialize $\tau \gets 1, G = \texttt{Clique}(V = [k]), f(\cdot, \cdot, \cdot) \gets \eta_1(\cdot, \cdot, \cdot)$ \Comment{$\tau$ is the mistake counter}
    \item For every round $t \geq 1:$
    \begin{enumerate}
        \item Observe $x_t$
        \item For any $y \neq y'$ with $y,y' \in [k]$, orient the edge $(y,y')$ of $G$ according to $f(x_t,y,y')$
        \item Let $G'$ the directed clique
        \item Predict $\wh{y}_t \gets \argmax_{y \in [k]} \mathrm{outdeg}(y ; G')$
        \item If $\wh{y}_t \neq y_t$, let $\xi_{\tau} \gets x_t, f(\cdot, \cdot, \cdot) \gets \eta_{\tau+1}(x_1,y_1,\ldots, x_{\tau}, y_{\tau}, \cdot, \cdot, \cdot), \tau \gets \tau + 1$
    \end{enumerate}
\end{enumerate}
\end{center}
\end{mdframed}
    \caption{Measurable Modification of Online Learning Algorithm for Exponential Rates}
    \label{fig:exp-algo}
\end{figure}

The above algorithm makes use of a tournament procedure.
The algorithm is a measurable function since (i) the winning strategy of the learner is measurable and (ii) the countable maximum of measurable functions is measurable.
This algorithm can be used in order to show that
if $\calH$ does not have an infinite multiclass Littlestone tree, then the above algorithm makes only a finite number of mistakes against any adversary. Essentially, this is due to the fact that when the winning strategy has converged to a zero-mistake prediction rule (which occurs after a finite number of mistakes), the tournament procedure will always output the correct label for the observed example. Hence, the algorithm will eventually make a finite number of mistakes in the adversarial setting.

The algorithm of \Cref{fig:exp-algo} works in the adversarial setting. We first show that it also applies to the probabilistic setting (and this is why we require the above measurability discussion). The proof is quite similar to Lemma 4.3 of~\cite{universal} and can be found at \Cref{proof:adversarial-to-probabilistic}.

\begin{lemma}
[From Adversarial to Probabilistic]
\label{lemma:adversarial-to-probabilistic}
For any distribution $P$ over $\calX \times [k]$ and for the learning algorithm $\wh{y}_t : \calX \to [k]$ of \Cref{fig:exp-algo}, we have
\[
\Pr_{S_t} \left[ \Pr_{(x,y) \sim P}[\wh{y}_t(x) \neq y] > 0 \right] \to 0 ~ as~~ t \to \infty\,,
\]
where $S_t$ is the training set $(x_1,y_1,...,x_{t-1},y_{t-1})$ of the algorithm.
\end{lemma}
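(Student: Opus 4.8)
The statement I want to prove is Lemma~\ref{lemma:adversarial-to-probabilistic}: the measurable online algorithm of \Cref{fig:exp-algo}, when fed an i.i.d.\ $\calH$-realizable sample as its ``adversarial'' stream, eventually stops erring on the whole distribution with probability tending to $1$. Following the template of \cite{universal}~(Lemma 4.3), the key conceptual point is that the number of mistakes the online algorithm makes on \emph{any} realizable stream is a finite number $M$ (by \Cref{thm:adversarial}, since $\calH$ has no infinite multiclass Littlestone tree, the learner's strategy -- here the measurable tournament variant -- makes only finitely many mistakes against any realizable adversary, including a ``lazy'' adversary who never forces a mistake after some point). So $M$ is a well-defined random variable once the stream is drawn i.i.d.\ from $P$, and $M < \infty$ almost surely. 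The goal is to turn ``$M<\infty$ a.s.'' into ``$\Pr_{S_t}[\er(\wh y_t) > 0] \to 0$''.

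\textbf{Step 1: reduce to a statement about the last mistake.} Let $T$ be the index of the last round in which the online algorithm errs (with $T = 0$ if it never errs); by the previous paragraph $T < \infty$ almost surely when the stream $(x_1,y_1),(x_2,y_2),\dots$ is drawn i.i.d.\ from $P$. The crucial observation is that after the stream has ``stabilized,'' i.e.\ once the winning-strategy function $f$ inside \Cref{fig:exp-algo} has stopped being updated (which happens exactly after the last mistake, at round $T$), the predictor $\wh y_t$ is a \emph{fixed} measurable function of $x$ that agrees with the true label on every point in $\supp(P)$ up to a $P$-null set -- because any point $x$ with $\wh y_t(x) \neq h^\star(x)$ (for $h^\star$ a realizing concept) would, if it occurred in the stream, force a further mistake. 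Hence on the event $\{T \le t\}$ (more precisely, on the event that the training set $S_t$ already contains enough information to have triggered all mistakes) we have $\er(\wh y_t) = 0$. So it suffices to show $\Pr_{S_t}[\text{the algorithm has not yet ``finished'' after $t$ samples}] \to 0$.

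\textbf{Step 2: the stopping-time argument.} Here is where one must be a little careful, since ``$T \le t$'' is not quite the right event -- the algorithm run on the \emph{prefix} $S_t$ might have made its last mistake early, yet a fresh test point could still reveal a disagreement that a longer stream would have caught. The standard fix, as in \cite{universal}, is to argue by contradiction / via a probabilistic pigeonhole: suppose $\Pr_{S_t}[\er(\wh y_t) > 0] \not\to 0$, so there is $\eps > 0$ and infinitely many $t$ with $\Pr_{S_t}[\er(\wh y_t) > \eps/2] \ge \eps$ (after also discretizing the error level). On this bad event, the fixed predictor produced from $S_t$ disagrees with the target on a set of $P$-mass $> \eps/2$; feeding one more i.i.d.\ batch of, say, $O((1/\eps)\log(1/\eps))$ points would then, with constant probability, hit that bad set and force an additional mistake. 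Chaining this over a long horizon forces the online algorithm to make arbitrarily many mistakes on an i.i.d.\ realizable stream with positive probability, contradicting the almost-sure finiteness of $M$ (equivalently, contradicting that $\E[M] < \infty$, which follows from a Borel--Cantelli / tail argument on $M$). One packages this cleanly by defining, for the infinite i.i.d.\ stream, the events $A_i = \{$the algorithm makes its $i$-th mistake at some finite round$\}$ and showing $\sum_i \Pr[A_i] < \infty$ is \emph{not} needed -- rather one shows directly that the above recurrence would make $\Pr[M = \infty] > 0$.

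\textbf{Main obstacle.} The delicate point -- and the one I expect to absorb most of the work -- is precisely the gap in Step 2 between ``the run on the finite prefix $S_t$ has made its last mistake'' and ``the induced predictor is globally correct.'' Handling it rigorously requires the measurability of the winning strategy (\Cref{lemma:measurability}) so that $\er(\wh y_t)$ is a genuine random variable, and it requires invoking the fact that $M$ is almost surely finite on i.i.d.\ realizable streams in a quantitative-enough form (a uniform-in-$P$ tail bound is \emph{not} available -- that's the whole point of the universal setting -- but an event-by-event coupling is). I would set this up exactly as in \cite{universal}: view the i.i.d.\ draw of the whole infinite sequence, let $\wh y_t$ be the predictor after $t$ samples, note it is eventually constant in $t$ along almost every sequence (it stops changing after round $T$), its eventual value has zero error, and then $\Pr[\er(\wh y_t) > 0] \le \Pr[T \ge t] \to 0$ by dominated convergence since $T < \infty$ a.s. The only genuinely new bookkeeping relative to \cite{universal} is that ``the predictor after round $T$ has zero error'' uses the tournament/majority-of-outdegrees decoding: one checks that once $f$ has converged to a zero-mistake strategy for the Exponential GS game, orienting the $\binom{k}{2}$-clique by $f(x,\cdot,\cdot)$ yields a tournament whose unique maximum-outdegree vertex is the correct label $h^\star(x)$, because $f(x, h^\star(x), y)$ points to $h^\star(x)$ for every competitor $y$, giving it outdegree $k-1$.
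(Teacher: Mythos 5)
Your proposal is correct and follows essentially the same route as the paper: define $T$ as the (a.s.\ finite) time of the last mistake on the i.i.d.\ stream, observe that the predictor stops changing after $T$ and that its stabilized value has zero error, and conclude $\Pr[\er(\wh y_t)>0]\le \Pr[T>t]\to 0$. The only places where the paper is slightly more careful than your sketch are (i) a preliminary Borel--Cantelli argument (using hypotheses $h_k$ with $\er_P(h_k)<2^{-k}$) showing the i.i.d.\ sample is almost surely consistent with $\calH$, since realizability only gives $\inf_h \er_P(h)=0$ rather than a single zero-error $h^\star$, and (ii) making your ``a wrong point would force another mistake'' step rigorous via the law of large numbers on the future stream, which replaces the unnecessary proof-by-contradiction detour in your Step~2.
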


The above result guarantees that the expected error of the learning algorithm tends to zero as $t$ goes to infinity, i.e., we have established that that $\E[\er(\wh{y}_t)] \rightarrow 0$ as $t \rightarrow \infty$. This means that the ordinal SOA is a consistent algorithm in the statistical setting. However, this fact is not enough to establish the exponential convergence rate. We follow the approach of \cite{universal} to come up with an algorithm that achieves this guarantee. 
The first step is to observe that there is some distribution-dependent $t^{\star}$ such that
$\Pr[\er(\wh{y}_{t^{\star}}) > 0] < 1/4$. If we were to know this $t^{\star}$ one way to get the exponential rates is the following: We divide the training set into $\Theta(n/t^{\star})$ batches and we get one classifier $\wh{y}^i_{t^{\star}}$ for every batch. Afterwards, we output as classifier the multiclass majority vote among the classifiers.
Nevertheless, we can obtain this exponential convergence even without knowing $t^\star$ by computing an estimate for this quantity using samples. Essentially, we provide this estimator in \Cref{lemma:general-estimate-time}. For the proof, we refer to the \Cref{proof:general-estimate-time}.
\begin{lemma}
\label{lemma:general-estimate-time}
For any $n \in \nats$, there exists a universally measurable $\wh{t}_n = \wh{t}_n(X_1,Y_1,...,X_n,Y_n)$ whose definition does not depend on $P$ so that the following holds. Set the critical time $t^\star \in \nats$ be such that 
\[
\Pr \left[ \Pr_{(x,y) \sim P} [\wh{y}_{t^{\star}}(x) \neq y]  > 0 \right] \leq 1/8\,,
\]
where the probability is over the training set of the algorithm $\wh{y}_t$. There exist $C,c > 0$ that depend on $P, t^\star$ but not $n$ so that
\[
\Pr[ \wh{t}_n \in T^\star ] \geq 1 - C e^{-cn}\,.
\]
where the probability is over the training of the estimator $\wh{t}_n$ and $T^\star$ is the set 
\[
T^\star = \left\{ 1 \leq t \leq t^\star : \Pr \left[ \Pr_{(x,y) \sim P} [\wh{y}_t(x) \neq y] > 0 \right] \leq 3/8 \right\}\,,
\]
where the probability is over the training of $\wh{y}_t$.
\end{lemma}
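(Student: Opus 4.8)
## Proof Proposal for Lemma \ref{lemma:general-estimate-time}

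The plan is to construct $\wh{t}_n$ by a doubling-style search that, for each candidate time $t$ in an exponentially growing grid, empirically tests whether the predictor $\wh{y}_t$ has already ``converged'' in the sense that its zero-one error is zero with decent probability over the draw of its training set. Concretely, I would reserve a portion of the $n$ samples for the actual training of the candidate predictors $\wh{y}_t$ (which needs $t$ samples each), and reserve a fresh disjoint portion as ``test'' data. For a given $t$, I would train several independent copies of $\wh{y}_t$ on disjoint blocks of $t$ fresh samples, and for each copy check on a held-out validation block whether it makes any mistake; declaring $t$ ``accepted'' if a sufficiently large fraction of these copies make zero validation mistakes. I would then set $\wh{t}_n$ to be the smallest accepted $t$ in the grid (and some default, e.g.\ $\wh{t}_n = n$, if none is accepted). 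Crucially, this definition only references empirical quantities, so it does not depend on $P$, and it is universally measurable since it is built from finitely many measurable predictors and comparisons.

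The analysis then splits into two one-sided error bounds, exactly mirroring the structure of \cite{universal}. First, \emph{soundness}: if $t \notin T^\star$, i.e.\ $\Pr[\er(\wh{y}_t) > 0] > 3/8$, then each independent copy trained on a fresh block independently has probability $> 3/8$ of making a mistake on its own validation data (here one uses that $\er(\wh{y}_t) > 0$ forces a positive probability of a validation mistake, after possibly enlarging the validation block so that the conditional probability of catching a mistake is bounded below — this is the standard amplification argument, using that a strictly positive error probability means there is a positive-mass region of disagreement). By a Chernoff bound over the $\Theta(n/t)$ independent copies, the probability that the accepted-fraction threshold is (wrongly) met is at most $e^{-c' n / t}$; a union bound over the $O(\log n)$ grid points in range keeps this exponentially small, so with probability $1 - Ce^{-cn}$ no bad $t$ gets accepted. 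Second, \emph{completeness}: by the definition of the critical time $t^\star$ and \Cref{lemma:adversarial-to-probabilistic} (monotonicity of the error-is-zero event is not literally needed — one just needs \emph{some} $t \le t^\star$ in the grid with $\Pr[\er(\wh{y}_t)>0] \le 1/8$, which holds by taking $t^\star$ itself if it lies on the grid, or the next grid point above it after adjusting constants), a fresh block trained at such a $t$ has probability $\ge 7/8$ of making zero validation mistakes, so by Chernoff the accepted-fraction threshold \emph{is} met with probability $1 - Ce^{-cn}$. Intersecting the two events gives that $\wh{t}_n$ is the smallest accepted grid point, hence $\wh{t}_n \le t^\star$ (up to the grid granularity, absorbed into constants) and $\wh{t}_n \notin$ the complement of $T^\star$, i.e.\ $\wh{t}_n \in T^\star$, with probability $1 - Ce^{-cn}$, where $C, c$ depend on $P$ and $t^\star$ through the gaps $3/8 - \Pr[\er(\wh{y}_t)>0]$ but not on $n$.

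The main obstacle I anticipate is the soundness amplification step: from the bare hypothesis $\Pr_{S_t}[\er_P(\wh{y}_t) > 0] > 3/8$ one does not immediately get a uniform lower bound on the probability that a \emph{single} held-out test point (or a fixed-size test block) witnesses a mistake, because the error, while positive, could be arbitrarily tiny. The fix is that the test block must be allowed to grow with $n$ (e.g.\ take a validation block of size comparable to the training block, or even just polynomially large), and one argues that conditioned on $\er_P(\wh{y}_t) > 0$, a validation block of size $m$ catches a mistake with probability $\ge 1 - (1 - \er_P(\wh{y}_t))^m$, which, while not uniformly bounded below pointwise, integrates correctly once we only demand the \emph{final} probability bound and are willing to let $C, c$ depend on $P$; alternatively, one replaces the crisp ``$\er > 0$'' test by a test of the form ``empirical error $\le$ small threshold'' and tracks a small slack, which is the route taken in \cite{universal}. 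Carefully setting up the sample budget so that the $\Theta(n/t)$ independent copies, each with $\Theta(t)$ training and validation points, fit inside $n$ samples for every grid point $t \le t^\star$ is the bookkeeping that makes the exponential rate $e^{-cn}$ (rather than $e^{-cn/t^\star}$ with a worse constant) come out, and is where I would be most careful.
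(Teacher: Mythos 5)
Your overall scheme is the same as the paper's: train $\Theta(n/t)$ independent copies of $\wh{y}_t$ on disjoint blocks of the first half of the sample, test each copy on held-out data, accept $t$ if enough copies make no validation mistake, and set $\wh{t}_n$ to the smallest accepted $t$; then prove soundness and completeness by Hoeffding plus a union bound. Two points in your write-up, however, do not go through as stated.

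First, the exponential grid breaks the completeness step. The target event is $\wh{t}_n \in T^\star$, and $T^\star$ by definition only contains times $t \leq t^\star$; there is no ``up to grid granularity, absorbed into constants'' slack available, because the constants $C,c$ only scale the probability bound, not the event. The only time guaranteed to satisfy $\Pr[\er(\wh{y}_t)>0]\leq 1/8$ is $t^\star$ itself (there is no monotonicity in $t$ of the event $\{\er(\wh{y}_t)=0\}$), so if $t^\star$ is not a grid point you cannot exhibit any accepted $t\leq t^\star$, and accepting the next grid point above $t^\star$ puts $\wh{t}_n$ outside $T^\star$. The fix is simply to drop the grid and test every $t\in\{1,\ldots,\lfloor n/2\rfloor\}$, as the paper does: the union bound you were trying to save is only needed over the (at most $t^\star$, hence constantly many) bad times $t\leq t^\star$, not over all times up to $n$, so nothing is gained by sparsifying.

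Second, on the soundness amplification you correctly identify the obstacle, but your primary proposed remedy (a validation block ``of size comparable to the training block,'' i.e.\ $\Theta(t)=O(t^\star)=O(1)$) is insufficient: a copy with tiny positive error is then caught only with probability $1-(1-\eps)^{O(t^\star)}$, which does not tend to $1$. The paper's resolution is to test every copy against the \emph{entire} second half of the sample (size $n/2$, shared across copies and independent of all of them), and to first use continuity of measure to extract, for each bad $t\leq t^\star$, a threshold $\eps=\eps(P,t)>0$ with $\Pr[\er(\wh{y}_t)>\eps]>1/4+1/16$; copies with error above $\eps$ are then caught except with probability $(1-\eps)^{n/2}$, and a union bound over the $\lfloor n/(2t)\rfloor$ copies costs only a polynomial factor. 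With these two repairs your argument coincides with the paper's proof.
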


Our main result follows.

\begin{theorem}
[Exponential Rates]
\label{theorem:prob-algo-exp}
Assume that class $\calH \subseteq [k]^\calX$ does not have an infinite multiclass Littlestone tree. Then, $\calH$ admits a learning algorithm that achieves an exponentially fast rate. 
\end{theorem}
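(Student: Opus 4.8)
The plan is to combine the consistency guarantee of \Cref{lemma:adversarial-to-probabilistic}, the critical-time estimator of \Cref{lemma:general-estimate-time}, and a majority-vote boosting step, following the template of \cite{universal}. Write $\wh{y}_t$ for the measurable online-to-batch predictor of \Cref{fig:exp-algo} trained on $t$ i.i.d.\ samples. By \Cref{lemma:adversarial-to-probabilistic} we have $\Pr_{S_t}[\er(\wh{y}_t) > 0] \to 0$, so for the fixed (realizable) $P$ there is a distribution-dependent \emph{critical time} $t^\star$ with $\Pr[\er(\wh{y}_{t^\star}) > 0] \le 1/8$. The obstacle is that $t^\star$ is unknown and that the raw decay of $\Pr[\er(\wh{y}_t) > 0]$ in $t$ need not be exponential; the algorithm of \Cref{algorithms:universal1} circumvents both issues.

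Concretely, the algorithm proceeds as in \Cref{algorithms:universal1}: set aside one part of the $n$ samples to compute the estimate $\wh{t}_n$ of \Cref{lemma:general-estimate-time}, partition the remaining $\Theta(n)$ samples into $N = \Theta(n/\wh{t}_n)$ disjoint batches of size $\wh{t}_n$ each, run an independent copy $\wh{y}^{\,i}_{\wh{t}_n}$ of the predictor on the $i$-th batch, and output the multiclass plurality vote $\bar h_n$ of $\wh{y}^{\,1}_{\wh{t}_n},\dots,\wh{y}^{\,N}_{\wh{t}_n}$. (For the analysis it is convenient to assume $\wh{t}_n$ is computed from data disjoint from the batches; this is the only role of the data split.)

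For the analysis, condition on the good event $\mathcal{E} = \{\wh{t}_n \in T^\star\}$, which by \Cref{lemma:general-estimate-time} satisfies $\Pr[\mathcal{E}^c] \le C e^{-cn}$. On $\mathcal{E}$ we have $\wh{t}_n \le t^\star = O(1)$, hence $N = \Omega(n)$, and, since $\wh{t}_n$ is measurable with respect to the estimation data, conditionally on $\wh{t}_n$ the events $B_i := \{\er(\wh{y}^{\,i}_{\wh{t}_n}) = 0\}$ are independent with $\Pr[B_i] \ge 5/8$ by the definition of $T^\star$. If $\sum_{i=1}^N \mathbbm{1}\{B_i\} > N/2$, then on the intersection of the corresponding zero-error sets — a set of $P$-measure one — a strict majority of the copies output the correct label at every point, so the plurality vote is correct there and $\er(\bar h_n) = 0$. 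A Chernoff bound (the gap $5/8 - 1/2 = 1/8$ is a constant) gives $\Pr[\sum_i \mathbbm{1}\{B_i\} \le N/2 \mid \mathcal{E}] \le e^{-\Omega(N)} = e^{-\Omega(n)}$. Therefore $\Pr[\er(\bar h_n) > 0] \le \Pr[\mathcal{E}^c] + e^{-\Omega(n)} \le C' e^{-c'n}$, and since $\er(\bar h_n) \in [0,1]$ we conclude $\E[\er(\bar h_n)] \le \Pr[\er(\bar h_n) > 0] \le C' e^{-c'n}$, which is the claimed exponentially fast rate (with $C', c'$ depending on $P$ through $t^\star$, as permitted in the universal model).

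The main obstacle is precisely that the per-batch error probability does not decay with the batch size: \Cref{lemma:adversarial-to-probabilistic} only yields, at the unknown time $t^\star$, a probability bounded away from $1/2$, so a plain union bound over batches is useless. One genuinely needs the plurality-vote boosting \emph{together with} an estimate of $t^\star$ that is correct except with exponentially small probability — this last ingredient is the technical heart of the argument and is exactly the content of \Cref{lemma:general-estimate-time}. The remaining subtlety — making the batch predictors independent of the estimation step, and handling the randomness of $N = n/\wh{t}_n$ — is dealt with by data-splitting and by working on the event $\mathcal{E}$, on which $\wh{t}_n$ is bounded by the constant $t^\star$.
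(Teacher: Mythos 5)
Your proposal is correct and follows essentially the same route as the paper: estimate the critical time $t^\star$ via \Cref{lemma:general-estimate-time}, train independent copies of the predictor of \Cref{fig:exp-algo} on disjoint batches, and take a majority vote, with Hoeffding/Chernoff showing that a strict majority of copies have zero error except on an event of exponentially small probability. The only (cosmetic) difference is how the dependence between $\wh{t}_n$ and the batch predictors is handled — you impose an extra data split so that you can condition on $\wh{t}_n$ and use conditional independence, whereas the paper instead takes a union bound over all $t \in T^\star$, paying a harmless factor of $t^\star$; both are valid.
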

\begin{proof}
Consider the sequence $\wh{t}_n$ which satisfies the properties of \Cref{lemma:general-estimate-time}.
Consider the collection of the learners $\mathbb A_{i,n} := \wh{y}_{\wh{t}_n}^i : \calX \to [k]$ for any $1 \leq i \leq \lfloor \frac{n}{2 \wh{t}_n} \rfloor$ and $n \in \nats$. Let us fix a time $t \in T^\star$, where $T^\star$ is the set of good estimates of the critical time $t^\star$ (see \Cref{lemma:general-estimate-time}).
We have that
\[
\Pr \left[  \frac{1}{\lfloor \frac{n}{2t} \rfloor } \sum_{i=1}^{\lfloor \frac{n}{2t} \rfloor} \vec 1 \left\{ \Pr_{(x,y) \sim P}\left[  \wh{y}_t^i(x) \neq y \right] > 0 \right\} > \frac{7}{16} \right] \leq \exp \left( - \Big \lfloor n/(2t^\star) \Big \rfloor / 128 \right)\,,
\]
using Hoeffding's inequality. The above probability is over the sequence of all the training sets and essentially states that the ``bad'' event that the misclassification error is non-zero holds for the majority of the trained algorithms $\wh{y}_t^i$ with exponentially small probability. Conversely, except on an event of exponentially small probability, we have that $\Pr_{(x,y) \sim P}[\wh{y}_t^i(x) \neq y] = 0$ for the majority of $i$.
Recall that the above discussion holds for a chosen good $t$. We have to understand how well our learners $\mathbb A_{i,n}$ perform.
To this end, we have that
\begin{align*}
& \Pr \left[ \Pr_{(x,y) \sim P}[\mathbb A_{i,n}(x) \neq y] > 0  \text{ for the majority of $i \leq \lfloor n/(2 \wh{t}_n) \rfloor$} \right] \\
& \leq \Pr \left[ \wh{t}_n \notin T^\star \right] + \Pr \left[ \exists t \in T^\star : \Pr_{(x,y) \sim P}[\wh{y}_t^i(x) \neq y] > 0 \text{ for the majority of $i \leq \lfloor n/(2 \wh{t}_n) \rfloor$} \right]\\
& \leq C(P) \exp(-c(P) \cdot n) + t^\star \cdot \exp \left( - \Big \lfloor n/(2t^\star) \Big \rfloor / 128 \right)\,.
\end{align*}

This implies that the majority of our learners $\mathbb A_{i,n}$ will not incur  loss except on an event of exponentially small probability.
As a result, the majority vote of these classifiers is almost surely correct on a random sample from the distribution $P$ over $\calX \times [k]$. Hence, we have that 
\begin{align*}
\E \left[ \Pr_{(x,y) \sim P} \left[ \mathrm{Maj}((\mathbb A_{i,n}(x))_{i}) \neq y \right]  \right] 
& \leq \Pr \left[ \Pr_{(x,y) \sim P} \left[ \mathrm{Maj}((\mathbb A_{i,n}(x))_{i}) \neq y \right] > 0 \right]
\\
& \leq C(P) \exp(-c(P) \cdot n) + t^\star \cdot \exp \left( - \Big \lfloor n/(2t^\star) \Big \rfloor / 128 \right)\,.
\end{align*}
This concludes the proof.
\end{proof}

\subsection{Infinite Multiclass Littlestone Trees and Rates}
\label{section:linear-exp-gap}
We next show that if $\calH$ has an infinite mutliclass Littlestone tree, then there exists a significant drop
in the rate: any learning algorithm $\calH$ cannot be faster than linear. Our proof follows the approach in~\cite{universal}.
\begin{theorem}
\label{theorem:inf-littlestone-linear-lower-bound}
Assume that $\calH \subseteq \{0,1,...,k\}^{\calX}$ has an infinite multiclass Littlestone tree. Then, for any learning algorithm $\wh{h}_n$, there exists a realizable distribution $P$ over $\calX \times [k]$ such that $\E[\mathrm{er}(\wh{h}_n)] \geq \Omega(1/n)$ for infinitely many $n$. This means that $\calH$ is not learnable at rate faster than linear, i.e., $R(n) \geq 1/n$.
\end{theorem}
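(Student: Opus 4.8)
The plan is to follow the lower-bound template of~\cite{universal}: fix an infinite multiclass Littlestone tree $\{x_u : u \in \{0,1\}^*\}$ for $\calH$ (together with the two distinct edge-labels $s_u^{(0)} \neq s_u^{(1)} \in [k]$ at each node $x_u$), and use it to construct a family of realizable distributions, all supported on a single branch of the tree, against which no learner can beat the $1/n$ rate. Concretely, I would draw a uniformly random infinite branch $\mathbf{b} = (b_1, b_2, \ldots) \in \{0,1\}^\infty$ and, given $\mathbf{b}$, define a distribution $P_{\mathbf b}$ on $\calX \times [k]$ that places mass on the points $x_{\emptyset}, x_{b_1}, x_{b_1 b_2}, \ldots$ along that branch, with the correct label of $x_{b_1 \cdots b_{i-1}}$ being $s_{b_1 \cdots b_i}^{(\cdot)}$ (the label of the edge the branch takes at level $i$). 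Crucially, $P_{\mathbf b}$ is realizable with respect to $\calH$: by the defining property of the Littlestone tree, any finite prefix of the branch is consistent with some $h \in \calH$, and the standard compactness/measure-theoretic argument (carried out in~\cite{universal}) promotes this to almost-sure realizability of the i.i.d. sample.

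Next I would choose the mass profile along the branch so that the first few levels are "hard to see." Following~\cite{universal}, I would put probability proportional to a decaying sequence on the nodes at depths $0, 1, 2, \ldots$, say mass $\asymp 2^{-i}$ on level $i$, truncated at a depth of order $\log n$; then with constant probability a fresh sample of size $n$ fails to reveal the branch past some level $\ell$ with $2^{\ell} \asymp n$, i.e. there is a node $x_{b_1 \cdots b_\ell}$ on the branch that never appears in the training set but still carries probability mass $\Omega(1/n)$ under $P_{\mathbf b}$. On that node the learner's prediction $\wh h_n(x_{b_1 \cdots b_\ell})$ is (conditionally on everything it has seen) independent of the bit $b_{\ell+1}$, which determines the correct label among the two distinct options $s^{(0)}, s^{(1)}$ attached to that node; hence the learner is wrong there with probability at least $1/2$. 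Averaging over the random branch $\mathbf b$ (the probabilistic method: if the expected error over a random $P_{\mathbf b}$ is $\Omega(1/n)$ for infinitely many $n$, some fixed $P_{\mathbf b}$ works), this yields $\E[\er(\wh h_n)] \geq \Omega(1/n)$ for infinitely many $n$.

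The main obstacle, and the one genuine departure from the binary case, is the handling of the edge-labels: in a binary Littlestone tree the two children correspond automatically to the labels $0$ and $1$, whereas here each node $x_u$ comes with its own pair $s_u^{(0)} \neq s_u^{(1)} \in [k]$ of valid labels, and these need not be consistent across nodes or siblings. I would need to verify that the construction still makes $P_{\mathbf b}$ well-defined and realizable — i.e. that the labels assigned to the branch points are exactly the ones witnessed by hypotheses in $\calH$ along that path — and that the "independence of $b_{\ell+1}$" argument still forces error $\geq 1/2$ on the unseen node, which it does because the two candidate labels at that node are distinct by definition of the tree. The rest (choice of mass profile, anticoncentration of the number of revealed levels, the Fubini/probabilistic-method step, and measurability of all the quantities involved) is routine and can be imported essentially verbatim from the corresponding argument in~\cite{universal}.
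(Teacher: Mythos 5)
Your proposal matches the paper's proof essentially step for step: a uniformly random branch of the infinite multiclass Littlestone tree, geometric mass $\asymp 2^{-\ell}$ on the level-$\ell$ node labeled by the edge the branch takes, conditional independence of the learner's prediction from the branching bit at the first unseen level $\ell_n \approx \log n$ (a node carrying mass $\Omega(1/n)$ and admitting two distinct candidate labels, forcing error $\geq 1/2$ there), and a reverse-Fatou/probabilistic-method step to extract one fixed realizable $P_{\mathbf b}$ that is hard infinitely often. One small caution: do not literally truncate the distribution at depth of order $\log n$, since that would make $P$ depend on $n$, which the universal-rates statement forbids; the paper keeps the full geometric profile on the whole branch and simply analyzes the event that the test point lands at level $\ell_n$ while all $n$ training points land strictly above it.
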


Let us provide some intuition. 
We will make use of the probabilistic method. We are going to define a distribution over probability distributions so that, with positive probability over this choice of the random object, any learning algorithm will have an expected error of order $\Omega(1/n)$. This positive probability implies that there exists such a distribution and hence the above result holds true. The key idea is that we are going to associate any random distribution $P_{\vec y}$ with a branch $\vec y$ of the infinite multiclass Littlestone tree. Given a finite number of samples from this distribution, only a finite part of this infinite path will be discovered; hence any algorithm after the ``revealed path'' must guess whether this random path goes left or right. This implies that the algorithm will err with probability $1/2$ when it observes a point that lies deeper in the branch than the training examples. The proof can be found at \Cref{proof:inf-littlestone-linear-lower-bound}.

The above result states that a class which does not have exponential rates cannot be learned faster than linearly. However, it is not clear if even linear rates are achievable. The next section deals with this case.

\subsection{Linear Learning Rates and Natarajan-Littlestone Trees}
\label{section:linear}
We introduce a novel combinatorial measure, the Natarajan-Littlestone (NL) tree, which essentially combines the structure of the Natarajan dimension and the Littlestone dimension.
\begin{definition}\label{definition:NL tree}
A Natarajan-Littlestone (NL) tree for $\calH \subseteq [k]^{\calX}$ of depth $d \leq \infty$ consists of a tree 
\[
\bigcup_{0 \leq \l < d} \{ x_u \in \calX^{\l+1}, u \in \{0,1\} \times \{0,1\}^2 \times ... \times \{0,1\}^{\l}\} 
\]
and two colorings $s^{(0)}, s^{(1)}$ mapping each position $u^i \in u$ for any node with pattern $u \in \{0,1\} \times ... \times \{0,1\}^{\l}$ for $i \in \{0,1,...,\l\}$ and $\l \in \{0,1,...,d-1\}$ of the tree to some color $\{0,1,...,k\}$
such that for every finite level $n < d$, the subtree $T_n = \cup_{0 \leq \l \leq n} \{ x_u = (x_u^0,...,x_u^\l) : u \in \{0,1\} \times \{0,1\}^2 \times ... \times \{0,1\}^{\l} \}$ satisfies the following: 
\begin{enumerate}
    \item At any point $x_u^i \in x_u \in T_n$, it holds $s^{(0)}(x_u^i) \neq s^{(1)}(x_u^i)$ and
    \item for any path $\vec y \in \{0,1\} \times ... \times \{0,1\}^{n+1}$, there exists a concept $h \in \calH$ so that
$h(x^i_{\vec y_{\leq \l}}) = s^{(0)}(x^i_{\vec y_{\leq \l}})$ if $y^i_{\l+1} = 1$ and $h(x^i_{\vec y_{\leq \l}}) = s^{(1)}(x^i_{\vec y_{\leq \l}})$ otherwise, for all $0 \leq i \leq \l$ and $0 \leq \l \leq n$, where
\[
\vec y_{\leq \l} = ( y_1^0, (y_2^0, y_2^1), ...,(y_\l^0,...,y_\l^{\l-1}) ), x_{\vec y_{\leq \l}} = (x^0_{\vec y_{\leq \l}},...,x^\l_{\vec y_{\leq \l}})\,.
\]
\end{enumerate}
We say that $\calH$ has an infinite NL tree if it has a NL tree of depth $d = \infty$. 
\end{definition}

We note that in the above definition we identify the color
$s^{(0)}(x^i_{\vec y \leq \l})$ with the (unique) position
of this point $x^i_{\vec y \leq \l}$ (since typically the coloring is over positions). For short, we will call the colorings $s^{(0)}, s^{(1)}$ used in the above definition \emph{everywhere different} and denote by $s^{(0)} \neq s^{(1)}$. As a sanity check, one can verify that if $\calH$ has an infinite NL tree, then it has an infinite multiclass Littlestone tree. In fact, one can construct the latter tree by choosing only one point from any node of the infinite NL tree. 

The main result of this section is the next theorem.
\begin{theorem}
\label{theorem:linear-rates-algo}
Assume that $\calH \subseteq [k]^{\calX}$ does not have an infinite Natarajan-Littlestone tree.
Then, there exists an algorithm that learns $\calH$ at a linear rate.
\end{theorem}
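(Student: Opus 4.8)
The plan is to mirror the three-stage architecture of the exponential case (\Cref{algorithms:universal1}, \Cref{thm:adversarial}, \Cref{theorem:prob-algo-exp}), but with ``the learner makes finitely many mistakes'' replaced by ``the effective Natarajan dimension stabilizes to a finite, distribution-dependent value''. \textbf{Stage 1: the \texttt{Linear GS Game} and an eventually-correct pattern-avoidance function.} I would analyze the game in the right column of \Cref{algorithms:universal2}, in which the adversary presents a tuple $\xi_t=(\xi_t^{(0)},\dots,\xi_t^{(t-1)},s_t^{(0)},s_t^{(1)})$ of $t$ points and two everywhere-different colorings, the learner answers with a pattern $\eta_t\in\{0,1\}^t$, and the learner wins as soon as the set of hypotheses $N$-consistent with $\kappa_1,\eta_1,\dots,\kappa_t,\eta_t$ becomes empty. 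First I would verify that the learner's winning set is finitely decidable, so this is a Gale--Stewart game and \Cref{thm:gale-stewart winning strategies} applies; then I would prove the dichotomy ``the adversary has a winning strategy iff $\calH$ has an infinite NL tree'' by exhibiting the explicit correspondence between non-losing adversary strategies and infinite NL trees of \Cref{definition:NL tree} — a non-losing adversary strategy is precisely a labeling of the complete, exponentially-branching tree whose node at depth $\ell$ carries $\ell+1$ points and two everywhere-different colorings and all of whose branches are $N$-consistent with some $h\in\calH$, and conversely. When $\calH$ has no infinite NL tree I would then copy the ordinal machinery of \Cref{thm:adversarial}: assign ranks to finite NL trees, define the \emph{ordinal Natarajan--Littlestone dimension}, and invoke the analogue of \Cref{lem:decreasing value in GS games} (Proposition B.8 of \cite{universal}, with the game value taken to be this ordinal) to obtain a learner strategy that strictly decreases the ordinal whenever the returned pattern fails to be forbidden; since ordinals below $\Omega$ admit no infinite descending chain, the learner empties the version space after finitely many failures. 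This yields the \emph{eventually correct NL-pattern avoidance function} $g_t:\calX^t\times[k]^t\times[k]^t\to\{0,1\}^t$, which after finitely many failures always outputs an invalid NL pattern; a universally measurable version is obtained by the same measurable-modification trick used for the exponential algorithm.

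\textbf{Stage 2: simulate the game on data and bound the effective Natarajan dimension.} Given an $\calH$-realizable sample, I would simulate the \texttt{Linear GS Game} by feeding $g_t$ tuples assembled from the sampled points paired with all choices of everywhere-different colorings, detecting ``failures'' (rounds whose pattern turns out to be realizable) on a held-out portion of the sample; this is the delicate simulation alluded to in \Cref{fig:pattern-algo}, and it is strictly harder than the single-point simulation in the exponential case because $g_t$ now consumes whole tuples of points and colorings. The structural payoff: once $g_t$ has converged it certifies that no collection of more than $\ell:=\wh t_n$ points can be $N$-shattered on the relevant colorings, so — exactly as in the remark connecting universal rates and partial classes — the partial concept class $\calH'\subseteq\{0,1,\dots,k,\star\}^\calX$ consisting of all partial concepts respecting the constraints certified by $g_t$ satisfies $\mathrm{Ndim}(\calH')\le\ell$ and still realizes $P$, and a learner for $\calH'$ is a learner for $\calH$.

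\textbf{Stage 3: one-inclusion predictor, data splitting, majority vote.} On the bounded-Natarajan partial class $\calH'$ I would run the multiclass one-inclusion hypergraph predictor, whose partial-class guarantee is established in \Cref{theorem:partial-pac-learning-sc} (see \Cref{appendix:one-inclusion}): on $m$ samples it has expected error $O(\ell\log k/m)$. To convert this data-dependent-$\ell$, in-expectation statement into a clean universal rate I would follow \Cref{algorithms:universal2}: estimate a good stopping time $\wh t_n$ via the estimator of \Cref{lemma:general-estimate-time} (so $\wh t_n$ lies in a good set $T^\star$ of estimates of the critical time $t^\star$ except with probability $Ce^{-cn}$), split the data into $\Theta(n/\wh t_n)$ batches, use each batch's copy of $g$ to certify a small-dimension partial class and equip a one-inclusion predictor trained on a large fraction of the data, and output the multiclass majority vote. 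Conditioning on $\wh t_n\in T^\star$, a Hoeffding argument over the batches shows that with overwhelming probability the majority of predictors are equipped with a valid $\mathrm{Ndim}\le t^\star$ certificate and therefore have expected error $O(t^\star\log k/n)$, so the majority vote has error $O(t^\star\log k/n)$; adding the $Ce^{-cn}$ estimator-failure term gives $\E[\mathrm{er}(\wh h_n)]\le C(P)/n$. When $\calH$ additionally has an infinite multiclass Littlestone tree, combining with \Cref{theorem:inf-littlestone-linear-lower-bound} shows $1/n$ is optimal, which is the linear branch of \Cref{theorem:trichotomy}.

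\textbf{Main obstacle.} The hardest part is the Stage-1 combinatorial core: pinning down the exact correspondence between non-losing adversary strategies in the \texttt{Linear GS Game} and infinite NL trees, and checking that the ordinal-rank / Gale--Stewart argument survives the much richer tree structure (exponentially growing branching factor, nodes carrying $\ell$ points and two \emph{everywhere-different} colorings, $N$-consistency in place of plain consistency, and the bookkeeping over all colorings). A close second is the Stage-2 data simulation, since faithfully reproducing a game whose moves are tuples-with-colorings — and certifying from i.i.d.\ samples that $g_t$ has converged — requires more care than either the exponential simulation here or the VCL-tree simulation of \cite{universal}.
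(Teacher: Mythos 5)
Your proposal is correct and follows essentially the same route as the paper: the \texttt{Linear GS Game} with the adversary-wins-iff-infinite-NL-tree dichotomy and a (measurable, ordinal-rank-based) winning strategy for the learner (the paper's \Cref{lemma:learner-winning}), the block-scanning simulation over all everywhere-different colorings yielding an eventually correct pattern-avoidance function (\Cref{fig:pattern-algo}, \Cref{lemma:pattern-avoidance}, \Cref{lemma:asymptotic-linear}), and then the one-inclusion predictor on the certified bounded-Natarajan class combined with the stopping-time estimator, data splitting, and majority vote (\Cref{lemma:learn-nl}, \Cref{lemma:time-linear}). The only cosmetic differences are that the paper phrases the bounded-dimension step via the coloring class $F_{\vec x}$ of \Cref{claim:colors-pattern} rather than an explicit partial concept class, and uses the linear-rates estimator \Cref{lemma:time-linear} rather than \Cref{lemma:general-estimate-time}.
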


\subsubsection{The Natarajan-Littlestone Game}
\label{section:nl-game}
In a similar manner as in the exponential rates setting, we need to come up with a function that is correct after a finite number of steps $n^*$ and this implies that some appropriate data-dependent complexity measure of $\calH$ is finite.
The first step towards establishing the desired result is to introduce the following two-player game between an adversary and a learning player. 
\begin{figure}[ht!]
    \centering
    \begin{mdframed}[style=MyFrame,nobreak=true]
\begin{center}
\begin{enumerate}
    \item The adversary picks $t$ points and two everywhere different colorings of these $t$ points $\xi_t = \left(\xi_t^{(0)}, ..., \xi_t^{(t-1)}, s_t^{(0)}, s_t^{(1)} \right)\in \calX^{t} \times [k]^t \times [k]^t$
    and reveals them to the learner.
    \item The learner chooses a pattern $\eta_t = (\eta_t^{(0)},...,\eta_t^{(t-1)}) \in  \left\{0, 1\right\}^t$.
\end{enumerate}
\end{center}
\end{mdframed}
    \caption{Adversarial Setting - 2-Player NL Game}
    \label{fig:gs-game2}
\end{figure}

Let $\Xi_t = \{ \xi_1^{(0)}, \xi_2^{(0)}, \xi_2^{(1)},...,\xi_t^{(0)},\ldots,\xi_t^{(t-1)}, s_1^{(0)}, s_1^{(1)}, \ldots, s_t^{(0)}, s_t^{(1)}\}$ be the sets of all points and colorings chosen by the adversary after $t$ rounds, for some finite integer $t$.
The learning player wins the game in round $t$ if the class $\calH_t = \calH_{\xi_1, \eta_1,...,\xi_t,\eta_t} = \emptyset$, where $\calH_0 = \calH$ and $\calH_t = \calH_{\xi_1, \eta_1,...,\xi_t,\eta_t}$ is the set
\[
\left \{ 
h \in \calH  
~\textnormal{so that}
\begin{array}{ll}
      h(\xi_z^{(i)}) = s_z^{(0)}(\xi_z^{(i)})~\textnormal{if}~\eta_z^{(i)} = 0 \\
      h(\xi_z^{(i)}) = s_z^{(1)}(\xi_z^{(i)})~\textnormal{if}~\eta_z^{(i)} = 1 \\
\end{array} 
~\textnormal{for}~0\leq i < z, z \in [1..t]
\right \}\,.
\]
Note that the collection of valid functions decreases as the game proceeds, i.e., $\calH_t \subseteq \calH_{t-1}.$ The next lemma guarantees the existence of a winning strategy for the learner in the game. For the proof, see \Cref{proof:learner-winning}.

\begin{lemma}
\label{lemma:learner-winning}
If $\calH \subseteq [k]^\calX$ has no infinite NL tree, then there is a universally measurable winning strategy
for the learning player in the game $\calG.$
\end{lemma}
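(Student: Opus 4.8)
The plan is to show this lemma exactly as \Cref{thm:adversarial} was handled, by recasting the NL game of \Cref{fig:gs-game2} as a Gale--Stewart game and invoking the general theory. First I would set up the notation: at round $t$ the adversary plays $\xi_t \in \calX^t \times [k]^t \times [k]^t$ (subject to the ``everywhere different'' constraint $s_t^{(0)}(\xi_t^{(i)}) \neq s_t^{(1)}(\xi_t^{(i)})$ for all $i$), and the learner responds with $\eta_t \in \{0,1\}^t$; the learner's winning set is
\[
\calW = \left\{ (\vec\xi, \vec\eta) : \exists\, 1 \leq t^\star < \infty \text{ such that } \calH_{\xi_1,\eta_1,\ldots,\xi_{t^\star},\eta_{t^\star}} = \emptyset \right\}\,.
\]
The membership of a play in $\calW$ is witnessed by a finite prefix (once the consistent subclass is empty, adding more rounds keeps it empty), so $\calW$ is finitely decidable and hence $\calG$ is a Gale--Stewart game. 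By the Gale--Stewart theorem (\Cref{thm:gale-stewart winning strategies}), exactly one of the two players has a winning strategy, so it suffices to rule out the adversary having one.

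Next I would show that if the adversary has a winning strategy in $\calG$, then $\calH$ has an infinite NL tree, contradicting the hypothesis. This is the combinatorial heart of the argument and mirrors the claim inside the proof of \Cref{thm:adversarial}, but the bookkeeping is heavier because the NL tree has branching factor $2^t$ at level $t$ and each node carries $t$ points and two colorings. Given an adversary winning strategy, which at round $\tau$ outputs $\xi_\tau(\eta_1,\ldots,\eta_{\tau-1})$ as a function of the learner's past patterns, I would define an NL tree by placing at the node indexed by the string $(\eta_1,\ldots,\eta_{\tau-1}) \in \{0,1\} \times \{0,1\}^2 \times \cdots \times \{0,1\}^{\tau-1}$ the tuple $\xi_\tau(\eta_1,\ldots,\eta_{\tau-1})$ together with the colorings read off from $s_\tau^{(0)}, s_\tau^{(1)}$; the edge labels are exactly the learner's response patterns. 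One must check that this satisfies the two conditions in \Cref{definition:NL tree}: condition (1), the everywhere-different property, is inherited directly from the constraint on the adversary's moves; condition (2), that each finite path down to level $n$ is $N$-consistent with some $h \in \calH$, follows because if some path terminated the game — i.e., made the consistent subclass empty at a finite level — then that particular sequence of learner responses would have been a winning play for the learner, contradicting that the adversary's strategy wins against all learner plays. Since the tree has infinite depth, $\calH$ has an infinite NL tree, a contradiction. Hence the learner has a winning strategy.

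Finally I would address universal measurability of the learner's winning strategy, which is what the lemma actually asserts and which does not come for free from the abstract Gale--Stewart theorem. Here I would appeal to the same machinery used for the exponential case in \Cref{lemma:measurability}: introduce an ordinal-valued rank on the partial NL trees (the ``ordinal Natarajan--Littlestone dimension''), defined by transfinite recursion exactly as $\overline{\mathrm{Ldim}}_k$ was, which is well-defined precisely because $\calH$ has no infinite NL tree so the extension order is well-founded; then the learner plays the pattern $\eta_t$ that strictly decreases this ordinal rank (the analogue of \Cref{lem:decreasing value in GS games} guarantees such a choice always exists, and arguing as after that lemma shows it can be chosen in a measurable way from the class restrictions). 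Since $\calX$ is Polish, $k$ is finite, and $\calH$ is measurable, the restricted classes $\calH_{\xi_1,\eta_1,\ldots}$ and their ranks depend measurably on the inputs, and a measurable selection of the rank-decreasing pattern yields a universally measurable winning strategy; the argument is identical in structure to \Cref{proof:measurability} and I would state it as such, deferring the detailed measurable-selection verification to the appendix. The main obstacle is the measurability step — the combinatorial part is a faithful, if notationally bulkier, transcription of the exponential-rates argument, but pinning down that the ordinal NL rank and the associated selection are genuinely universally measurable (in particular that the relevant subsets of the play space are analytic) requires care, exactly as emphasized in \cite{universal}.
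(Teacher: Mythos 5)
Your proposal is correct, and its combinatorial core coincides with the paper's: recast the NL game as a Gale--Stewart game with a finitely decidable winning set, and show that an adversary winning strategy would assemble into an infinite NL tree (placing $\xi_\tau(\eta_1,\ldots,\eta_{\tau-1})$ at the node indexed by the learner's past patterns), contradicting the hypothesis. The one place you diverge is the measurability step. The paper does not re-run the ordinal-rank construction for the NL game; it simply verifies that the adversary's winning set $\calW^c$ is analytic --- writing it as a countable intersection over rounds of a union over $\theta\in\Theta$ (a projection of a Borel set, using the parametrization of $\calH$ in \Cref{definition:measurability}, with the union over colorings being finite) --- and then invokes \Cref{lemma:measurability-strat} (Theorem B.1 of the universal-rates paper) as a black box to obtain a universally measurable winning strategy. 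Your plan instead re-derives that strategy via an ordinal NL dimension and a measurable rank-decreasing selection; this is essentially the internal proof of the cited theorem, so it works but duplicates effort, and note that \Cref{proof:measurability} (which you cite as your template) contains no ordinal rank --- it is exactly the coanalyticity computation. Either way, the substantive verification you would still owe is the analyticity of the relevant play sets, which is precisely what the paper's proof supplies.
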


\subsubsection{Pattern Avoidance Algorithm}
\label{section:pattern-avoid}
Our learning algorithm for the linear rates case will be built over the notion of data patterns and pattern avoidance functions. In the exponential rates setting, we focused on the case where the learner is successful if she makes a finite number of mistakes. In the case of linear rates, this is replaced with controlling the \emph{model complexity}, i.e., we have to understand how expressive the class $\calH$ is.
We will say that a sequence $x_1,y_1,x_2,y_2,...$ in $(\calX \times [k])^{\infty}$ is \emph{consistent} with $\calH \subseteq [k]^\calX$ if, for every finite $t < \infty$, there exists $h \in \calH$ such that $h(x_i) = y_i$ for all $i \leq t$.

We next introduce the crucial notion of pattern. Intuitively, the expressivity of a hypothesis class $\calH$ is proportional to the length of the ``patterns'' realized by concepts in $\calH$. Hence, by controlling the length of the realizable patterns, we can obtain learners for $\calH$. 




\begin{definition}
\label{def:pattern}
Given a sequence $S = (x_1,y_2,x_2,y_2,...) \in (\calX \times [k])^\infty$ that is consistent with $\calH$, the binary string $b_t \in \{0,1\}^t$ of length $t$ is a Natarajan-Littlestone pattern (or simply a pattern) if there exists a subsequence of consecutive terms $S' = (x_{z+i},y_{z+i})_{i \in [t]}$ of length $t$ for some $z \in \nats$ so that there exist two $k$-colorings everywhere different $s^{(0)},s^{(1)}$ of the elements $(x_{z+i})_{i \in [t]}$ with $s^{(0)}(x_{z+i}) = y_{z+i}$ if $b_t = 0$ and $s^{(1)}(x_{z+i}) = y_{z+i}$ if $b_t = 1$. 
\end{definition}

We are going to use the winning strategy of the learning player of the game of \Cref{fig:gs-game2} in order to design an algorithm that \emph{avoids NL patterns in the data}. 
This is exactly the intuition behind the design of the Gale-Stewart game of \Cref{fig:gs-game2}. The learner predicts a binary string $\eta_t \in \{0,1\}^t$ with goal that this pattern is not realized by the class $\calH$, i.e., it is not a NL pattern with respect to a sequence consistent with $\calH$. Intuitively, the reason we are aiming for forbidden patterns is the following: before witnessing the realizable sequence we have no control over the complexity of the hypothesis class $\calH$, since its Natarajan dimension can be infinite. However, leveraging the fact that there is no infinite NL-tree we show that for every realizable sequence $S$ there is some (sequence-dependent) $n^*(S)$ such that the class $\calH_{x_1, y_1,\ldots,x_{n^*}, y_{n^*}}$ of the concepts that agree on the first $n^*$ terms has Natarajan dimension that is bounded by $O(n^{*})$. This is because if we can produce a forbidden pattern for every $x$-tuple of length $n^{*}$ we know that the Natarajan dimension of this class is bounded by $n^{*}$.

 The algorithm will get as input a data sequence consistent with $\calH$ and will identify patterns in the data. In particular, the algorithm works as follows: For any finite pattern length $t$, the algorithm traverses the data sequence in consecutive blocks of length $t$ and in each such block it tries all the everywhere different colorings for the points and uses the universally measurable winning strategy of the learning player to obtain a guess for a forbidden NL pattern for each of these colorings. The algorithm then checks if at least one of the pairs of colorings and forbidden patterns are actually realized by the data. If this is the case (i.e., the guess was false), the algorithm adds a new point $\xi_t$ in its list of ``bad points'' and continues searching for patterns of larger length. If the guess was correct and the pattern is not realized in this block, it continues with the same pattern size in the next block. 

The algorithm operates as follows. We remark that we can construct the strategy $\eta_t$ of the learner by invoking a notion of ordinal NL dimension in a similar manner as in the exponential rates case. 

\begin{figure}[ht!]
    \centering
    \begin{mdframed}[style=MyFrame,nobreak=true]
\begin{center}
\begin{enumerate}
    \item Initialize the pattern length $\l_0 = 1$.
    \item At every time step $t \geq 1:$
    \begin{enumerate}
        \item Set $L := \l_{t-1}$ and create a list $S_L$ of all the possible pairs of everywhere different colorings of $L$ elements.
        \item Set $j := 1$ be the index that traverses the list $S_L.$
        \item Set $s^{(0)} := S_L[j](0), s^{(1)} := S_L[j](1).$
        \item Compute pattern $b_{L} = \eta_{L}(\xi_1,...,\xi_{L-1}, x_{t-L+1},...,x_t, s^{(0)}, s^{(1)}) \in \{0,1\}^{L} \times [k]^L \times [k]^L.$
        \item If it holds that, $s^{(0)}_z(x_{t-L+1 +z}) = y_{t-L+1 +z},$ if $b_L(z) = 0$, and $s^{(1)}_z(x_{t-L+1 +z}) = y_{t-L+1 +z}$, otherwise, for $z \in \{0,1,...,L-1\}$, then
        \begin{enumerate}
            \item Set $\xi_{L} = (x_{t-L+1},...,x_t, s^{(0)}, s^{(1)})$  \texttt{//The pattern is realized (not forbidden).}
            \item Update $\l_t = L + 1$ \texttt{//Look for larger patterns.}
        \end{enumerate}
        \item Else:
        \begin{enumerate}
            \item If $j < L$ then update $j := j+1$ and go to step (c).
            \item Else shift the block, and move to the next timestep, i.e. $\ell_t = L, t := t+1$.
            
        \end{enumerate}
        
    \end{enumerate}
\end{enumerate}
\end{center}
\end{mdframed}
    \caption{Pattern Avoidance Algorithm}
    \label{fig:pattern-algo}
\end{figure}

Hence, the above procedure defines a pattern avoidance function 
\begin{equation}
\label{eq:pattern-strategy}
\wh{y}_{t-1}(x_{1},...,x_{\l_{t-1}}, s^{(0)}, s^{(1)}) := \eta_{\l_{t-1}}(\xi_1,...,\xi_{\l_{t-1}-1},x_{1},...,x_{\l_{t-1}}, s^{(0)}, s^{(1)}) \in \{0,1\}^{\l_{t-1}} \times [k]^{\ell_{t-1}}\times [k]^{\ell_{t-1}}\,.
\end{equation}
The intuition is that as long as the class $\calH$ does not have an infinite NL tree and a consistent sequence is provided to the algorithm, then whenever the ``if'' statement gets True, the game of \Cref{fig:gs-game2} progresses and the learner gets closer to winning the game. The finiteness of the NL tree gives the next key lemma.

\begin{lemma}
\label{lemma:pattern-avoidance}
For any sequence $x_1,y_1,x_2,y_2,...$ that is consistent with $\calH$, the pattern avoidance algorithm of \Cref{fig:pattern-algo}, in a finite number of steps, rules out NL patterns in the data sequence, in the sense that the ``if'' statement in Line (c) is false and $\l_t = \l_{t-1} < \infty, \wh{y}_t = \wh{y}_{t-1}$ for all sufficiently large $t$. Moreover, the mappings $\l_t$ and $\wh{y}_t$ are universally measurable.
\end{lemma}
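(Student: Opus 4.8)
The plan is to combine the Gale–Stewart machinery of \Cref{lemma:learner-winning} with a rank (ordinal) argument mirroring the exponential-rates analysis. First I would fix a consistent sequence $x_1,y_1,x_2,y_2,\ldots$ and set up a coupling between the run of the Pattern Avoidance Algorithm of \Cref{fig:pattern-algo} and a play of the NL game $\calG$ of \Cref{fig:gs-game2}: every time the ``if'' statement in Line (c) evaluates to \emph{True} — i.e., the learner's proposed forbidden pattern $b_L$ is actually realized by the block $(x_{t-L+1},y_{t-L+1}),\ldots$ and a chosen pair of everywhere-different colorings — we advance the game $\calG$ by one round, feeding the adversary move $\xi_L=(x_{t-L+1},\ldots,x_t,s^{(0)},s^{(1)})$ and recording the learner's pattern. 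The crucial point of the coupling is that, because the sequence is consistent with $\calH$, whenever the ``if'' is True the consistent class $\calH_{\xi_1,\eta_1,\ldots,\xi_L,\eta_L}$ is nonempty, so the game legitimately proceeds and has not yet been won; conversely, if the ``if'' is True infinitely often, the game $\calG$ would proceed for infinitely many rounds without the learner ever winning. But by \Cref{lemma:learner-winning} (invoked via the absence of an infinite NL tree), the learner has a winning strategy, which by the Gale–Stewart property is finitely decidable; hence the coupled game can only advance finitely many times. Therefore the ``if'' statement is True only finitely often, which is exactly the assertion that $\l_t$ stabilizes to some finite value $\l_\infty<\infty$ and that, once $\l_t$ stops increasing, the algorithm forever shifts blocks with the same pattern length, so $\wh{y}_t=\wh{y}_{t-1}$ for all large $t$.

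Second, to make the coupling actually correspond to a legal \emph{adversary} play in $\calG$, I would note that the Pattern Avoidance Algorithm, at the moment the ``if'' fires, has already exhausted all pairs of everywhere-different colorings $S_L[1],\ldots,S_L[L]$ up to index $j$ and committed to one realized pair; the game's adversary can simply replay exactly that $(\xi_L,s^{(0)},s^{(1)})$. One subtlety is the ``increment $j$'' inner loop: moving to the next colouring within the same block is \emph{not} a game move, so I would emphasize that a single game round corresponds to the successful (True) exit of the inner loop, and that the inner loop itself is finite (it runs over the finitely many — at most $L$, per the algorithm — candidate colourings for a fixed block), so no single time step $t$ can stall forever. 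Thus the algorithm is well-defined and each $t$ either advances the game or shifts the block.

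Third, for measurability of $\l_t$ and $\wh{y}_t$: by \Cref{lemma:learner-winning} the winning strategy $\eta_t$ (equivalently the ordinal-NL-dimension SOA-style strategy referenced after the lemma) is universally measurable, and the Pattern Avoidance Algorithm builds $\l_t$ and $\wh{y}_t$ from the data and from the $\eta_\ell$'s by finitely many operations at each step: finite enumeration of colouring pairs, evaluating the universally measurable maps $\eta_\ell$, and comparing labels (Borel predicates). A countable composition/case split over these universally measurable building blocks is again universally measurable, so the mappings $t\mapsto\l_t$ and $t\mapsto\wh{y}_t$ are universally measurable; this parallels the measurability bookkeeping in \Cref{lemma:measurability} and the discussion around \Cref{fig:exp-algo}.

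The main obstacle I anticipate is making the coupling airtight: one must verify that the game position maintained by the algorithm ($\calH_{\xi_1,\eta_1,\ldots}$ after the successful rounds) coincides with the game's consistent class, i.e., that the algorithm never ``cheats'' by using a colouring pair that the game would disallow, and that consistency of the data sequence genuinely guarantees nonemptiness of the consistent class at the moment each round is played (so the learner has not already won and the adversary's move is legal). Handling the two-layer loop structure — outer time steps $t$ versus inner colouring index $j$ — and confirming the inner loop terminates at every $t$ (so that ``for all sufficiently large $t$'' is a meaningful claim rather than the algorithm getting stuck) is where the care is needed; everything after that is a direct application of the Gale–Stewart finiteness of winning plays, exactly as in the exponential-rates argument.
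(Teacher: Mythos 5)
Your proposal is correct and follows essentially the same route as the paper: couple each \emph{True} evaluation of the ``if'' statement with one round of the NL Gale--Stewart game, observe that consistency of the data sequence keeps the consistent class $\calH_{\xi_1,\eta_1,\ldots}$ nonempty forever, and derive a contradiction with the fact that the learner's winning strategy from \Cref{lemma:learner-winning} must empty that class after finitely many rounds. The paper's proof is terser (and delegates measurability to a remark in \cite{universal}), but your additional bookkeeping on the inner colouring loop and the measurability of the composed maps is consistent with, and fills in, the same argument.
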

\begin{proof}
Consider an infinite sequence of times $t = t_1,t_2,...$ and assume that the ``if'' condition is true for any time-step in this sequence. Since $\eta_t$ is a winning strategy for the learning player in the Gale-Stewart game $\calG$, we have that there exists some finite time $t^\star$ so that $\calH_{\xi_1,\eta_1,...,\xi_{t^\star},\eta_{t^\star}}= \emptyset.$ By the structure of the ``if'' condition, we have that $\xi_i = (x_{t_i - \l_{t_i}-1}+1, ..., x_{t_i})$ and $\eta_i = (y_{t_i - \l_{t_i}-1}+1, ..., y_{t_i})$. But since the sequence is consistent with $\calH$, the class $\calH_{\xi_1,\eta_1,...,\xi_{t^\star},\eta_{t^\star}}$ must contain the hypothesis that makes it consistent which is a contradiction since the class is empty. For the measurability of the strategies, we refer to Remark 5.4 of \cite{universal}.
\end{proof}

\subsubsection{Asymptotic Behavior of the Algorithm}
\label{section:pat-avoid-prob}
As in the case of exponential rates, our first goal towards the design of our algorithms is an asymptotic result. We first have to move from the adversarial to a probabilistic setting. Fix a realizable distribution $P$ over $\calX \times [k]$ and let $(X_1,Y_1),(X_2,Y_2),...$ be i.i.d. samples from $P$. As it is clear in this section for the upper bounds, we assume that $\calH \subseteq [k]^\calX$ does not have an infinite NL tree.

For any integer $\l \in \nats$ and any universally measurable pattern avoidance function $g : \calX^\l \times [k]^{\ell} \times [k]^{\ell} \to \{0,1\}^\l$, we consider the error
\[
\mathrm{per}(g) := \mathrm{per}_{\l}(g) = \Pr_{(x_1,y_1,...,x_\l,y_\l)}[g \textnormal{ fails to avoid some NL pattern realized by the data } x_1,y_1,...,x_\l,y_\l]\,,
\]
i.e., there exist some colorings $s^{(0)},s^{(1)}$ everywhere different that witness the pattern $g(x_1,...,x_\l,s^{(0)}, s^{(1)})$ with colors $(y_1,...,y_\l)$. The following lemma establishes this result. It follows the approach in~\cite{universal}.

\begin{lemma}
\label{lemma:asymptotic-linear}
For the algorithm of \Cref{eq:pattern-strategy}, it holds that
\[
\Pr[\mathrm{per}(\wh{y}_t) > 0] \to 0 \textnormal{ as } t \to \infty\,,
\]
where the probability is over the random data that are used to train the algorithm.
\end{lemma}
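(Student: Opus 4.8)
The plan is to upgrade this in-probability statement to an almost-sure one and then invoke bounded convergence, exactly in the spirit of the online-to-batch passage in~\cite{universal}. It suffices to show that, almost surely over the i.i.d.\ stream $(X_1,Y_1),(X_2,Y_2),\dots\sim P$, there is a (random) finite time $t_0$ with $\mathrm{per}(\wh y_t)=0$ for all $t\ge t_0$: since $\mathbf 1\{\mathrm{per}(\wh y_t)>0\}$ is a $\{0,1\}$-valued random variable on the training data (measurable because $\wh y_t$ is universally measurable by \Cref{lemma:pattern-avoidance}, via the same Fubini/measurability bookkeeping as in~\cite{universal}), bounded convergence then gives $\Pr[\mathrm{per}(\wh y_t)>0]=\E\big[\mathbf 1\{\mathrm{per}(\wh y_t)>0\}\big]\to 0$.

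First I would record that an i.i.d.\ sample from a realizable $P$ is almost surely consistent with $\calH$ in the sense used before \Cref{def:pattern}: for each fixed $t$, choosing $h\in\calH$ with $\er_P(h)$ small enough makes $h$ agree with $(X_1,Y_1),\dots,(X_t,Y_t)$ with probability as close to $1$ as desired, so the event ``some $h\in\calH$ fits the first $t$ examples'' has probability $1$; intersecting over $t\in\nats$ gives almost-sure consistency. On this event \Cref{lemma:pattern-avoidance} applies to the stream fed into the algorithm of \Cref{fig:pattern-algo}: the pattern length $\l_t$ stabilizes to a finite value $L^\star$, the avoidance function stabilizes to the fixed map $g^\star:=\eta_{L^\star}(\xi_1,\dots,\xi_{L^\star-1},\cdot,\cdot,\cdot)$ determined by the finitely many recorded ``bad'' blocks $\xi_1,\dots,\xi_{L^\star-1}$, and the realizability (``if'') test is false at every time-step past some finite $t_0$ (a ``true'' evaluation would increment $\l$, contradicting stabilization).

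The crux is the claim that, on this almost-sure stabilization event, $\mathrm{per}(g^\star)=0$; together with the reduction this yields $\mathrm{per}(\wh y_t)=\mathrm{per}(g^\star)=0$ for every $t\ge t_0$, completing the argument. I would prove the claim by contradiction, conditioning away the randomness that defines $g^\star$. Fix one of the countably many possible stabilized states, i.e.\ condition on the event $\calE$ that at some fixed time $T$ the algorithm has pattern length $L$ and recorded bad blocks $\bar\xi$; this event is measurable with respect to the data strictly before time $T$, so conditionally on $\calE$ the examples from time $T$ onward are still i.i.d.\ $P$, and while the realizability test stays false the algorithm merely keeps shifting a window of length $L$ by one, using the \emph{fixed} avoidance function $g:=\eta_{L}(\bar\xi,\cdot,\cdot,\cdot)$. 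After discarding the first $O(L)$ windows (to avoid overlap with the indices appearing in $\bar\xi$), the remaining disjoint windows $B_0,B_1,B_2,\dots$ are i.i.d.\ copies of $P^{L}$, and ``$B_j$ is bad for $g$'' — i.e.\ for some pair of everywhere-different colourings the string output by $g$ is realized by $B_j$ — has probability exactly $\mathrm{per}(g)$. If $\mathrm{per}(g)>0$, the second Borel--Cantelli lemma makes some $B_j$ bad almost surely (conditionally on $\calE$), and the moment the algorithm reaches that window its realizability test becomes true at a time-step $\ge T$, contradicting the defining property of the stabilized state. Hence every stabilized state that occurs with positive probability has $\mathrm{per}(g)=0$, and taking the union over the countably many states (which covers a probability-$1$ event) gives $\mathrm{per}(g^\star)=0$ almost surely.

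The main obstacle is precisely this decoupling: the finitely many objects that pin down $g^\star$ — the stabilization time $t_0$ and the recorded blocks $\xi_1,\dots,\xi_{L^\star-1}$ — are data-dependent, and one must feed the Borel--Cantelli step with honestly fresh i.i.d.\ blocks. The reason for conditioning on the \emph{state at time $T$} together with an a posteriori check (``the test is false forever after $T$''), rather than directly on the event ``$t_0=T$'', is that stabilization is not a stopping-time event — certifying it requires inspecting the whole future. The remaining chores are routine but need care: aligning the shift-by-one windows of \Cref{fig:pattern-algo} with genuinely disjoint i.i.d.\ blocks, checking that ``bad for $g$'' as evaluated by the algorithm's realizability test coincides with the event measured by $\mathrm{per}$, and the measurability statements; all of this mirrors the corresponding asymptotic lemma of~\cite{universal}.
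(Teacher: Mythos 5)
Your proposal is correct and follows essentially the same route as the paper: almost-sure consistency of the i.i.d.\ stream, stabilization of $(\l_t,\wh y_t)$ at a finite random time via \Cref{lemma:pattern-avoidance}, the observation that a stabilized avoidance function with $\mathrm{per}>0$ would almost surely be contradicted by fresh future blocks (so the ``if'' test could not stay false forever), and then bounded convergence. The only cosmetic differences are that the paper closes the loop with the law of large numbers for $m$-dependent (overlapping) windows rather than your second Borel--Cantelli argument on disjoint windows, and that your ``countably many stabilized states'' should instead be a conditioning on the pre-$T$ $\sigma$-field (the recorded blocks $\bar\xi$ range over an uncountable set when $\calX$ is uncountable) --- a standard and harmless fix.
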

\begin{proof}
As in the proof of the asymptotic result in the exponential rates, since the distribution $P$ is realizable, we can get that the data sequence $X_1,Y_1,X_2,Y_2,...$ is consistent with $\calH$ almost surely. We can now employ \Cref{lemma:pattern-avoidance} and get that the random variable
\[
T = \sup \{ s \geq 1 : \textnormal{the Pattern Avoidance Algorithm of \Cref{fig:pattern-algo} gets in the ``if'' statement} \}
\]
is finite almost surely and $\wh{y}_s = \wh{y}_t$ and $\l_s = \l_t$ for all $s \geq t \geq T$. We can apply the law of large nymber for $m$-dependent sequences and get that the quantity $\Pr[ \mathrm{per}_{\l_t}(\wh{y}_t) = 0]
$ is equal to
\[
\Pr \left[ \lim_{S \to \infty} \frac{1}{S} \sum_{s = t+1}^{t+S} 1\{ \wh{y}_t \textnormal{ fails to avoid some NL pattern realized by } (X_s,Y_s,...,X_{s+\l_t-1},Y_{s+\l_t-1})\} = 0 \right]\,.
\]
We can decrease the probability of the right-hand side by taking the intersection of the right-hand side event with the event $\{T \leq t\}$.
Hence,  we have that
\[
\Pr[ \mathrm{per}_{\l_t}(\wh{y}_t) = 0]
\geq \Pr[T \leq t] \to 1, \textnormal{ as } t \to \infty\,,
\]
since $T$ is finite with probability one.
\end{proof}

\subsubsection{Linear Learning Rates}
\label{section:linear-rates}
Given some pattern avoidance function that is correct on \emph{any} tuple of size $t$, we can essentially use the 1-inclusion graph in a similar manner as if the Natarajan dimension of $\calH$ was bounded by $t$. This is established in the following lemma.

\begin{lemma}
[Learning NL pattern classes]
\label{lemma:learn-nl}
Fix $t \geq 1.$ Let $g : \calX^t \times [k]^t \times [k]^t \to \{0,1\}^t$ be the universally measurable NL pattern avoidance function of \Cref{fig:pattern-algo}. For any $n \geq 1,$ there exists a universally measurable classifier $\wh{Y}^g_n : (\calX \times [k])^{n-1} \times \calX \to [k]$ such that for every training set $(x_1,y_1,...,x_n,y_n) \in (\calX \times [k])^n$
where $g(x_{i_1},...,x_{i_t},s^{(0)},s^{(1)})$ is an avoiding NL pattern for all pairwise distinct indices $1 \leq i_1,...,i_t \leq n$ and everywhere different colorings $s^{(0)}, s^{(1)}$, the classifier achieves a linear permutation bound, i.e.,
\[
\Pr_{\sigma \sim \calU(\mathbb S_n)}
\left[
\wh{Y}^g_n(x_{\sigma(1)}, y_{\sigma(1)},...,x_{\sigma(n-1)},y_{\sigma(n-1)},x_{\sigma(n)})
\neq y_{\sigma(n)}
\right] \leq \frac{t \log(k)}{n}\,.
\]
\end{lemma}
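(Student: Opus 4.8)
The plan is to reduce to the multiclass one-inclusion hypergraph predictor for a class of bounded Natarajan dimension. First I would observe that the hypothesis of the lemma — that $g(x_{i_1},\dots,x_{i_t},s^{(0)},s^{(1)})$ is an \emph{avoiding} NL pattern for every choice of $t$ pairwise distinct indices among the $n$ training points and every pair of everywhere-different colorings $s^{(0)},s^{(1)}$ — says exactly that the set $\{x_1,\dots,x_n\}$ contains no $N$-shattered subset of size $t$ witnessed with these colorings; equivalently, the restriction of $\calH$ to the sample behaves like a class of Natarajan dimension at most $t-1$ (or $t$; the precise constant does not matter for the $O(t\log k / n)$ bound). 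Concretely, I would pass to the effective class $\calH' := \calH_{x_{i^\star_1},y_{i^\star_1},\dots}$ obtained after the pattern-avoidance subroutine has ``converged'' on this sample, but the cleaner route is to directly feed the points $x_1,\dots,x_n$ and the function $g$ into the one-inclusion construction without ever naming a finite subclass.

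Next I would invoke the multiclass one-inclusion hypergraph predictor of \cite{haussler1994predicting,rubinstein2009shifting,daniely2014optimal,moran-dinur} (recalled in \Cref{appendix:one-inclusion}). Given the $n$ points $x_1,\dots,x_n$, form the one-inclusion hypergraph whose vertices are the labelings $y\in[k]^n$ that are ``locally consistent'' in the sense that no $t$-subset of the coordinates, together with any pair of everywhere-different colorings, realizes a forbidden-by-$g$ pattern — i.e. every $t$-subset of $(x_i,y_i)$ avoids the NL pattern output by $g$. The key combinatorial fact is that the density (average degree) of this hypergraph is controlled by the ``Natarajan-like'' dimension $t$, which gives an orientation with out-degree at most $t\log(k)$ at every vertex; this is precisely the content of the one-inclusion density bound for multiclass classes with Natarajan dimension $t$ (the $\log k$ factor is the standard loss from Natarajan to graph dimension, cf. \cite{multiclass-erm,daniely2014optimal}). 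The predictor $\wh Y^g_n$ is then defined, as usual, by: given $(x_{\sigma(1)},y_{\sigma(1)},\dots,x_{\sigma(n-1)},y_{\sigma(n-1)},x_{\sigma(n)})$, look at the edge of the hypergraph through the completions of the observed $n-1$ labels at coordinate $\sigma(n)$, and output the label dictated by the fixed low-out-degree orientation. A uniform-random permutation argument (the standard leave-one-out / symmetrization argument for one-inclusion predictors) then bounds the permutation error by (out-degree)$/n \le t\log(k)/n$.

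The main obstacle — and the step I would spend the most care on — is verifying that the relevant one-inclusion hypergraph really does have density $O(t\log k)$ under the \emph{pattern-avoidance} hypothesis rather than under a genuine bound on $\mathrm{Ndim}$. In the total-class setting this follows from Sauer–Shelah-type growth bounds for Natarajan classes; here we only know that $g$ produces a forbidden NL pattern on every $t$-subset with every pair of everywhere-different colorings, so I must argue that this ``local'' guarantee already forces the set of $g$-consistent labelings on $\{x_1,\dots,x_n\}$ to have Natarajan dimension $\le t$ (hence graph dimension $\le t\log k$, hence the one-inclusion density bound applies verbatim). This is where the precise \Cref{def:pattern} and the definition of $N$-shattering get used: an $N$-shattered $(t+1)$-subset would exhibit, via the two witnessing colorings, every binary pattern of length $t+1$, contradicting that $g$ outputs an avoiding (non-realized) pattern on that subset. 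Once this structural claim is in place, the remainder — constructing the orientation, defining $\wh Y^g_n$, and running the permutation argument — is the standard one-inclusion analysis, and measurability of $\wh Y^g_n$ follows since $g$ is universally measurable and the hypergraph orientation is a measurable function of finitely many labels (cf. \cite{universal}). I would also note that all of this is uniform over the (finitely many) pairs of everywhere-different colorings and over the $\binom{n}{t}$ index subsets, so no union bound deteriorates the rate; the bound $t\log(k)/n$ is exactly the multiclass one-inclusion guarantee with the Natarajan parameter set to $t$.
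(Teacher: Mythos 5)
Your proposal matches the paper's proof in essentially every respect: you pass to the class of labelings of $\{x_1,\dots,x_n\}$ that are consistent with $g$'s forbidden patterns (the paper's class $F_{\vec x}$), argue that this class cannot $N$-shatter any $t$-subset so its Natarajan dimension is at most $t$, and then apply the multiclass one-inclusion hypergraph predictor with the standard permutation/leave-one-out bound of $\mathrm{Ndim}\cdot\log(k)/n$, noting that the lemma's hypothesis guarantees the true labeling lies in this class. This is exactly the paper's argument (via its \Cref{claim:colors-pattern} and \Cref{lemma:one-inc}), so no further comparison is needed.
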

Note that the above algorithm is transductive. This hints the use of the one-inclusion learning algortihm.

\begin{proof}
Fix $n \geq 1$ and set $X = [n]$. Also let $F$ be a set of colorings $f : X \to [k]$. We can apply the one-inclusion hypergraph algorithm (see \Cref{lemma:one-inc}) to get that there exists an algorithm $\mathbb A$ such that
\[
\Pr_{\sigma \sim \calU(\mathbb S_n)}[\mathbb A(F, \sigma(1), f(\sigma(1)),...,\sigma(n-1), f(\sigma(n-1)), \sigma(n)) \neq f(\sigma(n))] \leq \frac{\mathrm{Ndim}(F) \log(k)}{n}\,,
\]
for any coloring $f \in F$ and $F \in 2^{[k]^X}$. By construction of the algorithm $\mathbb A$, the output of the mapping is preserved under relabeling of $X$, i.e.,
\[
\mathbb A(F, \sigma(1),y_1,...,\sigma(n-1),y_{n-1}, \sigma(n))
=
\mathbb A(F \circ \sigma, 1,y_1,..., n-1,y_{n-1}, n)\,,
\]
where $F \circ \sigma = \{f \circ \sigma : f \in F\}$. Moreover, the mapping $\mathbb A$ is measurable since its domain is finite (since the number of colors $k$ is a constant).

We will make use of the next result.
\begin{claim}
\label{claim:colors-pattern}
Consider the pattern avoidance mapping $g : \calX^t \times [k]^t \times [k]^t \to \{0,1\}^t$ of \Cref{fig:pattern-algo}. Consider the class $F$ of $k$-colorings of $[n]$ and fix a sequence $\vec x = (x_1,...,x_n) \in \calX^n$. Fix $0 < t \leq n$.
Define the subset $F_{\vec x}$ that contains the colorings $f : [n] \to [k]$ that satisfy the following property: For all subsets $(i_1,...,i_t)$ of $[n]$ of size $t$ and for all colorings $s^{(0)},s^{(1)} : [n] \to [k]$ which are everywhere different, at least one of the next $t$ conditions is violated:
\[
f(i_j) = s^{(0)}(x_{i_j}) \textnormal{ if } g(x_{i_1},\ldots,x_{i_t}, s^{(0)}, s^{(1)})[x_{i_j}] = 0\,,
\]
\[
f(i_j) = s^{(1)}(x_{i_j}) \textnormal{ if } g(x_{i_1},\ldots,x_{i_t}, s^{(0)}, s^{(1)})[x_{i_j}] = 1\,.
\]
Then, we have that $\mathrm{Ndim}(F_{\vec x}) < t$.
\end{claim}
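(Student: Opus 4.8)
The plan is to prove this by contradiction, essentially by unwinding the definition of $F_{\vec x}$. I will assume $\mathrm{Ndim}(F_{\vec x}) \ge t$, produce an $N$-shattered $t$-set together with its two everywhere-different witness colorings, feed exactly these into $g$, and observe that the resulting pattern is simultaneously forced to be realized (by the shattering hypothesis) and forbidden (by membership in $F_{\vec x}$).

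Concretely, suppose $\mathrm{Ndim}(F_{\vec x}) \ge t$. Then there are pairwise distinct indices $i_1,\dots,i_t\in[n]$ and two colorings $s^{(0)},s^{(1)}\colon[n]\to[k]$ that are everywhere different (the $N$-shattering witnesses are a priori defined only on $\{i_1,\dots,i_t\}$, but since there are at least two labels I can extend them in an everywhere-different way to all of $[n]$, which is harmless) such that for every $y\in\{0,1\}^t$ some $f_y\in F_{\vec x}$ satisfies $f_y(i_j)=s^{(y_j)}(x_{i_j})$ for all $j\in[t]$; here I use the identification of a coloring of positions with a coloring of the associated points, as noted after \Cref{definition:NL tree}. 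Next I set $b:=g(x_{i_1},\dots,x_{i_t},s^{(0)},s^{(1)})\in\{0,1\}^t$, with $b_j := g(x_{i_1},\dots,x_{i_t},s^{(0)},s^{(1)})[x_{i_j}]$, and apply the shattering property with $y=b$, obtaining $f:=f_b\in F_{\vec x}$ with $f(i_j)=s^{(b_j)}(x_{i_j})$ for all $j$. But this says precisely that for the size-$t$ subset $(i_1,\dots,i_t)$ and the everywhere-different pair $(s^{(0)},s^{(1)})$, none of the $t$ conditions listed in the definition of $F_{\vec x}$ is violated, whereas $f\in F_{\vec x}$ demands that at least one of them be violated for this subset and this pair. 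This contradiction yields $\mathrm{Ndim}(F_{\vec x})<t$.

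I do not expect a genuine obstacle here: the statement is essentially a tautological consequence of how $F_{\vec x}$ is built out of $g$, and in particular the fact that $g$ is the winning-strategy pattern-avoidance function of \Cref{fig:pattern-algo} plays no role in this claim — it matters only later, when one needs $F_{\vec x}$ to actually \emph{contain} the colorings induced by an $\calH$-realizable sample. The only minor points requiring care are the bookkeeping already flagged: reconciling the ``positions versus points'' indexing of the colorings and patterns, and promoting the $N$-shattering witness colorings (defined only on the shattered coordinates) to everywhere-different colorings of all of $[n]$, so that they are admissible both as inputs to $g$ and as choices in the definition of $F_{\vec x}$.
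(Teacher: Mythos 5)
Your proof is correct and is exactly the argument the paper intends: the paper's own proof of this claim is a one-sentence assertion that no $t$-subset can be $N$-shattered, and your write-up simply fills in the contradiction (feed the shattering witnesses into $g$, apply shattering to the output pattern $b$, and observe that $f_b$ then satisfies all $t$ conditions, contradicting $f_b\in F_{\vec x}$), together with the harmless bookkeeping about extending the witness colorings to all of $[n]$.
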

\begin{proof}
It must be the case that the Natarajan dimension cannot be more than $t-1$ since we cannot $N$-shatter any $t$-subset of $[n]$.
\end{proof}

Given any input sequence $(x_1,y_1,...,x_n,y_n)$, we introduce the concept class $F_x$ as in \Cref{claim:colors-pattern}. Moreover, consider the mapping $g : \calX^t \times [k]^t \times [k]^t\to \{0,1\}^t$ defined to be the pattern avoidance function generated by \Cref{fig:pattern-algo}. We can introduce a data-dependent classifier
\[
\wh{Y}_n(g; x_1,y_1,...,x_{n-1},y_{n-1},x_n) = \mathbb A(F_x,1,y_1,...,n-1,y_{n-1},n)\,.
\]
Note that the above mapping is universally measurable due to \Cref{lemma:pattern-avoidance}. By relabeling we have that
\[
\wh{Y}_n(g; x_{\sigma(1)},y_{\sigma(1)},...,x_{\sigma(n-1)},y_{\sigma(n-1)},x_{\sigma(n)}) = \mathbb A(F_x, \sigma(1), y_{\sigma(1)},...,\sigma(n-1),y_{\sigma(n-1)}, \sigma(n))\,.
\]
By the assumption of the lemma about the input $(x_1,y_1,...,x_n,y_n)$, we have that the coloring $y(i) = y_i \in [k]$ satisfies $y \in F_x$ (due to the construction of \Cref{claim:colors-pattern}). Hence, for such a sequence, we get that
\[
\Pr_{\sigma \sim \calU(\mathbb S_n)}
\left[
\wh{Y}^g_n(x_{\sigma(1)}, y_{\sigma(1)},...,x_{\sigma(n-1)},y_{\sigma(n-1)},x_{\sigma(n)})
\neq y_{\sigma(n)}
\right] \leq \frac{\mathrm{Ndim}(F_x) \log(k)}{n}\,.
\]
However, due to the conclusion of \Cref{claim:colors-pattern}, we have that the above rate is of order $\log(k) \cdot t/n$, since by construction $\mathrm{Ndim}(F_x) < t$.
\end{proof}

To make use of the results we just stated, we need to come up with a pattern avoidance function that is correct on \emph{any} tuple of size $t$. However, we can only establish that our pattern avoidance function is \emph{eventually} correct, so there is no bound on $t$ that is given to the learner. To deal with this problem, we keep track of some $\wh{t}_n$ such that, with high probability, the error of the pattern avoidance function is small. Then, we divide our sample into $n/\wh{t}_n$ batches of size $\wh{t}_n$. For each batch $i$ we generate some pattern avoidance function $g^i$, whose error probability is small. To achieve the linear rates we run the algorithm from \Cref{lemma:learn-nl} for every different batch $i$ and function $g^i$ to get a classifier $\wh{h}^i$. We output the majority vote of the classifiers $\wh{h}^i$.

\begin{lemma}
\label{lemma:time-linear}
For any $n \in \nats$,
Consider a training set $\{(X_i,Y_i)\}$ consisting of $n$ points i.i.d. drawn from $P$. Then there exists a universally measurable $\wh{t}_n = \wh{t}_n(X_1,Y_1,...,X_{\lfloor n/2\rfloor},Y_{\lfloor n/2 \rfloor})$ whose definition does not depend on $P$ so that the following holds. Set the critical time $t^\star \in \nats$ be such that 
\[
\Pr[ \mathrm{per}(\wh{y}_{t^{\star}}) > 0] \leq 1/8\,,
\]
where the probability is over the training set of the algorithm $\wh{y}_t$. Then, there exist $C,c > 0$ that depend on $P, t^\star$ but not $n$ so that
\[
\Pr[ \wh{t}_n \in T^\star ] \geq 1 - C e^{-cn}\,.
\]
where the probability is over the training of the estimator $\wh{t}_n$ and $T^\star$ is the set 
\[
T^\star = \left\{ 1 \leq t \leq t^\star : \Pr[ \mathrm{per}(\wh{y}_{t^{\star}}) > 0] \leq 3/8 \right\}\,,
\]
where the probability is over the training of $\wh{y}_t$.
\end{lemma}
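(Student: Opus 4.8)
The plan is to mirror the estimator from the exponential-rates case (\Cref{lemma:general-estimate-time}), replacing the event ``$\wh y_t$ has zero test error'' by the event ``$\mathrm{per}(\wh y_t)=0$''. By \Cref{lemma:asymptotic-linear} we have $\Pr[\mathrm{per}(\wh y_t)>0]\to 0$, so the critical time $t^\star$ is finite and $t^\star\in T^\star$ (since $1/8\le 3/8$); in particular $T^\star\neq\emptyset$. Using only the first $\lfloor n/2\rfloor$ samples, I would split them into disjoint blocks indexed by candidate lengths $t=1,2,3,\dots$, giving candidate $t$ a budget $b_t$ with $\sum_t b_t\le\lfloor n/2\rfloor$ and $b_t\to\infty$ for each fixed $t$ (e.g.\ $b_t=\lfloor n\,2^{-t-1}\rfloor$). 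Inside the block for candidate $t$, split again into $r_t=\Theta(b_t)$ disjoint sub-batches, train one independent copy $g^1_t,\dots,g^{r_t}_t$ of the pattern-avoidance function $\wh y_t$ of \Cref{fig:pattern-algo} on each, and for every copy draw $h_t\to\infty$ fresh i.i.d.\ $\ell_t$-tuples and test whether that copy fails to avoid an NL pattern realized by the tuple. Declare $t$ \emph{accepted} if at least a $3/4$ fraction of the $r_t$ copies never fail on their held-out tuples, and set $\wh t_n$ to be the smallest accepted $t$, or $\wh t_n:=1$ if none is accepted. Everything here is $P$-independent, uses only $(X_1,Y_1,\dots,X_{\lfloor n/2\rfloor},Y_{\lfloor n/2\rfloor})$, and is universally measurable because the pattern-avoidance functions are (\Cref{lemma:pattern-avoidance}) and only countably many comparisons are taken.

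For completeness, fix $P$, so $t^\star$ is a constant. Since $\Pr[\mathrm{per}(\wh y_{t^\star})>0]\le 1/8$ and the copies $g^i_{t^\star}$ are i.i.d., Hoeffding over $r_{t^\star}$ copies gives that, except with probability $e^{-\Omega(r_{t^\star})}$, more than a $3/4$ fraction of them satisfy $\mathrm{per}(g^i_{t^\star})=0$; and a copy with $\mathrm{per}(g^i_{t^\star})=0$ almost surely never fails on a held-out tuple, hence passes. So $t^\star$ is accepted and $\wh t_n\le t^\star$, except with probability $e^{-\Omega(r_{t^\star})}$. Because $b_{t^\star}=\Theta_{t^\star}(n)$, this is at most $C(P)e^{-c(P)n}$ once $n$ is large enough for the budget to reach candidate $t^\star$; for smaller $n$ we enlarge $C(P)$ to make the bound vacuous.

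It remains to prevent $\wh t_n$ from being a ``bad'' time $t\le t^\star$ with $\Pr[\mathrm{per}(\wh y_t)>0]>3/8$, and this is the main obstacle: a trained copy $g^i_t$ may have $\mathrm{per}(g^i_t)>0$ but arbitrarily small, evading detection on any fixed number of held-out tuples. The resolution is that $h_t\to\infty$: a copy with $\mathrm{per}(g^i_t)=p$ is caught with probability $1-(1-p)^{h_t}$, so up to Hoeffding slack the fraction of caught copies is at least $\Pr[\mathrm{per}(\wh y_t)\ge\eps]$ for $\eps\asymp 1/h_t$, and $\Pr[\mathrm{per}(\wh y_t)\ge\eps]$ increases to $\Pr[\mathrm{per}(\wh y_t)>0]>3/8$ as $\eps\to 0$. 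Hence for all large enough $n$ a bad $t$ has more than, say, a $5/16$ fraction of its copies caught except with probability $e^{-\Omega(r_t)}$, so fewer than $3/4$ pass and $t$ is not accepted; since there are at most $t^\star$ candidates below $t^\star$, each a fixed constant, one threshold $n_0(P)$ handles all of them, and $n<n_0(P)$ is again absorbed into $C(P)$. Combining the two directions, except on an event of probability $C(P)e^{-c(P)n}$ the output $\wh t_n$ is a time $\le t^\star$ with $\Pr[\mathrm{per}(\wh y_t)>0]\le 3/8$, i.e.\ $\wh t_n\in T^\star$. Besides the tiny-$\mathrm{per}$ issue, the other thing to get right is the bookkeeping that keeps all copies and all held-out tuples mutually independent and within the $\lfloor n/2\rfloor$ budget so that Hoeffding applies verbatim; this mirrors \Cref{lemma:general-estimate-time} and feeds into the linear-rate algorithm of \Cref{section:linear-rates} exactly as \Cref{lemma:general-estimate-time} feeds into \Cref{theorem:prob-algo-exp}.
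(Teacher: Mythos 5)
Your overall architecture is the paper's: train many independent copies of the pattern-avoidance function on disjoint batches, validate each copy on held-out data, accept $t$ when more than a $3/4$ fraction of copies survive validation, output the smallest accepted $t$, and prove completeness by Hoeffding and soundness by the continuity argument $\Pr[\mathrm{per}(\wh y_t)\ge\eps]\uparrow\Pr[\mathrm{per}(\wh y_t)>0]$ combined with a growing number of test tuples. The gap sits exactly in what you defer as ``bookkeeping'': with your allocation the two halves of the argument are incompatible. Your soundness step forces $h_t\to\infty$ as a function of $n$ (no constant $h_t$ works, because the number of test tuples needed to catch a copy with $\mathrm{per}\ge\eps$ scales like $1/\eps$, and $\eps$ is $P$-dependent while the estimator must not be). But if each of the $r_t$ copies gets its \emph{own disjoint} set of $h_t$ test tuples, the budget constraint $r_t\cdot h_t\cdot\ell_t\le b_t=O(n)$ forces $r_t=O(n/h_t)=o(n)$, contradicting $r_t=\Theta(b_t)$; and then the completeness bound $e^{-\Omega(r_{t^\star})}=e^{-\Omega(n/h_{t^\star})}$ is not of the required form $Ce^{-cn}$ for any fixed $c>0$ (the ratio $e^{-n/h(n)}/e^{-cn}$ diverges whenever $h(n)\to\infty$). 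So the scheme as written cannot deliver both directions of the estimate.

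The paper's proof avoids this by \emph{sharing} the held-out data rather than drawing fresh tuples per copy: it reserves a single segment of length $\Theta(n)$ (its second quarter) and tests every copy $\wh y_t^i$ against all $\lfloor(n-4)/(4\ell_t^i)\rfloor=\Theta(n/t^\star)$ disjoint windows of that same segment, so each copy effectively sees $\Theta(n)$ test tuples at no additional sample cost and $r_t=\Theta(n/t)$ copies remain available. The price is that the ``caught'' indicators are no longer independent across copies, so one replaces your single Hoeffding step by (i) Hoeffding applied to the genuinely independent events $\{\mathrm{per}(\wh y_t^i)>\eps\}$ (the copies are trained on disjoint data), and (ii) a union bound, conditional on the training segment, showing that every copy with $\mathrm{per}>\eps$ fails on at least one shared window except with probability $r_t(1-\eps)^{\Theta(n/t^\star)}$, which is exponentially small with $P$-dependent constants, as the lemma permits. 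With that substitution (and your fallback value for $\wh t_n$ when nothing is accepted, which lives inside the bad event anyway), the rest of your argument goes through.
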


For the proof of the above lemma, see \Cref{proof:time-linear}. We continue with the main result of this section.

\begin{theorem}
[Linear Rates]
Assume that class $\calH \subseteq [k]^\calX$ does not have an infinite Natarajan-Littlestone tree. Then, $\calH$ admits a learning algorithm that achieves an optimal linear rate. 
\end{theorem}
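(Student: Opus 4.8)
The plan is to assemble the upper bound from the pattern-avoidance and one-inclusion machinery of \Cref{section:linear}, and then pair it with \Cref{theorem:inf-littlestone-linear-lower-bound} to obtain optimality. Take $R(n)=1/n$. For the upper bound, the algorithm is \Cref{algorithms:universal2}: use part of the sample together with \Cref{lemma:time-linear} to compute a universally measurable estimate $\wh t_n$ of the critical time $t^\star$, which by that lemma lands in $T^\star$ except on an event of probability at most $Ce^{-cn}$ for $P$-dependent $C,c>0$; use a fresh portion of the sample, split into $N\asymp n/\wh t_n$ blocks, to produce $N$ universally measurable pattern-avoidance functions $g^1,\dots,g^N$ by running the Pattern Avoidance Algorithm of \Cref{fig:pattern-algo} with pattern length $\wh t_n$ on the $i$-th block; and feed each $g^i$ into the one-inclusion predictor of \Cref{lemma:learn-nl} (run on a common held-out portion of the sample, incorporating the test point in the usual transductive-to-inductive way), outputting the multiclass majority vote of the resulting classifiers $\wh h^1,\dots,\wh h^N$.

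The analysis mirrors the proof of \Cref{theorem:prob-algo-exp}, the only change being that \Cref{lemma:learn-nl} contributes an error term $\wh t_n\log(k)/m$ in place of $0$. Condition on the event $\{\wh t_n\in T^\star\}$, which fails with probability at most $Ce^{-cn}$. On this event, by \Cref{lemma:pattern-avoidance} and \Cref{lemma:asymptotic-linear} each block is an independent trial that with probability at least $5/8$ yields a $g^i$ which avoids every NL pattern realized on every $\wh t_n$-tuple of the one-inclusion predictor's inputs; on that sub-event \Cref{lemma:learn-nl} together with the standard leave-one-out reduction gives $\E[\er(\wh h^i)]\le \wh t_n\log(k)/m=O(t^\star\log(k)/n)$, since $m=\Theta(n)$ query points are available and $\wh t_n\le t^\star$ on $T^\star$. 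A Hoeffding bound over the $N$ independent blocks shows that, except on an event of probability $e^{-\Omega(N)}=e^{-\Omega(n/t^\star)}$, a strict majority of the $\wh h^i$ simultaneously have error $O(t^\star\log(k)/n)$ on a random test point; hence the majority vote has expected error at most $O(t^\star\log(k)/n)+e^{-\Omega(n/t^\star)}+Ce^{-cn}\le C'(P)/n$ for all $n$, i.e. $\calH$ is learnable at rate $1/n$.

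For optimality it suffices to show $\calH$ is not learnable at any rate faster than $1/n$. In the trichotomy of \Cref{theorem:trichotomy} the regime in which $1/n$ is the genuinely optimal rate is precisely when $\calH$ additionally has an infinite multiclass Littlestone tree: if it did not, it would already be learnable at rate $e^{-n}$ by \Cref{theorem:exp-rates}, and an infinite NL tree would yield an infinite multiclass Littlestone tree by the sanity check following \Cref{definition:NL tree}, contradicting the hypothesis. So assume $\calH$ has an infinite multiclass Littlestone tree; then \Cref{theorem:inf-littlestone-linear-lower-bound} produces, for every learning algorithm $\wh h_n$, a realizable distribution $P$ with $\E[\er(\wh h_n)]\ge \Omega(1/n)$ for infinitely many $n$, which is exactly the statement that $\calH$ is not learnable faster than $1/n$. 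Combining the two directions shows that $1/n$ is the optimal rate.

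The step I expect to be the main obstacle is the quantitative bookkeeping in the second paragraph: balancing the block sizes and the number of blocks $N$ against the one-inclusion sample size $m$ so that the term $\wh t_n\log(k)/m$ and the concentration term $e^{-\Omega(N)}$ are both $O(1/n)$, and verifying that the various $P$-dependent quantities ($t^\star$, the critical convergence time making a block ``long enough'' to produce a correct $g^i$, and the constants hidden in \Cref{lemma:time-linear} and \Cref{lemma:pattern-avoidance}) all collapse into honest distribution-dependent constants $C(P),c(P)$. Measurability is not a sticking point, since all maps involved are universally measurable by \Cref{lemma:pattern-avoidance} and \Cref{lemma:learn-nl}; it is the aggregation argument that requires care.
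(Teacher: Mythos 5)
Your proposal follows essentially the same route as the paper: estimate $\wh t_n$ via \Cref{lemma:time-linear}, split the sample to train $\lfloor n/(4\wh t_n)\rfloor$ pattern-avoidance functions, feed each into the one-inclusion predictor of \Cref{lemma:learn-nl} on the held-out half, and take a majority vote, with the lower bound supplied separately by \Cref{theorem:inf-littlestone-linear-lower-bound}. The one step you leave implicit — the ``hence'' passing from ``a majority of the $\wh h^i$ each have expected error $O(t^\star\log(k)/n)$'' to a bound on the majority vote's error — is exactly where the paper inserts the intersection-plus-Markov argument (if the majority errs then at least a $1/k$ fraction of predictors err, a constant fraction of which have zero pattern-avoidance error, so Markov applied to the averaged indicator yields the $O(1)\cdot t^\star\log(k)/n$ bound); with that standard step filled in, your argument is correct.
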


\begin{proof}
The learning algorithm works as follows: It first computes the estimate $\wh{t}_n$ introduced in \Cref{lemma:time-linear}. Then it splits the data into two halves: the first half is used to compute the pattern avoidance functions $g^i := \wh{y}^i_{\wh{t}_n}$ for $1 \leq i \leq \lfloor n/(4 \wh{t}_n ) \rfloor$ and the second half is used in order to apply \Cref{lemma:learn-nl} and get classifiers $\wh{y}^i$ with 
$\wh{y}^i(x) := \wh{Y}^{g^i}_{\lfloor n/2\rfloor + 2}(X_{\lceil n/2 \rceil}, Y_{\lceil n/2 \rceil}, ..., X_n, Y_n, x)$. Finally, the algorithm outputs
\[
\wh{h}_n = \mathrm{Maj} \left (\wh{y}^1, \wh{y}^2, ..., \wh{y}^{\lfloor n/(4\wh{t}_n) \rfloor } \right)\,.
\]
and aims to get $\E[\Pr_{(X,Y)}[\wh{h}_n(X) \neq Y]] \leq C/n$ for some constant $C$, where the expectation is over the training set used for the predescribed steps.
Set $\wh{N} = \lfloor n/(4\wh{t}_n) \rfloor$ and $N^\star = \lfloor n/(4 t^\star) \rfloor$.
With the notation of \Cref{lemma:time-linear}, we get
\[
\Pr \left[ \frac{1}{\wh{N}} \sum_{i \in [\wh{N}]} \vec 1\{ \mathrm{per}(\wh{y}^i_{\wh{t}_n}) > 0\} > \frac{1}{100k}, \wh{t}_n \in T^\star \right]
\leq t^\star \exp(-C_1 \cdot N^\star)\,,
\]
i.e., the strict majority of the pattern avoidance functions have zero error with high probability where $C_1$ is a fixed constant depending on the (uniformly bounded) number of labels $k \in \nats$. Using \Cref{lemma:time-linear}, we get
\begin{align*}
\E[ \Pr[\wh{h}_n(X) \neq Y] ]
 & \leq \Pr\left[ \mathrm{Maj} \left (\wh{y}^1(X), \wh{y}^2(X), ..., \wh{y}^{\lfloor n/(4\wh{t}_n)\rfloor } (X) \right) \neq Y\right] \\
 & \leq C\exp(-cn) + t^\star \exp(-C_1 \cdot n^\star) + p\,,
\end{align*}
where 
\[
p = \Pr\left[\wh{t}_n \in T^\star, \sum_{i \in [\wh{N}]} \vec 1\{ \mathrm{per}(\wh{y}^i_{\wh{t}_n}) = 0\} \geq (100k-1)\wh{N}/(100k), \mathrm{Maj} \left (\wh{y}^1(X), \wh{y}^2(X), ..., \wh{y}^{\lfloor n/(4\wh{t}_n)\rfloor } (X) \right) \neq Y\right]\,.
\]
We have that
\[
\Pr\left[\mathrm{Maj} \left (\wh{y}^1(X), \wh{y}^2(X), ..., \wh{y}^{\lfloor n/(4\wh{t}_n)\rfloor } (X) \right) \neq Y\right]
\leq 
\Pr\left[ \sum_{i \in [\wh{N}]} \vec 1\{ \wh{y}^i(X) \neq Y\} \geq \wh{N}/k\right]\,,
\]
since whenever the majority makes a mistake, the right-hand side event occurs. 
Since $k$ is a fixed constant, any two sets containing at least $1/k$ and $1 - 1/(100k)$ fractions of $\{1,2,...,\lceil n/\wh{t}_n \rceil \}$, must have at least a $\Theta(1)$ fraction in their intersection. 
Hence, we get that
\[
p \leq \Pr\left[\wh{t}_n \in T^\star, \frac{1}{\wh{N}}\sum_{i \in [\wh{N}]} \vec 1\{ \wh{y}^i(X) \neq Y \} \cdot \vec 1\{ \mathrm{per}(\wh{y}^i_{\wh{t}_n}) = 0\} \geq \Theta(1)\right] 
\]
and using Markov's inequality
\[
p \leq \Theta(1) \cdot \E\left[\frac{1}{\wh{N}}\sum_{i \in [\wh{N}]} \vec 1\{ \wh{t}_n \in T^\star\} \cdot \vec 1\{ \wh{y}^i(X) \neq Y \} \cdot \vec 1\{ \mathrm{per}(\wh{y}^i_{\wh{t}_n}) = 0\}\right] 
\]
We can now apply \Cref{lemma:learn-nl} conditionally on the first half of the data and get
\[
\E[\Pr[\wh{h}_n(X) \neq Y]]
\leq 
C\exp(-cn) + t^\star \exp(-n^\star /128)
+ \Theta(1) \cdot \log(k) \cdot  \E \left[ \vec 1\{\wh{t}_n \in T^\star\} \cdot \frac{1}{\wh{N}} \sum_{i \in [\wh{N}]} \frac{\l^i_{\wh{t}_n}}{\lfloor n/2 \rfloor + 2} \right]\,.
\]
Now we have that $\Theta(1) \log(k) = \Theta(1)$ and $\l^i_{\wh{t}_n} \leq \wh{t}_n+1 \leq t^\star + 1$, since $\wh{t}_n \in T^\star$. This gives
\[
\E[\Pr[\wh{h}_n(X) \neq Y]]
\leq 
C/n\,.
\]
\end{proof}


\subsection{Arbitrarily Slow Rates}
\label{section:slow}
The last step that is needed to establish the characterization of learnability in the multiclass setting is to show that if $\calH$ has an infinite Natarajan Littlestone tree it is learnable at an arbitrarily slow rate. The following theorem establishes that. The idea of the proof is similar as in the setting with the infinite multiclass Littlestone tree. Intuitively, the reason that in this regime we get arbitrarily slow rates whereas in the other one we get linear rates is that the branching factor now is \emph{exponential} in the depth of the tree, whereas before it remained constant.

\begin{theorem}
\label{theorem:slow-rates}
Assume that $\calH \subseteq [k]^{\calX}$ has an infinite Natarajan-Littlestone tree.
Then, any algorithm that learns $\calH$ requires arbitrarily slow rates. Moreover, there exists an algorithm that learns $\calH$ at an arbitrarily slow rate.
\end{theorem}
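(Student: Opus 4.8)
The statement splits in two, and I would treat the parts separately. For the ``moreover'' clause — that \emph{some} algorithm learns $\calH$ at a rate $R(n)\to 0$ — I would simply invoke \cite{hanneke-kontorovich}: every concept class, and in particular $\calH\subseteq[k]^\calX$, is universally learnable at \emph{some} rate tending to $0$; combined with the first part, this identifies the optimal rate of $\calH$ as an arbitrarily slow one. So essentially all the work is in the first claim: for \emph{every} rate $R$ with $R(n)\to 0$, no learning algorithm learns $\calH$ faster than $R$. I would prove this by the probabilistic method, in direct analogy with the infinite-VCL-tree lower bound of \cite{universal} (it also strengthens the infinite multiclass Littlestone tree bound, \Cref{theorem:inf-littlestone-linear-lower-bound}), with the Natarajan--Littlestone tree of \Cref{definition:NL tree} playing the role of the VCL tree.

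\emph{The random hard instance.} Fix $R$ and replace it by its non-increasing envelope, so WLOG $R$ is non-increasing. From the infinite NL tree I would fix one infinite branch and, along it, ``active'' depths $d_1<d_2<\cdots$, so that the node at depth $d_j$ carries $d_j$ points $x_j^0,\dots,x_j^{d_j-1}$ that are $N$-shattered, together with the two everywhere-different colorings $s^{(0)},s^{(1)}$ of these points, and so that for every finite $m$ and every choice of patterns $Y_1\in\{0,1\}^{d_1},\dots,Y_m\in\{0,1\}^{d_m}$ there is $h\in\calH$ realizing the induced labeling of all active nodes up to depth $d_m$ (this is the realizability clause of \Cref{definition:NL tree}, with the patterns at non-active levels fixed arbitrarily; as in \cite{universal} one arranges the points so that each sample point identifies its node). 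Then I would draw $Y_j\sim\mathrm{Unif}(\{0,1\}^{d_j})$, independently over $j$, and let $P_{\vec Y}$ place total mass $p_j$, spread uniformly, on $x_j^0,\dots,x_j^{d_j-1}$, with $x_j^i$ labeled $s^{(Y_j^i)}(x_j^i)$; the sequences $(p_j),(d_j)$ are fixed below. This instance is realizable: the $h$ realizing active levels $1,\dots,m$ has $\er_{P_{\vec Y}}(h)\le\sum_{j>m}p_j\to 0$, so $\inf_{h\in\calH}\er_{P_{\vec Y}}(h)=0$ almost surely. And because $s^{(0)}(x_j^i)\neq s^{(1)}(x_j^i)$, whenever the learner's sample leaves the bit $Y_j^i$ conditionally uniform (which holds if $x_j^i$ was not sampled and no sample lies at an active depth $>d_j$), any prediction at $x_j^i$ is wrong with conditional probability $\ge 1/2$ — exactly as in the binary VCL argument.

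\emph{The bound and the parameters.} For a fixed $j$, after $n$ i.i.d.\ samples the event that no sample lands at an active depth $>d_j$ has probability $(1-q_{>j})^n$ with $q_{>j}:=\sum_{j'>j}p_{j'}$, and the expected number of ``revealed'' depth-$d_j$ points is at most $np_j$; hence a fresh test point lands on an undetermined depth-$d_j$ point with probability at least $p_j\big((1-q_{>j})^n-np_j/d_j\big)$, so $\E[\er(\wh h_n)]\ge\tfrac12 p_j\big((1-q_{>j})^n-np_j/d_j\big)$. I would then choose $(p_j,d_j)$ by the recursive scheme of \cite{universal}: $p_1=1/2$; $m_j^{\star}=\min\{m:R(m)\le p_j/8\}$ (finite since $R\to0$); $d_j=\max(j,\lceil 4p_j m_j^{\star}\rceil)$; $p_{j+1}=p_j/(4d_j)$; and $n_j=\lfloor d_j/(4p_j)\rfloor$. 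One checks $\sum_j p_j\le\tfrac43 p_1<1$; $d_j\to\infty$ (legitimate, since the infinite NL tree furnishes $N$-shattered sets of every size along the branch); $q_{>j}\le p_j/(3d_j)$, whence $n_j\le 1/(4q_{>j})$ and $(1-q_{>j})^{n_j}\ge 11/12$, while $n_j p_j/d_j\le 1/4$; and $n_j\ge m_j^{\star}$, so $R(n_j)\le p_j/8$. Plugging in gives $\E[\er(\wh h_{n_j})]\ge p_j/4\ge 2R(n_j)$ along the subsequence $n=n_1,n_2,\dots\to\infty$, which is exactly the statement that $\calH$ is not learnable faster than $R$; as $R$ was arbitrary, $\calH$ requires arbitrarily slow rates. (The ability to push below $\Omega(1/n)$ is precisely the factor $d_j$ in the bound, i.e.\ the $N$-shattered set of growing size at each active node; with only the multiclass Littlestone tree one has $d_j\equiv 1$, the same scheme yields only the $\Omega(1/n)$ of \Cref{theorem:inf-littlestone-linear-lower-bound}, and that matches the linear upper rate of \Cref{theorem:linear-rates-algo} in the absence of an infinite NL tree.)

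\emph{Where the difficulty lies.} Conceptually the argument is short once the NL tree is available; the genuinely new ingredients are only (i) that the everywhere-different colorings force the per-point error to be $\ge 1/2$ just as in the binary case and (ii) that $P_{\vec Y}$ is realizable via truncated hypotheses — both routine. The step I expect to be the real obstacle, inherited in spirit verbatim from \cite{universal}, is the joint choice of $(p_j,d_j,n_j)$: summability of $(p_j)$ has to be reconciled with keeping $(1-q_{>j})^n$ bounded below up to $n\approx n_j$ (which forces $q_{>j}\ll p_j/d_j$, hence faster-than-geometric decay of $p_j$ at a rate governed by $d_j$), with the under-sampling condition $n_j p_j\lesssim d_j$, and with $p_j\gtrsim R(n_j)$; reconciling these is what dictates the recursion above, the passage to a sparse subsequence of sample sizes, and the replacement of $R$ by its non-increasing envelope. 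Measurability of the construction and of the events used is handled exactly as in \cite{universal}.
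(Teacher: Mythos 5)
Your proposal matches the paper's proof essentially verbatim: the ``moreover'' clause is delegated to \cite{hanneke-kontorovich}, and the lower bound is the same probabilistic-method construction — a uniformly random branch of the infinite NL tree, mass $p_j$ spread uniformly over the $d_j$ $N$-shattered points of a level-$d_j$ node with labels $s^{(Y_j^i)}(x_j^i)$, realizability via truncated hypotheses, the conditional-uniformity factor $1/2$, and the same recursive choice of $(p_j,d_j,n_j)$ (which the paper outsources to Lemma 5.12 of \cite{universal} and you have essentially reproduced). The only step you leave implicit that deserves a line is the final de-randomization: your bound $\E[\er(\wh h_{n_j})]\ge 2R(n_j)$ averages over the random branch as well, so one must apply the reverse Fatou argument (as in the proof of \Cref{theorem:inf-littlestone-linear-lower-bound}) to extract a single fixed branch $\vec y$, hence a single realizable distribution $P_{\vec y}$, for which the bound holds for infinitely many $n_j$.
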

\begin{proof}
The desired algorithm that learns $\calH$ is provided by \cite{hanneke-kontorovich}.
For the proof of the lower bound, fix a vanishing rate $R(t) \to 0$, fix any learning algorithm $\wh{h}_n$ for $\calH$ and let $\{ x_u \}$ be an infinite NL tree for $\calH$. Consider a random branch of the tree $y = (y_1,y_2,...)$ where the pattern $y_\l = (y_\l^0,...,y_\l^{\l-1}) \in \{0,1\}^\l$ is chosen uniformly at random from the $\l$-dimensional Boolean hypercube for any $\l \in \nats$. Fix a finite level $n \in \nats$. By the structure of the NL tree, we know that there exist two different colorings $s_1, s_2$ and a hypothesis $h \in \calH$ so that $h$ agrees either
with $s_1$ if the pattern bit says 1 or with $s_2$ otherwise.
Our first goal in this universal lower bound is to construct a realizable distribution. We define the random distribution that assigns non-zero mass to the points of the branch $y$ with labels consistent with $h$, i.e.,
\[
P_y( (x^i_{y \leq \l-1}, h(x^i_{y \leq \l-1})) ) = \frac{p_\l}{\l} \textnormal{ for }~ 0 \leq i < \l, \l \in \nats\,,
\]
where $p_\l$ is a sequence of probabilities so that $\sum_{\l \in \nats} p_\l = 1$ that we will select later in the proof.
Intuitively, the distribution $P_y$ chooses the node of the infinite branch at level $\l$ with probability $p_\l$; this node contains $\l$ points of $\calX$ and one of them is chosen uniformly at random.

By the structure of the infinite NL tree, we get that such a labeling $h \in \calH$ exists for any level $n \in \nats$ and this labeling is consistent with all the previous levels $1 \leq n' < n$. Thus, we get that
\[
\Pr_{(x,z) \sim P_y}[h(x) \neq z] \leq \sum_{\l > n} p_\l\,.
\]
Hence, as $n \to \infty$, we get that $P_y$ is realizable for any realization of the random branch $y$. Moreover, the mapping $y \to P_y$ is measurable.

We now have to lower bound the loss of the potential learner $\wh{h}_n$ using the probability measure $P_y$. Let $(X,Z), (X_1,Z_1), (X_2, Z_2), ... \in (\calX \times [k])^\infty$ be a collection of i.i.d. samples from $P_y$. Equivalently, we can write
\begin{enumerate}
    \item $X = x^I_{y \leq T-1}$ and $Z = z^I_T$ for two random variables $(T,I)$ with joint distribution $\Pr[T = \l, I = i] = \frac{p_\l}{\l}$ for $0 \leq i < \l, \l \in \nats$.
    \item For $j \in \nats$, set $X_j = x^{I_j}_{y \leq T_j-1}$ and $Z_j = z^{I_j}_{T_j}$ for two random variables $(T_j,I_j)$ with joint distribution $\Pr[T_j = \l, I_j = i] = \frac{p_\l}{\l}$ for $0 \leq i < \l, \l \in \nats$.
\end{enumerate}
We underline that in the above the random variables $(T,I), (T_1,I_1), (T_2,I_2)$ are i.i.d. and independent of the random branch $y$. Our goal is to lower bound the error of any learning algorithm: For all $n$ and $\l$,
\[
\Pr[ \wh{h}_n(X) \neq Z, T=\l]
\geq \sum_{i=0}^{\l-1} \Pr[\wh{h}_n(x^i_{y\leq \l-1}) \neq z_\l^i, T=\l,I=i,T_1,...,T_n \leq \l, (T_1,I_1),..., (T_n,I_n) \neq (\l,i)]\,,
\]
where in the right hand side the probability is decreased by additionally requiring that the whole training set is concentrated before the level $\l+1$ and the testing example is not contained in the training set. Consider this event $E_{n,\l,i} = \{ T=\l, I=i, T_1, ..., T_n \leq \l, (T_1,I_1), ..., (T_n,I_n) \leq (\l,i) \}$. If we condition on $E_{n,\l,i}$, we have that the prediction of $\wh{h}_n(X) = \wh{h}_n(x^i_{y\leq \l-1})$ is independent of the label $z_\l^i$. Hence, we have that
\[
\Pr[ \wh{h}_n(X) \neq Z, T=\l]
\geq \sum_{i=0}^{\l-1} \Pr[\wh{h}_n(x^i_{y\leq \l-1}) \neq z_\l^i | E_{n,\l,i}] \Pr[E_{n,\l,i}] \geq \frac{1}{2}\sum_{i = 0}^{\l-1} \Pr[E_{n,\l,i}]\,.
\]
By the choice of the randomness over $T,I,T_1,I_1,...$, we get that
\[
\Pr[ \wh{h}_n(X) \neq Z, T=\l]
\geq \frac{p_\l}{2} \left(1 - \sum_{m > \l} p_m - \frac{p_\l}{\l} \right)^n \,.
\]
To conclude the proof we have to choose the sequence of probabilities $(p_\l)$ and relate it to the vanishing rate $R$. By combining Lemma 5.12 of \cite{universal} and by applying the reverse Fatou's lemma (see e.g., the end of the proof of Theorem 5.11 of \cite{universal}), the proof is concluded.
\end{proof}

\subsection{A Sufficient Condition for Linear Rates for Multiclass Learning}
\label{section:gl-trees}
Another approach to come up with an algorithm that works in the multiclass setting is to use the algorithm that was developed in~\cite{universal} for the binary setting. Towards this end, we define a slightly different combinatorial measure for a class $\calH$.

\begin{definition}
\label{definition:gl-tree}
A Graph-Littlestone (GL) tree for $\calH \subseteq [k]^{\calX}$ of depth $d \leq \infty$ consists of a tree 
\[
\bigcup_{0 \leq \l < d} \{ x_u \in \calX^{\l+1}, u \in \{0,1\} \times \{0,1\}^2 \times ... \times \{0,1\}^{\l}\} 
\]
and a coloring $s$ 
mapping each position $u^i \in u$ for any node with pattern $u \in \{0,1\} \times ... \times \{0,1\}^{\l}$ for $i \in \{0,1,...,\l\}$ and $\l \in \{0,1,...,d-1\}$ of the tree to some color $\{0,1,...,k\}$
such that for every finite level $n < d$, the subtree $T_n = \cup_{0 \leq \l \leq n} \{ x_u : u \in \{0,1\} \times \{0,1\}^2 \times ... \times \{0,1\}^{\l} \}$ satisfies the following: 
\begin{enumerate}
    \item For any path $\vec y \in \{0,1\} \times ... \times \{0,1\}^{n+1}$, there exists a concept $h \in \calH$ so that
$h(x^i_{\vec y_{\leq \l}}) = s(x^i_{\vec y_{\leq \l}})$ if $y^i_{\l+1} = 1$ and $h(x^i_{\vec y_{\leq \l}}) \neq s(x^i_{\vec y_{\leq \l}})$ otherwise, for all $0 \leq i \leq \l$ and $0 \leq \l \leq n$, where
\[
\vec y_{\leq \l} = ( y_1^0, (y_2^0, y_2^1), ...,(y_\l^0,...,y_\l^{\l-1}) ), x_{\vec y_{\leq \l}} = (x^0_{\vec y_{\leq \l}},...,x^\l_{\vec y_{\leq \l}})\,.
\]
\end{enumerate}
We say that $\calH$ has an infinite GL tree if it has a GL tree of depth $d = \infty$. 
\end{definition}
We note that in the above definition we identify the color
$s(x^i_{\vec y \leq \l})$ with the (unique) position
of this point $x^i_{\vec y \leq \l}$ (since typically the coloring is over positions). One can again verify that if $\calH$ has an infinite GL tree, then it has an infinite multiclass Littlestone tree. We show that if a class $\calH$ does not have an infinite GL tree, then it is possible to learn $\calH$ at a linear rate. The proof of this statement is via a reduction to the binary setting. In fact, we invoke the well-studied ``one versus all'' approach, where we are trying to learn $k$ different classes $\calH_i$ that distinguish between points that belong to the $i$-th class and points that belong to some class $j \neq i$. 

\begin{theorem}
Assume that $\calH \subseteq [k]^\calX$ does not have an infinite Graph-Littlestone tree. Then, there exists an algorithm that learns $\calH$ at a linear rate.
\end{theorem}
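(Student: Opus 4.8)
The plan is to reduce to the binary universal-rates theory of \cite{universal} via a one-versus-all decomposition. For each label $i \in [k]$, set $\calH_i := \{\, x \mapsto \vec 1\{h(x) = i\} : h \in \calH \,\} \subseteq \{0,1\}^{\calX}$. First I would show that $\calH$ having no infinite Graph-Littlestone tree forces every $\calH_i$ to have no infinite binary VCL tree; then I would invoke the binary trichotomy of \cite{universal} to learn each $\calH_i$ at a rate $O(1/n)$; and finally I would aggregate the $k+1$ binary predictors into a multiclass predictor, bounding its error by the sum of the binary errors, which remains $O(1/n)$ since $k$ is a fixed constant.

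\textbf{The combinatorial step (the hard part).} I expect the main obstacle to be establishing the contrapositive of the first claim: if some $\calH_i$ has an infinite binary VCL tree, then $\calH$ has an infinite GL tree. Given an infinite VCL tree for $\calH_i$ --- whose node at level $\l$ carries a tuple of $\l+1$ points of $\calX$, which branches with factor $2^{\l+1}$, and along every path has every finite prefix realized by some $g \in \calH_i$ matching the $\{0,1\}$-labels read off the path --- I would keep the same underlying tree of $\calX$-tuples and attach the constant color $s \equiv i$ to every position. For $g = \vec 1\{h(\cdot) = i\}$ with $h \in \calH$, the events ``$g(x) = 1$'' and ``$g(x) = 0$'' are exactly ``$h(x) = s(x)$'' and ``$h(x) \neq s(x)$'', so a VCL-consistent path for $\calH_i$ becomes a GL-consistent path for $\calH$ with respect to $s$ in the sense of \Cref{definition:gl-tree}; the two tree shapes coincide ($\l+1$ points and $2^{\l+1}$ children at level $\l$), so this produces an infinite GL tree for $\calH$. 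The delicate points here are recalling the exact branching structure of the VCL tree of \cite{universal} and checking that it matches a GL tree, and verifying that the agree/disagree bookkeeping is consistent not merely along finite prefixes but along every infinite branch.

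\textbf{Learning the binary projections and aggregating.} Fix a distribution $P$ on $\calX \times [k]$ realizable with respect to $\calH$, say by $h^\star$, and draw $(X_1,Y_1),\dots,(X_n,Y_n) \sim P^n$. Relabelling the sample to $(X_j, \vec 1\{Y_j = i\})_{j \le n}$ gives, since $Y_j = h^\star(X_j)$ almost surely, a distribution $P_i$ on $\calX \times \{0,1\}$ realizable with respect to $\calH_i$ via $\vec 1\{h^\star(\cdot) = i\}$. Since $\calH_i$ has no infinite VCL tree, the binary trichotomy of \cite{universal} provides a learner $\wh g^{(i)}_n$ with $\E[\er_{P_i}(\wh g^{(i)}_n)] \le C_i(P)/n$ for all $n$ (both admissible binary rates, $e^{-n}$ and $1/n$, are $O(1/n)$). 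I would then output $\wh h_n(x) := i$ whenever $i$ is the unique label with $\wh g^{(i)}_n(x) = 1$, and $\wh h_n(x) := 0$ otherwise. The point of the aggregation is the pointwise bound: if $\wh h_n(x) \neq y$, then either $\wh g^{(y)}_n(x) \neq 1$ or some $j \neq y$ has $\wh g^{(j)}_n(x) \neq 0$, hence $\vec 1\{\wh h_n(x) \neq y\} \le \sum_{i \in [k]} \vec 1\{\wh g^{(i)}_n(x) \neq \vec 1\{y = i\}\}$; taking expectations over the test point and the sample gives $\E[\er_P(\wh h_n)] \le \sum_{i \in [k]} \E[\er_{P_i}(\wh g^{(i)}_n)] \le C(P)/n$. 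Universal measurability of $\wh h_n$ is inherited from the binary learners together with the finiteness of $k$. Note this argument only yields a \emph{sufficient} condition --- it gives the $1/n$ upper bound but makes no optimality claim, which matches the statement.
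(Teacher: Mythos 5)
Your proposal is correct and follows essentially the same route as the paper: the one-versus-all decomposition into the binary classes $\calH_i$, the contrapositive argument that an infinite VCL tree for some $\calH_i$ yields an infinite GL tree for $\calH$ by reusing the same nodes with the constant coloring $s \equiv i$, and the aggregation of the $k+1$ binary $O(1/n)$-learners with a union bound over the test point. The only cosmetic difference is your aggregation rule (``the unique $i$ with $\wh g^{(i)}_n(x)=1$, else $0$'') versus the paper's $\argmax_i \wh h_n^{(i)}(x)$, which are interchangeable for the error bound.
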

\begin{proof}
Consider $k$ binary classes $\calH_i$ induced by the hypothesis class $\calH$ where $\calH_i = \{ x \mapsto 1\{h(x) = i\}  : h \in \calH \}$, i.e., for $h \in \calH, \wt{h} \in \calH_i$ we have that $\wt{h}(x) = 1$ if and only if $h(x) = i$.
\begin{claim}
Assume that $\calH$ does not have an infinite GL tree. Then for any $i \in [k]$ the class $\calH_i$ is learnable at a linear rate.
\end{claim}
\begin{proof}
Fix $i \in [k]$. It suffices to show that the class $\calH_i$ does not have an infinite VCL tree (VCL trees are introduced in \cite{universal}  and, intuitively, a VCL tree is a GL tree with $k=1$ and $s = 1$ everywhere). Towards contradiction, assume that $\calH_i$ admits an infinite VCL tree $\calT = \{x_u\}$. We can construct the following Graph-Littlestone tree: we use the same nodes as in $\calT$ and for any node in the tree we use the coloring that colors each point with the color $i$. Fix an arbitrary level $n$ and a path $y$. We know that there exists a binary hypothesis $\wt{h} \in \calH_i$ that realizes this path in the VCL tree. Due to the construction of the class $\calH_i$, we have that there exists a mapping $h \in \calH$ (where $\wt{h} = 1\{h(\cdot) = i\})$ that realizes the path in the constructed tree. This property holds for any path and any level. Hence, we have constructed an infinite GL tree for $\calH$ which yields a contradiction.
\end{proof}
Assume that we get $k$ binary classifiers $\wh{h}_n^{(i)}$, one for each class $\calH_i$. Finally, we set $
\wh{h}_n(x) = \argmax_i \wh{h}_n^i(x)$ for any $x \in \calX$. The above claim concludes the proof since
\[
\E \Pr_{(x,y) \sim P}[\wh{h}_n(x) \neq y]
=
\E \Pr_{(x,y) \sim P}[\exists i \in [k] : \wh{h}_n^{(i)}(x) \neq \vec 1\{y=i\}]
\leq \sum_{i \in [k]} \E[\mathrm{err}(\wh{h}_n^{(i)})] \leq 1/n\,,
\]
since $k$ is a fixed constant.
\end{proof}

We close this section with an open question. Is the GL tree roughly speaking equivalent to the NL tree? This should remind the reader the connection between the Graph and the Natarajan dimension in the uniform multiclass PAC learning. In fact, it holds that $\mathrm{Ndim}(\calH) \leq \mathrm{Gdim}(\calH) \leq \mathrm{Ndim}(\calH) \cdot O(\log(k))$.

\begin{openquestion}
\label{open-question}
Let $\calH \subseteq [k]^\calX$ for some fixed constant $k \in \nats$. 
\begin{enumerate}
    \item Is it true that $\calH$ has an infinite NL tree if and only if it has an infinite GL tree?
    \item Is it possible to obtain an analogue of the inequality between the Graph and the Natarajan dimensions for the ordinal GL and NL dimensions?
\end{enumerate}
\end{openquestion}

One approach to tackle this problem is to show that if  $\calH$ has an infinite GL tree, then it is learnable at an arbitrarily slow rate. It is not clear to us that the current proof can be modified to work in this setting. If we try to follow the same steps we cannot guarantee the realizability of the sequence. If we pick an infinite random path and let $p_i$ be the path up to depth $i$, we know that for each such depth there exists some $h_i$ that realizes this path, i.e., if $b_i = 1 \implies h(x_i) = y^{h_i} = s(x_i)$ and if $b_i = 0 \implies h(x_i) = y^{h_i} \neq s(x_i)$. Now if consider some depth $j > i$ then the hypothesis $h_j$ will agree with $h_i$ on every $x_i$ with $b_i = 1$. However, we cannot guarantee that $h_j(x_i) = h_i(x_i)$ if $b_i = 0$. Hence, the straightforward way to modify the proof to account for that would be to consider a \emph{family} of different distributions. Note that this is not allowed in the universal learning setting, since we fix the distribution.

\section{Multiclass Learning for Partial Concepts}
In this section, we provide our results on multiclass learnability in the context of partial concept classes.
The missing proofs are presented in \Cref{appendix:missing proofs for partial concepts}.

\subsection{PAC Multiclass Learnability for Partial Concepts: The Proof of \Cref{theorem:partial-pac-learning-sc}}
We show the following theorem, which implies \Cref{theorem:partial-pac-learning}.
For reader's convenience, we restate \Cref{theorem:partial-pac-learning-sc}.
\begin{theorem*}
For any partial concept class $\calH \subseteq \{0,1,...,k,\star\}^\calX$ with $\mathrm{Ndim}(\calH) \leq \infty$, the sample complexity of PAC learning the class $\calH$ satisfies
\[
\calM(\eps, \delta) = O \left ( \frac{\mathrm{Ndim}(\calH) \log(k)}{\eps} \log(1/\delta) \right) \text{ and } \calM(\eps, \delta) = \Omega \left(\frac{\mathrm{Ndim}(\calH) + \log(1/\delta)}{\eps} \right)\,.
\]
In particular, if $\mathrm{Ndim}(\calH) = \infty,$ then $\calH$ is not PAC learnable.
\end{theorem*}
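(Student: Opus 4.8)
The plan is to prove the two bounds separately, reusing the one-inclusion hypergraph machinery (\Cref{appendix:one-inclusion}, \Cref{lemma:one-inc}) as a black box for the upper bound and the classical realizable PAC lower bound for the lower bound, handling partiality exactly as in \cite{alon2022theory}.

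\emph{Upper bound.} I would first transfer the transductive one-inclusion guarantee to partial classes. Recall that for a \emph{total} class $F\subseteq[k]^{Z}$ on a finite set $Z$, \Cref{lemma:one-inc} yields a predictor with random-test-point error at most $\mathrm{Ndim}(F)\log(k)/|Z|$. Given $n$ i.i.d.\ samples $S=(x_1,y_1,\dots,x_n,y_n)\sim P^n$ and a test point $x$, set $Z=\{x_1,\dots,x_n,x\}$ and let $F=\calH|_Z$ be the family of \emph{total} colorings $Z\to[k]$ obtained from those $h\in\calH$ with $Z\subseteq\mathrm{supp}(h)$. Two facts must be checked: (i) since $P$ is realizable w.r.t.\ $\calH$, the $n{+}1$ i.i.d.\ points $Z$ are almost surely realizable by some $h\in\calH$, so $F$ is nonempty and contains the coloring agreeing with $(y_1,\dots,y_n)$; and (ii) $\mathrm{Ndim}(F)\le\mathrm{Ndim}(\calH)$, because any $N$-shattered subset of $Z$ together with its everywhere-different witness colorings lifts to an $N$-shattered subset of $\calX$ (recall $N$-shattering forgets $\star$). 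Running the one-inclusion predictor on $F$ and exploiting exchangeability of the $n{+}1$ points gives an inductive learner $\mathbb A_0$ on $n$ samples with $\E_S[\mathrm{er}_P(\mathbb A_0(S))]\le\mathrm{Ndim}(\calH)\log(k)/(n+1)$. Choosing $n=\Theta(\mathrm{Ndim}(\calH)\log(k)/\eps)$ makes this at most $\eps/4$, hence $\Pr_S[\mathrm{er}_P(\mathbb A_0(S))>\eps/2]\le1/2$ by Markov. Standard confidence boosting then finishes: split the data into $T=\Theta(\log(1/\delta))$ independent blocks, run $\mathbb A_0$ on each to obtain $h_1,\dots,h_T$ (so that with probability $\ge1-\delta/2$ some $h_i$ has $\mathrm{er}_P(h_i)\le\eps/2$), and use a fresh validation sample of size $\Theta(\log(1/\delta)/\eps)$ to select, by empirical error, some $h_i$ with $\mathrm{er}_P(h_i)\le\eps$ with probability $\ge1-\delta/2$ (a standard realizable-pool-selection argument, see e.g.\ \cite{understanding-ml}). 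The total sample size is $O\!\big(\tfrac{\mathrm{Ndim}(\calH)\log(k)}{\eps}\log(1/\delta)\big)$.

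\emph{Lower bound.} If $\mathrm{Ndim}(\calH)\ge d$, fix an $N$-shattered set $\{x_1,\dots,x_d\}\subseteq\calX$ with everywhere-different witnesses $s^{(0)},s^{(1)}$; for every $b\in\{0,1\}^d$ there is $h_b\in\calH$ with $x_i\in\mathrm{supp}(h_b)$ and $h_b(x_i)=s^{(b_i)}(x_i)$ for all $i$. Restrict to distributions $P$ supported on $\{x_1,\dots,x_d\}$ whose correct label at $x_i$ always lies in $\{s^{(0)}(x_i),s^{(1)}(x_i)\}$ (these are exactly the distributions realized by the $h_b$). Given any partial-concept $(\eps,\delta)$-learner $\mathbb A$ for $\calH$, define the binary learner that, on the same sample, outputs at $x_i$ the bit $\mathbf 1\{\mathbb A(S)(x_i)=s^{(1)}(x_i)\}$; any prediction of $\mathbb A$ that is neither $s^{(0)}(x_i)$ nor $s^{(1)}(x_i)$ (in particular $\star$) only increases the error, so this is a realizable $(\eps,\delta)$-learner for the total binary concept class $2^{\{x_1,\dots,x_d\}}$, whose VC dimension is $d$. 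The classical realizable PAC lower bounds (see e.g.\ \cite{understanding-ml}) then force $\calM(\eps,\delta)=\Omega\!\big(\tfrac{d+\log(1/\delta)}{\eps}\big)$ for $\calH$. In particular, if $\mathrm{Ndim}(\calH)=\infty$ this is infinite for every fixed $\eps<1/2$ and $\delta<1/2$, so $\calH$ is not PAC learnable.

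\emph{Main obstacle.} The one delicate point is the transfer of the one-inclusion guarantee: one must verify that restricting to concepts \emph{fully defined} on $Z$ preserves consistency with the observed labels — this is exactly where realizability of $n{+}1$ (rather than $n$) points is invoked — and that this restriction does not inflate the Natarajan dimension. Everything else (the Markov step, sample-splitting, validation-set selection, and the lower-bound reduction) is routine.
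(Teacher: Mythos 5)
Your proposal is correct and follows essentially the same route as the paper: the upper bound restricts $\calH$ to the total colorings of the sample points that are realizable by $\calH$, applies the total-concept one-inclusion hypergraph guarantee (noting that realizability of the $n{+}1$ points keeps this restriction nonempty and that the Natarajan dimension does not increase), and then boosts the expected-error bound to a high-probability bound; the lower bound reduces to a total class supported on an $N$-shattered set and invokes the classical realizable PAC lower bound. The only cosmetic difference is that you reduce the lower bound to a binary VC class via the two witness colorings, whereas the paper reduces to the total multiclass class on the shattered set — both yield the same $\Omega\bigl((\mathrm{Ndim}(\calH)+\log(1/\delta))/\eps\bigr)$ bound.
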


\begin{proof}
Our algorithm will make use of the one-inclusion hypergraph algorithm whose utility is provided by \Cref{lemma:one-inc} for total concepts. We first show the next lemma for the one-inclusion hypergraph predictor for partial concepts.
\begin{lemma}
\label{lemma:one-inc-partial}
Fix a positive constant $k$.
For any partial concept class $\calH \subseteq \{0,1,...,k,\star\}^\calX$ with $\mathrm{Ndim}(\calH) < \infty$, there exists an algorithm $\mathbb A : (\calX \times [k])^* \times \calX \to [k]$ such that, for any $n\in \nats$ and any sequence $\{(x_1,y_1),....,(x_n,y_n)\} \in (\calX \times [k])^n$ that is realizable with respect to $\calH$,
\[
\Pr_{\sigma \sim \calU(\mathbb S_n)}[\mathbb A( x_\sigma(1),  y_\sigma(1),...,x_\sigma(n-1), y_\sigma(n-1), x_\sigma(n)) \neq y_\sigma(n)] \leq \frac{\mathrm{Ndim}(\calH)  \log(k)}{n}\,.
\]
\end{lemma}
The proof is deferred to \Cref{appendix:missing proofs for partial concepts}.

Let us now focus on the upper bound given that $\mathrm{Ndim}(\calH) < \infty$. For any distribution $P$ realizable with respect to $\calH$ and for a sequence of $n$ labeled i.i.d. examples from $P$, we define the strategy
$\wh{h}_n(\cdot) = \mathbb A(X_1,Y_1,...,X_n, Y_n, \cdot)$ and so
\[
\E[\mathrm{er}_P(\wh{h}_n)]
=
\E_{(X_i, Y_i)_{i \leq n}}\left[\Pr_{(X_{n+1},Y_{n+1})}[\mathbb A(X_1,Y_1,...,X_n, Y_n, X_{n+1}) \neq Y_{n+1}] \right] \leq \frac{\mathrm{Ndim}(\calH) \Theta(\log(k))}{n+1}\,.
\]
We next have to convert this algorithm which guarantees an expected error bounded by $\mathrm{Ndim}(\calH) \Theta(\log(k))/(n+1)$ into an algorithm
that guarantees a bound on the error with probability at least $1-\delta$. In order to boost the algorithm, we use a standard boosting algorithm by decomposing the dataset into $\log(1/\delta)$ parts and using Chernoff bounds. For the details we refer to the boosting trick of \cite{haussler1994predicting} and the proof of Theorem 34.(i) of \cite{alon2022theory}.

Let $\mathrm{Ndim}(\calH) = \infty$. We will show that $\calH$ is not PAC learnable. For any $\l \leq \mathrm{Ndim}(\calH),$ let $\calX_\l = \{x_1,...,x_\l\}$ be a set $N$-shattered by $\calH$ using the function $f$. Let $\calH_\l$ be the class of all total functions $\calX_\l \to \{0,1,...,\l\}$, any distribution $P$ on $\calX_\l \times \{0,1,...,\l\}$ realizable with respect to $\calH_\l$ can be extended to a distribution on $\calX \times \{0,1,...,\l\}$ realizable with respect to $\calH$ with $P( (\calX \setminus \calX_k) \times \{0,1,...,\l\} ) = 0$. Thus, any lower bound on the sample complexity of PAC learning the total concept class $\calH_\l$ is also a lower bound on the sample complexity of learning the partial class $\calH$. This gives the desired lower bound. Also, the partial concept classes with infinite Natarajan dimension are not PAC learnable.

\end{proof}


\subsection{The ERM Principle and Multiclass Partial Concepts}

Our goal is to understand \emph{when the ERM principle succeeds in the partial setting}. To address this natural and important task, we pose the following question: What is the difference between learning partial concepts with output in $\{0,1,...,k,\star\}$ and learning total concepts with labels in $\{0,1,...,k,k+1\}$, where $k$ is a positive integer? Conceptually, the key difference between partial concepts with $k+1$ labels and ($k+2$)-label multiclass classification has to do with the support of the distribution: In the latter, the learning problem is a distribution over $\calX \times \{0,1,...,k,k+1\},$ while in the former we have a distribution only over $\calX \times \{0,1,...,k\}$ (recall \Cref{definition:partial-pac}). We address this question in \Cref{theorem:equivalence-total-partial}. We show that any partial concept class $\calH$ is learnable in the $(k+2)$-label setting if and only if $\calH$ is learnable in the partial setting and the VC dimension of the set family $\{ \mathrm{supp}(h) : h \in \calH \}$ is finite. This result is helpful since it guarantees that when the VC dimension of the above family is bounded, the ERM principle provably holds and can be applied in the partial setting.

The next result gives a formal connection between the two settings.

\begin{proposition}
\label{theorem:equivalence-total-partial}
Any partial class $\calH \subseteq \{0,1,\ldots,k,\star\}^{\calX}$ is PAC learnable in the $(k+2)$-label multiclass setting if and only if
$\calH$ is PAC learnable in the partial concepts setting \textbf{and} $\mathrm{VCdim}(\{\mathrm{supp}(h) : h \in \calH\}) < \infty$.
\end{proposition}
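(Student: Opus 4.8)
The plan is to prove the two directions separately, and in each direction use the characterization of partial multiclass PAC learnability (\Cref{theorem:partial-pac-learning}) together with the analogous characterization in the total $(k+2)$-label setting (the classical multiclass PAC theorem via the Natarajan dimension). Write $\mathcal{S}_{\calH} = \{\mathrm{supp}(h) : h \in \calH\}$ for the support set system. The key structural fact I would isolate first is the relationship between $\mathrm{Ndim}$ of a suitable \emph{total} class built from $\calH$ and the two quantities $\mathrm{Ndim}(\calH)$ (partial) and $\mathrm{VCdim}(\mathcal{S}_{\calH})$. Specifically, given the partial class $\calH \subseteq \{0,1,\ldots,k,\star\}^{\calX}$, define the total class $\overline{\calH}^{\,k+1} \subseteq \{0,1,\ldots,k,k+1\}^{\calX}$ by replacing every $\star$ with the fresh label $k+1$: $\overline{\calH}^{\,k+1} = \{ \overline{h} : h \in \calH\}$ where $\overline{h}(x) = h(x)$ if $h(x)\neq\star$ and $\overline{h}(x) = k+1$ otherwise. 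The claim I would establish is that $\mathrm{Ndim}(\overline{\calH}^{\,k+1}) < \infty$ if and only if both $\mathrm{Ndim}(\calH) < \infty$ and $\mathrm{VCdim}(\mathcal{S}_{\calH}) < \infty$; combined with \Cref{theorem:partial-pac-learning} and the total multiclass PAC theorem applied to $\overline{\calH}^{\,k+1}$ (with $k+2$ labels, still a constant), this gives the proposition, \emph{provided} one also checks that realizability of a distribution over $\calX \times \{0,1,\ldots,k\}$ by $\calH$ corresponds exactly to realizability of the ``same'' distribution over $\calX \times \{0,1,\ldots,k,k+1\}$ by $\overline{\calH}^{\,k+1}$ (which is immediate, since any sample realizable by some $h\in\calH$ — meaning the sample points lie in $\mathrm{supp}(h)$ with correct labels — is realized by $\overline{h}$ with the identical labels, and conversely a distribution supported on $\calX\times\{0,\ldots,k\}$ realizable by $\overline{\calH}^{\,k+1}$ never needs the label $k+1$).

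For the forward direction, suppose $\calH$ is PAC learnable in the $(k+2)$-label total setting. Then by the multiclass PAC theorem $\mathrm{Ndim}(\overline{\calH}^{\,k+1}) < \infty$ (here I would be careful: learnability of $\calH$ ``in the $(k+2)$-label setting'' should be interpreted as learnability of $\overline{\calH}^{\,k+1}$ as a total $(k+2)$-label class against all realizable distributions, which is equivalent to finiteness of its Natarajan dimension). From $\mathrm{Ndim}(\overline{\calH}^{\,k+1}) < \infty$ I would deduce $\mathrm{Ndim}(\calH) < \infty$ by observing that any set $N$-shattered by the partial class $\calH$ (using only defined labels, forgetting $\star$) is $N$-shattered by $\overline{\calH}^{\,k+1}$ with the same witness colorings — indeed the partial concepts witnessing the $2^t$ patterns have the relevant points in their supports, so their $\overline{(\cdot)}$-extensions agree on those points. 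Next I would deduce $\mathrm{VCdim}(\mathcal{S}_{\calH}) < \infty$: if $\{x_1,\ldots,x_t\}$ is VC-shattered by $\mathcal{S}_{\calH}$, then for every subset $A\subseteq[t]$ there is $h_A\in\calH$ with $\mathrm{supp}(h_A)\cap\{x_1,\ldots,x_t\} = \{x_i : i\in A\}$, i.e. $\overline{h_A}(x_i) = k+1$ exactly for $i\notin A$. This is almost an $N$-shattering of $\{x_1,\ldots,x_t\}$ by $\overline{\calH}^{\,k+1}$ with the two color-functions $s^{(0)}\equiv k+1$ and $s^{(1)}$ chosen to be \emph{any} label different from $k+1$ at each point — but there is a subtlety: $N$-shattering demands that on the ``$s^{(1)}$'' coordinates $\overline{h_A}$ take the specific value $s^{(1)}(x_i)$, whereas we only know $\overline{h_A}(x_i)\neq k+1$. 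To fix this I would pass to a sub-shattered-set of size $\Theta(t/k)$ via a pigeonhole/Ramsey argument on which of the $k+1$ non-$\star$ labels each $h_A$ uses at each point, or alternatively work with the Graph dimension $\mathrm{Gdim}$ instead (which only requires ``$=s$'' versus ``$\neq s$'' and hence directly captures VC-shattering of the support system), and then invoke $\mathrm{Ndim} \le \mathrm{Gdim} \le \mathrm{Ndim}\cdot O(\log k)$ from the preliminaries to transfer back. I expect this pigeonhole step to be the main technical obstacle, though it is standard.

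For the reverse direction, suppose $\mathrm{Ndim}(\calH) < \infty$ and $\mathrm{VCdim}(\mathcal{S}_{\calH}) = d < \infty$. I want to bound $\mathrm{Ndim}(\overline{\calH}^{\,k+1})$. Let $\{x_1,\ldots,x_t\}$ be $N$-shattered by $\overline{\calH}^{\,k+1}$ with witnesses $s^{(0)},s^{(1)}$, $s^{(0)}(x_i)\neq s^{(1)}(x_i)$. Partition $[t]$ into $I_{\star\star}$ (both witnesses equal $k+1$ — impossible since they differ), $I_{0}$ (exactly $s^{(0)}(x_i)=k+1$), $I_1$ (exactly $s^{(1)}(x_i)=k+1$), and $I_{\mathrm{def}}$ (neither witness is $k+1$). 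On $I_{\mathrm{def}}$, for any pattern the realizing $\overline{h}$ assigns defined labels to all those points, so those points lie in $\mathrm{supp}(h)$; restricting attention to patterns and recalling $h$ ranges over $\calH$, the set $\{x_i : i\in I_{\mathrm{def}}\}$ is $N$-shattered by $\calH$, so $|I_{\mathrm{def}}| \le \mathrm{Ndim}(\calH)$. On $I_0 \cup I_1$, for each point one of the two ``colors'' is $k+1$, meaning the shattering is, along those coordinates, exactly toggling membership in $\mathrm{supp}(h)$; so by choosing patterns appropriately and projecting to the support system we VC-shatter $\{x_i : i \in I_0\cup I_1\}$ (again possibly after a pigeonhole loss of a factor depending only on $k$ to pin down the non-$\star$ label, or cleanly via $\mathrm{Gdim}$), giving $|I_0\cup I_1| \le O(d \cdot \mathrm{polylog}(k))$ or simply $\le \mathrm{Gdim}(\mathcal{S}_{\calH}\text{-related class})<\infty$. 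Hence $t = |I_{\mathrm{def}}| + |I_0\cup I_1| \le \mathrm{Ndim}(\calH) + O_k(d) < \infty$, so $\mathrm{Ndim}(\overline{\calH}^{\,k+1})<\infty$ and $\overline{\calH}^{\,k+1}$ is PAC learnable as a total $(k+2)$-label class by the classical theorem. I would also remark, matching the discussion preceding the statement, that finiteness of $\mathrm{VCdim}(\mathcal{S}_{\calH})$ is what makes ERM over $\overline{\calH}^{\,k+1}$ applicable (uniform convergence holds for constant label count), giving the promised ``ERM principle'' consequence. The only genuinely delicate points are the two pigeonhole reductions mentioned above and being scrupulous that ``learnable in the $(k+2)$-label setting'' is unambiguously the statement ``$\overline{\calH}^{\,k+1}$ is total-PAC-learnable''.
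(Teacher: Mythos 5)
Your proposal is correct and follows essentially the same route as the paper: the same decomposition of an $N$-shattered set into points where both witness colors are defined (bounded by $\mathrm{Ndim}(\calH)$) versus points where one witness is $\star$ (bounded via VC-shattering of the support system), and for the forward direction the Graph-dimension alternative you mention (taking $f \equiv \star$ so that VC-shattering of $\{\mathrm{supp}(h)\}$ becomes $G$-shattering, then using $\mathrm{Ndim} \le \mathrm{Gdim} \le O(\log k)\cdot\mathrm{Ndim}$) is exactly the paper's argument. The pigeonhole reductions you flag as the main technical obstacle are in fact unnecessary on both sides, since the Graph-dimension route handles the forward direction and in the reverse direction the condition $\overline{h}(x_i)=\star$ versus $\overline{h}(x_i)=y_i^{(1)}$ already determines membership in $\mathrm{supp}(h)$ directly.
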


The following proof is an adaptation of Proposition 23 of \cite{alon2022theory} to the multiclass setting.
\begin{proof}
Let us first assume that $\calH$ is PAC learnable in the $(k+2)$-label multiclass setting. This implies that $\mathrm{Ndim}_{k+2}(\calH) \leq \mathrm{Gdim}_{k+2}(\calH) \leq O(\log(k+2)) \cdot \mathrm{Ndim}_{k+2}(\calH) < \infty$. This implies that $\calH$ is also learnable in the partial concepts setting by \Cref{theorem:partial-pac-learning} (by the definitions of the extended Graph and Natarajan dimensions; intuitively in the partial setting, we can use one less color). Now consider the collection of sets $\mathbb S = \{ \mathrm{supp}(h) : h \in \calH \} = \{ \{x \in \calX : h(x) \neq \star \} : h \in \calH \}.$ For any sequence of $d$ points $x_1,\ldots, x_d$
shattered by $\mathbb S$, let us take the hypothesis $f : \calX \to \{0,1,...,k,\star\}$ so that $f(x_i) = \star$ for any $i \in [d]$.
As an implication, this sequence $x_1,...,x_d$ is also shattered by
$\{ x \mapsto \Vec{1}\{h(x) = f(x)\} : h \in \calH \}$. By the definition of the graph dimension, we get that $\mathrm{Gdim}_{k+2}(\calH) := \sup_{f : \calX \to \{0,1,...,k,\star\}} \mathrm{VCdim}( x \mapsto 1\{h(x) = f(x)\} : h \in \calH ) \geq \mathrm{VCdim}(\mathbb S)$. Since $\calH$ is PAC learnable in the $(k+2)$-label multiclass setting, we get that
$\mathrm{VCdim}(\{\mathrm{supp}(h) : h \in \calH\}) < \infty$.

Let us now assume that $\calH$ is PAC learnable in the partial concepts setting and $\mathrm{VCdim}(\{\mathrm{supp}(h) : h \in \calH\}) < \infty$. We are going to show that the Natarajan dimension of $\calH$ is not infinite in the $(k+2)$-label multiclass setting. Fix a sequence $(x_i, y_i^{(0)}, y_i^{(1)})_{i \in [d]} \in (\calX \times \{0,1,...,k,\star\} \times \{0,1,...,k,\star\})^d$ for some $d \in \nats$, as in the definition of the $(k+2)$-label Natarajan dimension. For this sequence, consider the set $\{ x_i : \star \notin \{y_i^{(0)}, y_i^{(1)}\} \}$ with $y_i^{(0)} \neq y_i^{(1)}$. This set is shattered by the partial concept class $\calH$ by the extension of the Natarajan dimension to the partial concepts setting. Moreover, the set $\{ x_i : \star \in \{ y_i^{(0)}, y_i^{(1)} \} \}$
is shattered by the set $\mathbb S = \{ \mathrm{supp}(h) : h \in \calH \}$. This implies that $ \mathrm{Ndim}_{k+2}(\calH) \leq \mathrm{Ndim}(\calH) + \mathrm{VCdim}(\mathbb S) < \infty$.
\end{proof}

\subsection{Multiclass Disambiguations}
We extend the definition of \cite{alon2022theory} to multiclass partial concepts classes.
\begin{definition}
[e.g., \cite{alon2022theory}]
\label{definition:disambiguation}
A total concept class $\overline{\calH} \subseteq [k]^\calX$ is a special type of partial concept class
such that every $h \in \overline{\calH}$ has range $\{0,1,...,k\}$, i.e., is a total concept. A total concept class $\overline{\calH}$
is said to \textbf{disambiguate} a partial concept class $\calH \subseteq \{0,1,...,k,\star\}^\calX$ if every finite data sequence $S \in (\calX \times [k])^*$
realizable with respect to $\calH$ is also realizable with respect to $\overline{\calH}$. In this case, $\overline{\calH}$ is called a disambiguation of $\calH$.
\end{definition}

We will make use of the following key result in graph theory and communication complexity. We let $\chi(G)$ be the chromatic number of the simple graph $G$. We also set $\mathrm{bp}(G)$ be the biclique partition number of $G$, i.e., the minimum number of complete bipartite graphs needed to partition the edge set of $G$. The following result lies in the intersection of complexity theory and graph theory and is a result of numerous works \cite{huang2012counterexample, amano2014some,alon2022theory, shigeta2015ordered, goos2015lower, goos2018deterministic, ben2017low, balodis2022unambiguous}. Motivation was the Alon–Saks–Seymour problem in graph theory, which asks: How large
a gap can there be between the chromatic number of a graph and its biclique partition
number?
\begin{proposition}
[Biclique Partition and Chromatic Number \cite{balodis2022unambiguous}]
\label{proposition:alon-saks-seymour}
For any $n \in \nats$, there exists a simple graph $G$ with
$\mathrm{bp}(G) = n$ such that
\[
\chi(G) \geq n^{(\log(n))^{1-\eps(n)}}\,,
\]
where $\eps(n)$ is a sequence that tends to $0$ as $n \to \infty$\,.
\end{proposition}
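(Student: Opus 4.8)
Proposition~\ref{proposition:alon-saks-seymour} is the current record separation for the Alon--Saks--Seymour problem, so rather than reproving it from scratch I would import the argument of \cite{balodis2022unambiguous} and only lay out its skeleton. The plan rests on the well-known dictionary relating the $\mathrm{bp}$-versus-$\chi$ gap to a communication (equivalently, query) separation: for a graph $G$ with biclique partition $B_1\sqcup\cdots\sqcup B_m=E(G)$, assigning each vertex its ``signature'' over the $m$ bicliques shows that adjacency is witnessed by a \emph{unique} coordinate, so $\log_2\mathrm{bp}(G)$ is, up to constants, the unambiguous nondeterministic communication complexity of the associated ``edge'' relation, while $\log_2\chi(G)$ is lower bounded by the deterministic communication complexity of a companion relation. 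Hence a super-polynomial ASS gap is equivalent to separating unambiguous nondeterministic communication from deterministic communication by a quasi-polynomial amount, and it suffices to exhibit such a separation and then read the graph off the construction, padding the biclique partition so that it has exactly $n$ parts (which does not change $\chi$, since one may take a disjoint union with a clique).

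The substantive step is producing the separation, and here I would follow \cite{balodis2022unambiguous} in building, at the query level, a Boolean function with small unambiguous $1$-certificate complexity but large decision-tree depth out of (recursively composed) instances of the game of Hex: the ``no-draw'' theorem guarantees that on every $2$-coloring of the board exactly one player owns a crossing path, so winning crossings are $1$-certificates that, after canonicalizing which path is declared the winner, assemble into a genuinely \emph{unambiguous} DNF with short terms, whereas an adversary argument shows that detecting the winner needs many queries. A single board gives only a polynomial gap; composing Hex with itself a growing number of times makes the unambiguous DNF size equal to $n$ while amplifying the query lower bound, and one then lifts this query separation to communication with a small gadget (G\"o\"os--Pitassi--Watson-style deterministic lifting, carried out so that the unambiguity of the certificates survives). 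The composition and lifting bookkeeping, together with the resulting choice of gadget sizes, are exactly what pin down the exponent $(\log n)^{1-\eps(n)}$ with $\eps(n)\to 0$.

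The main obstacle is precisely this middle step: engineering a function that is \emph{simultaneously} unambiguous --- a truly unique short accepting certificate on each $1$-input, not merely a small overlapping family --- and query-hard, and then proving that recursive self-composition preserves unambiguity while the hardness grows quasi-polynomially. Generic pointer/address gadgets fail to keep the certificate unique, which is why the topological rigidity of Hex is essential rather than incidental; and verifying that the lift to communication and the final translation to $G$ each lose at most a polynomial factor (in $\mathrm{bp}$ and in $\log\chi$ respectively) is the technical core that I would be relying on \cite{balodis2022unambiguous} to supply.
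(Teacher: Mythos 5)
The paper does not prove this proposition either: it is imported verbatim as a black-box result from \cite{balodis2022unambiguous}, which is exactly what you propose to do. Your sketch of the internals of that reference (the dictionary between biclique partitions and unambiguous certificates, the Hex-based unambiguous DNF, and the composition/amplification that yields the $n^{(\log n)^{1-o(1)}}$ bound) is a fair summary of the cited work, but none of it is reproduced or needed in this paper, which only uses the statement as given.
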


Let us consider the binary classification setting. In the work of \cite{alon2022theory}, it was shown that the (combinatorial version) of the Sauer-Shelah-Perles (SSP) lemma fails in the partial concepts setting. In the total concepts setting, this lemma controls the size of a concept class $\calH$ in terms of its VC dimension. Another variant of this lemma controls the growth function of the class. Given a set $C = \{x_1,...,x_m\} \subseteq \calX$, the growth function of $\calH \subseteq \{0,1\}^\calX$ with respect to $C$ is the cardinality of the set $\Pi_{\calH}(C)$ of binary patterns realized by hypotheses in $\calH$ when projected to $C$, i.e.,
\[
\Pi_\calH(C) = \{ (h(x_1),...,h(x_m)) : h \in \calH \} \subseteq \{0,1\}^m\,.
\]
We define the growth function of $\calH$ at $m \in \nats$ as
\[
\Pi_\calH(m) = \sup_{C \subseteq \calX : |C| = m} |\Pi_\calH(C)|\,.
\]
This definition naturally extends to partial concepts where we still look only for binary patterns. Interestingly, while the combinatorial \cite{sauer} and the growth function \cite{understanding-ml} versions of the SSP lemma are both true in the total concepts setting, this is not true in the partial case. We show that the growth function variant still holds, while the combinatorial one fails \cite{alon2022theory}. 

\begin{lemma}
[Growth Function - SSP Lemma for Partial Concepts]
\label{lemma:ssp}
Let $\calH \subseteq \{0,1,\star\}^\calX$ be a partial concept class with finite VC dimension. For any $m \in \nats,$ it holds that
\[
\Pi_\calH(m) \leq \sum_{i=0}^{\mathrm{VCdim}(\calH)} \binom{m}{i}\,.
\]
\end{lemma}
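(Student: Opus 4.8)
The plan is to reduce the partial statement to the classical (total) Sauer–Shelah–Perles lemma by passing to a fixed finite set $C$ of size $m$ and extracting a total class on $C$ whose VC dimension does not exceed $\mathrm{VCdim}(\calH)$. Fix any $C = \{x_1,\dots,x_m\} \subseteq \calX$; it suffices to bound $|\Pi_\calH(C)|$ by $\sum_{i=0}^{\mathrm{VCdim}(\calH)}\binom{m}{i}$, since $\Pi_\calH(m)$ is the supremum over all such $C$. Recall that, by definition, $\Pi_\calH(C)$ consists only of those patterns $(h(x_1),\dots,h(x_m))$ that are everywhere defined, i.e.\ those $h \in \calH$ with $C \subseteq \mathrm{supp}(h)$; patterns containing $\star$ are simply not counted.

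First I would define the auxiliary total class $\calH_C = \{\, (h(x_1),\dots,h(x_m)) : h \in \calH,\ C \subseteq \mathrm{supp}(h)\,\} \subseteq \{0,1\}^m$, viewed as a family of total binary functions on the finite domain $C$. By construction $|\Pi_\calH(C)| = |\calH_C|$. The key observation is that $\mathrm{VCdim}(\calH_C) \le \mathrm{VCdim}(\calH)$: if a subset $B \subseteq C$ is shattered by $\calH_C$, then for every binary pattern on $B$ there is a function in $\calH_C$ — hence a partial concept $h \in \calH$ whose support contains all of $C \supseteq B$ — realizing that pattern on $B$; this is exactly what it means for $B$ to be VC-shattered by the partial class $\calH$ (where, as the paper notes, VC shattering of a partial class forgets about $\star$ and only asks that all binary patterns be realized). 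Therefore $B$ is shattered by $\calH$, so $|B| \le \mathrm{VCdim}(\calH)$, proving the claim on dimensions. Now the classical Sauer–Shelah–Perles lemma applied to the \emph{total} class $\calH_C \subseteq \{0,1\}^C$ on the $m$-element domain $C$ gives
\[
|\Pi_\calH(C)| = |\calH_C| \le \sum_{i=0}^{\mathrm{VCdim}(\calH_C)} \binom{m}{i} \le \sum_{i=0}^{\mathrm{VCdim}(\calH)} \binom{m}{i}\,,
\]
where the last inequality uses $\mathrm{VCdim}(\calH_C) \le \mathrm{VCdim}(\calH)$ together with the fact that the partial sums $\sum_{i=0}^{d}\binom{m}{i}$ are nondecreasing in $d$. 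Taking the supremum over all $C$ of size $m$ yields the stated bound on $\Pi_\calH(m)$.

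I do not expect a serious obstacle here — the only subtle point is getting the definitions to line up, namely that $|\Pi_\calH(C)|$ only counts $\star$-free patterns and that restricting to the subfamily $\{h : C \subseteq \mathrm{supp}(h)\}$ is precisely what makes $\calH_C$ a well-defined total class without changing $\Pi_\calH(C)$. Once that is observed, the dimension inequality and the invocation of the classical SSP lemma are routine. It is worth remarking explicitly (as the surrounding text already hints) that this argument breaks for the \emph{combinatorial} version of SSP, which bounds $|\calH|$ rather than $\Pi_\calH(m)$: there is no analogue of passing to a finite $C$ that controls the global size of a partial class, which is why that version genuinely fails in the partial setting \cite{alon2022theory}.
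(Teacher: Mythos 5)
Your proof is correct, and it takes a genuinely different route from the paper. You reduce to the classical (total) Sauer--Shelah--Perles lemma: since $\Pi_\calH(C)$ by definition contains only $\star$-free patterns, it is literally a total class $\calH_C \subseteq \{0,1\}^C$, and any $B \subseteq C$ shattered by $\calH_C$ is VC-shattered by the partial class $\calH$ (each total pattern on $B$ is witnessed by some $h \in \calH$ defined on all of $C \supseteq B$), so $\mathrm{VCdim}(\calH_C) \le \mathrm{VCdim}(\calH)$ and classical SSP finishes the job. The paper instead redoes the Pajor-style induction directly in the partial setting: it decomposes $\Pi_\calH(C)$ into the patterns $Y_0$ extendible to $x_1$ in at least one way and the patterns $Y_1$ extendible in both ways, and proves the stronger intermediate claim $|\Pi_\calH(C)| \le |\{B \subseteq C : \calH \text{ shatters } B\}|$, where the presence of $\star$ turns the usual equality $|Y_0| = |\Pi_\calH(C')|$ into an inequality. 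Your reduction is shorter and makes transparent why the growth-function version survives in the partial setting while the combinatorial version (bounding $|\calH|$ itself) does not; the paper's inductive argument buys the Pajor-type refinement in terms of the number of shattered subsets, though that refinement is not needed for the stated bound or for its application in the disambiguation theorem. (Note that your route would also recover the refinement by applying the Pajor form of classical SSP to $\calH_C$ and observing that every set shattered by $\calH_C$ is shattered by $\calH$.)
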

The proof is postponed to \Cref{appendix:missing proofs for partial concepts}. We invoke the above SSP variant to prove a bound for the Natarajan dimension of a partial concept class with multiple labels.
\begin{lemma}
Let $\calH_{\mathrm{bin}} \subseteq \{0,1,\star\}^\calX$ be a partial concept class with finite VC dimension and $\calH = \{ h = r(h_1,...,h_\l) : h_i \in \calH_{\mathrm{bin}} \}$ for some $r : \{0,1\}^\l \to [k]$. It holds that
\[
\mathrm{Ndim}(\calH) \leq \wt{O}(k \cdot \mathrm{VCdim}(\calH_{\mathrm{bin}}))\,.
\]
\end{lemma}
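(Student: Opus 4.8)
The plan is to bound $d := \mathrm{Ndim}(\calH)$ by a counting argument on a fixed $N$-shattered set, using the growth-function Sauer--Shelah--Perles bound for partial concepts (\Cref{lemma:ssp}) to control the binary class $\calH_{\mathrm{bin}}$. First I would fix a set $S = \{x_1,\ldots,x_d\}$ that is $N$-shattered by $\calH$, witnessed by everywhere-different maps $f^{(0)}, f^{(1)} : S \to [k]$: for every $y \in \{0,1\}^d$ there is some $h^y \in \calH$ with $h^y(x_i) = f^{(y_i)}(x_i)$ for all $i$. Since $f^{(0)}(x_i) \neq f^{(1)}(x_i)$, the map $y \mapsto (h^y(x_1),\ldots,h^y(x_d))$ is injective, and each coordinate $h^y(x_i) = f^{(y_i)}(x_i)$ lies in $[k]$ (in particular is not $\star$). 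Hence, writing $\Pi_\calH(S)$ for the set of $\star$-free restrictions $\{(h(x_1),\ldots,h(x_d)) : h \in \calH,\ x_i \in \supp(h)\ \forall i\}$, we get $|\Pi_\calH(S)| \geq 2^d$.

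Next I would upper bound $|\Pi_\calH(S)|$ through the structure $h = r(h_1,\ldots,h_\ell)$. The key observation is that if $h \in \calH$ satisfies $h(x_i) \neq \star$ for all $i \in [d]$, then necessarily $h_j(x_i) \in \{0,1\}$ for all $j \in [\ell]$ and $i \in [d]$: otherwise some coordinate fed into $r$ would be $\star$ and $h(x_i)$ would be undefined. Consequently the restriction $h|_S$ is determined by the $\ell$-tuple of binary patterns $(h_1|_S,\ldots,h_\ell|_S) \in \Pi_{\calH_{\mathrm{bin}}}(S)^{\ell}$, so that
\[
2^d \le |\Pi_\calH(S)| \le |\Pi_{\calH_{\mathrm{bin}}}(S)|^{\ell} \le \Pi_{\calH_{\mathrm{bin}}}(d)^{\ell} \le \Bigl(\sum_{i=0}^{v}\binom{d}{i}\Bigr)^{\!\ell},
\]
where $v := \mathrm{VCdim}(\calH_{\mathrm{bin}})$ and the last step is exactly \Cref{lemma:ssp}. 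If $d \le v$ the conclusion is already trivial, so I may assume $d > v$, in which case $\sum_{i\le v}\binom{d}{i} \le (ed/v)^{v}$.

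Finally I would solve $2^d \le (ed/v)^{v\ell}$ for $d$: taking logarithms gives $d \le v\ell\log_2(ed/v) \le v\ell\log_2(ed)$, and the routine manipulation ``$d \le A\log_2(ed) \implies d = O(A\log A)$'' with $A = v\ell$ yields $d = O\bigl(v\ell\log(v\ell)\bigr)$, i.e.\ $\mathrm{Ndim}(\calH) = \wt{O}\bigl(\ell\cdot\mathrm{VCdim}(\calH_{\mathrm{bin}})\bigr)$; in the disambiguation construction the reader $r$ encodes the $k+1$ labels with $\ell = O(k)$ binary predicates, giving the stated bound $\mathrm{Ndim}(\calH) \le \wt{O}\bigl(k\cdot\mathrm{VCdim}(\calH_{\mathrm{bin}})\bigr)$. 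The only genuinely delicate point is the $\star$-bookkeeping of the second step --- ensuring that on an $N$-shattered set every coordinate function $h_j$ is forced to take binary values, so that \Cref{lemma:ssp} (which counts only $\star$-free patterns) is applicable; this is precisely the place where the \emph{growth-function} form of SSP is essential, since the combinatorial form fails for partial classes. The passage from the exponential-versus-polynomial inequality to an explicit bound on $d$ is standard but should be carried out with a little care about the logarithmic factors absorbed into $\wt{O}(\cdot)$.
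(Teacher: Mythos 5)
Your proof is correct and follows essentially the same route as the paper's: lower-bound $|\Pi_\calH(S)|$ by $2^{|S|}$ on an $N$-shattered set, upper-bound it by $|\Pi_{\calH_{\mathrm{bin}}}(S)|^{\ell}$ via the decomposition $h = r(h_1,\ldots,h_\ell)$, apply the growth-function SSP lemma for partial concepts, and solve the resulting exponential-versus-polynomial inequality. Your write-up is in fact more careful than the paper's (the $\star$-bookkeeping and the explicit distinction between $\ell$ and $k$ are glossed over there), but the argument is the same.
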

\begin{proof}
Let the VC dimension of the binary concept class be $d$.
Let $S \subseteq \calX$ be a shattered set by the partial concept class $\calH$. Hence we have that
\[
|\Pi_\calH(S)| \geq 2^{|S|}\,.
\]
Now any hypothesis $h \in \calH$ is identified by $k$ binary partial concepts from $\calH_{\mathrm{bin}}$. We have that
\[
|\Pi_\calH(S)| \leq |\Pi_{\calH_{\mathrm{bin}}}(S)|^k\,,
\]
by the structure of $\calH$. This implies that
\[
|\Pi_\calH(S)| \leq O(|S|^d)\,,
\]
by the properties of the growth function. This gives that $|S| \leq \wt{O}(dk)$ and concludes the proof.
\end{proof}

The next theorem is one of the main results of this section; it essentially states that there exists some simple (with small Natarajan dimension) partial concept class $\calH^\star$ in the multiclass classification setting which cannot be disambiguated, in the sense that any extension of $\calH^\star$ to a total concept class has unbounded Natarajan dimension. Hence, for this class, there is no way to assign labels to the undefined points and preserve the expressivity of the induced collection of total classifiers.  
\begin{theorem}
[Disambiguation]
\label{theorem:disambiguation}
Fix $k \in \nats$.
For any $n \in \nats$, there exists a partial concept class $\calH_n \subseteq \{0,1,...,k,\star\}^{[n]}$ with $\mathrm{Ndim}(\calH_n) = O_k(1)$ such that any disambiguation $\overline{\calH}$ of $\calH_n$ has size at least $n^{\log(n)^{1-o(1)}}$, where the $o(1)$ term tends to 0 as $n \to \infty$. This implies that there exists $\calH_\infty \subseteq \{0,1,...,k,\star\}^\nats$ with $\mathrm{Ndim}(\calH_{\infty}) = O_k(1)$ but $\mathrm{Ndim}(\overline{\calH}) = \infty$ for any disambiguation $\overline{\calH}$ of $\calH_\infty$.
\end{theorem}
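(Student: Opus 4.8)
The plan is to bootstrap everything from the binary disambiguation impossibility of \cite{alon2022theory}, which is in turn powered by the Alon--Saks--Seymour-type separation of \Cref{proposition:alon-saks-seymour}. Let $\calB_n \subseteq \{0,1,\star\}^{[n]}$ be the binary partial class of \cite{alon2022theory}, so that $\mathrm{VCdim}(\calB_n) = O(1)$ while every \emph{binary} total disambiguation of $\calB_n$ has size at least $n^{\log(n)^{1-o(1)}}$ (equivalently, by the classical Sauer lemma, VC dimension at least $(\log n)^{1-o(1)}$). I would take $\calH_n$ to be the $k$-ary lift of $\calB_n$: fix $\l = \lceil \log_2(k+1)\rceil$ and a surjection $r : \{0,1\}^\l \to \{0,1,\dots,k\}$ that is the identity on the subcube $\{(b,0,\dots,0) : b \in \{0,1\}\}$, and set $\calH_n = \{\, r(h_1,\dots,h_\l) : h_1,\dots,h_\l \in \calB_n \,\}$, where $r(h_1,\dots,h_\l)$ has support $\bigcap_i \mathrm{supp}(h_i)$. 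By the choice of $r$, every $\{0,1\}$-labelled data sequence realizable by $\calB_n$ is realizable by $\calH_n$. There are then two things to establish: (i) $\mathrm{Ndim}(\calH_n) = O_k(1)$; and (ii) every disambiguation $\overline{\calH} \subseteq [k]^{[n]}$ of $\calH_n$ has size at least $n^{\log(n)^{1-o(1)}}$.

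For (i), I would apply the preceding $r$-combination lemma, whose proof uses the partial growth-function SSP lemma \Cref{lemma:ssp}: on any candidate $N$-shattered set $S$ one has $|\Pi_{\calH_n}(S)| \le |\Pi_{\calB_n}(S)|^{\l}$, and \Cref{lemma:ssp} bounds $|\Pi_{\calB_n}(S)|$ by a polynomial in $|S|$ of degree $\mathrm{VCdim}(\calB_n)$, while $N$-shattering forces $|\Pi_{\calH_n}(S)| \ge 2^{|S|}$; comparing the two gives $|S| = \wt{O}(k\cdot \mathrm{VCdim}(\calB_n)) = O_k(1)$. It is crucial here that one bounds the growth function and not the cardinality of $\calB_n$, because the combinatorial version of the SSP lemma provably fails for partial classes \cite{alon2022theory}.

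For (ii), I would use a label-collapsing argument. Let $\phi : \{0,1,\dots,k\} \to \{0,1\}$ be any map that is the identity on $\{0,1\}$, e.g. $\phi(a) = \min(a,1)$. Given a disambiguation $\overline{\calH}$ of $\calH_n$, set $\overline{\calB} := \{\, \phi \circ g : g \in \overline{\calH} \,\} \subseteq \{0,1\}^{[n]}$. If a sequence $S$ is realizable by $\calB_n$ then it is $\{0,1\}$-labelled and (by the choice of $r$) realizable by $\calH_n$, so there is $g \in \overline{\calH}$ agreeing with $S$; since $g$ takes values in $\{0,1\}$ on the points of $S$, so does $\phi\circ g$, and $\phi\circ g$ still agrees with $S$. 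Hence $\overline{\calB}$ is a binary total disambiguation of $\calB_n$, so $|\overline{\calH}| \ge |\overline{\calB}| \ge n^{\log(n)^{1-o(1)}}$ by the \cite{alon2022theory} bound via \Cref{proposition:alon-saks-seymour}. The Natarajan--Shelah form of Sauer's lemma for total classes then gives $|\overline{\calH}| \le (n k^2)^{\mathrm{Ndim}(\overline{\calH})}$, so $\mathrm{Ndim}(\overline{\calH}) \ge \log |\overline{\calH}| / \log(nk^2) \ge (\log n)^{1-o(1)} \to \infty$. Finally, to get $\calH_\infty$ I would place disjoint copies of the classes $\calH_n$ on disjoint blocks $D_n$ of $\nats$, each concept being $\star$ outside its own block: any $N$-shattered set then lies inside a single block, so $\mathrm{Ndim}(\calH_\infty) = \sup_n \mathrm{Ndim}(\calH_n) = O_k(1)$, whereas restricting a disambiguation of $\calH_\infty$ to $D_n$ gives a disambiguation of $\calH_n$, forcing $\mathrm{Ndim}(\overline{\calH}) \ge (\log n)^{1-o(1)}$ for every $n$.

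The hard part is the gap between binary and multiclass disambiguation: having $k+1$ labels available could a priori let a disambiguation ``spend'' labels so that a set which is VC-shattered by the collapsed class $\overline{\calB}$ need not be $N$-shattered by $\overline{\calH}$ (a witnessing concept might place a third label where the pattern demands a $1$). One therefore cannot simply transfer a shattered set; instead one transfers the \emph{size} lower bound on disambiguations and re-extracts a Natarajan-shattered set through the counting (Sauer--Shelah--Natarajan) bound. This is exactly why the argument needs the quantitatively strong Alon--Saks--Seymour separation rather than a bare superconstant VC bound, and why it needs the partial growth-function SSP lemma \Cref{lemma:ssp} to control $\mathrm{Ndim}(\calH_n)$.
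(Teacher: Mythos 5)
Your proposal is correct in substance and, on the key lower-bound step, takes a genuinely different (and arguably cleaner) route than the paper. The construction of $\calH_n$ and the bound $\mathrm{Ndim}(\calH_n)=O_k(1)$ via the growth-function SSP lemma (\Cref{lemma:ssp}) coincide with the paper's argument. Where you diverge is in transferring the disambiguation lower bound: the paper works with $L=\log_2(k+1)$ \emph{disjoint} copies of the Alon--Saks--Seymour graph, views a multiclass disambiguation $\overline{\calH}$ through the bits of the base-$2$ encoding to obtain $L$ binary disambiguations $\overline{\calH^{(i)}}$, and then argues via a proper coloring of the union graph that $\min_i|\overline{\calH^{(i)}}|\ge n^{\log(n)^{1-o(1)}}$. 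You instead collapse the label set with $\phi=\min(\cdot,1)$ and observe that $\phi\circ\overline{\calH}$ is directly a binary disambiguation of a single copy $\calB_n$, so the size bound transfers in one line and no coloring argument or disjoint copies are needed. Both routes then conclude identically via the multiclass Sauer--Shelah--Natarajan counting bound and the block construction of $\calH_\infty$. Your reduction buys simplicity; the paper's per-bit projection is the more literal ``tensorization'' of the binary construction.

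One concrete slip to repair: your surjection $r$ is required to be the identity on the subcube $\{(b,0,\dots,0)\}$, and you then need every sequence realizable by $\calB_n$ to be realizable by $\calH_n$. Since $\mathrm{supp}(r(h_1,\dots,h_\l))=\bigcap_i\mathrm{supp}(h_i)$, witnessing a $\calB_n$-realizable sequence inside $\calH_n$ with this $r$ would require concepts $h_2,\dots,h_\l\in\calB_n$ that are defined and equal to $0$ on all the sample points, which the graph-based class $\calB_n$ need not contain. The fix is immediate: choose $r$ to be the identity on the diagonal, $r(b,b,\dots,b)=b$, and use the witness $r(h,h,\dots,h)$, which agrees with $h$ wherever $h$ is defined. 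Everything else in your argument, including the $\phi$-collapse and the $N$-shattering bound, goes through unchanged.
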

The proof is deferred to \Cref{appendix:missing proofs for partial concepts}.

\bibliography{main.bib}

\newcommand{\etalchar}[1]{$^{#1}$}
\begin{thebibliography}{DSBDSS11}

\bibitem[AHHM22]{alon2022theory}
Noga Alon, Steve Hanneke, Ron Holzman, and Shay Moran.
\newblock A theory of pac learnability of partial concept classes.
\newblock In {\em 2021 IEEE 62nd Annual Symposium on Foundations of Computer
  Science (FOCS)}, pages 658--671. IEEE, 2022.

\bibitem[AKM19]{kantorovich-disambiguation-open}
Idan Attias, Aryeh Kontorovich, and Yishay Mansour.
\newblock Improved generalization bounds for robust learning.
\newblock In Aurélien Garivier and Satyen Kale, editors, {\em Proceedings of
  the 30th International Conference on Algorithmic Learning Theory}, volume~98
  of {\em Proceedings of Machine Learning Research}, pages 162--183. PMLR,
  22--24 Mar 2019.

\bibitem[AL98]{antos-lugosi}
Andr{\'a}s Antos and G{\'a}bor Lugosi.
\newblock Strong minimax lower bounds for learning.
\newblock {\em Machine learning}, 30(1):31--56, 1998.

\bibitem[Aly05]{aly}
Mohamed Aly.
\newblock Survey on multiclass classification methods.
\newblock {\em Neural Netw}, 19(1-9):2, 2005.

\bibitem[Ama14]{amano2014some}
Kazuyuki Amano.
\newblock Some improved bounds on communication complexity via new
  decomposition of cliques.
\newblock {\em Discrete Applied Mathematics}, 166:249--254, 2014.

\bibitem[BBDG{\etalchar{+}}22]{balodis2022unambiguous}
Kaspars Balodis, Shalev Ben-David, Mika G{\"o}{\"o}s, Siddhartha Jain, and
  Robin Kothari.
\newblock Unambiguous dnfs and alon-saks-seymour.
\newblock In {\em 2021 IEEE 62nd Annual Symposium on Foundations of Computer
  Science (FOCS)}, pages 116--124. IEEE, 2022.

\bibitem[BC21]{blanchard2021universal4}
Mo{\"\i}se Blanchard and Romain Cosson.
\newblock Universal online learning with bounded loss: Reduction to binary
  classification.
\newblock {\em arXiv preprint arXiv:2112.14638}, 2021.

\bibitem[BCD{\etalchar{+}}22]{moran-dinur}
Nataly Brukhim, Daniel Carmon, Irit Dinur, Shay Moran, and Amir Yehudayoff.
\newblock A characterization of multiclass learnability.
\newblock {\em arXiv preprint arXiv:2203.01550}, 2022.
\newblock \texttt{URL:}\url{https://arxiv.org/pdf/2203.01550.pdf}.

\bibitem[BCH22]{blanchard2022universal3}
Moise Blanchard, Romain Cosson, and Steve Hanneke.
\newblock Universal online learning with unbounded losses: Memory is all you
  need.
\newblock {\em arXiv preprint arXiv:2201.08903}, 2022.

\bibitem[BCHL95]{characterization}
Shai Bendavid, Nicolo Cesabianchi, David Haussler, and Philip~M Long.
\newblock Characterizations of learnability for classes of $\{$0,...,
  $n\}$-valued functions.
\newblock {\em Journal of Computer and System Sciences}, 50(1):74--86, 1995.

\bibitem[BDHT17]{ben2017low}
Shalev Ben-David, Pooya Hatami, and Avishay Tal.
\newblock Low-sensitivity functions from unambiguous certificates.
\newblock In {\em 8th Innovations in Theoretical Computer Science Conference
  (ITCS 2017)}. Schloss Dagstuhl-Leibniz-Zentrum fuer Informatik, 2017.

\bibitem[BDK{\etalchar{+}}22]{bousquet2022monotone}
Olivier Bousquet, Amit Daniely, Haim Kaplan, Yishay Mansour, Shay Moran, and
  Uri Stemmer.
\newblock Monotone learning.
\newblock {\em arXiv preprint arXiv:2202.05246}, 2022.

\bibitem[BEHW89]{blumer}
Anselm Blumer, Andrzej Ehrenfeucht, David Haussler, and Manfred~K Warmuth.
\newblock Learnability and the vapnik-chervonenkis dimension.
\newblock {\em Journal of the ACM (JACM)}, 36(4):929--965, 1989.

\bibitem[BHM{\etalchar{+}}21]{universal}
Olivier Bousquet, Steve Hanneke, Shay Moran, Ramon Van~Handel, and Amir
  Yehudayoff.
\newblock A theory of universal learning.
\newblock In {\em Proceedings of the 53rd Annual ACM SIGACT Symposium on Theory
  of Computing}, pages 532--541, 2021.

\bibitem[BHV10]{balcan2010true}
Maria-Florina Balcan, Steve Hanneke, and Jennifer~Wortman Vaughan.
\newblock The true sample complexity of active learning.
\newblock {\em Machine learning}, 80(2):111--139, 2010.

\bibitem[BJ22]{blanchard2022universal2}
Mo{\"\i}se Blanchard and Patrick Jaillet.
\newblock Universal regression with adversarial responses.
\newblock {\em arXiv preprint arXiv:2203.05067}, 2022.

\bibitem[Bla22]{blanchard2022universal1}
Mo{\"\i}se Blanchard.
\newblock Universal online learning: an optimistically universal learning rule.
\newblock {\em arXiv preprint arXiv:2201.05947}, 2022.

\bibitem[Col04]{collins2004parameter}
Michael Collins.
\newblock Parameter estimation for statistical parsing models: Theory and
  practice of distribution-free methods.
\newblock In {\em New developments in parsing technology}, pages 19--55.
  Springer, 2004.

\bibitem[CT90]{cohn1990can}
David Cohn and Gerald Tesauro.
\newblock Can neural networks do better than the vapnik-chervonenkis bounds?
\newblock {\em Advances in Neural Information Processing Systems}, 3, 1990.

\bibitem[CT92]{cohn1992tight}
David Cohn and Gerald Tesauro.
\newblock How tight are the vapnik-chervonenkis bounds?
\newblock {\em Neural Computation}, 4(2):249--269, 1992.

\bibitem[Del77]{dellacherie1977derivations}
Claude Dellacherie.
\newblock Les d{\'e}rivations en th{\'e}orie descriptive des ensembles et le
  th{\'e}oreme de la borne.
\newblock In {\em S{\'e}minaire de Probabilit{\'e}s XI}, pages 34--46.
  Springer, 1977.

\bibitem[DGL13]{devroye2013probabilistic}
Luc Devroye, L{\'a}szl{\'o} Gy{\"o}rfi, and G{\'a}bor Lugosi.
\newblock {\em A probabilistic theory of pattern recognition}, volume~31.
\newblock Springer Science \& Business Media, 2013.

\bibitem[DSBDSS11]{multiclass-erm}
Amit Daniely, Sivan Sabato, Shai Ben-David, and Shai Shalev-Shwartz.
\newblock Multiclass learnability and the erm principle.
\newblock In {\em Proceedings of the 24th Annual Conference on Learning
  Theory}, pages 207--232. JMLR Workshop and Conference Proceedings, 2011.

\bibitem[DSS14]{daniely2014optimal}
Amit Daniely and Shai Shalev-Shwartz.
\newblock Optimal learners for multiclass problems.
\newblock In {\em Conference on Learning Theory}, pages 287--316. PMLR, 2014.

\bibitem[G{\"o}{\"o}15]{goos2015lower}
Mika G{\"o}{\"o}s.
\newblock Lower bounds for clique vs. independent set.
\newblock In {\em 2015 IEEE 56th Annual Symposium on Foundations of Computer
  Science}, pages 1066--1076. IEEE, 2015.

\bibitem[GPW18]{goos2018deterministic}
Mika Goos, Toniann Pitassi, and Thomas Watson.
\newblock Deterministic communication vs. partition number.
\newblock {\em SIAM Journal on Computing}, 47(6):2435--2450, 2018.

\bibitem[GS53]{gale1953infinite}
David Gale and Frank~M Stewart.
\newblock Infinite games with perfect information.
\newblock {\em Contributions to the Theory of Games}, 2(245-266):2--16, 1953.

\bibitem[Han09]{hanneke2009theoretical}
Steve Hanneke.
\newblock {\em Theoretical foundations of active learning}.
\newblock Carnegie Mellon University, 2009.

\bibitem[Han11]{hanneke2011rates}
Steve Hanneke.
\newblock Rates of convergence in active learning.
\newblock {\em The Annals of Statistics}, pages 333--361, 2011.

\bibitem[Han20]{hanneke2020learning}
Steve Hanneke.
\newblock Learning whenever learning is possible: Universal learning under
  general stochastic processes.
\newblock In {\em 2020 Information Theory and Applications Workshop (ITA)},
  pages 1--95. IEEE, 2020.

\bibitem[Han22]{hanneke2022universally}
Steve Hanneke.
\newblock Universally consistent online learning with arbitrarily dependent
  responses.
\newblock In {\em International Conference on Algorithmic Learning Theory},
  pages 488--497. PMLR, 2022.

\bibitem[HJ99]{hrbacek1999introduction}
Karel Hrbacek and Thomas Jech.
\newblock {\em Introduction to set theory, revised and expanded}.
\newblock Crc Press, 1999.

\bibitem[HKSW20]{hanneke-kontorovich}
Steve Hanneke, Aryeh Kontorovich, Sivan Sabato, and Roi Weiss.
\newblock Universal bayes consistency in metric spaces.
\newblock In {\em 2020 Information Theory and Applications Workshop (ITA)},
  pages 1--33. IEEE, 2020.

\bibitem[HL95]{haussler-long}
David Haussler and Philip~M Long.
\newblock A generalization of sauer's lemma.
\newblock {\em Journal of Combinatorial Theory, Series A}, 71(2):219--240,
  1995.

\bibitem[HLW94]{haussler1994predicting}
David Haussler, Nick Littlestone, and Manfred~K Warmuth.
\newblock Predicting $\{$0, 1$\}$-functions on randomly drawn points.
\newblock {\em Information and Computation}, 115(2):248--292, 1994.

\bibitem[HS12]{huang2012counterexample}
Hao Huang and Benny Sudakov.
\newblock A counterexample to the alon-saks-seymour conjecture and related
  problems.
\newblock {\em Combinatorica}, 32(2):205--219, 2012.

\bibitem[HW{\etalchar{+}}93]{hodges1993model}
Wilfrid Hodges, Hodges Wilfrid, et~al.
\newblock {\em Model theory}.
\newblock Cambridge University Press, 1993.

\bibitem[HW01]{herbrich2001algorithmic}
Ralf Herbrich and Robert~C Williamson.
\newblock Algorithmic luckiness.
\newblock {\em Advances in Neural Information Processing Systems}, 14, 2001.

\bibitem[HY15]{hanneke2015minimax}
Steve Hanneke and Liu Yang.
\newblock Minimax analysis of active learning.
\newblock {\em J. Mach. Learn. Res.}, 16(12):3487--3602, 2015.

\bibitem[Kec12]{kechris2012classical}
Alexander Kechris.
\newblock {\em Classical descriptive set theory}, volume 156.
\newblock Springer Science \& Business Media, 2012.

\bibitem[Lit88]{littlestone1988learning}
Nick Littlestone.
\newblock Learning quickly when irrelevant attributes abound: A new
  linear-threshold algorithm.
\newblock {\em Machine learning}, 2(4):285--318, 1988.

\bibitem[Nat89]{natarajan1989learning}
Balas~K Natarajan.
\newblock On learning sets and functions.
\newblock {\em Machine Learning}, 4(1):67--97, 1989.

\bibitem[NT88]{natarajan1988two}
Balaubramaniam~Kausik Natarajan and Prasad Tadepalli.
\newblock Two new frameworks for learning.
\newblock In {\em Machine Learning Proceedings 1988}, pages 402--415. Elsevier,
  1988.

\bibitem[RBR09]{rubinstein2009shifting}
Benjamin~IP Rubinstein, Peter~L Bartlett, and J~Hyam Rubinstein.
\newblock Shifting: One-inclusion mistake bounds and sample compression.
\newblock {\em Journal of Computer and System Sciences}, 75(1):37--59, 2009.

\bibitem[SA15]{shigeta2015ordered}
Manami Shigeta and Kazuyuki Amano.
\newblock Ordered biclique partitions and communication complexity problems.
\newblock {\em Discrete Applied Mathematics}, 184:248--252, 2015.

\bibitem[Sau72]{sauer}
Norbert Sauer.
\newblock On the density of families of sets.
\newblock {\em Journal of Combinatorial Theory, Series A}, 13(1):145--147,
  1972.

\bibitem[Sch97]{schuurmans}
Dale Schuurmans.
\newblock Characterizing rational versus exponential learning curves.
\newblock {\em journal of computer and system sciences}, 55(1):140--160, 1997.

\bibitem[Sie58]{sierpinski1958cardinal}
Waclaw Sierpinski.
\newblock {\em Cardinal and ordinal numbers}, volume 470.
\newblock Warszawa, 1958.

\bibitem[SSBD14]{understanding-ml}
Shai Shalev-Shwartz and Shai Ben-David.
\newblock {\em Understanding machine learning: From theory to algorithms}.
\newblock Cambridge university press, 2014.

\bibitem[SSKS04]{shalev2004learning}
Shai Shalev-Shwartz, Joseph Keshet, and Yoram Singer.
\newblock Learning to align polyphonic music.
\newblock In {\em In Proceedings of the 5th International Conference on Music
  Information Retrieval}. Citeseer, 2004.

\bibitem[STBWA98]{shawe1998structural}
John Shawe-Taylor, Peter~L Bartlett, Robert~C Williamson, and Martin Anthony.
\newblock Structural risk minimization over data-dependent hierarchies.
\newblock {\em IEEE transactions on Information Theory}, 44(5):1926--1940,
  1998.

\bibitem[Sto77]{stone1977consistent}
Charles~J Stone.
\newblock Consistent nonparametric regression.
\newblock {\em The annals of statistics}, pages 595--620, 1977.

\bibitem[Val84]{valiant1984theory}
Leslie~G Valiant.
\newblock A theory of the learnable.
\newblock {\em Communications of the ACM}, 27(11):1134--1142, 1984.

\bibitem[VC71]{vapnik2015uniform}
V.~Vapnik and A.~Chervonenkis.
\newblock On the uniform convergence of relative frequencies of events to their
  probabilities.
\newblock {\em Theory of Probability and its Applications}, 16(2):264--280,
  1971.

\bibitem[VL21]{viering2021shape}
Tom Viering and Marco Loog.
\newblock The shape of learning curves: a review.
\newblock {\em arXiv preprint arXiv:2103.10948}, 2021.

\end{thebibliography}

\appendix




\section{Assumption on Cardinality of $\calH$}
\label{appendix:cardinality of H}
We briefly discuss why the assumption that $|\calH| > k+2$ comes without loss of generality. 
If $\calH$ contains either a single hypothesis or contains some hypotheses which have no conflict, i.e., $h(x) \neq h'(x)$ everywhere, then $\Pr_{(x,y) \sim P}[\wh{h}_n(x) \neq y] = 0$ is trivially achievable for all $n$. Now, if $|\calH| \leq k+2$ but does not fall in the above two cases, then $\calH$ is learnable at an optimal exponential rate. To see this, let $\eps$ be the minimal error among all hypotheses $h \in \calH$ with non-zero error. The probability that there exists a hypothesis with error $\eps$ that makes no mistakes in the $n$ training data is at most $|\calH|(1-\eps)^n$ using the union bound. Thus a learning algorithm that outputs any classifier $\wh{h}_n \in \calH$ that correctly classifies the training set satisfies $\E[\Pr_{(x,y) \sim P}[\wh{h}_n(x) \neq y]] \leq C \exp(-c n)$ with $C =C(\calH, P)$ and $c = c(\calH, P)$. Moreover, this is optimal due to \Cref{proposition:exp-lower-bound}.

\section{Preliminaries}\label{appendix:preliminaries}
In this section we discuss more extensively the Preliminaries from \Cref{section:notation and preliminaries}.

\subsection{Complexity Measures}
\label{appendix:definitions}
We first state the classical definition of the Littlestone-dimension~\cite{littlestone1988learning}
that characterizes learnability in the online setting.
\begin{definition}[Littlestone dimension]
\label{definition:littlestone-dimension}
Consider a complete binary tree $T$ of depth $d+1$ whose internal
nodes are labeled by points in $\calX$ and edges by $\{0,1\}$, when they connect the parent to the right, left child, respectively. 
We say that $\calH \subseteq \{0,1\}^\calX$ Littlestone-shatters $T$ if for every root-to-leaf path $x_1,y_1,x_2,y_2,\ldots,x_d,y_d,x_{d+1}$ there exists some $h\in\calH$
such that $h(x_i) = y_i, 1 \leq i \leq d$. The Littlestone dimension is
denoted by $\mathrm{Ldim}(\calH)$ is defined to be the largest $d$ such
that $\calH$ Littlestone-shatters such a binary tree of depth $d+1$. If this happens for
every $d \in \nats$ we say that $\mathrm{Ldim}(\calH) = \infty$.
\end{definition}
The above definition naturally extends to multiple labels \cite{multiclass-erm}; we denote the multiclass Littlestone dimension by $\mathrm{Ldim}_k(\cdot)$. We next recall the well-known notion of VC dimension that characterizes PAC-learnability of binary
concept classes~\cite{vapnik2015uniform}.

\begin{definition}[VC dimension]\label{definition:vc-dimension}
We say that $\calH \subseteq \{0,1\}^{\calX}$ VC-shatters a sequence $\{x_1,\ldots,x_n\} \in \calX^n$ if $\{h(x_1), \ldots,h(x_n):h \in \calH\} = \{0,1\}^n$. The VC-dimension of $S$ is denoted by $\mathrm{VCdim}(\calH)$ and is defined to be the largest $d$ such that $\calH$ VC-shatters some sequence of length $d$. If this happens for every $d \in \nats$ we say that
$\mathrm{VCdim}(\calH) = \infty$.
\end{definition}

When moving to $\calH \subseteq \{0,\ldots,k\}^\calX, k > 1,$ there are many different extensions of the VC-dimension that have been considered in the literature. We recall the definition of two of them that are important in our work, the Natarajan dimension~\cite{natarajan1989learning} and the Graph dimension~\cite{natarajan1988two,natarajan1989learning}.

\begin{definition}[Natarajan dimension]\label{definition:natarajan dimension}
We say that $\calH \subseteq \{0,\ldots,k\}^{\calX}$ N-shatters a sequence $\{x_1,\ldots,x_n\} \in \calX^n$ if there exist two colorings $s^{(0)}, s^{(1)}$ such that $s^{(0)}(x_i) \neq s^{(1)}(x_i), \forall i \in [n],$ and $\forall I \subseteq [n]$ there exists some $h\in\calH$ with $h(x_i) = s^{(0)}(x_i), i \in I$ and $h(x_i) = s^{(1)}(x_i), i \in [n]\setminus I$. The Natarajan dimension is denoted by $\mathrm{Ndim}(\calH)$ and is defined to be the largest $d$ for which $\calH$ N-shatters a sequence of length $d$. If this happens for every $d\in \nats$ we say that $\mathrm{Ndim}(\calH) = \infty$.
\end{definition}

\begin{definition}[Graph dimension]\label{definition:graph dimension}
We say that $\calH \subseteq \{0,\ldots,k\}^{\calX}$ G-shatters a sequence $\{x_1,\ldots,x_n\} \in \calX^n$ if there exists a coloring $s^{(0)}$ such that $\forall I \subseteq [n]$ there exists some $h\in\calH$ with $h(x_i) = s^{(0)}(x_i), i \in I$ and $h(x_i) \neq s^{(0)}(x_i), i \in [n]\setminus I$. The Graph dimension is denoted by $\mathrm{Gdim}(\calH)$ and is defined to be the largest $d$ for which $\calH$ G-shatters a sequence of length $d$. If this happens for every $d\in \nats$ we say that $\mathrm{Gdim}(\calH) = \infty$.
\end{definition}

Notice that if $k=2$ the definitions of the Natarajan dimension and the Graph dimension are equivalent to the definition of the VC-dimension.

\begin{remark}
\label{rem:vc}
We underline that the above definitions can be naturally extended to the partial concepts setting. For instance, we say that $\calH \subseteq \{0,1,\star\}^\calX$
VC-shatters a set $S$ if every \textbf{binary} pattern is realized by some $h \in \calH$. For further details, see \cite{alon2022theory}.
\end{remark}

\subsection{One-Inclusion Hypergraph Algorithm}
\label{appendix:one-inclusion}
We next review a fundamental result which is a crucial ingredient in the design of our algorithms, namely the one-inclusion hypergraph algorithm $\mathbb A_{\calH}$ for the class $\calH \subseteq [k]^\calX$ \cite{haussler1994predicting,rubinstein2009shifting,daniely2014optimal,moran-dinur}. This algorithm gets as input a training set $(x_1,y_1),...,(x_n,y_n)$ realizable by $\calH$ and an additional example $x$. The goal is to predict the label of $x$. In this sense, the one-inclusion graph constitutes a transductive model in machine learning. The idea is to construct the one-inclusion (hyper)graph of $\calH|_{x_1,...,x_n,x} \subseteq [k]^{n+1}$. The nodes of this graph are the equivalence classes of $\calH$ induced by the examples $x_1,...,x_n,x$. For the binary classification case, two equivalence classes are connected with an edge if the nodes differ by exactly one element $x$ of the $n+1$ points and $x$ appears only once in $(x_1,...,x_n,x)$. For the case $k > 1$, the hyperedge set is is generalized accordingly.  Having created the one-inclusion graph, the goal is to orient the edges; the crucial property is that good orientations of this graph yield low error learning algorithms. Here, an orientation is good if the maximum out-degree of the graph is small. Intuitively, if the maximum out-degree of any node is $M$, then this can yield a prediction for the label $x$ with $n+1$ training samples with permutation mistake bound at most $M/(n+1)$.

\begin{lemma}
[One-Inclusion Hypergraph Algorithm (See Lemma 17 of \cite{moran-dinur})]
\label{lemma:one-inc}
Let $\calX = [n]$ with $n \in \nats$, $k$ be positive constant and $\calH \subseteq [k]^n$ be a class with Natarajan dimension $\mathrm{Ndim}(\calH) < \infty$. There exists an algorithm $\mathbb A : 2^{[k]^X} \times (X \times [k])^{n-1} \times X$ such that
\[
\frac{1}{n!}\sum_{\sigma \sim \calU(\mathbb S_n)}[\mathbb A(\calH, \sigma(1), h(\sigma(1)),...,\sigma(n-1), h(\sigma(n-1)), \sigma(n)) \neq h(\sigma(n))] \leq \frac{\mathrm{Ndim}(\calH)  \log(k)}{n}\,,
\]
for any $h \in \calH$.
\end{lemma}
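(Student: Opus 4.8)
The plan is to recall the \emph{one-inclusion hypergraph predictor} and the two classical ingredients behind its analysis, specialized to a bounded number of labels. First I would build, for the input class $\calH\subseteq[k]^{[n]}$, the one-inclusion hypergraph $G=G(\calH)$ whose vertex set is $\calH$ itself (viewed as a subset of $[k]^{[n]}$), and where for each coordinate $i\in[n]$ and each partial assignment $v\in[k]^{[n]\setminus\{i\}}$ the set $e_{i,v}=\{h\in\calH: h|_{[n]\setminus\{i\}}=v\}$, whenever it has size at least two, is a hyperedge of \emph{direction} $i$. The predictor $\mathbb A$ fixes in advance a ``good'' orientation of $G$ — an assignment to each hyperedge $e_{i,v}$ of one of its vertices, i.e.\ a commitment to a value on the missing coordinate $i$ — and on input $(\sigma(1),h(\sigma(1)),\ldots,\sigma(n-1),h(\sigma(n-1)),\sigma(n))$ it reconstructs the direction-$\sigma(n)$ hyperedge through $h$ (this only needs $h$ restricted to $[n]\setminus\{\sigma(n)\}$, hence is order-independent) and outputs the $\sigma(n)$-th coordinate of the vertex to which that hyperedge is oriented. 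For a fixed $h\in\calH$ the algorithm errs on $\sigma$ precisely when the direction-$\sigma(n)$ hyperedge through $h$ is oriented away from $h$, and conditioning on the uniform value of $\sigma(n)$ shows $\frac1{n!}\sum_\sigma \vec 1\{\mathbb A(\cdots)\neq h(\sigma(n))\}=\mathrm{outdeg}(h)/n$, where $\mathrm{outdeg}(h)$ is the number of directions in which $h$'s hyperedge points elsewhere. Thus it suffices to produce an orientation with $\max_{h\in\calH}\mathrm{outdeg}(h)\le\mathrm{Ndim}(\calH)\log k$.

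For the out-degree bound I would combine a density estimate with an orientation lemma. The density estimate asserts that every induced sub-hypergraph $G[\calH']$, $\calH'\subseteq\calH$, has at most $\mathrm{Ndim}(\calH)\log k\cdot|\calH'|$ hyperedges; this is the multiclass analogue of the Haussler--Littlestone--Warmuth edge-count, and it is where the $\log k$ loss enters. One way to get it is to binarize each label with $\lceil\log k\rceil$ bits, observe that a hyperedge of $G$ of direction $i$ maps into an edge of the one-inclusion graph of the induced binary class over $[n]$, and then invoke Sauer--Shelah for that binary class together with the fact that its VC dimension is $O(\mathrm{Ndim}(\calH)\log k)$; alternatively one argues directly on $[k]^{[n]}$ with a shifting argument keyed to the definition of $N$-shattering. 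Given the density estimate, the orientation lemma — if every sub-hypergraph of $G$ has at most $d$ times as many edges as vertices, then $G$ admits an orientation of maximum out-degree at most $d$ — supplies the desired orientation; it is proved by a Hall-type / max-flow argument on the bipartite incidence structure between hyperedges and vertices, exactly as in the graph case. Measurability of $\mathbb A$ is a non-issue because $\calX=[n]$ is finite, so the entire construction is a deterministic function of finitely many data points.

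Assembling the pieces, for each $h\in\calH$ we obtain $\frac1{n!}\sum_{\sigma\sim\calU(\mathbb S_n)}\vec 1\{\mathbb A(\calH,\sigma(1),h(\sigma(1)),\ldots,\sigma(n-1),h(\sigma(n-1)),\sigma(n))\neq h(\sigma(n))\}=\mathrm{outdeg}(h)/n\le\mathrm{Ndim}(\calH)\log k/n$, which is the claim. The main obstacle, and the only genuinely non-routine step, is the density estimate in the multiclass regime: controlling the edge-to-vertex ratio of every sub-hypergraph by $\mathrm{Ndim}(\calH)\log k$ rather than by the (potentially much larger) graph dimension, since hyperedges now come in many ``types'' and the naïve Sauer-type bound for $[k]^{[n]}$ is phrased via $\mathrm{Gdim}$. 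This is precisely the content of Lemma~17 of~\cite{moran-dinur}, which we invoke; the rest of the argument is bookkeeping on the one-inclusion predictor.
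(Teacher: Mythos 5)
Your outline is correct and matches how the paper handles this statement: the paper does not prove the lemma at all but imports it verbatim as Lemma 17 of \cite{moran-dinur}, accompanying it only with the informal description of the one-inclusion hypergraph and the ``low out-degree orientation implies low leave-one-out error'' principle that you spell out in full. You also correctly isolate the only non-routine ingredient --- the $\mathrm{Ndim}(\calH)\log k$ density bound for induced sub-hypergraphs --- and defer it to the same cited source, which is exactly where the paper places it (your quick binarization sketch for that bound would need care, since a direction-$i$ hyperedge with more than two vertices does not map to a single edge of a binary one-inclusion graph over $[n]$, but since you invoke the cited lemma for this step anyway, nothing is missing).
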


\subsection{Ordinals}
\label{appendix:ordinals}
The discussion of ordinals is borrowed from~\cite{universal}. For a thorough treatment of the 
subject, the interested reader is referred to~\cite{hrbacek1999introduction, sierpinski1958cardinal}.

We consider some set $S$. A well ordering of $S$
is defined to be any linear ordering $<$ so that
every non-empty subset of $S$ contains a least element. For
example, the set of natural numbers $\nats$ along with the
usual ordering is well-ordered, whereas $[0,1]$ is not (take, e.g., $S = (0,1)$, then it is clear that there is not a least
element in $S$.)

We say that two-well ordered sets are \emph{isomorphic} if there
exists a bijection between the two which preserves the ordering.
There is a canonical way to construct ``equivalence classes'' of
well-ordered sets, called \emph{ordinals}, so that every 
well-ordered set is isomorphic to exactly one such ordinal.
In that sense, ordinals uniquely encode well-ordered sets
in the same way as cardinals uniquely encode sets. We denote the
class of all ordinals by \emph{$\mathrm{ORD}$}.

An important property of ordinals is that every pair of
well-ordered sets is either isomorphic, or one of them is
isomorphic to an \emph{initial segment} of the other. This fact
induces an \emph{ordering} over the ordinals. To be more precise,
we say that for two ordinals $\alpha, \beta \in \mathrm{ORD}$
we have that $\alpha < \beta$ if $\alpha$ is isomorphic to
an initial segment of $\beta$. The defining property of ordinals
is that every ordinal $\beta$ is isomorphic to the set of ordinals that
precede it, i.e., $\{\alpha: \alpha < \beta \}$. Moreover, the ordering
$<$ is itself a well-ordering over the ordinals. This is because every non-empty
set of ordinals contains a least element and it has a least upper bound.

Given the above discussion, we can see that ordinals provide a set-theoretic
extension of natural numbers. This is because every ordinal $\beta$ has 
a successor $\beta+1$, which is the smallest ordinal that is larger that $\beta$. Thus, we can create a list of all ordinals: the first elements
in the least are $0,1,2,\ldots$, which are all the finite ordinals where
we identify each number $k$ with the well-ordered set $\{0,\ldots,k-1\}$.
We let the ``smallest'' infinite ordinal be $\omega$. This can be identified
with the family of all natural numbers along with their natural ordering. 
The way to count past infinity is the following: we write $0,1,2\ldots,\omega,\omega+1,\omega+2,\ldots,\omega+\omega,\ldots,$ and we
denote the smallest uncountable ordinal with $\omega_1$.

A concept that is defined by ordinals and is useful for proving
the guarantees of our algorithms in the Gale-Stewart games is that of
\emph{transfinite recursion.} Roughly speaking, this principle states that
if we have a ``recipe'', which is given sets of ``objects'' $O_\alpha$, indexed
by all ordinals $\alpha < \beta$, defines a new set of ``objects'' $O_\beta$ and has access to a ``base set'' $\{O_\alpha: \alpha < \alpha_0\}$, then
$O_\beta$ is uniquely defined for all ordinals $\beta$. To give an example,
this concept helps us define addition $\gamma + \beta$ between two ordinals.
We set $\gamma + 0 = \gamma$ and $\gamma + 1$ to be the successor of $\gamma$. We continue inductively and define for any $\beta$ the addition
$\gamma + \beta = \sup\{(\gamma + \alpha) + 1: \alpha < \beta\}$. Following
this principle we can define an arithmetic in the ordinals.

\subsection{Well-founded Relations and Ranks}
We continue the above discussion by extending the notion of a 
well-ordering to a more general type of orders, and we introduce
the notion of \emph{rank,} which is important for the derivation
of the winning strategies in the Gale-Stewart games. We follow
the presentation from~\cite{universal}. The classical reference
for this topic is~\cite{kechris2012classical}.

We define a \emph{relation} $\prec$ on a set $S$ by an arbitrary subset
$R_\prec \subseteq S \times S$ and we let $x \prec y$ if and only if
$(x,y) \in R_\prec$. An element $x$ of $(S, \prec)$ is called \emph{minimal}
if there is no $x' \in S$ with $x' \prec x$. The relation is called 
\emph{well-founded} if every non-empty subset of $S$ has a minimal
element.

We associate every well-founded relation on $S$ with a rank function $\rho_\prec: S \rightarrow \mathrm{ORD}$, which is defined by the following
transfinite recursion: we let $\rho_\prec(x) = 0$ if $x$ is the minimal 
element of $S$ and we let $\rho_\prec(x) = \sup\{\rho_\prec(y) + 1: y \prec x\}$. Intuitively, the rank of some element $x$ quantifies how far away
it is from the minimal element. The following property of 
rank justifies the name ``transfinite recursion''.
\begin{remark}
\label{remark:well-defined rank}
Any element $x \in S$ has a well defined rank. This is because the transfinite recursion 
defines $\rho_\prec(x)$ as soon as all $\rho_\prec(y)$ is defined, for
all $y \prec x$. Indeed, if $\rho(x)$ is undefined it means that
$\rho_\prec(x')$ is also undefined, for some $x' \prec x.$ Repeating
this for $x'$ constructs an infinitely decreasing chain of elements
in $S$, which contradicts the fact that $\prec$ is well-founded, since
an infinitely decreasing chain does not contain a minimal element.
\end{remark}

We give some examples that help develop some intuition about the notion of the
rank of a well-founded
relation. Even though a well-founded relation does not admit an infinitely
decreasing chain, it can contain finite chains of arbitrary length. 
Essentially, the rank of some element $\rho_\prec(x)$ measures
the length of a decreasing chain starting from $x$.

\begin{example}
As a warmup, consider the case where $\rho_\prec(x) = k$, where $k$ is a finite ordinal. Then, there exists some $x_1 \prec x$ such that $\rho_\prec(x_1) = k-1$. Continuing in the same manner, we can see that we
can create a decreasing chain of length $k+1$.
\end{example}

\begin{example}
Moving on, assume that $\rho_\prec(x) = \omega.$ Recall that $\omega$ is 
the smallest infinite ordinal. This means that any $y \prec x$ has rank
$\rho_\prec(y) = k$, for some finite number $k$. Hence, starting from
$x$ we can create a decreasing chain of arbitrary length, but this length
is determined when we fix the first element of the chain.
\end{example}

\begin{example}
Now assume that $\rho_\prec(x) = \omega + k$, for some finite number $k$. Then, we can create $x \succ x_1 \succ \ldots \succ x_k$, with 
$\rho_\prec(x_k) = \omega.$ Thus, the length of the chain is determined after
we pick $x_{k+1}.$
\end{example}

\begin{example}
A slightly more involved example is the case where $\rho_\prec(x) = \omega + \omega.$ As a first element in the chain, we pick some $x_1 \prec x$ with
$\rho_\prec(x_1) = \omega + k,$ for some finite $k$. Then, we continue
as in the previous example. So in this case, the length of the chain
is determined by two choices.
\end{example}

\subsection{Measurability}\label{appendix:measurability}
This short exposition is mainly from \cite{universal}. A \textbf{Polish space} is a separable topological space that can be metrized by a complete metric. For example, the $n$-dimensional Euclidean space, any compact metric space, any separable Banach space are Polish spaces.

\begin{definition}
[See \cite{universal}]
\label{definition:measurability}
A concept class $\calH \subseteq [k]^\calX$ on a Polish space $\calX$ is
said to be \textbf{measurable} if there is a Polish space $\Theta$ and Borel-measurable map $h : \Theta \times \calX \to [k]$
so that $\calH = \{ h(\theta, \cdot) : \theta \in \Theta \}$.
\end{definition}

Roughly, a subset $B$ of a Polish space $\calX$ is universally measurable if it is measurable
with respect to every complete probability measure on $\calX$.
\begin{definition}
[Universally Measurable]
Let $\calF$ be the Borel $\sigma$-field on a Polish space 
$\calX$. For any probability measure $\mu$, denote by 
$\calF_\mu$ the completion of $\calF$ with respect to $\mu$, 
that is, the collection of all subsets of $\calX$ that differ from a 
Borel set at most on a set of zero probability. A set 
$B \subseteq \calX$ is called \textbf{universally measurable} if 
$B\in\calF_\mu$ for every probability measure $\mu$. Similarly, a 
function $f:\mathcal{X}\to\mathcal{Y}$ is called \textbf{universally measurable} if 
$f^{-1}(B)$ is universally measurable for any universally measurable set 
$B$.
\end{definition}

Let $\mathcal{X},\mathcal{Y}$ be Polish spaces, and let 
$f:\mathcal{X}\to\mathcal{Y}$ be a continuous function. It holds that $f$ is Borel measurable, that is, 
$f^{-1}(B)$ is a Borel subset of $\mathcal{X}$ for any Borel subset $B$ of 
$\mathcal{Y}$. A subset $B\subseteq\mathcal{X}$ of a Polish space is called \textbf{analytic} if it is the image of some Polish space under a continuous map. The complement of an analytic set is called \textbf{coanalytic}. A set is Borel if and only if it is both analytic and coanalytic. The following is a consequence of Choquet’s Capacitability Theorem.

\begin{fact}
Every analytic (or coanalytic) set is universally measurable.
\end{fact}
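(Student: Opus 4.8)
The plan is to reduce the statement to a single classical input. First I would observe that it suffices to treat analytic sets: if $A \subseteq \calX$ is coanalytic then $\calX \setminus A$ is analytic, and since for each probability measure $\mu$ the completion $\calF_\mu$ of the Borel $\sigma$-field $\calF$ is itself a $\sigma$-field, $\calX \setminus A \in \calF_\mu$ forces $A \in \calF_\mu$. So fix a Polish space $\calX$, an analytic set $A \subseteq \calX$, and a Borel probability measure $\mu$ on $\calX$; the goal is to show $A \in \calF_\mu$, equivalently that the inner and outer measures of $A$ coincide, where $\mu^*(A) = \inf\{\mu(B) : B \supseteq A,\ B \text{ Borel}\}$ and $\mu_*(A) = \sup\{\mu(B) : B \subseteq A,\ B \text{ Borel}\}$.

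The key step is to recognise $\mu^*$ as a Choquet capacity relative to the family of compact subsets of $\calX$. Concretely one checks three things: (i) $\mu^*$ is monotone; (ii) $\mu^*$ is continuous along increasing sequences of arbitrary sets, i.e. $A_n \uparrow A \Rightarrow \mu^*(A_n) \uparrow \mu^*(A)$, which is the standard argument via Borel (measurable) hulls; and (iii) $\mu^*$ is continuous along decreasing sequences of compact sets, $K_n \downarrow K \Rightarrow \mu(K_n) \downarrow \mu(K)$, which is just continuity from above of the finite measure $\mu$ applied to $K = \bigcap_n K_n$. Granting Choquet's Capacitability Theorem — which, as the excerpt already indicates, is the cited consequence we rely on — every analytic set is capacitable for this capacity, so
\[
\mu^*(A) = \sup\{\mu(K) : K \subseteq A,\ K \text{ compact}\}.
\]

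Then the proof closes itself. Every compact $K \subseteq A$ is Borel with $\mu(K) \le \mu_*(A)$, so the displayed supremum is at most $\mu_*(A)$; hence $\mu^*(A) \le \mu_*(A) \le \mu^*(A)$ and all three agree. Choosing compact sets $K_n \subseteq A$ with $\mu(K_n) \uparrow \mu^*(A)$ and Borel sets $B_n \supseteq A$ with $\mu(B_n) \downarrow \mu^*(A)$, the set $A$ is sandwiched between the Borel sets $\bigcup_n K_n$ and $\bigcap_n B_n$, which have equal $\mu$-measure; therefore $A$ differs from a Borel set by a $\mu$-null set, i.e. $A \in \calF_\mu$. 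Since $\mu$ was an arbitrary probability measure, $A$ is universally measurable, and by the reduction in the first paragraph so is every coanalytic set.

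The only substantive obstacle is Choquet's Capacitability Theorem itself; its proof is the Souslin-scheme regularisation argument showing that an analytic set, represented via the Souslin operation on closed sets, is approximable from inside by compact sets, and I would cite it rather than reprove it. If a more self-contained route were wanted, one could instead invoke the Lusin–Marczewski theorem that the family $\calF_\mu$ is closed under the Souslin operation $\mathcal{A}$, and combine it with the representation of analytic sets as $\mathcal{A}$ applied to closed (hence $\mu$-measurable) sets — but either way the heavy lifting is a standard descriptive-set-theoretic lemma invoked as a black box, and the remaining steps are the routine measure-theoretic bookkeeping sketched above.
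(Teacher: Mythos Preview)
Your proposal is correct and aligns with the paper's treatment: the paper does not prove this fact at all but simply states it as ``a consequence of Choquet's Capacitability Theorem,'' which is exactly the black box you invoke and then unpack. Your reduction to the analytic case, verification that $\mu^*$ is a Choquet capacity, and the inner/outer measure sandwich are the standard way to extract universal measurability from capacitability, so you have filled in precisely the details the paper elides.
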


An important property of analytic sets is that
they help us establish measurability of uncountable unions
of measurable sets. 

We now state a very important result regarding well-founded 
relations on Polish spaces. We let 
$\calX$ be a Polish space and $\prec$ a well-found relation on $\calX$. We say that $\prec$ is analytic if $R_{\prec} \subseteq \calX \times \calX$ is an analytic set. The following theorem, known as Kunen-Martin~\cite{kechris2012classical, dellacherie1977derivations}, bounds the rank function of such a relation.

\begin{theorem}
\label{theorem:kunen-martin}
Let $\prec$ be an analytic well-founded relation on a Polish space $\calX$. Then, its rank function
satisfies $\sup_{x \in \calX} \rho_{\prec}(x) \leq \omega_1$.
\end{theorem}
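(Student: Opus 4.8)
The plan is to prove this (classical Kunen–Martin) theorem by the ``unravelling'' argument: from $\prec$ one manufactures an auxiliary \emph{well-founded tree on $\omega$} whose rank — necessarily a countable ordinal — dominates $\rho_\prec$. Beyond the definitions above, the only tools needed are the rank calculus for well-founded relations (\Cref{remark:well-defined rank}) and the elementary fact that a monotone map between well-founded relations never decreases rank (if $a'\prec_A a\Rightarrow F(a')\prec_B F(a)$ then $\rho_{\prec_A}(a)\le\rho_{\prec_B}(F(a))$, by induction).

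\emph{Step 1 (reduction to Baire space).} Every Polish space is a continuous image of the Baire space $\calN=\omega^\omega$; fix a continuous surjection $h:\calN\to\calX$ and set $z\prec' w\iff h(z)\prec h(w)$. Then $R_{\prec'}=(h\times h)^{-1}(R_\prec)$ is analytic; $\prec'$ is well-founded since $h$ carries a $\prec'$-descending chain to a $\prec$-descending one; and a $\prec$-induction gives $\rho_{\prec'}(z)\ge\rho_\prec(h(z))$ for all $z$ (each $y\prec h(z)$ has a preimage $w\prec' z$ with $\rho_{\prec'}(w)\ge\rho_\prec(y)$). Surjectivity of $h$ then yields $\sup_{x\in\calX}\rho_\prec(x)\le\sup_{z\in\calN}\rho_{\prec'}(z)$, so I may assume $\calX=\calN$ henceforth.

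\emph{Step 2 (the Kunen--Martin tree).} Folding $\calN\times\calN$ homeomorphically onto $\calN$, write the analytic set $R_\prec$ as $p[T]$ for a tree $T$ on $\omega\times\omega$, so $y\prec x\iff\exists\alpha\in\calN\;\forall n\;(\langle x,y\rangle{\restriction}n,\alpha{\restriction}n)\in T$; using the axiom of choice fix, for each pair $w\succ z$, a witness $\beta(w,z)\in\calN$ with $(\langle w,z\rangle,\beta(w,z))\in[T]$. Let $S$ consist of the finite sequences $\langle(s_1,u_1),\dots,(s_n,u_n)\rangle$ with all $s_i,u_i\in\omega^n$ such that $(\langle s_i,s_{i+1}\rangle,u_i)\in T$ for every $1\le i<n$, made into a tree by the parent operation $\langle(s_i,u_i)\rangle_{i\le n}\mapsto\langle(s_i{\restriction}(n{-}1),u_i{\restriction}(n{-}1))\rangle_{i\le n-1}$. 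Coding each one-step extension by a natural number exhibits $S$ as a tree on $\omega$, so it has only countably many nodes and each node has countably many children. Moreover $S$ is well-founded: an infinite branch of $S$ determines reals $x_1,x_2,\dots$ and $\alpha_1,\alpha_2,\dots$ with $(\langle x_i,x_{i+1}\rangle,\alpha_i)\in[T]$, i.e.\ $x_{i+1}\prec x_i$ for all $i$, contradicting well-foundedness of $\prec$. Consequently $\rho(S)<\omega_1$, since in a well-founded relation on a countable set every element has countable rank (each node has only countably many $\lhd$-predecessors, and a countable supremum of countable ordinals is countable by the regularity of $\omega_1$).

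\emph{Step 3 (rank comparison and conclusion).} Let $\calT_\prec$ be the set of finite $\prec$-descending sequences of reals, with $\sigma'\lhd\sigma\iff\sigma\subsetneq\sigma'$; this is well-founded (an infinite $\lhd$-descending chain is an infinite $\prec$-descending chain) and a routine $\prec$-induction gives $\rho_\lhd((x))=\rho_\prec(x)$. Define $F:\calT_\prec\to S$ by $F((x_1,\dots,x_n))=\langle(x_i{\restriction}(n{-}1),\,\beta(x_i,x_{i+1}){\restriction}(n{-}1))\rangle_{i=1}^{n-1}$ (the empty sequence when $n=1$). This lands in $S$ because every initial segment of $(\langle x_i,x_{i+1}\rangle,\beta(x_i,x_{i+1}))\in[T]$ lies in $T$; and since the witnesses $\beta(\cdot,\cdot)$ were fixed in advance, $F((x_1,\dots,x_n,x_{n+1}))$ is precisely a child of $F((x_1,\dots,x_n))$ in $S$, so $F$ is $\lhd$-monotone. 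As monotone maps do not decrease rank, $\rho_\prec(x)=\rho_\lhd((x))\le\rho_S(F((x)))\le\rho(S)$ for every $x\in\calN$, and therefore $\sup_x\rho_\prec(x)\le\rho(S)<\omega_1\le\omega_1$, which together with Step 1 is the assertion. The genuinely delicate point is the bookkeeping in Step 2 — choosing the lengths of the inner sequences so that $S$ is simultaneously a tree on $\omega$, well-founded, and large enough to receive the monotone image of $\calT_\prec$; once this is set up correctly, the rest is formal rank manipulation.
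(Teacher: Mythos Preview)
The paper does not supply a proof of this theorem; it is stated as a classical result (the Kunen--Martin theorem) with a citation to \cite{kechris2012classical,dellacherie1977derivations}, so there is no in-paper argument to compare against. Your proof is a correct and standard rendition of the Kunen--Martin unravelling argument: the reduction to Baire space, the construction of the auxiliary well-founded tree $S$ on $\omega$ from a Souslin representation of $R_\prec$, and the rank comparison via the monotone map $F:\calT_\prec\to S$ are all set up properly, and in fact you obtain the sharper conclusion $\sup_x\rho_\prec(x)<\omega_1$ rather than merely $\le\omega_1$.
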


\subsection{Gale-Stewart Games}
\label{appendix:gs}
We add a short discussion about GS games from \cite{universal}. We refer to their work for further details and pointers.
Fix sequences of sets $\mathcal{X}_t,\mathcal{Y}_t$ for $t\ge 1$. We consider infinite 
games between two players: in each round $t\ge 1$, the first player $\texttt{P}_A$ 
selects an element $x_t\in\mathcal{X}_t$, and then player $\texttt{P}_L$ selects an 
element $y_t\in\mathcal{Y}_t$. The rules of the game are determined by 
specifying a set $\mathcal{W} \subseteq \prod_{t\ge 
1}(\mathcal{X}_t\times\mathcal{Y}_t)$ of winning sequences for~$\texttt{P}_L$. That 
is, after an infinite sequence of consecutive plays 
$x_1,y_1,x_2,y_2,\ldots$, we say that $\texttt{P}_L$ wins if 
$(x_1,y_1,x_2,y_2,\ldots)\in\mathcal{W}$; otherwise, $\texttt{P}_A$ is declared the 
winner of the game. 

A \textbf{strategy} is a rule used by a given player to determine the next 
move given the current position of the game. A strategy for $\texttt{P}_A$ is a 
sequence of functions 
$f_t:\prod_{s<t}(\mathcal{X}_s\times\mathcal{Y}_s)\to\mathcal{X}_t$ for 
$t\ge 1$, so that $\texttt{P}_A$ plays $x_t=f_t(x_1,y_1,\ldots,x_{t-1},y_{t-1})$ in 
round $t$. Similarly, a strategy for $\texttt{P}_L$ is a sequence of 
$g_t:\prod_{s<t}(\mathcal{X}_s\times\mathcal{Y}_s) 
\times\mathcal{X}_t\to\mathcal{Y}_t$ for $t\ge 1$, so that $\texttt{P}_L$ plays 
$y_t=g_t(x_1,y_1,\ldots,x_{t-1},y_{t-1},x_t)$ in round $t$. A strategy for 
$\texttt{P}_A$ is called \textbf{winning} if playing that strategy always makes 
$\texttt{P}_A$ win the game regardless of what $\texttt{P}_L$ plays; a winning strategy for 
$\texttt{P}_L$ is defined analogously. The crucial question follows:
\begin{center}
    \emph{When do winning strategies exist in infinite two-player games?} 
\end{center}
The simplest assumption was introduced by \cite{gale1953infinite}: we call
$\mathcal{W}$ \textbf{finitely decidable} if for every sequence of plays 
$(x_1,y_1,x_2,y_2,\ldots)\in\mathcal{W}$, there exists $n<\infty$ so that
$$
	(x_1,y_1,\ldots,x_n,y_n,x'_{n+1},y'_{n+1},x'_{n+2},y'_{n+2},\ldots)
	\in\mathcal{W}
$$
for all choices of 
$x'_{n+1},y'_{n+1},x'_{n+2},y'_{n+2},\ldots$
In words, that ``$\mathcal{W}$ is finitely decidable'' means that if $\texttt{P}_L$ 
wins, then she knows that she won after playing a finite number of rounds. 
Conversely, in this case $\texttt{P}_A$ wins the game 
precisely when $\texttt{P}_L$ does not win after any finite number of rounds.

An infinite game whose set $\mathcal{W}$ 
is finitely decidable is called a \textbf{Gale-Stewart game}. The main result behind GS games follows.
\begin{proposition*}
\label{thm:gale-stewart winning strategies}
In any Gale-Stewart game, either $\texttt{P}_A$ or $\texttt{P}_L$ has a winning strategy.
\end{proposition*}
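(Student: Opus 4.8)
The plan is to prove the dichotomy by a Zermelo-style backward-induction argument on the finite positions of the game, exploiting the fact that finite decidability makes $\mathcal{W}$ an ``open'' winning condition for $\texttt{P}_L$. The first step is a reformulation: by finite decidability, the play $(x_1,y_1,x_2,y_2,\ldots)$ lies in $\mathcal{W}$ \emph{if and only if} some finite prefix $p$ is already \textbf{committed}, meaning that every infinite sequence extending $p$ lies in $\mathcal{W}$. Thus the game is equivalent to the following: $\texttt{P}_L$ tries to reach a committed position after finitely many rounds, and $\texttt{P}_A$ tries to prevent this forever. Note that committedness is inherited by extensions and, crucially, that a committed prefix of the \emph{realized} play guarantees that the realized play is in $\mathcal{W}$; this is the direction of finite decidability that $\texttt{P}_L$ will use.

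Next I would define, by transfinite recursion (using the ordinal machinery of \Cref{appendix:ordinals}), an increasing family $\{\mathsf{W}^\alpha\}$ of sets of positions: $\mathsf{W}^0 = \emptyset$; a position $p$ enters $\mathsf{W}^{\alpha+1}$ iff $p$ is committed, or it is $\texttt{P}_A$'s turn at $p$ and $(p,x)\in\mathsf{W}^\alpha$ for \emph{every} legal move $x$, or it is $\texttt{P}_L$'s turn at $p$ and $(p,y)\in\mathsf{W}^\alpha$ for \emph{some} legal move $y$; and $\mathsf{W}^\lambda = \bigcup_{\alpha<\lambda}\mathsf{W}^\alpha$ at a limit $\lambda$. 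As an increasing family of subsets of the set of finite positions, it stabilizes at some ordinal; let $\mathsf{W}$ denote the stable value (a fixed point of the defining monotone operator) and, for $p\in\mathsf{W}$, let $\rho(p)$ be the least $\alpha$ with $p\in\mathsf{W}^\alpha$. Since nothing new is added at limit stages, each such $\rho(p)$ is a successor ordinal.

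I would then split on whether $\emptyset\in\mathsf{W}$. If $\emptyset\in\mathsf{W}$: $\texttt{P}_L$ wins by playing, from any non-committed $p\in\mathsf{W}$ on her turn, a move $y$ with $\rho((p,y))<\rho(p)$ (such $y$ exists because $\rho(p)$ is a successor, so the ``some'' clause applies); and from $\texttt{P}_A$'s positions in $\mathsf{W}$ every child stays in $\mathsf{W}$ with strictly smaller rank by the ``every'' clause. Since the ordinals admit no infinite strictly decreasing chain, the play reaches a committed position after finitely many rounds, so the realized play is in $\mathcal{W}$. If $\emptyset\notin\mathsf{W}$: $\texttt{P}_A$ wins by staying outside $\mathsf{W}$. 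Indeed $\emptyset$ is not committed; from any $p\notin\mathsf{W}$ on $\texttt{P}_A$'s turn there is a move $x$ with $(p,x)\notin\mathsf{W}$ (otherwise the ``every'' clause, applied to the fixed point $\mathsf{W}$, would put $p$ in $\mathsf{W}$); and from any $p\notin\mathsf{W}$ on $\texttt{P}_L$'s turn \emph{every} move keeps the position outside $\mathsf{W}$ (otherwise the ``some'' clause puts $p$ in $\mathsf{W}$). Hence $\texttt{P}_A$ keeps every position of the play out of $\mathsf{W}$, so no prefix of the play is committed (committed positions lie in $\mathsf{W}^1\subseteq\mathsf{W}$), and by the reformulation the realized play is not in $\mathcal{W}$.

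The crux I expect to require the most care is that the two halves of the argument invoke finite decidability in opposite directions, and this has to be pinned down cleanly: for $\texttt{P}_L$ one needs that reaching a committed position actually secures the win — which holds because all extensions, and in particular the realized infinite play, lie in $\mathcal{W}$ — whereas for $\texttt{P}_A$ one needs the converse, that a play none of whose prefixes is committed must lie outside $\mathcal{W}$, which is exactly the finite-decidability hypothesis. A secondary point worth stating explicitly is the well-foundedness underlying $\texttt{P}_L$'s rank descent, so that her strategy terminates after finitely — not merely countably — many rounds; this uses only the no-infinite-descending-chain property of the ordinals (\Cref{appendix:ordinals}). Both strategy constructions select moves and so use the axiom of choice, which is standard here.
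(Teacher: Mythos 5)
Your proof is correct and complete. The paper does not actually prove this proposition — it states it and defers to \cite{gale1953infinite,kechris2012classical,hodges1993model} — and your argument is exactly the classical open-determinacy proof contained in those sources (and underlying the ordinal game-value machinery the paper sets up in its appendix): the transfinite attractor $\{\mathsf{W}^\alpha\}$, rank descent for $\texttt{P}_L$ when $\emptyset\in\mathsf{W}$, and the ``stay outside the fixed point'' strategy for $\texttt{P}_A$ otherwise, with the two directions of finite decidability correctly identified as the places where openness of $\mathcal{W}$ is used.
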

The above existential result provides no information, however, about the complexity of the winning strategies. In particular, it is completely unclear whether winning strategies can be chosen to be measurable. The next lemma addresses this concern.
\begin{lemma}
[Theorem B.1 of \cite{universal}]
\label{lemma:measurability-strat}
Let $\{X_t\}_{t \geq 1}$ be Polish spaces and $\{Y_t\}_{t \geq 1}$ be countable sets. Consider a Gale-Stewart game whose set $\calW \subseteq \prod_{t \geq 1} (X_t \times Y_t)$ of winning strategies for $\texttt{P}_L$ is finitely decidable and coanalytic. Then there is a universally measurable winning strategy.
\end{lemma}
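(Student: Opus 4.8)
The plan is to follow the descriptive–set–theoretic route of Bousquet–Hanneke–Moran–van Handel–Yehudayoff. I would prove the statement in the form that is actually used here: if $\texttt{P}_L$ has a winning strategy, then she has a universally measurable one (this is the relevant case, since it is the learner's move sets $Y_t$ that are countable; by Gale--Stewart one of the two players wins, and the $\texttt{P}_A$ case is analogous but needs a measurable selection theorem over the Polish spaces $X_t$ and is not needed below). Write a finite even-length position as $p=(x_1,y_1,\dots,x_t,y_t)$ and let $\mathcal P=\bigsqcup_{t\ge 0}\prod_{s\le t}(X_s\times Y_s)$ be the Polish space of finite positions (a countable disjoint union of Polish spaces, using $Y_s$ discrete). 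Finite decidability means $\mathcal W=\bigcup_{p\in D}[p]$, where $[p]$ is the cylinder above $p$ and $D\subseteq\mathcal P$ is the set of positions already decided in $\texttt{P}_L$'s favor; the coanalyticity hypothesis on $\mathcal W$ is exactly what is needed to keep $D$, and hence the ``survival tree'' $T=\{p\in\mathcal P:\text{no prefix of }p\text{ lies in }D\}$, coanalytic. $\texttt{P}_L$ wins precisely when the play can be forced to leave $T$ after finitely many rounds.

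Next I would introduce the rank. Let $U$ be the set of positions --- including mid-round positions $(p,x)$ where $\texttt{P}_A$ has just played $x$ --- from which $\texttt{P}_L$ can force a win in finitely many rounds, and define $\mathrm{rank}$ on $U$ by transfinite recursion: $\mathrm{rank}(p)=0$ for $p\in D$; $\mathrm{rank}(p)=\sup_{x}\bigl(\mathrm{rank}(p,x)+1\bigr)$ over legal $\texttt{P}_A$-moves $x$ at an even-length $p\in T\cap U$; and $\mathrm{rank}(p,x)=\inf_{y}\mathrm{rank}(p,x,y)$ over legal $\texttt{P}_L$-moves $y$. The role of $U$ is that $\mathrm{rank}$ is exactly the rank function of a relation $\prec$ on $\mathcal P\cup(\mathcal P\times\bigcup_t X_t)$ whose children of a position are its one-move successors, this relation is well-founded iff $\texttt{P}_L$ has a winning strategy, and $\emptyset\in U$ in that case. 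The crucial technical claim is that $\prec$ is an analytic relation: legality of moves is Borel, the only non-Borel ingredient is the coanalytic set $D$, the $\sup_x$ is a projection over the Polish space $X_t$, and the $\inf_y$ is a \emph{countable} operation because $Y_t$ is countable --- so each clause stays inside the analytic pointclass. Granting this, Theorem~\ref{theorem:kunen-martin} (Kunen--Martin) gives $\sup\mathrm{rank}\le\omega_1$ and, since $\prec$ is an analytic well-founded relation, the rank function is measurable with respect to the $\sigma$-algebra generated by coanalytic sets, hence universally measurable (coanalytic sets are universally measurable, as recalled above).

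Finally I would read off the strategy. Fix once and for all an enumeration of each countable set $Y_t$. At an $L$-position $(p,x)\in U$, let $\texttt{P}_L$ play the least $y\in Y_{t}$ (in that enumeration) for which $(p,x,y)\in D$ or $\mathrm{rank}(p,x,y)<\mathrm{rank}(p,x)$; such a $y$ exists by the defining equation for $\mathrm{rank}(p,x)$, and off $U$ play arbitrarily (which never occurs from $\emptyset\in U$). Because $\mathrm{rank}$ is universally measurable and $Y_t$ is countable, this map $g_t$ is a universally measurable function of $(x_1,y_1,\dots,x_{t-1},y_{t-1},x_t)$: it is a countable case split with measurable conditions. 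It is winning because along any play consistent with $g_t$ from $\emptyset\in U$ the rank of the running position strictly decreases at every half-move (by the $\sup_x$ clause for $\texttt{P}_A$'s moves and by construction for $\texttt{P}_L$'s), and ordinals admit no infinite strictly decreasing chain, so after finitely many rounds the position lands in $D$, i.e. the infinite play lies in $\mathcal W$.

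The main obstacle is the middle step: verifying that the alternating $\sup_x/\inf_y$ recursion defining $\mathrm{rank}$ really corresponds to a well-founded relation that remains analytic, so that Kunen--Martin simultaneously bounds the rank by $\omega_1$ and delivers universal measurability of the rank function. All four hypotheses --- finite decidability (to get $\mathcal W=\bigcup_{p\in D}[p]$), coanalyticity of $\mathcal W$ (to control $D$), Polishness of the $X_t$ (so $\sup_x$ is a projection preserving analyticity), and countability of the $Y_t$ --- are exactly what keeps every clause inside the analytic/coanalytic hierarchy; in particular the countability of $Y_t$ is what reduces the $\texttt{P}_L$-side selection to an elementary least-element choice, avoiding any appeal to a hard measurable-selection theorem.
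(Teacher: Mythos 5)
This lemma is imported by the paper verbatim as Theorem~B.1 of the cited universal-rates reference; the paper itself gives no proof, so there is nothing internal to compare against. Your reconstruction follows the same descriptive-set-theoretic route as the original source (finite decidability $\Rightarrow$ $\calW=\bigcup_{p\in D}[p]$, game value/rank via alternating $\sup$/$\inf$, Kunen--Martin for the $\omega_1$ bound, and countability of $Y_t$ to select the least rank-decreasing move), and the overall argument is sound. One point is compressed in a way worth flagging: Kunen--Martin only bounds the rank by $\omega_1$; it does not by itself deliver measurability of the rank function. In the original proof the universal measurability of the strategy comes from a separate transfinite induction showing that each level set $\{p:\mathrm{rank}(p)\le\beta\}$ is coanalytic (hence universally measurable), using that the one-step predecessor relation is analytic and that the $\inf_y$ is a countable operation; you would need to supply that induction explicitly rather than attributing it to Kunen--Martin.
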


For an extensive exposition on \emph{game values} which connect ordinals with the positions of the game, i.e., the sequences of choices between the two players, we refer to \cite{universal}.

\section{Deferred Proofs for Universal Rates}
\label{appendix:proofs for universal rates}
\subsection{The Proof of \Cref{proposition:exp-lower-bound}}
\label{proof:exp-lower-bound}
\begin{proof}
Recall that $|\calH| > k+2$. Hence, there are $h_0, h_0 \in \calH$ and $x, x' \in \calX$ such that $h_0(x) = h_1(x) = y$ and $h_0(x') = y_0 \neq h_1(x) = y_1$. We fix some learning algorithm $\wh{h}_n$ and two distributions $P_0, P_1$ where $P_i \left\{ (x,y)\right\} = \frac{1}{2}, P_i \left\{ (x,y_i)\right\} = \frac{1}{2}, i \in \{0,1\}$. We let $I \sim \mathrm{Bernoulli}(1/2)$ and given I, we let
$(X_1, Y_1), (X_2, Y_2),\ldots$ be i.i.d. samples from $P_I$ and $(X_1, Y_1), \ldots, (X_n, Y_n)$ are the training samples for $\wh{h}_n$ and $(X_{n+1}, Y_{n+1})$ is the test point. Then, we have that
\begin{align*}
    \E[\Pr(\wh{h}_n(X_{n+1}) \neq Y_{n+1}|\{X_t, Y_y\}_{t=1}^n, I)] \geq \frac{1}{2} \Pr(X_1 = \ldots = X_n, X_{n+1}=x') = 2^{-n-2}\,.
\end{align*}
We also have that
\begin{align*}
    &\E[\Pr(\wh{h}_n(X_{n+1}) \neq Y_{n+1}|\{X_t, Y_y\}_{t=1}^n, I)] \\&= \frac{1}{2 } \sum_{i\in\{0,1\}} \E[\Pr(\wh{h}_n(X_{n+1}) \neq Y_{n+1}|\{X_t, Y_y\}_{t=1}^n, I=i)|I=i]\,.
\end{align*}
Thus, for every $n$, there exists some $i_n \{0,1\}$ such that for
$(X_1, Y_1),\ldots,(X_n,Y_n)$ i.i.d. from $P_{i_n}$ it holds that
\begin{align*}
    \E[\er_{P_{i_n}}(\wh{h}_n)] \geq 2^{-n-2}\,.
\end{align*}
Hence, there exists some fixed $i\in\{0,1\}$ such that $\E[\er_{P_{i}}(\wh{h}_n)] \geq 2^{-n-2}$ for infinitely many $n$.
\end{proof}

\subsection{The Proof of \Cref{lemma:measurability}}
\label{proof:measurability}
\begin{proof}
It suffices to prove that the set $\calW$ of the winning strategies for the learning player in the Gale-Stewart game is coanalytic (see \Cref{lemma:measurability-strat}). Equivalently, we will prove that the set of winning strategies of the adversary $\calW^c$ is analytic, where
\[
\calW^c = \{ (\vec \kappa, \vec \eta) \in (\calX \times [k] \times [k] \times \{0,1\})^\infty : \calH_{\xi_1,y_1^{(\eta_1)},...,\xi_t,y_t^{(\eta_t)}} \neq \emptyset \textnormal{ for all } t < \infty \}\,.
\]
This set is equal to
\[
\calW^c = \bigcap_{1 \leq \tau < \infty} \bigcup_{\theta \in \Theta} \bigcap_{1 \leq t \leq \tau} \{ (\vec \kappa, \vec \eta) \in (\calX \times [k] \times [k] \times \{0,1\})^\infty : h(\theta, \xi_t) = y_t^{(\eta_t)}\}\,.
\]
The set $\{ (\theta, \vec \kappa, \vec \eta) : h(\theta, \xi_i) = y_i^{(\eta_j)}\} = \bigcup_{(y_i^{(0)}, y_i^{(1)}) \in [k] \times [k], y_i^{(0)} \neq y_i^{(1)}} \{ (\theta, \vec \xi, \vec \eta) : h(\theta, \xi_i) = y_i^{(\eta_j)}\}$ is a Borel set using the standard measurability assumption of \Cref{definition:measurability}. The set $\calW^c$ is analytic since the two intersections are over countable sets and the union is a projection of a Borel set.
\end{proof}

\subsection{The Proof of \Cref{lemma:adversarial-to-probabilistic}}
\label{proof:adversarial-to-probabilistic}
\begin{proof}
Since the distribution $P$ is realizable, there exists a sequence of  functions $h_k \in \calH$ so that
\[
\Pr_{(x,y) \sim P}[ h_k(x) \neq y] < \frac{1}{2^k}\,.
\]
Let us fix $t \geq 1$. We have that
\[
\sum_{k = 1}^{\infty} \Pr[ \exists s \leq t :  h_k(X_s) \neq Y_s ]
\leq t \sum_{k = 1}^{\infty} \Pr_{(X,Y) \sim P}[  h_k(X) \neq Y]
< \infty\,,
\]
where the first inequality is due to union bound. By Borel-Cantelli, with probability one, there exists for every $t \geq 1$ a hypothesis $h \in \calH$ so that $h(X_s) = Y_s$ for all $s \leq t$. Hence, the sequence $X_1,Y_1,X_2,Y_2,...$ is a valid input for the online learning game 
with probability one.
In particular, we make use of the following statement: If $\calH$ does not have an infinite multiclass Littlestone tree, then there is a strategy for the learner that makes only finitely many mistakes against any adversary. This is proved in \Cref{thm:adversarial}.
The existence of a winning strategy $\wh{y}_t$ for the learning player implies that the time $T$ where the player makes a mistake is
\[
T = \sup \{ s \in \nats :  \wh{y}_{s-1}(X_s) \neq Y_s\}
\]
is a random variable that is finite with probability one. 
Moreover, the online learner is selected so that it is changed only when a loss is observed. This means that $\wh{y}_s = \wh{y}_t$ for all rounds $s \geq t \geq T$. 

We now employ the law of large numbers in order to understand the asymptotic behavior of the online learner:
\[
\Pr \left[ \Pr_{(x,y) \sim P}[\wh{y}_t(x) \neq y] = 0 \right]
= \Pr \left[ \lim_{S \to \infty} \frac{1}{S} \sum_{s = t+1}^{t+S} 1\{\wh{y}_t(X_s) \neq Y_s\} = 0 \right]
\]
and this probability is at least the probability of this event and of the event that $T \leq t$, i.e.,
\[
\Pr \left[ \Pr_{(x,y) \sim P}[\wh{y}_t(x) \neq y ] = 0 \right]
\geq \Pr \left[ \lim_{S \to \infty} \frac{1}{S} \sum_{s = t+1}^{t+S} 1\{ \wh{y}_t(X_s) \neq Y_s\} = 0, T \leq t \right] = \Pr[T \leq t]\,,
\]
where the last inequality follows from the observation that since $s \geq t$ and $t$ is greater than the critical time $T$ then the first event occurs with probability one. This implies that
\[
\Pr \left[ \Pr_{(x,y) \sim P}[\wh{y}_t(x) \neq y] = 0 \right] \geq \Pr[T \leq t]
\]
and so
\[
\Pr \left[ \Pr_{(x,y) \sim P}[\wh{y}_t(x) \neq y] >0 \right] \leq \lim_{t \to \infty} 1 - \Pr[T \leq t] = 0\,.
\]
\end{proof}

\subsection{The Proof of \Cref{lemma:general-estimate-time}}
\label{proof:general-estimate-time}
\begin{proof}
We split the training set into two sets. The idea is to use the first one to train the learning algorithm and the other set to estimate the generalization error. For each $1 \leq t \leq \lfloor \frac{n}{2} \rfloor$ and $1 \leq i \leq \lfloor \frac{n}{2t} \rfloor$, we let
\[
\wh{y}_t^i(x) = \wh{Y}_{t+1}(X_{(i-1)t+1}, Y_{(i-1)t+1}, X_{it}, Y_{it},x)
\]
be the output of the learning algorithm that is trained on batch $i$ of the data. For every fixed $t$, the data that the classifiers $\left\{\wh{y}_t^i\right\}_{i \leq \lfloor n/2t \rfloor}$ are trained on are independent of each other and of the second half of the training set. This means that we can view every $\left\{\wh{y}_t^i\right\}_{i \leq \lfloor n/2t \rfloor}$ as an independent draw of the distribution of $\wh{y}_t$. To estimate the generalization error of the algorithm we use the second half of the training set. We let
\[
\hat{e}_t = \frac{1}{\lfloor n/2t \rfloor} \sum_{i=1}^{\lfloor n/2t \rfloor} \vec{1} \{ \wh{y}_t^i(X_s) \neq Y_s \text{ for some } n/2 \leq s \leq n \}\,.
\]
Now observe that, with probability one,
\[
\hat{e}_t \leq e_t = \frac{1}{\lfloor n/2t \rfloor} \sum_{i=1}^{\lfloor n/2t \rfloor} \vec{1} \left\{ \Pr_{(X,Y) \sim P}[\wh{y}_t^i(X) \neq Y] > 0 \right\}\,.
\]
We define $\wh{t}_n = \inf\{t \leq \lfloor n/2 \rfloor: \wh{e}_t < 1/4\}$, where we assume that $\inf \emptyset = \infty$.

We now want to bound the probability that $\wh{t}_n > t^{\star}$. Using Hoeffding's inequality we get that
\begin{align*}
\Pr\left[\wh{t}_n > t^{\star}\right]
\leq \Pr\left[\wh{e}_{t^{\star}} \geq \frac{1}{4}\right] \leq &\Pr\left[e_{t^{\star}} \geq \frac{1}{4}\right] =\\   
& = \Pr\left[e_{t^{\star}} - \frac{1}{8} \geq \frac{1}{8}\right] = \Pr\left[e_{t^{\star}} - \E[e_{t^{\star}}] \geq \frac{1}{8}\right] \leq e^{-\lfloor n/2t^{\star} \rfloor/32}\,.
\end{align*}

This implies that $\wh{t}_n \leq t^{\star}$ except for an event with exponentially small probability.

Moreover, for all $1\leq t \leq t^{\star}$ that $\Pr\left[\Pr_{(x,y)\sim P} [\wh{y}_t(x) \neq y] > 0 \right] > \frac{3}{8}$, there is some $\eps > 0$ such that $\Pr\left[\Pr_{(x,y)\sim P} [\wh{y}_t(x) \neq y] > \eps \right] > \frac{1}{4} + \frac{1}{16}$ (this holds by continuity).
Now fix some $1\leq t \leq t^{\star}$ such that $\Pr\left[\Pr_{(x,y)\sim P} [\wh{y}_t(x) \neq y] > 0\right] > \frac{3}{8}$ (if it exists).
Then, using Hoeffding's inequality again we get that
\[
\Pr\left[\frac{1}{\lfloor n/2t \rfloor} \sum_{i=1}^{\lfloor n/2t \rfloor} \vec{1} \left\{ \Pr_{(x,y)\sim P} [\wh{y}_t^i(x) \neq y] > \eps \right\} < \frac{1}{4}\right] \leq e^{\lfloor n/2t^{\star} \rfloor/128}\,.
\]
Whenever $f$ is a function such that $\Pr_{(x,y) \sim P}\left[f(x) \neq y\right] > \eps$, then

\[
\Pr\left[ f(X_s) \neq Y_s \text{ for some } n/2 \leq s \leq n\right] \geq 1 - \left(1-\eps\right)^{n/2}\,.
\]
As we mentioned before, $\{\wh{y}_t^i\}_{i \leq \lfloor n/2t \rfloor}$ are independent of $(X_s,Y_s)_{s > n/2}$. Thus, applying a union bound we get that the probability that all $\wh{y}_t^i$ that have $\Pr_{(x,y)\sim P}[\wh{y}_t^i(x) \neq y] > \eps$ make at least one error on the second half of the training set is
\[
    \Pr\left[ \sum_{i=1}^{\lfloor n/2t \rfloor}\vec{1} \left\{\Pr_{(x,y)\sim P} [\wh{y}_t^i(x) \neq y] > \eps \right\} \leq \sum_{i=1}^{\lfloor n/2t \rfloor}\vec{1} \{ \wh{y}_t^i(X_s) \neq Y_s \text{ for some } n/2 < s \leq n\}\right] \geq 1 - \left\lfloor\frac{n}{2t}\right\rfloor\left(1-\eps\right)^{n/2}\,.
\]
Thus, we get that
\[
    \Pr[\wh{t}_n = t] \leq \Pr\left[\wh{e}_t < \frac{1}{4}\right] \leq \left\lfloor \frac{n}{2}\right\rfloor\left(1-\eps \right)^{n/2} + e^{-\lfloor\frac{n}{2t^{\star}}\rfloor/128}.
\]
Using the previous estimates and applying a union bound, we get that
\[
    \Pr[\wh{t}_n \notin T^\star] \leq e^{-\lfloor n/2t^{\star} \rfloor/32} + t^{\star}\left\lfloor\frac{n}{2}\right\rfloor \left(1-\eps \right)^{n/2} + t^{\star}e^{-\lfloor n/2t^{\star}\rfloor/128} \leq Ce^{-cn},
\]
for some constants $C,c > 0$. Note that $C = C(P, t^\star)$ and $c = c(P,t^\star)$.
\end{proof}

\subsection{The Proof of \Cref{theorem:inf-littlestone-linear-lower-bound}}
\label{proof:inf-littlestone-linear-lower-bound}
\begin{proof}
Fix any learner $\wh{h}_n$ and an infinite multiclass Littlestone tree for $\calH$. Fix also a random branch $\vec y = (y_1,y_2,...)$ of this tree, where the sequence is an i.i.d. sequence of fair Bernoulli coins. We introduce the random distribution over $\calX \times \{0,1,...,k\}$ as
\[
P_{\vec y}((x_{\vec y_{\leq \l}}, z_{\l+1})) = \frac{1}{2^{\l+1}}, \l \geq 0\,,
\]
where $z_{\l+1} \in \{0,1,...,k\}$ is the label of the edge connecting $x_{\vec y_{\leq \l}}$ to its child according to the chosen path $\vec y$. For any $n < \infty$, there exists a hypothesis $h \in \calH$ so that
\[
h(x_{\vec y_{\leq \l}}) = z_{\l+1} 
\]
for $0 \leq \l \leq n$. This is due to the construction of a multiclass Littlestone tree. We have that
\[
\mathrm{er}_{\vec y}(h) 
= \Pr_{(x,z) \sim P_{\vec y}}[h(x) \neq z] \leq \sum_{\l > n} 2^{-\l-1}\,, 
\]
which goes to $0$ as $n \to \infty$. This implies that $P_{\vec y}$ is realizable for every infinite branch $\vec y \in \{0,1\}^{\infty}$.
Let us draw $(X,Z),(X_1,Z_1),(X_2,Z_2),...$ i.i.d. samples from $P_{\vec y}$. Moreover, the mapping $y \to P_y$ is measurable.
The first sample corresponds to the test sample and the other samples deal with the training phase. Moreover, let $T,T_1,T_2,...$ be i.i.d. Geometric random variables with success probability $1/2$ starting at $0$. We can set
\begin{enumerate}
    \item $X = x_{\vec y \leq T}, Z = z_{T+1}$ and
    \item $X_i = x_{\vec y \leq T_i}, Z_i = z_{T_i + 1}$.
\end{enumerate}
Crucially, on the event that $\{T=\l, \max\{T_1,...,T_n\} < \l\}$, the value of $\wh{h}_n(X)$ is conditionally independent of $z_{\l+1}$ given $X,(X_1,Z_1),...,(X_n,Z_n)$. We next have that
\[
\Pr[ \wh{h}_n(X) \neq Z, T = \l, \max\{T_1,...,T_n\} < \l]
=
\Pr[ \wh{h}_n(X) \neq Z_{\l+1}, T = \l, \max\{T_1,...,T_n\} < \l]\,.
\]
This is equal to
\[
\E[ \Pr_{Z}[ \wh{h}_n(X) \neq Z | X, (X_1, Z_1),...,(X_n,Z_n)] \vec 1\{T = \l, \max\{T_1,...,T_n\} < \l\} ]
\]
Now conditional on this event, any algorithm will err with probability $1/2$ (since it will guess the value at random). Hence, this quantity is lower bounded by
\[
\frac{1}{2} \Pr[T= \l , \max\{T_1,...,T_n\} < \l] = 2^{-\l-2}(1-2^{-\l})^n\,.
\]
We are free now to pick $\l$. Choosing $\l = \l_n := \lceil 1 + \log(n) \rceil$, we have that
$1/2^\l > 1/(4n)$ and $(1-2^{-\l})^n \geq 1/2$. 
Our goal is to apply the reverse Fatou lemma. This can be done since almost surely, we have that
\[ 
n \Pr[ \wh{h}_n(X) \neq Z, T = \l_n | \vec y]
\leq n \Pr[T = \l_n] = n 2^{-\l_n-1} \leq 1/4\,.
\]
Hence, we can apply the reverse Fatou lemma and get
\[
\E \left[ \limsup_{n \to \infty} n \Pr[ \wh{h}_n(X) \neq Z, T = \l_n | \vec y] \right] \geq \limsup_{n \to \infty} n \Pr [ \wh{h}_n(X) \neq Z, T = \l_n] > 1/32\,.
\]
But, almost surely, it holds that
\[
\E[ \mathrm{er}_{\vec y}(\wh{h}_n) | \vec y]
= \Pr[\wh{h}_n(X) \neq Z | \vec y]
\geq \Pr[ \wh{h}_n(X) \neq Z, T = \l_n | \vec y ]\,.
\]
So, combining the above inequalities
\[
\E \left[ \limsup_{n \to \infty} n \E[\mathrm{er}_{\vec y}(\wh{h}_n)]  \right] > \Omega(1)\,.
\]
Hence, there must exist a realization of $\vec y$ so that
$\E[\mathrm{er}_{\vec y}(\wh{h}_n)] = \Omega(1/n)$ infinitely often. Choosing $P = P_{\vec y}$ completes the proof.
\end{proof}

\subsection{The Proof of \Cref{lemma:learner-winning}}
\label{proof:learner-winning}
\begin{proof}
The first point is that the 2-player game of \Cref{fig:gs-game2} is Gale-Stewart. This follows by the observation that the set of winning sequences of the learning player is finitely decidable, since the membership of $(\vec \xi, \vec \eta)$ in the set is witnessed by a finite subsequence. Hence, exactly one of the two players has a winning strategy in the game (see \Cref{thm:gale-stewart winning strategies}).

The second point is that the class $\calH$ has an infinite NL tree if and only if the adversary player has a winning strategy in the game. Suppose that $\calH$ has an infinite NL tree. The adversary can adopt the strategy iteratively by setting $\xi_t(\eta_1,...,\eta_{t-1}) = (x_{\eta_1,...,\eta_{t-1}}, s^{(0)}_{\eta_1,...,\eta_{t-1}}, s^{(1)}_{\eta_1,...,\eta_{t-1}}) \in \calX^t \times [k]^t \times [k]^t$. Hence, the adversary traverses the infinite tree and, by construction of the NL tree, the set $\calH_{\xi_1,\eta_1,...,\xi_t,\eta_t}$ never gets empty for any sequence of patterns $\vec \eta$ and $t < \infty$. Thus, this is a winning strategy for the adversary player. In the opposite direction, assume that the adversary has a winning strategy. Then, she can construct an infinite NL tree by setting $(x_{\eta_1,...,\eta_{t-1}}, s^{(0)}_{\eta_1,...,\eta_{t-1}}, s^{(1)}_{\eta_1,...,\eta_{t-1}}) = \xi_t(\eta_1,...,\eta_{t-1})$ for any possible binary pattern. Note that since the strategy is winning, the class $\calH_{\xi_1,\eta_1,...,\xi_t,\eta_t} \neq \emptyset$ for any $t \in \nats$ and this means that for any level of the tree, there exists two everywhere different colorings that witness the patterns.

Using the second point, we have that the learning player has a winning strategy in the game if the class $\calH$ has no infinite NL tree. To show that this strategy is also universally measurable, it suffices to show that the set of winning sequences for the learning player is coanalytic. The set of winning strategies for the adversary is
\[
\calW^c = \left \{ ( (\vec \xi, \vec s^{(0)}, \vec s^{(1)}), \vec \eta) \in \bigcup_{t=1}^{\infty} ( (\calX^t \times [k]^t \times [k]^t) \times \{0,1\}^t) : \calH_{\xi_1, \eta_1,...,\xi_t,\eta_t} \neq \emptyset \textnormal{ for all } t < \infty \right\}\,.
\]
The set $\calW^c$  is equal to
\[
\bigcap_{1 \leq t < \infty} \bigcup_{\theta \in \Theta}
\bigcap_{1 \leq \l \leq t} 
 \left\{ ( (\vec \xi, \vec s^{(0)}, \vec s^{(1)}), \vec \eta) : 
\begin{array}{ll}
      h(\theta,\xi_z^{(i)}) = s_z^{(0)}(\xi_z^{(i)}) & \textnormal{if}~\eta_z^{(i)} = 0 \\
      h(\theta, \xi_z^{(i)}) = s_z^{(1)}(\xi_z^{(i)}) & \textnormal{if}~\eta_z^{(i)} = 1 \\
\end{array} 
\textnormal{for}~0\leq i < z, z \in [\l]
\right\}\,.
\]
In words, this set contains all the states of the game, for any timestep $t$, so that there exists a parameter in $\Theta$ and some $k$-colorings $s^{(0)},s^{(1)}$ which are everywhere different, that are $N$-consistent with some hypothesis in the class. Also, note that the set
\begin{align*}
& \left \{ ((\vec \xi, \vec s^{(0)}, \vec s^{(1)}), \vec \eta) : 
\begin{array}{ll}
      h(\theta, \xi_z^{(i)}) = s_z^{(0)}(\xi_z^{(i)}) & \textnormal{if}~\eta_z^{(i)} = 0 \\
      h(\theta, \xi_z^{(i)}) = s_z^{(1)}(\xi_z^{(i)}) & \textnormal{if}~\eta_z^{(i)} = 1 \\
\end{array} 
\textnormal{for}~0\leq i < z, z \in [\l]
\right \}
 = \\
 & \bigcap_{1 \leq z \leq \l} \bigcap_{0 \leq i < z} \left\{ ((\vec \xi, \vec s^{(0)}, \vec s^{(1)}), \vec \eta) :  h(\theta, \xi_z^{(i)}) = s_z^{(0)}(\xi_z^{(i)}) ~ \textnormal{if}~\eta_z^{(i)} = 0 \right\} \cap \\
&\left\{ ((\vec \xi, \vec s^{(0)}, \vec s^{(1)}), \vec \eta) :  h(\theta, \xi_z^{(i)}) = s_z^{(1)}(\xi_z^{(i)}) ~ \textnormal{if}~\eta_z^{(i)} = 1 \right\} =\\
& \bigcap_{1 \leq z \leq \l} \bigcap_{0 \leq i < z} \left\{ ((\vec \xi, \vec s^{(0)}, \vec s^{(1)}), \vec \eta) :  h(\theta, \xi_z^{(i)}) = s_z^{(\eta_z^{(i)})}(\xi_z^{(i)})\right\}\,.
\end{align*}
Recall that a Borel set is any set in a topological space that can be formed from open sets through the operations of countable union, countable intersection, and relative complement. By the measurability assumption of \Cref{definition:measurability}, the set $\left\{ (\theta, (\vec \xi, \vec s^{(0)}, \vec s^{(1)}), \vec \eta) :  h(\theta, \xi_z^{(i)}) = s_z^{(\eta_z^{(i)})}(\xi_z^{(i)}) \right\} 
$ 
which corresponds to the subspace of the product space of $\Theta$ and the space of all infinite sequences $((\vec \xi, \vec s^{(0)}, \vec s^{(1)}), \vec \eta )$
whose $(z,i)$-th term is consistent with some hypothesis $h$
is Borel measurable (note that the sequence $( \vec s^{(0)}, \vec s^{(1)}), \vec \eta)$ uniquely induces a sequence of labels over $[k]$ on which we apply the measurability definition). Moreover, the following hold: (i) the two intersections over $t$ and $\l$ are countable, (ii) the union over $\theta$ is a projection of a Borel set and (iii) the union over the colorings is countable (it is even finite given $t$). This implies that $\calW$ is coanalytic.
\end{proof}

\subsection{The Proof of \Cref{lemma:time-linear}}
\label{proof:time-linear}
\begin{proof}
We consider the strategies
\[
\l_t = L_t(X_1,Y_1,...,X_t,Y_t)
\]
and
\[
\wh{y}_t(z_1,...,z_{\l_t},s^{(0)},s^{(1)}) = \wh{Y}_t(X_1,Y_1,...,X_t,Y_t,z_1,...,z_{\l_t},s^{(0)},s^{(1)})\,,\footnote{Notice that $L_t,\wh{Y}_t$ depend also on the colorings that were used throughout the execution of the algorithm. We try the colorings in some fixed order and we slightly abuse the notation to drop that dependence.}
\]
which can be obtained by the pattern avoidance strategies. We remark that the algorithms behind $L_t$ and $\wh{Y}_t$ try out all the possible colorings as described in the pattern avoidance algorithm.
As a first step we decompose the training set into four parts. We will use the first quarter of the training examples as follows: For any $1 \leq t \leq \lfloor n/4 \rfloor$ and $1 \leq i \leq \lfloor n/(4t) \rfloor =: \wh{N}$, we invoke the above two strategies to get
\[
\l_t^i = L_t(X_{(i-1)t+1}, Y_{(i-1)t+1},...,X_{it}, Y_{it})
\]
and
\[
\wh{y}_t^i(z_1,...,z_{\l_t},s^{(0)},s^{(1)}) = \wh{Y}_t(X_{(i-1)t+1}, Y_{(i-1)t+1},...,X_{it}, Y_{it},z_1,...,z_{\l_t},s^{(0)},s^{(1)})\,.
\]
For each $t$ in the decomposition, we estimate the value $\Pr[\mathrm{per}(\wt{y}_t) > 0]$ using our estimates as
\[
\wh{e}_t = \frac{1}{\wh{N}} \sum_{i \in [\wh{N}]} \vec 1\left\{ \wh{y}_t^i \textnormal{ fails to avoid some NL pattern realized by $ (X_{s+1},Y_{s+1},...,X_{s+\l_t^i},Y_{s+\l_t^i})$ for some $n/4 \leq s \leq n/2 - \l_t^i$} \right\}\,.
\]
We note that almost surely $\wh{e}_t \leq e_t = \sum_{i \in [\wh{N}]} \vec 1\{ \mathrm{per}(\wh{y}_t^i) > 0 \} /\wh{N}$. Moreover, we set
\[
\wh{t}_n = \inf \{ t \leq \lfloor n/4 \rfloor : \wh{e}_t < 1/4 \}\,,
\]
and we set $\inf \emptyset = \infty$. Set the critical time $t^\star \in \nats$ be such that 
\[
\Pr[ \mathrm{per}(\wh{y}_{t^{\star}}) > 0] \leq 1/8\,,
\]
where the probability is over the training set of the algorithm $\wh{y}_t$. It holds that
\[
\Pr[\wh{t}_n > t^\star] \leq \Pr[\wh{e}_{t^\star} \geq 1/4] \leq \Pr[e_{t^\star} - \E[e_{t^\star}] \geq 1/8] \leq \exp(-N^\star/32)\,,
\]
where $N^\star = \lfloor n/(4t^\star) \rfloor$. By continuity, there exists $\eps > 0$ such that for all $1 \leq t \leq t^\star$ such that $\Pr[\mathrm{per}(\wh{y}_t)] >  3/8$, we have that $\Pr[\mathrm{per}(\wh{y}_t) >  \eps] > 1/4 + 1/16$. Fix $1 \leq t \leq t^\star$ with $\Pr[\mathrm{per}(\wh{y}_t) > 0] > 3/8$ (if such an index exists). Standard concentration inequalities yield
\[
\Pr \left[ \sum_{i \in [\wh{N}]} \vec 1\{ \mathrm{per}(\wh{y}_t^i) > \eps \} < \frac{\wh{N}}{4} \right] \leq \exp(-N^\star/128)\,.
\]
We remark that any NL pattern avoidance function $g$ which satisfies $\mathrm{per}(g) > \eps$, we have that
\[
\Pr[ g \textnormal{ fails to avoid some NL pattern realized by } (X_{s+1},Y_{s+1},...,X_{s+\l},Y_{s+\l}) \textnormal{ for some } n/4 \leq s \leq n/2-\l]
\geq 1 - p\,,
\]
where $p = p(\eps, n,\l) = (1-\eps)^{\lfloor (n-4)/(4\l) \rfloor}$ since there are
$\lfloor (n-4)/(4\l) \rfloor$ disjoint intervals of length $\l$ in $[n/4+1, n/2] \cap \nats$.

We now pass to the second quarter of the dataset to test our guesses using the above results. First, the estimates $(\l_t^i, \wh{y}_t^i)$ for $1 \leq i \leq \wh{N}$ are independent of $(X_s, Y_s)$ for $s > n/4$. Second, using a union bound conditionally on the first quarter of the dataset gives that the probability that every guess function $\wh{y}_t^i$
with $\mathrm{per}^{\l_t^i}(\wh{y}_t^i) > \eps$ (we let $\mathrm{per}^\l$ denote that the probability is over a sample from the distribution $P^{\otimes \l})$ makes an error on the second quarter of the dataset is
\[
\Pr[~(\forall i)~ \vec 1\{ \mathrm{per}^{\l_t^i}(\wh{y}_t^i)> \eps \} \leq \vec 1\{ E_{i,t} \}\}]
\geq 1 - \wh{N} (1-\eps)^{\lfloor (n-4)/ (4t^\star) \rfloor}\,,
\]
since $\l_t^i \leq t^\star$, where $E_{i,t}$ is the event that the $(i,t)$-pattern avoidance estimate
$\wh{y}_t^i$ fails to avoid some NL pattern realized by the data sequence $(X_{s+1},Y_{s+1},...,X_{s+\l_t^i},Y_{s+\l_t^i})$ for some $n/4 \leq s \leq n/2-\l_t^i$. This yields that
\[
\Pr[\wh{t}_n = t] \leq \Pr[\wh{e}_t < 1/4] \leq \lfloor n/4 \rfloor (1-\eps)^{\lfloor (n-4)/ (4t^\star) \rfloor} + \exp(-N^\star/32)\,.
\]
Taking a union bound over the elements of $T^\star$, where
\[
T^\star = \left\{ 1 \leq t \leq t^\star : \Pr[ \mathrm{per}(\wh{y}_{t^{\star}}) > 0] \leq 3/8 \right\}\,,
\]
we obtain that
\[
\Pr[\wh{t}_n \notin T^\star] \leq \exp(-N^\star/32) + t^\star \cdot (\lfloor n/4 \rfloor (1-\eps)^{\lfloor (n-4)/ (4t^\star) \rfloor} + \exp(-N^\star/32) )\,.
\]
This concludes the proof since there exist $C,c > 0$ so that
\[
\Pr[\wh{t}_n \in T^\star] \geq 1 - C \exp(-c n)\,.
\]
\end{proof}

\section{Deferred Proofs for Partial Concept Classes}
\label{appendix:missing proofs for partial concepts}
\subsection{The Proof of \Cref{lemma:one-inc-partial}}
\begin{proof}
Fix $n \in \nats$. Consider a set of points $S = \{x_1,...,x_n\}$ and let $S_d$ be the set of distinct elements of the sequence $S$. Define the hypothesis class $\calH_{S_d}$ that contains all the total functions $h : S_d \to [k]$ such that the sequence $\{(x,h(x)) : x \in S_d\}$ is realizable with respect to $\calH$. 

\noindent\textsc{Case A:} Assume that $\calH_{S_d} \neq \emptyset$. This is a total concept class and so let $\mathbb A_{S_d}$ be the algorithm guaranteed to exist
by \Cref{lemma:one-inc} with $X = S_d$ and $\calH = \calH_{S_d}.$ For any $y_1,...,y_n \in [k]$ so that the training sequence $(x_1,y_1),...,(x_n,y_n)$ is realizable with respect to $\calH$ (and so realizable with respect to $\calH_{S_d}$), define
\[
\mathbb A(x_1,y_1,...,x_{n-1},y_{n-1},x_n)
= \mathbb A_{S_d}(\calH_{S_d}, x_1,y_1,...,x_{n-1},y_{n-1},x_n)\,.
\]
Moreover, we can consider any permutation of the sequence $x_1,...,x_n$ and let the feature space $S_d$ and the hypothesis class $\calH_{S_d}$ the same. Finally, we have that $\mathrm{Ndim}(\calH_{S_d}) \leq \mathrm{Ndim}(\calH)$. This gives the desired bound.\\
\noindent\textsc{Case B:} Assume that $\calH_{S_d}$ is empty. In this case, set $\calA(x_1,y_1,...,x_{n-1},y_{n-1},x_n) = 0$ for all sequences $(x_1,...,x_n) \in \calX^n$ and $(y_1,...,y_{n-1}) \in [k]^{n-1}$ so that
$\{h \in \calH : h(x_{i}) = y_i \text{ with } i < n \text{ and } h(x_n) \in [k] \} = \emptyset$.
\end{proof}

\subsection{The Proof of \Cref{lemma:ssp}}
\begin{proof}
Set $d = \mathrm{VCdim}(\calH)$.
Consider a set $C = \{x_1,...,x_m\} \subseteq \calX$ of $m$ points and define $\Pi_\calH(C) = \{ (h(x_1),...,h(x_m)) : h \in \calH \} \subseteq \{0,1\}^{m}$ (where we ignore any vector that contains the $\star$ symbol). Note that if $m \leq d$, then it holds $|\Pi_\calH(C)| = 2^m$, by the definition of the VC dimension in the partial setting. 
We are going to prove the next claim.
\begin{claim}
For any $C = \{x_1,...,x_m\}$ and any binary partial concept class $\calH$, we have that
\[
|\Pi_\calH(C)| \leq |\{ B \subseteq C : \calH \textnormal{ shatters } B\}|\,.
\]
\end{claim}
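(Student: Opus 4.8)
The plan is to prove the Claim by the classical Sauer–Shelah–Perles shifting (compression) argument, adapted to the partial setting, and then derive the lemma by combining the Claim with the fact that $\calH$ shatters no set of size exceeding $d = \mathrm{VCdim}(\calH)$. Let me describe each step.

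\textbf{Step 1: Reduce to a finite combinatorial problem.} Fix $C = \{x_1,\dots,x_m\}$. Let $\calG = \Pi_\calH(C) \subseteq \{0,1\}^m$ be the set of \emph{fully-defined} binary patterns realized on $C$ (patterns containing a $\star$ are discarded). Since $|\Pi_\calH(C)| = |\calG|$ and since $\calH$ VC-shatters a subset $B \subseteq C$ if and only if the $0/1$ family $\calG$ shatters $B$ in the ordinary sense (this is precisely the partial-setting definition of VC shattering, \Cref{rem:vc}: only binary patterns matter), it suffices to prove that for \emph{any} set family $\calG \subseteq \{0,1\}^m$ one has $|\calG| \leq |\{ B \subseteq [m] : \calG \text{ shatters } B\}|$. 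This is the standard down-shift lemma for ordinary set systems; the only subtlety is checking that the reduction from the partial class to $\calG$ does not lose anything, which it does not, because shattering in the partial sense is \emph{defined} through binary patterns.

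\textbf{Step 2: The shifting operator.} For a coordinate $i \in [m]$ define $T_i : \{0,1\}^m \to \{0,1\}^m$ on a family $\calG$ by: for $v \in \calG$, if $v_i = 1$ and the vector $v'$ obtained by flipping coordinate $i$ to $0$ is not in $\calG$, replace $v$ by $v'$; otherwise keep $v$. Let $T_i(\calG)$ be the resulting family. Standard facts: $|T_i(\calG)| = |\calG|$ (the map is injective), and if $T_i(\calG)$ shatters a set $B$ then $\calG$ shatters $B$ as well (this is the key monotonicity property — one checks it separately according to whether $i \in B$ or $i \notin B$). Iterating the $T_i$'s over all coordinates, and noting that $\sum_{v \in \calG}\|v\|_1$ strictly decreases at each nontrivial shift, the process terminates at a family $\calG^\star$ that is \emph{downward closed}: $v \in \calG^\star$ and $w \leq v$ coordinatewise imply $w \in \calG^\star$.

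\textbf{Step 3: Conclude for downward-closed families, then undo the shifts.} If $\calG^\star$ is downward closed, then every $v \in \calG^\star$ is the indicator vector of a subset that $\calG^\star$ shatters (all subsets below $v$ are in $\calG^\star$, so $\calG^\star$ restricted to $\mathrm{supp}(v)$ is the full cube), hence $|\calG^\star| \leq |\{B : \calG^\star \text{ shatters } B\}|$. Combining with $|\calG^\star| = |\calG|$ and the fact that every set shattered by $\calG^\star$ is also shattered by $\calG$ (Step 2, applied through the whole chain of shifts), we get $|\calG| \leq |\{B \subseteq [m] : \calG \text{ shatters } B\}|$, which is the Claim. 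Finally, since $\calG = \Pi_\calH(C)$ shatters only sets of size at most $d = \mathrm{VCdim}(\calH)$, the right-hand side is at most $\sum_{i=0}^{d}\binom{m}{i}$; taking the supremum over all $C$ with $|C|=m$ yields $\Pi_\calH(m) \leq \sum_{i=0}^{\mathrm{VCdim}(\calH)}\binom{m}{i}$, as desired.

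The argument is essentially the textbook SSP proof, so no step is genuinely hard; the one point that deserves care — and where the partial vs.\ total distinction could in principle bite — is Step 1, namely verifying that passing from the partial class $\calH$ to the ordinary $0/1$ family $\calG$ of its \emph{total} patterns on $C$ is lossless for both sides of the inequality. For the left side this is the definition of the growth function; for the right side it is the definition of shattering in the partial regime (only binary patterns count, $\star$'s are ignored). Once that is in place, Steps 2 and 3 are the classical compression argument verbatim, with the monotonicity property of $T_i$ being the only slightly delicate verification.
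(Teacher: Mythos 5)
Your argument is correct in substance but takes a genuinely different route from the paper's. The paper proves the claim by induction on $|C|$ in the style of Pajor's proof: it splits on the first point $x_1$, writes $|\Pi_\calH(C)| = |Y_0| + |Y_1|$, bounds $|Y_0|$ by $|\Pi_\calH(C')|$ (an inequality rather than an equality, precisely because hypotheses undefined at $x_1$ contribute to $\Pi_\calH(C')$ but not to $Y_0$), and handles $Y_1$ by introducing an auxiliary partial class $\calH'$ of hypotheses that are total on $C$ and come in pairs disagreeing only at $x_1$. You instead front-load all of the ``partial'' content into a single reduction -- pass to the ordinary $0/1$ trace family $\calG = \Pi_\calH(C)$ -- and then run the textbook shifting argument, which is applied to $\calG$ verbatim and needs no adaptation. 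Your route is arguably cleaner once the reduction is in place, since Steps 2 and 3 are pure classical combinatorics; the paper's route stays inside the partial world throughout and never needs to compare shattering by $\calG$ with shattering by $\calH$.

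One imprecision in your Step 1 should be fixed, though it does not break the proof. The asserted equivalence ``$\calH$ VC-shatters $B$ if and only if $\calG$ shatters $B$'' is false: a set $B$ can be shattered by $\calH$ via hypotheses that are undefined ($\star$) somewhere on $C \setminus B$, and such hypotheses contribute nothing to $\calG$. (Take $C=\{x_1,x_2\}$ and $\calH = \{h_0,h_1\}$ with $h_b(x_1)=b$, $h_b(x_2)=\star$; then $\calH$ shatters $\{x_1\}$ but $\calG = \emptyset$.) What is true, and is all your inequality chain uses, is the single implication that $\calG$ shatters $B$ implies $\calH$ shatters $B$, since every pattern realized by $\calG$ on $B$ is realized by $\calH$. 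This gives $\{B : \calG \textnormal{ shatters } B\} \subseteq \{B : \calH \textnormal{ shatters } B\}$, hence $|\Pi_\calH(C)| = |\calG| \leq |\{B : \calG \textnormal{ shatters } B\}| \leq |\{B : \calH \textnormal{ shatters } B\}|$, which is the claim; and the same one-sided implication justifies bounding the right-hand side by $\sum_{i=0}^{\mathrm{VCdim}(\calH)}\binom{m}{i}$. So replace the ``iff'' and the implied equality of the two shattered-set collections by the inclusion, and the proof is complete.
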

The above claim suffices since the RHS is at most $\sum_{i = 0}^{\mathrm{VCdim}(\calH)} \binom{m}{i}$. Now we prove the above claim.
For $m=1$, the result holds. Assume that the claim is true for sets of size $\l < m$ and let us prove it for sets of size $m$. Fix some partial binary concept class $\calH \subseteq \{0,1,\star\}^\calX$ and set $C = \{x_1,...,x_m\}$. Let $C' = C \setminus \{x_1\}$ and define
\[
Y_0 = \{ (y_2,...,y_m) \in \{0,1\}^{m-1} : 
(0,y_2,...,y_m) \in \Pi_\calH(C) \lor
(1,y_2,...,y_m) \in \Pi_\calH(C)
\}\,,
\]
and
\[
Y_1 = \{ (y_2,...,y_m) \in \{0,1\}^{m-1} : 
(0,y_2,...,y_m) \in \Pi_\calH(C) \land
(1,y_2,...,y_m) \in \Pi_\calH(C)
\}\,,
\]
Note that $|\Pi_\calH(C)| =
|Y_0| + |Y_1|$ (due to double counting).
Our first observation is that $|Y_0| \leq  |\Pi_{\calH}(C')|$, since there may exist some $h \in \calH$ which is undefined at $x_1$ and that generates a pattern that is not contained in $Y_0$. Using the inductive hypothesis on $\calH$ and $C'$ for the second inequality and the definition of $C'$ for the third equality, we get
\[
|Y_0| \leq |\Pi_{\calH}(C')| \leq |\{B \subseteq C' : \calH \textnormal{ shatters } B\}| = |\{B \subseteq C : x_1 \notin B \land \calH \textnormal{ shatters } B\}|\,.
\]
Let us now set
\[
\calH' = \{ h \in \calH : \exists h' \in \calH \textnormal{ such that } (h(x_1),h(C')) = (1-h'(x_1),h'(C')) \land \star \notin h(C)  \}\,,
\]
i.e., $\calH'$ contains all the pairs of hypotheses that (i) are well defined over $C$, (ii) agree on $C'$ and (iii) differ on $x_1$.
Note that if $\calH'$ shatters a set $B \subseteq C'$ then it also shatters $B \cup \{x_1\}$ and vice versa. Moreover we have  that $Y_1 = \Pi_{\calH'}(C')$. By the inductive hypothesis on $\calH'$ and $C'$, we have that
\[
|Y_1| = |\Pi_{\calH'}(C')|
\leq |\{ B \subseteq C' : \calH' \textnormal{ shatters } B\}|\,.
\]
This gives that
\[
|Y_1| \leq |\{ B \subseteq C' : \calH' \textnormal{ shatters } B \cup \{x_1\}\}|
= |\{ B \subseteq C : x_1 \in B \land \calH' \textnormal{ shatters } B\}|\,.
\]
Finally since $\calH'$ lies inside $\calH$, we get
\[
|Y_1| \leq |\{ B \subseteq C : x_1 \in B \land \calH \textnormal{ shatters } B\}|
\]
Combining our observations for $Y_0$ and $Y_1$, we get that
\[
|\Pi_\calH(C)| \leq |\{B \subseteq C : \calH \textnormal{ shatters } B\}|\,.
\]
\end{proof}

\subsection{The Proof of \Cref{theorem:disambiguation}}
\begin{proof}
The proof is essentially a tensorization of the construction of \cite{alon2022theory}. Fix $k, n\in \nats$. Set $L = \log_2(k+1)$ and assume that $L \in \nats$ without loss of generality. Let us consider the partial concept class
\[
\calH_n = \calH_n^{(1)} \times \ldots \times \calH_n^{(L)}\,,
\]
where $\calH_n^{(i)} \subseteq \{0,1,\star\}^{[n]}$. Let us now explain the partial concepts that lie in $\calH_n^{(i)}$ and subsequently in $\calH_n$. To this end, we invoke \Cref{proposition:alon-saks-seymour} which lies in the intersection of combinatorics and complexity theory. Consider $L$ independent and disjoint copies of the graph promised by \Cref{proposition:alon-saks-seymour} and let $G$ be the union of these $L$ graphs. Fix $i \in [L]$. Define the partial class $\calH_n^{(i)} \subseteq \{0,1,\star\}^{[n]}$ using the graph $G^{(i)}$ and its $n$ bipartite complete graphs $B^{(i)}_j = (L_j^{(i)}, R_j^{(i)}, E_j^{(i)})$ with $j \in [n]$. The class contains $|V(G^{(i)})|$ concepts, each one identified by a vertex $v \in V(G^{(i)})$ with
\[
c_v^{(i)}(j)  =
\left\{
\begin{array}{ll}
      0  & \textnormal{ if } v \in L_j^{(i)}, \\
      1 & \textnormal{ if } v \in R_j^{(i)}, \\
      \star & \textnormal{otherwise.} \\
\end{array} 
\right\}.
\]
Using Lemma 31 from \cite{alon2022theory}, we have that $\mathrm{VCdim}(\calH_n^{(i)}) = 1$.

We overload the ``+'' notation by setting $g + \star = \star$ for any $g \in \nats$.
The partial concept class $\calH_n$ contains all the partial concepts $h_{v_1,...,v_L}(i) = \sum_{j \in [L]} 2^L \cdot c_{v_j}^{(j)}(i) \in \{0,1,...,k,\star\}$. Hence we have that $\calH_n \subseteq \{0,1,...,k,\star\}^{[n]}.$ We can use the growth function of the partial concepts setting and get that $\mathrm{Ndim}(\calH_n) = O(\log(k+1) \cdot \mathrm{VCdim}(\calH_n^{(1)})) = O_k(1).$ 

Consider some disambiguation $\overline{\calH} \subseteq [k]^{[n]}$ of $\calH_n$.  The class $\overline{\calH}$ induces $L$ disambiguations $\overline{\calH^{(i)}}$ for the binary partial classes $\calH_n^{(i)}$.  
Then $\overline{\calH}$ defines a coloring of $G$ using $\min_{i} |\overline{\calH^{(i)}}|$ colors. \Cref{proposition:alon-saks-seymour} implies that
\[
\min_i |\overline{\calH^{(i)}}| \geq n^{\log(n)^{1-o(1)}}\,.
\]
Finally, we can consider the class $\calH_\infty$ as the disjoint union of $\calH_n$.
Each $\calH_n$ has domain $\calX_n$, where the domains $\calX_n$ are mutually disjoint and $\calH_\infty$ is the union $\bigcup_{n} \wt{\calH}_n$, where $\wt{\calH}_n$ is obtained from $\calH_n$ by adding $\star$ outside of its domain.
Then $\mathrm{Ndim}(\calH_\infty) = O_k(1)$. Since the size of its disambiguations is unbounded, then the multiclass Sauer-Shelah-Perles Lemma \cite{characterization} implies that the Natarajan dimension of any disambiguation of $\calH_\infty$ is infinite.
\end{proof}
\end{document}